\newcommand{\skipInSubmission}[1]{#1}
\newcommand{\inWSSubmission}[1]{} %
\DeclareMathOperator*{\argmin}{arg\,min}
\newcommand{\conditionalPlot}[1]{
  \ifx\skipPlots\undefined
  #1
  \else
  {\begin{center}\bf TODO this plot is suppressed to save compilation time. Comment the command \texttt{\textbackslash skipplot} in \texttt{main.tex} to show it \end{center}}
  \fi
}
\newcounter{inpItem}
\renewcommand{\theinpItem}{(\roman{inpItem})}
\newenvironment{inplaceEnumerate}{\setcounter{inpItem}{0}}{}
\newcommand{\inplaceItem}{\refstepcounter{inpItem}{\theinpItem~}}
\newcommand{\parspace}{\vspace{1.8ex}}
\begin{document}

\title{Quasi-Bayesian Dual Instrumental Variable Regression}

\inWSSubmission{
\author{\name Anonymous Authors}
}

\skipInSubmission{
\author{%
\name Ziyu Wang\textsuperscript{$\ast$}\phantom{\thanks{Equal contribution. 
}}
\email wzy196@gmail.com \\ 
\name Yuhao Zhou\textsuperscript{$\ast$}  \email yuhaoz.cs@gmail.com \\
\addr Tsinghua University \\ 
Beijing, 100084, China 
\AND 
\name Tongzheng Ren \email tongzheng@utexas.edu \\ 
\addr UT Austin \\ 
Austin, TX 78712, USA
\AND
\name Jun Zhu \email dcszj@mail.tsinghua.edu.cn \\ 
\addr Tsinghua University \\ 
Beijing, 100084, China 
}
}

\editor{TBD} %

\maketitle

\begin{abstract}%
Recent years have witnessed an upsurge of interest in employing flexible machine learning models for instrumental variable (IV) regression, but the development of uncertainty quantification methodology is still lacking. In this work we present a novel quasi-Bayesian procedure for IV regression, building upon the recently developed kernelized IV models and the dual/minimax formulation of IV regression. We analyze the frequentist behavior of the proposed method, by establishing minimax optimal contraction rates in $L_2$ and Sobolev norms, and discussing the frequentist validity of credible balls. We further derive a scalable inference algorithm which can be extended to work with wide neural network models. Empirical evaluation shows that our method produces informative uncertainty estimates on complex high-dimensional problems. 
\end{abstract}

\begin{keywords}
instrumental variable, uncertainty quantification, Gaussian process, contraction rates, causal inference
\end{keywords}

\section{Introduction} \label{sec:intro}

Instrumental variable (IV) regression is a standard approach for estimating causal effect from confounded observational data, and is widely used in areas such as economics, epidemiology and clinical research \citep{angrist2008mostly,greenland2000introduction,cuzick1997adjusting}. 
In the presence of confounding, any regression method estimating $\EE(\by\mid\bx)$ cannot recover the causal relation $f_0$ between the outcome $\by$ and the treatment $\bx$, since the residual $\bu=\by-f_0(\bx)$ is correlated with $\bx$ due to the unobserved confounders. IV regression enables identification of the causal effect through the introduction of \emph{instruments}, variables $\bz$ that are known to influence $\by$ only through $\bx$. 

Recent years have seen great development in adopting flexible machine learning models for IV regression \citep{hartford2017deep,lewis2018adversarial,bennett2019deep,singh_kernel_2020,bennett_variational_2020}, which bring the promise of better adaptation to complex structures in data. 
However, their theoretical properties are less well understood than classical nonparametric approaches, as we discuss in Section~\ref{sec:related-work}. 
Moreover, there is a lack of principled uncertainty quantification measures for these flexible IV models. 
Uncertainty quantification is especially important for IV analysis, since unlike in standard supervised learning scenarios, we do not have 
(unconfounded) validation data. %
Moreover, it is possible that the instrument of choice is only weakly correlated  with %
$\bx$, in which case point estimators suffer from high variance \citep{stock2002survey}. This problem is exacerbated in the nonparametric setting, where IV estimation is typically an ill-posed inverse problem, in which statistical challenges present for the recovery of higher order nonlinear effects. We refer readers to \cite{horowitz_applied_2011} for an introduction to this matter. 

Uncertainty quantification for IV regression is challenging. 
For example, 
while it may appear natural to consider a Bayesian approach, specification of the likelihood requires knowledge of the data generating process, which is typically not assumed in IV regression. 
Consequently, %
existing work on Bayesian IV \citep{
    kleibergen1998bayesian,
    kleibergen_bayesian_2003,conley2008semi,
    lopes2014bayesian,
    wiesenfarth2014bayesian}
typically make stronger assumptions, and 
assume the following data generating process: 
$
\bx=g(\bz)+\mbf{u}_1, ~ \by=f(\bx)+\mbf{u}_2,
$ where $(\bu_1,\bu_2)$ are correlated and independent of $\bz$. 
Posterior inference is then conducted 
on $g,f$ as well as the \emph{distribution of} the unobserved confounders $(\bu_1,\bu_2)$. 
Additive, independent error in $\bx$ is 
an unnecessary assumption for point estimators, and is difficult to check in high dimensions. 
The need to model the generating process of $(\bu_1,\bu_2)$ also introduces extra risks of model misspecification, and Bayesian inference over the generative model 
is computationally expensive, especially on complex high-dimensional datasets. 
None of these issues present if only point estimation is needed.

For the above reasons, it is appealing to turn to an alternative \emph{quasi-Bayesian} approach \citep{chernozhukov_mcmc_2003,
liao_posterior_2011,kato_quasi-bayesian_2013}. 
Quasi-Bayesian analysis views IV estimation as a generalized method of moments procedure, and defines the quasi-posterior as a Gibbs distribution constructed from a chosen prior and violation of the conditional moment restrictions. 
It does not require full knowledge of the data generating process, and thus does not 
suffer from the aforementioned drawbacks. %
However, 
computation of the quasi-posterior is non-trivial in the IV setting, as its density contains a conditional expectation term $\EE(\by-f(\bx)\mid\bz)$, which itself needs to be estimated. 
Moreover, the need to estimate the conditional expectation also brings challenges to theoretical analysis of the frequentist behavior. 
So far, quasi-Bayesian analysis for nonparametric IV is only developed with classical models such as wavelet basis \citep{liao_posterior_2011,kato_quasi-bayesian_2013} or Nadaraya-Watson smoothing \citep{florens_nonparametric_2012}, and with more limited theoretical understanding compared with their frequentist counterparts (see Section~\ref{sec:related-work}). 
Besides, numerical study has been limited in previous work, and thus 
little is known about the empirical performance of the quasi-Bayesian approach for nonparametric IV. 

In this work, we present a novel quasi-Bayesian procedure for IV regression, building upon the recent 
development in kernelized IV models \citep{singh_kernel_2020}. 
We employ a Gaussian process (GP) prior and construct the quasi-likelihood based on a recently proposed dual formulation of IV estimation \citep[e.g.,][]{muandet_dual_2020,dikkala_minimax_2020}. 
Following this, posterior sampling can be cast as a minimax optimization problem, based on which we construct an inference algorithm. The algorithm is further adapted to work with flexible neural network models, 
in which case its behavior can be justified by analyzing the neural networks in the kernel regime. We evaluate the proposed method on several synthetic datasets, showing that it produces informative uncertainty estimates on complex nonlinear problems, and is particularly advantageous in the finite-sample setting. 

In addition to the methodological contributions, 
this work also contributes to the theoretical understanding of nonparametric IV methods. 
We establish optimal posterior contraction rates for both $L_2$ and Sobolev norms. 
Comparing with previous work on quasi-Bayesian IV \citep{kato_quasi-bayesian_2013}, we allow for more flexibility in the choice of models. 
From these results we deduce optimal convergence rates for the point estimator of kernelized IV, which fill in an important gap in literature, and further establish kernelized IV as a principled estimation method with potential theoretical advantages (see Section~\ref{sec:related-work} for discussion). 
We further analyze the frequentist coverage properties of the posterior credible balls, and provide intuition on the nonasymptotic behavior of the quasi-posterior. 
As a side contribution, we apply our proof technique to Gaussian process regression, and improve the result of \cite{van_der_vaart_information_2011} by relaxing a joint H\"older and Sobolev regularity assumption. 
Finally, we note that this work is a substantial extension of a conference paper \citep{wang2021scalable}, with all theoretical results in Section 4-5 being new.

The rest of the paper is organized as follows: we set up the problem in Section~\ref{sec:setup} and derive the quasi-posterior in Section~\ref{sec:qp}. 
We then present the asymptotic analysis in 
Section~\ref{sec:theory}, and discuss the nonasymptotic behavior of the quasi-posterior in Section~\ref{sec:nonasymptotic}. 
The approximate inference algorithm is presented in Section~\ref{sec:rf}. Section~\ref{sec:related-work} discuss related work, and Section~\ref{sec:exp} presents numerical studies. Finally, we discuss conclusion in Section~\ref{sec:conclusions}.

\section{Notations and Setup}\label{sec:setup}

\subsection{Notations} We use boldface ($\bx,\by,\bz$) to represent random variables on the space $\cX \times \cY \times \cZ$, regular font ($x,y,z$) to denote deterministic values. 
$[n]$ denotes the set $\{ 1, 2, \cdots, n \}$.
$\{(x_i,y_i,z_i):i\in [n]\}$ indicates the training data. 
We use the notations $X:=(x_1,\ldots,x_n)\in\cX^{ n}$, $f(X) := (f(x_1),\ldots,f(x_n))$; likewise for $Y,Z$. For finite-dimensional vectors $\theta,\theta'\in\RR^m$, 
we use $\|\theta\|_2$ and $\<\theta,\theta'\>_2$ to denote the Euclidean norm and inner product w.r.t.~the data distribution, respectively. For square-integrable functions, $\|\cdot\|_2$ and $\<\cdot,\cdot\>_2$ refer to $L_2$ norm and inner product.  
$\|\cdot\|$ denotes the sup norm. 
For any operator $A: H_1 \to H_2$ between Hilbert spaces $H_1$ and $H_2$, we denote its adjoint %
by $A^*: H_2 \to H_1$. 
When $H_1 = H_2$ and $\lambda \in\RR$, we use the notation $A_\lambda := A + \lambda I$. %

$\lesssim,\gtrsim,\asymp$ represent (in)equality up to constants. The hidden constants may depend on those introduced in the assumptions, but will not depend on data in any other way. Therefore, asymptotic claims such as the contraction results will hold uniformly, over a family of data distributions satisfying the assumptions.

\subsection{Instrumental Variable Regression}
Denote the treatment and response variables as $\bx,\by$, the instrument as $\bz$, and the true \emph{structural function} of interest as $f_0$. 
Consider the data generating process 
$
\by = f_0(\bx)+\bu,
$
where we assume the unobserved residual $\bu$ satisfies $\EE(\bu\mid\bz)=0$, but may  
correlate with $\bx$. Intuitively, the assumption requires that $\bz$ only influence $\by$ through $\bx$. 
Then $f_0$ satisfies 
\begin{equation}\label{eq:iv-gmm-conds}
\EE(\by - f_0(\bx)\mid\bz) = 0~~\text{a.s.}\;[P(dz)],
\end{equation}
where $P$ denotes the data distribution. This \emph{conditional moment restriction} (CMR) formulation %
is the standard definition in literature \citep[e.g.,][]{newey2003instrumental,kato_quasi-bayesian_2013}, and is used in the recent work on machine learning models for IV. 
It connects to GMM as \eqref{eq:iv-gmm-conds} can be viewed as a continuum of generalized moment constraints. 

Note that \eqref{eq:iv-gmm-conds} does not 
place any structural constraint on the conditional distribution $p(\bx\mid \bz)$, such as additive noise; hence, it does not require full knowledge of the data generating process. 
Also, as discussed in, e.g., \citet{hartford2017deep}, the setup can also be extended to incorporate observed confounders $\bv$, by including $\bv$ in both $\bx$ and $\bz$. 

IV regression can also be viewed as solving a linear \emph{inverse problem}: introduce the 
conditional expectation operator 
$E: L_2(P(dx))\rightarrow L_2(P(dz)), f\mapsto \EE(f(\bx)\mid \bz=\cdot)$, 
and define $b(z) := \EE(\by\mid \bz=z)$. 
Then \eqref{eq:iv-gmm-conds} can be written as 
\begin{equation*}%
E f_0 = b. 
\end{equation*}
Although IV is more difficult than standard inverse problems, due to the need to estimate $E$ from data, it is often helpful to draw parallels from the inverse problem literature. 

\subsection{Dual/Minimax Formulation of IV}
Let $\cH,\cI$ be normed function spaces on $\cX,\cZ$, respectively. 
The inverse problem view motivates the use of the following objective
\begin{equation}\label{eq:iv-obj-general}
\min_{f\in\cH}\quad \cL(f) := d_n^2(\hat{E}_n f - \hat{b}) + \efflam \Omega(f),
\end{equation}
where $\hat{E}_n: \cH\rightarrow\cI$ is an empirical approximation to the restriction of $E$ on $\cH$, 
$\hat b$ is an estimator of $b = E f_0$, 
$\{d_n\}$ is a sequence of suitable (semi-)norm on $\cI$, and $\Omega:\cH\rightarrow\RR$ is a regularization term. 
We restrict our attention to 
$\hat E_n$ which solves a ridge-regularized least square regression problem
\begin{align}
\hat E_n(f) &:= 
\argmin_{g\in\cI} \frac{1}{n}\sum_{j=1}^n (f(x_j) - g(z_j))^2 + \effnu\|g\|_{\cI}^2  \label{eq:Ehat-defn}
\end{align}
and define $\hat b$ as solving a similar problem as above, with $f(x_j)$ replaced by $y_j$. 
Consider setting $\Omega(f) := \frac{1}{2}\|f\|_\cH^2$ and
$d_n^2(g) := \frac{1}{n}\sum_{j=1}^n g(z_j)^2 + \effnu\|g\|_\cI^2$. 
Appealing to the Fenchel duality identity
$\frac{1}{2}u^2=\sup_{v\in\RR}\left(uv-\frac{1}{2}v^2\right)$, 
and rearranging terms, we arrive at the dual/minimax objective for IV regression \citep{muandet_dual_2020,bennett_variational_2020,liao_provably_2020,dikkala_minimax_2020,dai_learning_2016}:
\begin{equation}
\min_{f\in\cH} \cL(f) \equiv 
\min_{f\in\cH}
\max_{g\in\cI} \frac{1}{n}\sum_{j=1}^n \left(
  (f(x_j)-y_j)g(z_j) - \frac{g(z_j)^2}{2}
\right) - \frac{\effnu}{2}\|g\|_\cI^2 + \frac{\efflam}{2}\|f\|_\cH^2. \label{eq:dualiv-obj}
\end{equation}

The formulation \eqref{eq:dualiv-obj} is appealing for two reasons. 
From the computational aspect, it 
circumvents the need to directly compute $\hat{E}_n$, an operator between the typically infinite-dimensional spaces $\cH$ and $\cI$, 
and can be solved efficiently with stochastic gradient descent-ascent (SGDA) 
if the function spaces can be approximated with parametric models. 
Its particular choice of $d_n$ also simplifies theoretical analysis and leads to tighter error rates \citep{dikkala_minimax_2020}. \SkipNOTE{
Without it, we need to additionally show $\|\hat g\|_\cI^2 <= C \|Ef\|_2^2 / delta_n^2$, for some \emph{specific} $C$.  This is because we need to lower bound $\ell_n$ for $f \in \err$ with
$\ell_n(f) = \Psi_n(f,\hat g) \ge c_1 \Psi_n(f,g_j) - c_2 \|\hat g\|_\cI^2$
where $c_1,c_2$ are from the oracle inequality of choice.  Since $\Psi_n(f,g_j)$ is also small, we can't just show $\|\hat g\|_\cI$ has the correct order of magnitude, which would have resulted from the definition of M estimator (and our current analysis).
}
As we shall see below, both of the above advantages for point estimators carry to our quasi-Bayesian setting. 

\subsection{Kernelized Dual IV}\label{sec:bg-kernelized-iv}
In this work, we are primarily interested in the case where $\cH$ is a reproducing kernel Hilbert spaces (RKHS), although in Section~\ref{sec:rf} we discuss heuristic application to DNNs. %
The choice of $\cI$ is more flexible, but 
a major example is when it is another RKHS. In this scenario, 
we derive an alternative closed-form expression for $\cL(f)$. %

Let $k_x$ and $k_z$ denote the reproducing kernels of $\cH$ and $\cI$, respectively.\footnote{
In the following, we will abuse notation and use $k$ to refer to both kernels, whenever the denotation is clear. 
} 
 Define the evaluation operator $S_z:\cI\rightarrow\RR^n, g\mapsto (g(z_1),\ldots,g(z_n))$, and $S_x:\cH\rightarrow\RR^n$ similarly. Let $\empCz := \frac{1}{n}S_z^*S_z, \empCzx := \frac{1}{n}S_z^*S_x$ 
be the empirical kernel (cross-)covariance operators. Using the Woodbury identity, we can verify that $\hat E_n = \invEmpCz \empCzx$ matches the definition \eqref{eq:Ehat-defn} (recall $\regEmpCz = \empCz+\effnu I$), and $\hat b$ admits a similar expression. Then 
\begin{align}%
\cL(f) &= \frac{1}{2}\left\|\regEmpCz^{-1/2} \left(\empCzx f - \frac{S_z^* Y}{n}\right)\right\|_\cI^2 + \frac{\efflam}{2} \|f\|_\cH^2 \nonumber \\ 
&= \efflam \left(\frac{1}{2} (f(X)-Y)^\top ((n\efflam)^{-1}L) (f(X)-Y) + \frac{1}{2}\|f\|_\cH^2\right), \label{eq:dualiv-loss-equiv}
\end{align}
where $L = \frac{1}{n}S_z\regEmpCz^{-1}S_z^*$ is a linear map from $\RR^n$ to $\RR^n$, and thus can be identified with a matrix. 
Similar derivations appear in previous work \citep[e.g.,]{singh_kernel_2020,muandet_dual_2020}. 
\eqref{eq:dualiv-loss-equiv} will be used to derive a closed-form expression of the quasi-posterior. %

\section{Quasi-Bayesian Analysis of Dual IV}\label{sec:qp}

For any point estimator minimizing the general objective \eqref{eq:iv-obj-general}, we can conduct quasi-Bayesian analysis by specifying a prior $\Pi(df)$ for $f$ and computing the \emph{quasi-posterior}, 
defined 
through the following Radon-Nikodym derivative with respect to the prior: %
\begin{equation}\label{eq:quasi-posterior-defn-general}
\frac{d\Pi(\cdot\givendata)}{d\Pi}(f) \propto 
\exp\!\left(
    -\frac{n}{\lambda}d_n^2(\hat{E}_n f-\hat{b})
\right),
\end{equation}
where $\lambda := n\efflam$ is the scaled regularization coefficient. 
The connection between the quasi-posterior and the point estimator can be seen from the following variational characterization \citep{zhang2006eps}:
\begin{equation}\label{eq:gibbs-justification}
\Pi(\cdot\givendata) = \argmin_{\Psi} \EE_{f\sim\Psi}[\lambda^{-1}n d_n^2(\hat{E}_n f-\hat{b})] + \KL{\Psi}{\Pi},%
\end{equation}
showing that the quasi-posterior trades %
off between the properly scaled \emph{evidence} (the log \emph{quasi-likelihood})
$
\frac{n}{\lambda} d_n^2(\hat{E}_n f-\hat{b})
$, 
which characterizes the estimated violation of the GMM constraint~\eqref{eq:iv-gmm-conds}, 
and our \emph{prior belief} $\Pi(df)$.

As discussed in Section~\ref{sec:setup}, this work focuses on the dual/minimax formulation \eqref{eq:dualiv-obj} which is an instance of \eqref{eq:iv-obj-general}, 
and we employ a RKHS regularizer for $f$. Thus, our quasi-posterior is defined by 
choosing $\Pi$ as the standard Gaussian process prior $\Pi := \mc{GP}(0,k_x)$, and instantiating \eqref{eq:quasi-posterior-defn-general} with $d_n(\hat E_n f - \hat b)$ defined as  
\begin{equation}\label{eq:quasi-loglh-defn}
d_n^2(\hat E_n f - \hat b) := \max_{g\in\cI}\frac{1}{n}\sum_{j=1}^n\!\left(
    (f(x_j)-y_j)g(z_j) - \frac{g(z_j)^2}{2}
    \right) - \frac{\effnu}{2}\|g\|_\cI^2.
\end{equation}

We now continue the kernelized IV discussion in Section~\ref{sec:bg-kernelized-iv}, and derive the closed-form expression for its quasi-posterior. 
Observe that in \eqref{eq:dualiv-loss-equiv}, the first term is equivalent to the log density of the multivariate normal distribution $\cN(Y\mid f(X), \lambda L^{-1})$, as a function(al) of $f$ and $Y$.\footnote{
Here we assume the invertibility of $L$ for brevity. Alternatively, %
observe that \eqref{eq:dualiv-loss-equiv} defines a linear inverse problem with the finite-dimensional observation operator $f\mapsto \sqrt{L} f(X)$ and noise variance $\lambda^{-1} I$. Now we can follow \citep[Chapter 6]{stuart2010inverse} to derive the same quasi-posterior.
}
Hence, \eqref{eq:dualiv-loss-equiv} can be viewed as %
the objective of a kernel ridge regression (KRR) problem with a data-dependent noise covariance $\lambda L^{-1}$. To derive the closed-form expression, %
we can recall the connection between KRR and Gaussian process regression \citep[GPR; see][]{kanagawa_2018_gaussian} and 
consider the \underline{fictitious} data generating process 
\begin{equation}\label{eq:dgp-fic}
    f\sim\mc{GP}(0,k), ~~ p_{\mrm{fic}}(Y\mid f,X,Z) := \cN(Y\mid f(X), \lambda L^{-1}),
\end{equation} 
since its conditional distribution $p_{\mrm{fic}}(df\mid Y)$ will coincide 
with \eqref{eq:quasi-posterior-defn-general} \cite[Chapter 6]{stuart2010inverse}. See Remark~\ref{rmk:qb-vs-bayes} below for further discussion.

In the probability space of this fictitious data generating process, for any finite set of test inputs $x_*$, we have
$$
p_{\mrm{fic}}(f(x_*),Y\mid X,Z)\sim \cN\left(0, \begin{bmatrix}
K_{**} & K_{*x} \\ 
K_{x*} & K_{xx}+\lambda L^{-1}
\end{bmatrix}
\right),
$$ where $K_{(\cdot)}$ denote the corresponding Gram matrices with subscript $*$ denoting $x_*$ and $x$ denoting $X$ (so, e.g., $K_{*x}:=k(x_*,X)$). 
Thus, by the Gaussian conditioning formula, the marginal posterior equals %
\begin{align}    
\Pi(f(x_*)\givendata) &= p_{\mrm{fic}}(f(x_*)\mid Y) = \cN(m, S), \label{eq:qbkdiv} \\
\text{where}\quad
    m &:= K_{*x}(\lambda I + L K_{xx})^{-1} L Y, \label{eq:post-mean-calc} \\ 
    S &:= K_{**}-K_{*x}L(\lambda I+K_{xx}L)^{-1} K_{x*}, \label{eq:post-cov-calc} \\ 
L &= K_{zz}(K_{zz}+\nu I)^{-1}. \label{eq:post-L}
\end{align}
In the above $K_{zz}:=k(Z,Z)$ denotes the Gram matrix, $\nu := n\effnu$, and \eqref{eq:post-L} follows by applying the Woodbury identity to the definition of $L$ in Section~\ref{sec:setup}. 

\begin{remark}[Interpretation of \eqref{eq:dgp-fic}, Quasi-Bayes vs Bayes]\label{rmk:qb-vs-bayes}
The fictitious data generating process \eqref{eq:dgp-fic} is merely introduced for computational convenience. As discussed in e.g., \citet{florens_nonparametric_2012}, 
care should be taken in its interpretation. 
For example, one should not deduce from \eqref{eq:dgp-fic} that the quasi-posterior is only useful when $L^{1/2}(Y-f(X))\sim\cN(0,\lambda I)$ (or when whatever infinite-sample version of this statement holds).  

From a ``subjective Bayesian'' 
point of view, 
in quasi-Bayesian analyses we do not have a full ``Bayesian belief'' of the data generating process. One quick way to 
see the difference is to observe that, even though $(x_i,y_i,z_i)$ are i.i.d.~draws from the joint data distribution by assumption, 
$p_{\mrm{fic}}(y_i\mid f,X,Z)$ depends on all of $Z$ by \eqref{eq:post-L}, instead of merely $z_i$ as in a fully Bayesian model. This distinction arises because $\hat E_n$ is estimated from data. 
Nonetheless, \citet{florens_gaussian_2021} shows that the quasi-posterior can be obtained as a certain non-informative limit of a fully Bayesian posterior; but in the context of IV regression, construction of the latter appears unrealistic. 

More generally, quasi-Bayesian methods should be justified by the variational characterization \eqref{eq:gibbs-justification}, and analyses of their frequentist properties as in the following. We stress that none of the analyses below relies on $Y$ being distributed as in \eqref{eq:dgp-fic}. 
\end{remark}

\section{Asymptotic Analysis}\label{sec:theory}

In this section, we demonstrate that the quasi-posterior provides a useful notion of uncertainty, by establishing the following results: 
\begin{enumerate}
    \item Minimax optimal \emph{contraction rate} (Theorem~\ref{thm:L2}, Corollary~\ref{thm:contraction-alt-norm}), showing that the region of high posterior probability rapidly shrinks to the true function $f_0$. 
    \item (Asymptotic near-)validity of the \emph{credible balls} (Theorem~\ref{thm:credible-ball}), showing that the spread of the quasi-posterior is informative of the error of the posterior mean predictor. 
\end{enumerate}
These asymptotic results hold under $L_2$ and Sobolev norms, thus covering the recovery of both $f_0$ and its (higher-order) derivatives. %

\subsection{Assumptions in the Analysis}\label{sec:assumptions}

We first introduce our assumptions. As we discuss below, all assumptions come from previous work on nonparametric IV or Bayesian inverse problems. 
We will also provide an example where the assumptions hold in Example~\ref{rmk:assumptions} below.

\parspace
Assumption~\ref{ass:std} defines the problem. Note we assume bounded residual for simplicity; for general subgaussian residuals, the contraction rate will change by a logarithm factor. 
\begin{assumption}\label{ass:std}
The observed data $\data := \{(x_i,y_i,z_i): i\in[n]\}$ are $n$ i.i.d.~draws from the joint distribution $P(dx\times dy\times dz)$. The data distribution satisfies \eqref{eq:iv-gmm-conds}. 
The residual $\by-f_0(\bx)$ is always bounded by $B$. 
\end{assumption}

Assumption~\ref{ass:k-std} is standard. It is used to ensure that \emph{Mercer's representation} exists and constitutes an orthonormal basis (ONB) of $L_2(P(dx))$: 
consider the integral operator $
T: L_2(P(dx))\rightarrow L_2(P(dx)), f \mapsto \int f(x)k(x,\cdot)P(dx)$. 
Under Assumption~\ref{ass:k-std}, 
by \citet[Lemma 2.3, 2.12, Theorem 3.1, Corollary 3.5]{steinwart_mercers_2012},\SkipNOTE{we use the (i) <=> (ii) clause in Thm 3.1}
$T$ has an eigendecomposition $\{(\lambda_i,\bar\varphi_i)\}$, which defines a Mercer's representation, and $\{\bar\varphi_i\}$ form an ONB. 
\begin{assumption}\label{ass:k-std}
(i) $\cX$ and $\cZ$ are Polish spaces. (ii) The kernel $k_x$ is measurable, continuous, bounded and $L_2$-universal \citep{sriperumbudur2011universality}\SkipNOTE{i.e., $\cH$ dense in $L_2$}. (iii)
$P(dx)$ has full support.
\end{assumption}

Assumption~\ref{ass:H1} is taken from the literature on kernel ridge regression in the ``hard learning'' scenario \citep{fischer2020sobolev,steinwart2009optimal}. Before introducing it, we need to introduce the notion of \emph{power space}: 
\begin{definition}[\citealp{steinwart_mercers_2012}, p.~384]\label{defn:power-space}
Let $\{(\lambda_j,\bar\varphi_j):j\in\mb{N}\}$ defined as above. For $\alpha\in(0,1]$, define the \emph{power space} $[\cH]^\alpha$ using the norm 
\begin{equation*}%
\|f\|_{[\cH]^\alpha}^2 := \sum_{j=1}^\infty \lambda_i^{-\alpha} \<f, \bar\varphi_j\>_2^2.
\end{equation*}
\end{definition}
The inner product of $[\cH]^\alpha$ is defined similarly; 
see Appendix~\ref{app:power-space} for further discussion. 

Now we introduce Assumption~\ref{ass:H1}. 
Its part (i) specifies a polynomial eigendecay for $\cH$, which is standard for \emph{mildly ill-posed problems} \citep{cavalier2008nonparametric}. 
Part (ii) is the ``embedding property'' in \cite{fischer2020sobolev}. It is used to address the mismatch of the regularity of the RKHS $\cH$ and that of prior samples \citep{van_der_vaart_information_2011}, 
which makes GPR fall into the ``hard learning'' scenario. %
For Mat\'ern kernels, (ii) holds for all $b_{emb}>1$ under mild conditions listed in Section~\ref{sec:power-space-contraction}. 

As any RKHS $\cH$ can be continuously embedded into its power space $[\cH]^\alpha$, for $\alpha<1$, condition (ii) is weaker for larger $b_{emb}$. 
We merely require the existence of a $b_{emb}$, which can be arbitrarily close to $b$. 
$b_{emb}=b$ would be satisfied as long as $f_0$ is in an RKHS $\bar\cH$ with a bounded kernel, since after that we can choose $\cH := [\bar\cH]^{\frac{b+1}{b}}$; see Lemma~\ref{lem:power-space}. Requiring the existence of $b_{emb}=b-\epsilon<b$ is only ``infinitesimally stronger'', as it is satisfied if 
$
\sup_{x\in\cX} \sum_{i=1}^\infty i^{b-\epsilon} \bar\varphi_i^2(x)<\infty 
$ \citep[Theorem 5.3]{steinwart_mercers_2012}, 
whereas 
requiring $\bar\cH$ to have a bounded kernel implies 
$
\sup_{x\in\cX} \sum_{i=1}^\infty i^{b} \bar\varphi_i^2(x)<\infty.
$ 

\begin{assumption}\label{ass:H1}
\emph{(i)}
$
 \lambda_j\asymp j^{-(b+1)}
$ for some $b>1$, \emph{and (ii)} for some $b_{emb}<b$,  
the power space $[\cH]^{b_{emb}/(b+1)}$ can be continuously embedded into $L_\infty(P(dx))$. %
\end{assumption}

Assumption~\ref{ass:approx-relaxed} requires $f_0$ to be well approximated in $\cH$. 
The first two inequalities in (b) is equivalent to bounding an $L_2$ \emph{concentration function} of $f_0$, 
and is weaker than the sup-norm requirement in previous work on GPR \citep{van_der_vaart_rates_2008,van_der_vaart_information_2011}. For the sup norm, we place the very basic requirement that 
approximation errors do not diverge. 

(a) %
is a more intuitive condition and implies (b), as shown in  
Lemma~\ref{lem:approx-validation}. %
It requires $f_0\in\bar\cH$. 
The power space $\bar\cH$ is larger than $\cH$, but also has an RKHS structure; see Appendix~\ref{app:power-space}. It often has a clear interpretation: 
for example, when $\cH$ is a Sobolev space (e.g., a Mat\'ern RKHS), 
$\bar\cH$ will be a lower-order Sobolev space 
under mild conditions.  
(a) implies that our prior knowledge $f_0\in\bar\cH$ should be matched by using a more regular RKHS $\cH\subset \bar\cH$. This is a standard practice, and arises from the mismatch between the regularity of $\cH$ and that of GP prior samples \citep{van_der_vaart_information_2011}. 

Note that throughout this work we only state assumptions in the rate-optimal cases. 
Suboptimal specifications lead to deteriorated rates, but consistency can still be possible. %
The conference version of this work \citep{wang2021scalable} also established consistency under fewer assumptions. 
\begin{assumption}\label{ass:approx-relaxed}
\SkipNOTE{removed the sup norm bound which follows from an interpolating ineq}
Define $$\bar\cH := [\cH]^{\frac{b}{b+1}}.$$
We require that one of the following holds: {\em (a)} $f_0\in\bar\cH$, %
{\em Or (b)} 
For all $j\in\mb{N}$, there exists $f_j^\dagger\in\cH$ such that 
\begin{equation}\label{eq:approx-relaxed-orig}
\|f_j^\dagger\|_\cH^2 \lesssim j^{\frac{1}{b+1}}, ~~ 
\|f_0-f_j^\dagger\|_2^2 \lesssim j^{-\frac{b}{b+1}}, ~~
\|f_0-f_j^\dagger\|^2 \lesssim 1.
\end{equation}
\end{assumption}

\paragraph{}

Assumption~\ref{ass:ill-posed} restricts 
to the {mildly ill-posed} setting, thus matching Assumption~\ref{ass:H1}~(i). The \emph{severely ill-posed} setting is often treated separately; it should be matched by kernels with an exponential eigendecay.

\begin{assumption}\label{ass:ill-posed}
The conditional expectation operator $E:L_2(P(dx))\rightarrow L_2(P(dz))$ has singular values $\nu_j\asymp j^{-p}$, where $p\ge 0$ is a constant. 
\end{assumption}

Assumption~\ref{ass:link} imposes the 
\emph{link and reverse link conditions} on the orthonormal series $\{\bar\varphi_i\}$. %
Variants of these conditions appear in all previous work on optimal $L_2$ rate for nonparametric IV \citep[e.g.,][]{horowitz_specification_2012,chen_estimation_2012,kato_quasi-bayesian_2013},\footnote{
While it does not appear in recent ML works on IV, they did not establish convergence rates for the recovery of nonparametric $f_0$, and only treated $Ef_0$.  The rates for $Ef_0$ are also suboptimal; 
see Section~\ref{sec:related-work}. 
}  
and is considered relatively weak in Bayesian inverse problem \citep{agapiou2013posterior,knapik_general_2018}.
A stronger assumption, which is also common in the Bayesian literature \citep{knapik_bayesian_2011,szabo_frequentist_2015}, requires that $\{\bar\varphi_i\}$ diagonalize the operator $E^* E$. 

Note that \eqref{eq:reverse-link-cond} implies injectivity of $E$, which implies identification in $L_2(P(dx))$. Without assuming identifiability, it is still possible to derive suboptimal rates in the semi-norm $\|E(\cdot)\|_2$; see the conference version of this work. 

\begin{assumption}\label{ass:link}
\phantom{.}%
Let $\bar\varphi_i$ be defined as above. 
For any $j\in\mb{N}$, denote by $\mrm{Proj}_j$ the $L_2$ projection onto $\mrm{span}\{\bar\varphi_i: i\le j\}$. 
Then we have, for all $f\in L_2(P(dx))$ and $j\in\mb{N}$, 
\begin{align}
\|Ef\|_2^2\gtrsim j^{-2p} \|\mrm{Proj}_j f\|_2^2 \tag{RL},  \label{eq:reverse-link-cond}\\ 
\|Ef\|_2^2\lesssim \sum_{j=1}^\infty j^{-2p}\<f,\bar\varphi_i\>_2^2 \tag{L}. \label{eq:link-cond}
\end{align}
\end{assumption}

Assumption~\ref{ass:I-approx}
is used to control $\|\hat E_n f -Ef\|_2$, and subsequently the estimation error of the quasi-likelihood, for ``typical'' $f$ sampled from the GP prior. We specify $\delta_n^2$ so that under Assumptions~\ref{ass:H1} and \ref{ass:ill-posed}, the estimation error can be optimal. 
This requirement of optimal error rate 
is intuitive and appears in all previous work with optimal rates. %

Part (i) follows \cite{dikkala_minimax_2020} and controls the variance part of the error with local Rademacher complexity. It can be fulfilled by an entropy integral bound 
(\citealp{wainwright2019high}, Corollary 14.3); By standard entropy bounds \citep[e.g.,][Chapter 4]{gine2021mathematical}, when $\cI_1$ is contained in an $L_2$-Sobolev or H\"older ball, (i) merely requires the $L_2$ estimation error (using the Sobolev or H\"older ball) be $O(\delta_n)$. 
Part 
(ii) controls the bias, and matches the assumption in most recent work \citep{muandet_dual_2020,liao_provably_2020}; 
we consider the restriction of $E$ to $\bar\cH$ instead of $\cH$, 
as Assumption~\ref{ass:approx-relaxed} matches the regularity of $f_0$ with $\bar\cH$. 
(ii) can be relaxed to allow an $L_2$ approximation error of $O(\delta_n)$, which enables the use of e.g.~Nystr\"om approximation for $\hat E_n$; see Appendix~\ref{app:proof-nystrom}. 

\begin{assumption}\label{ass:I-approx} 
Denote by $\cI_1$ the unit norm ball of $\cI$. 
Let $\delta_n$ be the critical radius of the local Rademacher complexity %
of $\cI_1$ \citep[Eqs.~(14.4)]{wainwright2019high}. 
Then 
{\em (i)} functions in $\cI_1$ are uniformly bounded, and $\delta_n^2\lesssim n^{-\frac{b+2p}{b+2p+1}}$, 
{\em and (ii)} 
the restriction of $E$ to $\bar\cH$ has image contained in $\cI$, and is bounded, i.e., $\|E f\|_\cI\lesssim \|f\|_{\bar\cH}$ for all $f\in\bar\cH$. 
\end{assumption}

We conclude this subsection with the following example, adapted from \citet{horowitz_applied_2011}. %

\begin{example}\label{rmk:assumptions}%
Let $\cX=\cZ$ be the circle $\mb{S}^1=\mb{T}^1$, the joint distribution of $X$ and $Z$ be absolutely continuous w.r.t.~the Hausdorff measure, and the corresponding density $f_{XZ}$ be bounded on both sides. 
Let $\bar\cH$ be in the Sobolev space $W^{b/2,2}(\mb{T}^1)=
\{f: \sum_{i=1}^\infty \<f, \bar\varphi_i\>_2^2 i^b <\infty
\}$ where $\{\bar\varphi_i\}$ denotes the sinusoidal basis \citep{triebel_theory_1983,strichartz_analysis_1983}. 
It then follows from \cite{steinwart_mercers_2012}\SkipNOTE{Lemma 2.6} that $\bar\cH$ 
is an RKHS with the (bounded) kernel having the Mercer representation $
\bar k_x(x,x') = \sum_{i=1}^\infty i^{-b} \bar\varphi_i(x)\bar\varphi_i(x')$.\footnote{
    In fact $\bar k_x$ is the Mat\'ern kernel, and closed-form expressions are provided in \citet{borovitskiy_matern_2021}. 
} 
Suppose $f_0\in\bar\cH$, and define $\cH := W^{(b+1)/2,2}, \cI:=W^{(b+2p)/2,2}$ which have similar RKHS structures. 
Then Assumptions \ref{ass:k-std},~\ref{ass:H1},~\ref{ass:approx-relaxed}\SkipNOTE{~(a)} are satisfied. \ref{ass:I-approx}~(i) follows from the entropy bound for periodic Sobolev spaces \citep[e.g., Corollary 4.3.38,][]{gine2021mathematical}. 
The remaining assumptions 
can be satisfied if we construct $E$ so that its left and right singular vectors match the Mercer basis, as in Section~\ref{sec:exp-asymp-val}. 
We refer to \citet[Section 6.1]{chen2011rate} for another example where the singular vectors do not have to match $\{\bar\varphi_i\}$. %

The construction generalizes easily to higher dimensional toruses. 
As discussed above, the assumptions on the kernels can also be satisfied if we work with $[0,1]^d$ as opposed to the torus $\mb{T}^d$, and use the corresponding Mat\'ern kernels, 
but explicit construction of valid $E$ is not as straightforward, since we no longer have $\EE\bar\varphi_{i+1}\equiv 0$. Nonetheless, it is known that 
$p$ is lower bounded by the differentiability of the joint density: if $f_{XZ}$ has continuous mixed derivatives up to order $r$, we will have $p\ge r$ \citep[p.~365]{horowitz_applied_2011}. 
\end{example}

\subsection{Contraction Rates}\label{sec:contraction}

\subsubsection{$L_2$ contraction rates}

Theorem~\ref{thm:L2} below is our main result. \eqref{eq:L2-rate-maintext} establishes an $L_2$ contraction rate of $\epsilon_n$ for the recovery of $f_0$, and 
\eqref{eq:L2-rate-CE-maintext} is a contraction rate in the norm $\|E(\cdot)\|_2$. 

\begin{theorem}%
    \label{thm:L2}
Fix $
\lambda=1, \effnu = C \delta_n^2
$ where $C>0$ is an universal constant determined by the assumptions. 
Under the assumptions in Section~\ref{sec:assumptions}, 
there exists $M>0$ such that for $\epsilon_n^2 = n^{-\frac{b}{b+2p+1}}$, we have
\begin{align}
\lim_{n\to\infty} \PP_{\data} \Pi\!\left(
    \{f: \|f-f_0\|_2\ge M \epsilon_n\}
    \givendata
\right) &= 0,\label{eq:L2-rate-maintext} \\ 
\lim_{n\to\infty} \PP_{\data} \Pi\!\left(
    \{f: \|E(f-f_0)\|_2\ge M \delta_n\}
    \givendata
\right) &= 0. \label{eq:L2-rate-CE-maintext}
\end{align}
In addition, if Assumption~\ref{ass:std} is relaxed to allow general subgaussian residuals $\by-f_0(\bx)$, the above equations will hold with $\epsilon_n$ and $\delta_n$ multiplied by $\log n$. 
\end{theorem}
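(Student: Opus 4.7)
The plan is to first establish the forecast contraction rate \eqref{eq:L2-rate-CE-maintext} in the $\|E(\cdot)\|_2$ semi-norm, and then deduce the $L_2$ rate \eqref{eq:L2-rate-maintext} by combining the reverse link condition \eqref{eq:reverse-link-cond} with a spectral truncation argument. The overall structure adapts the Ghosal--van der Vaart posterior contraction program to the quasi-Bayesian setting by exploiting the variational characterization \eqref{eq:gibbs-justification}.

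\textbf{Step 1: forecast rate.} The key technical observation is that the quasi-log-likelihood $n d_n^2(\hat E_n f - \hat b)/\lambda$ concentrates around $(n/2)\|E(f-f_0)\|_2^2$, up to an additive error of order $n\delta_n^2$. Expanding the minimax formulation \eqref{eq:dualiv-obj}, splitting the residual $y_j - f(x_j) = (f_0(x_j) - f(x_j)) + (y_j - f_0(x_j))$, and using $\EE(\by - f_0(\bx)\mid\bz) = 0$, the bias contribution reduces to an approximation problem controlled by Assumption~\ref{ass:I-approx}~(ii). The stochastic fluctuation is handled uniformly over a sieve $\cF_n = \{f \in \cH: \|f\|_\cH \le M_n, \|f\|_\infty \le M_n\}$ with $M_n$ growing polylogarithmically, using the local Rademacher complexity bound via Assumption~\ref{ass:I-approx}~(i); the sup-norm constraint is compatible with the prior since the embedding Assumption~\ref{ass:H1}~(ii) transfers $[\cH]^{b_{emb}/(b+1)}$-regularity to $L_\infty$-boundedness. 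For the prior-mass condition, I would compute the GP concentration function centered at $f_0$ with respect to $\|E(\cdot)\|_2$: using the approximants $f_j^\dagger$ of Assumption~\ref{ass:approx-relaxed} and the link condition \eqref{eq:link-cond} (which gives $\|E(f_0 - f_j^\dagger)\|_2 \lesssim \|f_0 - f_j^\dagger\|_2$), together with the small-ball estimate for a GP with eigendecay $\lambda_j \asymp j^{-(b+1)}$, one obtains a prior-mass lower bound $\Pi(\|E(f-f_0)\|_2 \le \delta_n) \ge \exp(-c n\delta_n^2)$. Pairing this with an exponential upper bound on the quasi-likelihood ratio on the complement of $\{f: \|E(f-f_0)\|_2 \le M\delta_n\} \cap \cF_n$ (plus the standard sieve-complement mass estimate for GP priors via Borell's inequality) yields \eqref{eq:L2-rate-CE-maintext}.

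\textbf{Step 2: $L_2$ rate via truncation.} Let $\mrm{Proj}_j$ be the $L_2$-projection onto $\mrm{span}\{\bar\varphi_i: i \le j\}$. The reverse link condition gives $\|\mrm{Proj}_j(f - f_0)\|_2^2 \lesssim j^{2p}\|E(f-f_0)\|_2^2 \lesssim j^{2p}\delta_n^2$ on the event from Step~1. For the orthogonal component, the concentration-function analysis also produces, as a by-product, that the quasi-posterior concentrates on $\{f: \|f\|_{\bar\cH} \le C\}$ (by Borell applied to a rescaling of the GP, since $\bar\cH = [\cH]^{b/(b+1)}$ is a natural regularity class for prior samples). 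On this set, Assumption~\ref{ass:H1}~(i) yields $\|(I - \mrm{Proj}_j)f\|_2^2 \lesssim j^{-b}\|f\|_{\bar\cH}^2$, and the same bound holds for $f_0$ by Assumption~\ref{ass:approx-relaxed}~(a) (or by a direct estimate based on (b)). Balancing $j^{2p}\delta_n^2$ against $j^{-b}$ at $j \asymp n^{1/(b+2p+1)}$ produces $\|f - f_0\|_2^2 \lesssim n^{-b/(b+2p+1)} = \epsilon_n^2$. The subgaussian extension replaces the bounded Bernstein/Hoeffding steps with their subgaussian counterparts, paying a $\log n$ factor in the fluctuation term and hence in $\epsilon_n, \delta_n$.

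\textbf{Main obstacle.} The principal difficulty is the uniform two-sided comparison $d_n^2(\hat E_n f - \hat b) = \tfrac12\|E(f-f_0)\|_2^2 + O(\delta_n^2)$ over prior samples $f$, rather than over a fixed function. Unlike in standard GPR, $\hat E_n$ is itself a data-dependent ridge estimator, and GP samples lack sharp sup-norm control (the ``hard learning'' regime), so Assumption~\ref{ass:H1}~(ii) must be invoked carefully to transfer $[\cH]^{b_{emb}/(b+1)}$-regularity to $L_\infty$-bounds before applying local Rademacher complexity. The minimax form of $d_n$ is essential here: it avoids explicit inversion of $\empCz$ in the analysis and lets the relevant complexity be that of $\cI_1$ alone, but it makes the lower isometry $d_n^2(\hat E_n f - \hat b) \gtrsim \|E(f-f_0)\|_2^2 - O(\delta_n^2)$ (needed to push the contraction from the quasi-likelihood back to the forecast norm) the most delicate ingredient of Step~1.
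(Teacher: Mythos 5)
Your high-level plan — concentration function, sieve via Borell's inequality, Bernstein-type control of the quasi-likelihood, and the link/reverse-link conditions to translate between the $\|E(\cdot)\|_2$ semi-norm and $L_2$ — matches the ingredients the paper uses, and your ``main obstacle'' paragraph correctly identifies the lower isometry of $d_n^2$ as the delicate step. But there is a genuine gap in your Step~2, and your sequential plan collides with a point the paper explicitly flags.

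The gap: you claim the quasi-posterior concentrates on $\{f:\|f\|_{\bar\cH}\le C\}$ ``by Borell applied to a rescaling of the GP, since $\bar\cH$ is a natural regularity class for prior samples.'' This is false. Writing a prior draw as $W=\sum_j\lambda_j^{1/2}\epsilon_j\bar\varphi_j$ with $\lambda_j\asymp j^{-(b+1)}$, one has $\|W\|_{\bar\cH}^2=\sum_j\lambda_j^{-b/(b+1)}\lambda_j\epsilon_j^2\asymp\sum_j j^{-1}\epsilon_j^2$, which diverges almost surely; so $\Pi(\bar\cH)=0$, and the same holds for any quasi-posterior absolutely continuous with respect to $\Pi$. (The same issue invalidates your sieve $\cF_n$ with $\|f\|_\cH\le M_n$ and polylogarithmic $M_n$: GP samples have a.s.\ infinite $\cH$-norm, and the paper's $\Theta_m$ takes $\|f_h\|_\cH^2\lesssim n^{1/(b+2p+1)}$, which is polynomial.) What Borell actually gives is the decomposition $f=f_h+f_e$ with $\|f_h\|_\cH$ polynomially large and $\|f_e\|_2$ small, and the tail control in your truncation step must be split accordingly: $\|(I-\mathrm{Proj}_j)f_e\|_2\le\|f_e\|_2$ directly, while $\|(I-\mathrm{Proj}_j)f_h\|_2\lesssim j^{-(b+1)/2}\|f_h\|_\cH$; with $j\asymp n^{1/(b+2p+1)}$ this balances to $\epsilon_n$. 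As written, your $\bar\cH$-ball argument claims strictly more regularity than the posterior possesses.

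On the sequential-vs-simultaneous organization: the paper's text before Theorem~\ref{thm:L2} notes that because $E$ must be estimated, the $Ef$ rate and the $f$ rate ``have to be proved simultaneously.'' Concretely, the Bernstein lower bound on $\ell_n(f)$ for a fixed $f\in\Theta_m\cap\err$ needs a sup-norm control $\|f-f_0\|\lesssim U^{1/(b+1)}$ and an $\cI$-norm bound $\|g_j\|_\cI\lesssim U^{1/(b+1)}$ with $U=\|f-f_0\|_2/\epsilon_n$ — both parameterized by the $L_2$ error, not the forecast error — so the $L_2$ scale must be tracked inside the quasi-likelihood estimate itself, not bootstrapped afterward. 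The paper handles this via Lemma~\ref{lem:num-step-2}, which shows that on the sieve $\Theta_m$ the two errors are comparable; once you have that equivalence, ``prove the $E$-rate, then deduce $L_2$'' and ``prove both at once'' are the same argument in different orders. So your plan is salvageable, but only after replacing the $\bar\cH$-ball shortcut with the $f_h+f_e$ sieve decomposition and carrying the $L_2$ error through Step~1.
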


The proof is deferred to Appendix~\ref{app:proof-thm-L2}. 
It is based on the posterior contraction framework \citep{ghosal2000convergence}, and shares similarity with 
the analysis of Bayesian inverse problem under the link condition  \citep{knapik_general_2018}. 
While IV is also a linear inverse problem, it is more challenging as we need to estimate the ``forward operator'' $E$ from data. 
Thus, 
contrary to \cite{knapik_general_2018}, in which the rate for $Ef$ can be established easily and the rate for $f$ is derived from it, 
in our case \eqref{eq:L2-rate-CE-maintext} and \eqref{eq:L2-rate-maintext} have to be proved simultaneously, since they require to control the same estimation error about $E$. 
For this purpose, we combine the argument in \citep{dikkala_minimax_2020} %
with the interpolation space arguments in \citep{steinwart2009optimal}. 

Under our assumptions,  
the $L_2$ rate $\epsilon_n$ is minimax optimal \citep[Corollary 1]{chen2011rate}.\footnote{\citep{chen2011rate} allows for additional assumptions, %
and thus applies to our setting. Such additional assumptions are common in the analysis of concrete estimators, as opposed to establishing lower bounds.
} 
The rate for $Ef$ also matches the minimax rate for least square regression on the dataset $\{(y_i,z_i): i\in[n]\}$, although its optimality is less clear due to the additionally observed $X$. 
The contraction rates also imply high-probability convergence rates for the posterior mean estimator, or $L_2$ rates (for $f$ and $Ef$) if we truncate prediction with a large constant, as in e.g., \citet{steinwart2009optimal}. 
We compare with previous work in Section~\ref{sec:related-work}. 

Finally, observe that under Assumptions \ref{ass:H1},~\ref{ass:ill-posed},~\ref{ass:I-approx}, 
our choice for the regularizer $\effnu$ is asymptotically optimal for the estimation of $\EE(f(\bx)\mid \bz)$, for functions $f$ with the same regularity as the GP prior (e.g., $f\in\bar\cH$). 
This suggests that this choice may be identified from data using an algorithm in previous work (see \citealp{singh_kernel_2020}, or our Appendix~\ref{app:hps-sel}). %

\subsubsection{Sobolev norm contraction rates}\label{sec:power-space-contraction}
Suppose 
$\cX$ is a bounded open subset of $\mb{R}^d$ with a smooth boundary, $P(dx)$ has Lebesgue density bounded on both sides, and  
$\cH$ is equivalent to a Sobolev space $W^{\frac{(b+1)d}{2},2}$ (e.g., when using the Mat\'ern kernel).\footnote{
The order $\frac{(b+1)d}{2}$ is determined by the eigendecay assumption; see also \citep{kanagawa_2018_gaussian}. 
}
Then for all $b'\in (0,b+1)$, 
$[\cH]^{\frac{b'}{b+1}}$ is norm-equivalent to the Sobolev space $W^{\frac{b'd}{2},2}$; see 
\citet[Sections 7.33, 7.67]{adams2003sobolev}; \citet[Section 4]{fischer2020sobolev}. 
Therefore, we can derive Sobolev norm rates by investigating the power space, for which we have the following:

\begin{corollary}%
\label{thm:contraction-alt-norm}
Let all assumptions in Section~\ref{sec:assumptions} hold; in particular, for 
Assumption \ref{ass:approx-relaxed} let (a) hold. 
Set $\efflam,\effnu$ as in Theorem~\ref{thm:L2}. Then for some sufficiently large $M>0$, we have, for all $0\le b'<b$ and $\epsilon_{b',n}^2 := n^{-\frac{b-b'}{b+2p+1}}$, 
$$
\PP_{\data}(\Pi(\{f: \|f-f_0\|_{[\cH]^{\frac{b'}{b+1}}}\ge M\epsilon_{b',n}\}\givendata)) \rightarrow 0.
$$
\end{corollary}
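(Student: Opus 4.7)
The plan is to combine the $L_2$ contraction of Theorem~\ref{thm:L2} with an interpolation inequality between power spaces. H\"older applied term-wise to the Mercer series $\|f\|_{[\cH]^\alpha}^2=\sum_j\lambda_j^{-\alpha}\<f,\bar\varphi_j\>_2^2$ gives, for $0\le\alpha_1\le\alpha_2\le 1$,
\begin{equation*}
\|f\|_{[\cH]^{\alpha_1}}\le \|f\|_2^{\,1-\alpha_1/\alpha_2}\,\|f\|_{[\cH]^{\alpha_2}}^{\,\alpha_1/\alpha_2},
\end{equation*}
so setting $\alpha_1=b'/(b+1)$ and $\alpha_2=\beta=b/(b+1)$ yields $\|g\|_{[\cH]^{b'/(b+1)}}\le \|g\|_2^{(b-b')/b}\,\|g\|_{\bar\cH}^{b'/b}$, and $\epsilon_n^{(b-b')/b}=\epsilon_{b',n}$. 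Applying this naively to $g=f-f_0$ under the posterior fails because prior samples $f\sim\mc{GP}(0,k_x)$ have $\EE\|f\|_{\bar\cH}^2=\sum_j\lambda_j^{1-b/(b+1)}\asymp\sum_j j^{-1}=\infty$, so $\|f\|_{\bar\cH}=\infty$ a.s.~under both prior and posterior. I would therefore apply the interpolation only to the bias $m-f_0$, and handle the fluctuation $f-m$ separately using the posterior covariance.

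For the bias, the posterior mean $m$ from \eqref{eq:post-mean-calc} is a KRR-type estimator and lies in $\cH\subset\bar\cH$. Under Assumption~\ref{ass:approx-relaxed}(a), $f_0\in\bar\cH$, and the standard KRR bias analysis, adapted to the data-dependent noise operator $L$ in \eqref{eq:post-L}, gives $\|m-f_0\|_{\bar\cH}=O_P(1)$; Theorem~\ref{thm:L2} already supplies $\|m-f_0\|_2=O_P(\epsilon_n)$. Interpolation then yields $\|m-f_0\|_{[\cH]^{b'/(b+1)}}=O_P(\epsilon_{b',n})$.

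For the fluctuation $f-m$, which is a centered Gaussian under the posterior with covariance \eqref{eq:post-cov-calc}, I would bound its second moment
\begin{equation*}
\EE\bigl[\|f-m\|_{[\cH]^{b'/(b+1)}}^2\mid\data\bigr]=\sum_j \lambda_j^{-b'/(b+1)}\,\sigma_j^2,
\end{equation*}
where $\sigma_j^2$ is the posterior variance in direction $\bar\varphi_j$. The target is a per-frequency bound $\sigma_j^2\lesssim \min(\lambda_j,(n\nu_j^2)^{-1})\asymp\min(j^{-(b+1)},n^{-1}j^{2p})$, with the cutoff at $j^\star\asymp n^{1/(b+2p+1)}$. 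Splitting the sum at $j^\star$, the low-frequency piece is $\sum_{j\le j^\star}n^{-1}j^{b'+2p}\asymp n^{-1}(j^\star)^{b'+2p+1}=n^{-(b-b')/(b+2p+1)}$, and the high-frequency tail $\sum_{j>j^\star}j^{-(b-b'+1)}\asymp (j^\star)^{-(b-b')}$ matches. Both equal $\epsilon_{b',n}^2$, and Markov yields $\|f-m\|_{[\cH]^{b'/(b+1)}}=O_P(\epsilon_{b',n})$. Combined with the bias bound, this gives the stated contraction.

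The hardest step is the uniform per-frequency posterior-variance bound $\sigma_j^2\lesssim\min(\lambda_j,(n\nu_j^2)^{-1})$. The posterior covariance in \eqref{eq:post-cov-calc} is data-dependent through $L$ and through the empirical operators of Section~\ref{sec:bg-kernelized-iv}, so one has to propagate the singular-value decay of Assumption~\ref{ass:ill-posed} and the reverse link condition \eqref{eq:reverse-link-cond} through these empirical covariances to obtain control of $\sigma_j^2$ uniformly in $j$. This is essentially the same estimation-error control used in the proof of Theorem~\ref{thm:L2}, but now read off at the level of individual eigendirections rather than summed against the $L_2$ weight; the $[\cH]^{b'/(b+1)}$ norm simply reweights the frequency-by-frequency contributions, and once the per-frequency bound is in hand the summation above is the direct calculation.
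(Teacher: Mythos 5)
Your strategy --- decompose $f-f_0 = (m-f_0) + (f-m)$, interpolate the bias, and sum the per-frequency posterior variances --- is genuinely different from the paper's. The paper's proof (Appendix~\ref{app:proof-alt-norm}) avoids any bias/variance analysis of the posterior mean and covariance; instead, Lemma~\ref{lem:sobolev-reduction} shows that, restricted to the sieve $\Theta_m$ (which carries all but exponentially little of the prior and hence posterior mass), a large error in the $[\cH]^{b'/(b+1)}$ norm forces a proportionally large $L_2$ error, so the power-space rate reduces directly to the $L_2$ contraction already proved in Theorem~\ref{thm:L2}. Your approach is closer to the classical Bayesian inverse-problem route of Knapik et al.\ and would give an explicit per-frequency picture of the posterior spread, but it leans on two quantitative claims that the paper neither proves nor needs, and one of them is in real tension with the paper's assumptions.

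The per-frequency variance bound $\sigma_j^2 \lesssim \min(\lambda_j, (n\nu_j^2)^{-1})$ is the critical gap. Assumption~\ref{ass:link} is deliberately weaker than requiring $E^*E$ to diagonalize in the Mercer basis $\{\bar\varphi_j\}$ (the paper flags this explicitly right after the assumption). Under the link and reverse-link conditions alone one only controls cumulative quantities such as $\|Ef\|_2^2$ against Sobolev weights; the diagonal entry $\<\bar\varphi_j,\cC\bar\varphi_j\>$ of the posterior covariance can be inflated by off-diagonal structure of $\cV = C_{xz}(C_{zz}+\effnu I)^{-1}C_{zx}$ relative to the Mercer basis. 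If $\bar\varphi_j$ has nontrivial overlap with a poorly resolved direction of $\cV$, then $\<\bar\varphi_j,(\cV+\efflam I)^{-1}\bar\varphi_j\>$ can be of order $\efflam^{-1}$ rather than the conjectured $(\lambda_j\nu_j^2/\effnu+\efflam)^{-1}$, so the per-frequency bound is not generically true under the stated hypotheses; ``propagating the singular-value decay through the empirical covariances'' does not fix this, since it is a population-level obstruction, not an estimation-error one. Your argument would go through if the paper assumed $E^*E$ diagonalizes in $\{\bar\varphi_j\}$, but that would be a strictly stronger theorem than what is stated. The bias claim $\|m-f_0\|_{\bar\cH}=O_P(1)$ is also not something you can borrow: the corollary itself stops strictly short of $b'=b$, so no $\bar\cH$-norm rate is available in the paper, and establishing a high-probability $\bar\cH$-bound for the KRR-type mean with the estimated, data-dependent weighting $L$ (hence implicitly $\hat E_n$) is itself a nontrivial piece of new work rather than a citation.
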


The proof is in %
Appendix~\ref{app:proof-alt-norm}. 
In the above setting, 
the rate $\epsilon_{b',n}$ is %
optimal for estimating integer-order derivatives, i.e., when %
$bd/2\in\mb{N}$ (\citealp{chen_optimal_2015}, Theorem 2.4). 

\subsubsection{Side result: improved rate for Gaussian process regression} 
Our proof technique can be applied to GPR for unconfounded regression, i.e., when $\EE(\by-f_0(\bx)\mid\bx)=0$. In this case, we improve the result in \cite{van_der_vaart_information_2011} for Mat\'ern priors, by relaxing their assumption of joint  
H\"older and Sobolev regularity to Sobolev regularity alone. 
More generally, our Assumption~\ref{ass:approx-relaxed} relaxes their assumption (in the rate-optimal case), by requiring approximability in $L_2$ norm instead of sup norm. 
The relaxation makes it clear that the result is minimax optimal. 
We state the result in power space norms, thus covering both 
$L_2$ and Sobolev spaces:

\begin{theorem}%
\label{thm:GPR-L2}
Let %
$\cH$ 
satisfy Assumption~\ref{ass:H1}, and $f_0\in\bar\cH$. Let $\Pi(df\mid \data)$ be the Gaussian process posterior under a normal likelihood. Suppose the residual $\be := \by-f_0(\bx)$ is bounded by $B$ and $\EE(\be\mid \bx)=0$.\footnote{
As in the IV setting, we assume bounded noise for simplicity, and general subgaussian noise can be treated by inflating $B$ (and thus the final rate) by a logarithm factor. We specify normal likelihood for the GP for simplicity. It is also a standard practice as it makes computation easy. 
} 
Then for sufficiently large $M>0$, we have, for all $b'\in [0,b)$ and $\epsilon_{b',n}^2=n^{-\frac{b-b'}{b+1}}$, 
$$
\PP_\data \Pi\!\left(\{f: \|f-f_0\|_{[\cH]^{\frac{b'}{b+1}}}\ge M\epsilon_{b',n}\}\givendata\right) \rightarrow 0.
$$
\end{theorem}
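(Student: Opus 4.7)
The strategy is to specialize the argument behind Theorem~\ref{thm:L2} to the unconfounded setting, where the conditional expectation operator collapses to the identity and no empirical $\hat E_n$ needs to be estimated. The $L_2$ rate ($b'=0$) will follow from the standard posterior contraction framework of Ghosal--Ghosh--van der Vaart applied to a Gaussian process prior, while the power space rates for $b' \in (0,b)$ will exploit the explicit Gaussian structure of the GPR posterior together with a spectral interpolation argument in the Mercer basis.

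For the $L_2$ contraction I first establish a prior mass bound via a concentration-function decomposition. Using Assumption~\ref{ass:approx-relaxed}(a), set $\fApproxGtM := \sum_{i \le m} \<f_0,\bar\varphi_i\>_2 \bar\varphi_i$; the polynomial eigendecay of Assumption~\ref{ass:H1}(i) yields $\|\fApproxGtM - f_0\|_2^2 \lesssim m^{-b}$ and $\|\fApproxGtM\|_\cH^2 \lesssim m$. Combined with the standard small-ball exponent $-\log\PP(\|W\|_2 \le \epsilon) \asymp \epsilon^{-2/b}$ for a centred GP with $\lambda_j \asymp j^{-(b+1)}$, the van der Vaart--van Zanten decomposition gives $-\log\Pi(\|f - f_0\|_2 \le 2\epsilon_{0,n}) \lesssim n\epsilon_{0,n}^2$ with $\epsilon_{0,n}^2 = n^{-b/(b+1)}$, upon balancing $m \asymp n\epsilon_{0,n}^2$. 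The sieve $\cF_n := \{h + w : \|h\|_\cH \le M_n,\ \|w\|_2 \le \epsilon_{0,n}\}$ with $M_n \asymp \sqrt{n\epsilon_{0,n}^2}$ carries prior mass $\ge 1 - e^{-cn\epsilon_{0,n}^2}$ by Borell--TIS, and has $L_2$ metric entropy $\lesssim n\epsilon_{0,n}^2$ by the eigendecay. Exponentially consistent tests separating $\{\|f - f_0\|_2 \ge M\epsilon_{0,n}\} \cap \cF_n$ from $f_0$ then follow from Bernstein-type bounds (bounded residual) together with the $L_\infty$-embedding of Assumption~\ref{ass:H1}(ii) applied to sieve elements. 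Plugging into the contraction theorem produces the $L_2$ rate.

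For the power space rate $b' \in (0,b)$, use that the GPR posterior is itself a GP whose mean $\hat f_n$ coincides with the kernel ridge regression estimator at regularization $\asymp n^{-1}$, and whose covariance is, at leading order, diagonalised by the Mercer basis with eigenvalues of order $\min(\lambda_i, n^{-1})$. Taking expectation under the posterior,
\begin{equation*}
  \EE_{\Pi(\cdot\givendata)} \|f - f_0\|_{[\cH]^{b'/(b+1)}}^2 \lesssim \|\hat f_n - f_0\|_{[\cH]^{b'/(b+1)}}^2 + \sum_{i\ge 1} \lambda_i^{-b'/(b+1)} \min(\lambda_i, n^{-1}).
\end{equation*}
The bias term is bounded by interpolating between the just-established $L_2$ rate of the posterior mean and the regularity $\|f_0\|_{\bar\cH} < \infty$; the variance sum evaluates to $\asymp \epsilon_{b',n}^2$ upon splitting at $i_\star \asymp n^{1/(b+1)}$ (where $\lambda_{i_\star}\asymp n^{-1}$), both pieces individually yielding this order by direct computation. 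Markov's inequality then converts the in-expectation bound into posterior contraction in probability at rate $\epsilon_{b',n}$.

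The main obstacle, and the essence of the improvement over \cite{van_der_vaart_information_2011}, lies in the concentration-function step: their decomposition uses a sup-norm approximation of $f_0$ within $\cH$, which requires joint H\"older and Sobolev regularity on $f_0$. Here only an $L_2$ approximation is available from Assumption~\ref{ass:approx-relaxed}, and the compensating sup-norm control must come from the sieve itself via the $L_\infty$-embedding of Assumption~\ref{ass:H1}(ii). Tracking the resulting dependence on the sieve radius $M_n$ carefully, so that the entropy and testing steps still close at the correct rate, is the delicate point that yields minimax optimality over the full Sobolev-type class $\bar\cH$.
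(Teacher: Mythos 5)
Your $L_2$ ($b'=0$) argument is essentially the same route as the paper's: both work through a marginal-likelihood / prior-mass lower bound built from the concentration function with an $L_2$-only approximation of $f_0$, a GP sieve $\{f_h+f_e\}$ whose $f_e$ component is sup-norm controlled via the $L_\infty$-embedding in Assumption~\ref{ass:H1}(ii), and Bernstein-type bounds on the empirical quadratic loss for sieve elements. You correctly identify the crux of the improvement over \citet{van_der_vaart_information_2011}: the sup-norm control is supplied by the sieve, not assumed on $f_0$. The paper does not literally construct GGV tests but bounds the numerator $\PP_{\data}[\mathbf{1}_{E_n}\tilde\Pi(\err\givendata)]$ directly by integrating a per-$f$ Bernstein event over the sieve, which is morally the same.

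For $b'\in(0,b)$ you take a genuinely different route, and this is where there are real gaps. The paper reduces the power-space case to the $L_2$ case on the sieve via Lemma~\ref{lem:sobolev-reduction}(ii): for $f\in\Theta_m$, $\|f-f_0\|_{[\cH]^{b'/(b+1)}}^2 \gtrsim M\epsilon_{b',n}^2$ forces $\|f-f_0\|_2^2\gtrsim M\epsilon_{n}^2$, so the same exponential numerator bound carries over and the posterior mass decays exponentially for fixed $M$. Your bias--variance argument instead gives only
$$
\EE_{\data}\EE_{\Pi(\cdot\givendata)}\|f-f_0\|_{[\cH]^{b'/(b+1)}}^2\lesssim \epsilon_{b',n}^2,
$$
and Markov's inequality then yields $\EE_{\data}\Pi(\|f-f_0\|_{[\cH]^{b'/(b+1)}}\ge M\epsilon_{b',n}\givendata)\le C/M^2$, a constant bound that does not tend to zero for any \emph{fixed} $M$, which is what the theorem asserts. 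To close this you would have to replace Markov by a tail bound for the quadratic Gaussian posterior functional (Borell/Hanson--Wright) combined with a high-probability statement on the posterior mean error; as written the step fails. Second, the claim that the posterior covariance is ``at leading order diagonalised by the Mercer basis with eigenvalues $\min(\lambda_i,n^{-1})$'' is an approximation for the \emph{population} operator, not the posterior covariance, which depends on the empirical Gram matrix $K_{xx}/n$; rigorising it in the hard-learning regime (where $\hat f_n$ has growing $\cH$-norm) requires operator-norm concentration of $K_{xx}/n$ around $T$ plus a degrees-of-freedom argument, none of which is sketched. The sieve-reduction route in the paper sidesteps both issues, which is why it yields the clean exponential decay for fixed $M$ that the theorem states.

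Finally, for the bias term, the interpolation $\|\hat f_n-f_0\|_{[\cH]^{b'/(b+1)}}^2\lesssim \|\hat f_n-f_0\|_2^{2(1-b'/b)}\|\hat f_n-f_0\|_{\bar\cH}^{2b'/b}$ needs $\|\hat f_n-f_0\|_{\bar\cH}\lesssim 1$; this is plausible via a further $\cH$-to-$L_2$ interpolation on $\hat f_n$ once you know $\|\hat f_n\|_\cH^2\lesssim n^{1/(b+1)}$, but it is another unstated high-probability fact about the estimator.
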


The proof is made possible by a refined bound on the marginal likelihood;  
see Appendix~\ref{app:proof-GPR}.  
Note that \cite{yang_frequentist_2017} also establishes minimax rate under the same assumptions, but for a different prior scaled by a data-dependent factor. Our result for the unscaled prior appears new.

\subsection{Frequentist Properties of the Credible Balls}

\newcommand{\bayesRad}[1][\cdot]{\tilde r_n(\gamma,#1)}
\newcommand{\freqRad}[1][\cdot]{\hat r_n(\gamma,#1)}
\newcommand{\optRate}{{\epsilon^{(\cdot)}_n}}
\newcommand{\powerSpace}{{[\cH]^\alpha}}
\newcommand{\powerRate}{\epsilon_{b',n}}

We now study the frequentist validity of the quasi-Bayesian \emph{credible ball}s, 
$
B(\hat f_n, \bayesRad) := \{f: \|f-\hat f_n\|_{(\cdot)}\le \bayesRad\},
$ where $\hat f_n$ denotes the posterior mean function, $(\cdot)$ denotes an arbitrary norm, $\gamma\in(0,1)$ is the credibility level, and
$$
\bayesRad := \inf\!\left\{r_n(\data): 
\forall \|f_0\|_{\bar\cH} \le 1,
\PP_{\data}\!\left(\Pi(\|f-\hat f_n\|_{(\cdot)}\le r_n(\data) \givendata)\ge \gamma\right)=1\right\}
$$ 
is its radius. We will compare $\bayesRad$ with the radius of the frequentist \emph{confidence ball}
$$
\freqRad := \inf\{r_n: \forall \|f_0\|_{\bar\cH}\le 1, \PP_{\data}(\|f_0 - \hat f_n\|_{(\cdot)} \le r_n)\ge \gamma\}.
$$
Note that $\tilde r_n$ is a random variable determined by $\data$, since in its definition the inequality must hold with $\PP_{\data}$ probability 1; in contrast, $\hat r_n$ is deterministic because confidence balls can fail with probability $1-\gamma$, which could account for failures in estimating $\hat E_n$. 

We shall show that for the power space norms which include the $L_2$ norm, the radius of the credible balls will eventually have the correct order of magnitude. %
Formally, for any fixed $\gamma\in(0,1)$, $\alpha=\frac{b'}{b+1}\in [0,\frac{b}{b+1})$, 
\begin{equation}\label{eq:credible-ball-goal}
\lim_{n\rightarrow\infty}\PP_{\data}\!\left(\bayesRad[{[\cH]^\alpha}] \asymp \freqRad[{[\cH]^\alpha}]\right) = 1.
\end{equation}
Similar results (for other problems) are provided in \cite{knapik_bayesian_2011,szabo_frequentist_2015,rousseau_asymptotic_2017}. 
From a theoretical point of view, asymptotically valid confidence balls can then be constructed by inflating the radius of the credible ball, $\tilde r_n$, with a constant, or a slowly growing function of $n$ \citep{rousseau_asymptotic_2017}, although the inflation often appears unnecessary in practice. 
A more precise characterization (e.g., showing that $\frac{\tilde r_n(\gamma,\cdot)}{\hat r_n(\gamma,\cdot)}\approx 1$) %
is often difficult for general nonparametric models. 

For the frequentist confidence ball, 
Corollary~\ref{thm:contraction-alt-norm} implies 
$\|\hat f_n - f_0\|_{\powerSpace} \le M\powerRate$ with $\PP_{\data}$ probability $\rightarrow 1$, 
\SkipNOTE{
    The above claim follows from the claim that, $\|\hat f_n - f_0\|_2^2 > M\epsilon_n^2$ implies $\Pi({\|f-f_0\|_2^2 \ge M\epsilon_n^2} \givendata) > 1/2$.
    The latter claim holds because the posterior can be generated as $\hat f_n + \delta f$, where $\delta f$ is a zero-mean GP, and thus $\Pi(\<\delta f, f_0-\hat f_n\>_2>0 \givendata) = 1/2$. 
}
and thus for any fixed $\gamma$, $\hat r_n(\gamma, \powerSpace)\lesssim \powerRate$; on the other hand, the minimax lower bound implies that %
$\hat r_n(\gamma, \powerSpace)\gtrsim \powerRate$. Thus $$
\hat r_n(\gamma,\powerSpace)\asymp \powerRate. 
$$
\SkipNOTE{
    asymp implies $n$ sufficiently large
}
For the credible ball, Theorem~\ref{thm:L2} implies that for any fixed $\gamma<1$, we have 
$$
\PP_{\data}(\bayesRad[\powerSpace] \le M\powerRate)\rightarrow 1.
$$
Therefore, to prove \eqref{eq:credible-ball-goal}, it suffices to show that $\bayesRad[\powerSpace]\gtrsim \powerRate$ with probability $\rightarrow 1$. We prove this using the strategy in \cite{rousseau_asymptotic_2017}:

\begin{theorem}%
\label{thm:credible-ball}
Under the conditions in Corollary~\ref{thm:contraction-alt-norm}, %
there exists $\rho>0$ such that as $n\to\infty$, 
\begin{align*}
    \PP_{\data}(\Pi(\{\|f-\hat f_n\|_{\powerSpace} \le \rho \powerRate\}\givendata)) &\rightarrow 0.
\end{align*}
Consequently, \eqref{eq:credible-ball-goal} holds.
\end{theorem}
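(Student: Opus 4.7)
The plan is to exploit that the quasi-posterior is Gaussian by equations \eqref{eq:qbkdiv}--\eqref{eq:post-L}: writing $\delta f := f - \hat f_n$, under $\Pi(\cdot\givendata)$ this is a centered Gaussian process with covariance operator $\hat\Sigma_n$ determined by \eqref{eq:post-cov-calc}. Following the approach of \cite{rousseau_asymptotic_2017}, I would reduce the claim to a small-ball estimate for $\Pi(\cdot\givendata)$ in two steps: first lower-bound $\EE_{\Pi(\cdot\givendata)}\|\delta f\|_{\powerSpace}^2$ by a constant times $\epsilon_{b',n}^2$ with $\PP_{\data}$-probability approaching one, then upgrade this to a genuine small-ball bound via Gaussian anti-concentration.

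For step one, let $\alpha := b'/(b+1)$ and expand in the Mercer basis, $\|\delta f\|_{\powerSpace}^2 = \sum_j \lambda_j^{-\alpha}\langle \delta f, \bar\varphi_j\rangle_2^2$. In standard GPR the posterior variance along $\bar\varphi_j$ is $\asymp \lambda_j/(1 + n\lambda_j/\sigma^2)$; in the IV setting each observation carries information about $Ef$ rather than $f$ itself, so the effective per-sample information along $\bar\varphi_j$ is attenuated by $\nu_j^2 \asymp j^{-2p}$ under Assumptions~\ref{ass:ill-posed}--\ref{ass:link}. I therefore expect
$$
t_{n,j} := \lambda_j^{-\alpha}\,\mathrm{Var}_{\Pi(\cdot\givendata)}(\langle \delta f, \bar\varphi_j\rangle_2)
\asymp
\begin{cases}
n^{-1}\,j^{\,b'+2p}, & j \le j_n, \\
j^{-(b+1-b')}, & j > j_n,
\end{cases}
$$
where $j_n \asymp n^{1/(b+2p+1)}$ is the critical index at which $n\lambda_{j_n}\nu_{j_n}^2 \asymp 1$. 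A direct calculation shows that each of the two partial sums is $\asymp \epsilon_{b',n}^2$, establishing $\EE_{\Pi(\cdot\givendata)}\|\delta f\|_{\powerSpace}^2 \asymp \epsilon_{b',n}^2$.

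For step two, diagonalising $\hat\Sigma_n$ as an operator on $\powerSpace$ yields eigenvalues $\{s_{n,j}\}$ with the same partial-sum behaviour as $\{t_{n,j}\}$ (via Weyl-type inequalities) and i.i.d.~$\cN(0,1)$ coordinates $\{Y_j\}$ under $\Pi(\cdot\givendata)$ such that $\|\delta f\|_{\powerSpace}^2 = \sum_j s_{n,j}\,Y_j^2$. A Hanson--Wright inequality then yields, for any $\theta \in (0,1)$,
$$
\Pi\!\left(\{\|\delta f\|_{\powerSpace}^2 \le (1-\theta)\textstyle\sum_j s_{n,j}\}\givendata\right)
\le \exp\!\left(-c\,\theta^2\,\frac{(\sum_j s_{n,j})^2}{\sum_j s_{n,j}^2}\right),
$$
and a direct computation gives the effective dimension $(\sum_j s_{n,j})^2/\sum_j s_{n,j}^2 \asymp n^{1/(b+2p+1)} \to \infty$. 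Picking $\rho$ small enough that $\rho^2 \epsilon_{b',n}^2 \le (1-\theta)\sum_j s_{n,j}$ for some fixed $\theta<1$ (which holds with $\PP_{\data}$-probability $\to 1$ by step one) then yields the claim.

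The main obstacle is the rigorous two-sided variance bound of step one. In GPR the Mercer basis $\{\bar\varphi_j\}$ simultaneously diagonalises prior and likelihood, so the posterior variance is a one-line Gaussian identity. In the IV quasi-posterior the likelihood enters \eqref{eq:post-cov-calc} through the data-dependent operator $L$ of \eqref{eq:post-L}, whose population limit is a function of $E^* E$, and $\{\bar\varphi_j\}$ is not assumed to diagonalise $E^* E$. The plan is to first reduce to the population operator using the $\hat E_n - E$ concentration bounds already developed in the proof of Theorem~\ref{thm:L2}, and then invoke the link and reverse-link conditions of Assumption~\ref{ass:link} to convert bounds on $\|E\,\mrm{Proj}_j f\|_2$ into two-sided estimates of the resolvent-type quadratic forms $\langle \bar\varphi_j, \hat\Sigma_n \bar\varphi_j\rangle_2$, in the spirit of the perturbation arguments in \cite{knapik_general_2018}. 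Once these are in hand, the Gaussian anti-concentration of step two is standard.
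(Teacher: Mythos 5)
Your proposal takes a genuinely different route from the paper, and it is worth flagging how much simpler the paper's argument is. You plan to diagonalize the posterior covariance, compute per-coordinate variances, show the traces along $\lambda_j^{-\alpha}$-weighted Mercer directions are $\asymp\epsilon_{b',n}^2$, and then apply Hanson--Wright anti-concentration. The paper, following the same Rousseau--Szab\'o reference you cite, sidesteps all of this by exploiting three facts: (i) since $\ell_n(f)\ge 0$ pointwise, the unnormalized posterior numerator of any event is bounded by its \emph{prior} mass, $\int_A e^{-n\ell_n(f)/\lambda}\Pi(df)\le\Pi(A)$; (ii) the marginal likelihood lower bound of Proposition~\ref{lem:den} controls the denominator on $E_{den}$, giving $\Pi(A\givendata)\le e^{C_1 n\delta_n^2}\Pi(A)$ with $n\delta_n^2=n^{1/(b+2p+1)}$; and (iii) Anderson's inequality for the centered Gaussian prior (\citealp{ghosal2017fundamentals}, Lemma I.28) recenters the ball at zero, $\Pi(\{\|f-\hat f_n\|_{\powerSpace}\le r\})\le\Pi(\{\|f\|_{\powerSpace}\le r\})$, after which the Steinwart--Scovel small-ball exponent $-\log\Pi(\{\|f\|_{\powerSpace}\le\rho\epsilon_{b',n}\})\gtrsim(\rho\epsilon_{b',n})^{-2/(b-b')}=\rho^{-2/(b-b')}n^{1/(b+2p+1)}$ dominates the factor from (ii) for $\rho$ small. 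No eigenanalysis of $\hat\Sigma_n$, no per-coordinate variance bound, and no anti-concentration inequality are needed.

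Beyond the extra length, your route faces a real obstacle that you correctly identify but understate: obtaining a two-sided, uniformly-in-$\data$ bound on $\mathrm{Var}_{\Pi(\cdot\givendata)}(\langle\delta f,\bar\varphi_j\rangle_2)$ of the form you posit essentially requires the Mercer basis to (approximately) diagonalize $E^*E$, an assumption the paper deliberately avoids (Assumption~\ref{ass:link} is stated as the weaker link/reverse-link condition precisely so as \emph{not} to require diagonalization; see the discussion after Assumption~\ref{ass:link}). Converting those conditions into two-sided resolvent estimates $\langle\bar\varphi_j,\hat\Sigma_n\bar\varphi_j\rangle_2\asymp t_{n,j}$, plus the empirical-to-population reduction for $\hat E_n$, would require substantial new machinery not developed in the paper. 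So while your strategy is the one used for confidence sets in the sequence-model literature (e.g.~\citealp{knapik_bayesian_2011,szabo_frequentist_2015}), it buys nothing here and commits you to proving lower bounds on the posterior spread directly, which the paper's prior-mass/Anderson argument cheaply avoids. Your heuristic variance and effective-dimension computations are correct in spirit, but I would regard the missing "step one" as a gap that is harder to fill than the theorem you are trying to prove.
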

The proof is in Appendix~\ref{app:proof-cred-ball}. 

\section{Discussion of the Non-asymptotic Behavior}\label{sec:nonasymptotic}

The asymptotic analysis above relies on various assumptions on the interaction between the kernel and the data distribution. While they are standard in literature, they cannot account for potential misspecification. The analysis also focused on the recovery of the \emph{full parameter} $f$, which does not provide optimal guarantees for the \emph{evaluation of linear functionals} (e.g., evaluation functionals $f\mapsto f(x_0)$; see \citealp{knapik_bayesian_2011}). 
In this section, we complement the results above, by providing some intuition on the non-asymptotic behavior of the closed-form quasi-posterior \eqref{eq:qbkdiv}, without any of the assumptions introduced above. 

Fix a bounded linear functional in $\cH$ with Riesz representer $k_*\in\cH$, and 
consider the marginal posterior of $\<k_*,f\>_\cH$.\footnote{
While the posterior assigns zero mass on $\cH$, it is known that the above marginal posterior is well-defined \citep{van_der_vaart_reproducing_2008,eldredge_analysis_2016}.
}
Repeating the derivation in Section~\ref{sec:qp}, we can see that 
$
\Pi(\<k_*,f\>_{\cH}\givendata) = \cN(m, S), 
$
where $m$ and $S$ are defined as in Eq.s \eqref{eq:post-mean-calc}-\eqref{eq:post-cov-calc}, with $K_{*X}$ replaced by $(S_x k_*)^\top$.\footnote{%
     Observe that $(S_x k_*)^\top =  (\<k_*,k(x_1,\cdot)\>_\cH,\ldots,\<k_*,k(x_n,\cdot)\>_\cH)$. 
} Following their definitions, we can reformulate the equality as 
$$
\Pi(\<k_*,f\>_{\cH}\givendata) = \cN(m,S) = \cN(\<k_*,\hat f_n\>_\cH, \<k_*, \cC k_*\>_\cH), 
$$
where we can verify 
$\hat f_n$ matches the posterior mean. The expression of $\cC$ can be found in Eq.~\eqref{eq:equiv-cov-form}. 

To check if the quasi-posterior quantifies the uncertainty in the estimation of linear functional evaluations, it thus suffices to compare the error of the posterior mean predictor, 
$\<k_*,\hat f_n-f_0\>_{\cH}^2$, with the marginal variance $\<k_*, \cC k_*\>_\cH$. With some algebra (see Appendix~\ref{sec:gp-cov-unconfounded-proof} for the proof), we can show that: 
\begin{proposition}\label{prop:gp-covariance-unconfounded-dgp}
Let the data $\data$ be generated from 
an \underline{unconfounded} data generating process: $y_i := f(x_i) + e_i$, where $\{e_i\}$ are i.i.d.~random variables with zero mean and variance $\lambda$, and $e_i\perp\!\!\!\!\perp\{x_i,z_i\}$. 
Then for any $k_*\in\cH$ and any choice of $(n,X,Z)$, we have %
\begin{align}
\<k_{*}, \cC k_*\>_\cH &= 
\sup_{\|f_0\|_{\cH}=1} \EE_{Y\mid X,Z}\<k_*, \hat{f}_n-f_0\>_{\cH}^2 + 
\<k_*, (\Delta\cC) k_*\>_\cH \label{eq:cov-as-worst-case-error} \\ 
&= \EE_{f\sim\mc{GP}(0,k_x)} \EE_{Y\mid X,Z}\<k_*, \hat{f}_n-f_0\>_{\cH}^2 + 
\<k_*, (\Delta\cC) k_*\>_\cH, \label{eq:cov-as-avg-error}
\end{align}
where $\Delta\cC$ is a non-negative operator defined in the proof.
\end{proposition}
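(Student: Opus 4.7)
The plan is to reduce both sides of the claim to explicit quadratic forms in $k_*$ and compare them directly. Setting $v := S_x k_* \in \RR^n$, the closed-form posterior specialized to the linear functional $\<k_*, \cdot\>_\cH$ (i.e.,~\eqref{eq:post-mean-calc}--\eqref{eq:post-cov-calc} with $K_{*x}$ replaced by $v^\top$, as in the discussion preceding the proposition) gives
\begin{align*}
\<k_*, \hat f_n\>_\cH &= v^\top (\lambda I + L K_{xx})^{-1} L Y =: a^\top Y, \\
\<k_*, \cC k_*\>_\cH &= \|k_*\|_\cH^2 - v^\top L(\lambda I + K_{xx} L)^{-1} v = \|k_*\|_\cH^2 - a^\top v,
\end{align*}
where $a := L(\lambda I + K_{xx} L)^{-1} v$; the two expressions for $a$ agree by symmetry of $L$ and $K_{xx}$ combined with the push-through identity $L(\lambda I + K_{xx} L)^{-1} = (\lambda I + L K_{xx})^{-1} L$, which also shows $\cC$ is self-adjoint.

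Next I would compute the mean squared error appearing on the right-hand sides of \eqref{eq:cov-as-worst-case-error}--\eqref{eq:cov-as-avg-error}. Under the unconfounded DGP, $Y = S_x f_0 + E$ with $\EE E = 0$ and $\mrm{Cov}(E) = \lambda I$, so
$$
\<k_*, \hat f_n - f_0\>_\cH = a^\top S_x f_0 - \<k_*, f_0\>_\cH + a^\top E = \<\Delta, f_0\>_\cH + a^\top E,
$$
where $\Delta := S_x^* a - k_* \in \cH$. Conditional expectation then yields $\EE_{Y\mid X,Z}\<k_*, \hat f_n - f_0\>_\cH^2 = \<\Delta, f_0\>_\cH^2 + \lambda\|a\|_2^2$. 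Maximizing over $\|f_0\|_\cH = 1$ gives $\|\Delta\|_\cH^2 + \lambda\|a\|_2^2$, and the same value arises from averaging over $f \sim \mc{GP}(0, k_x)$ because the linear functional $f \mapsto \<\Delta, f\>_\cH$ has variance $\|\Delta\|_\cH^2$ under that prior. Hence both \eqref{eq:cov-as-worst-case-error} and \eqref{eq:cov-as-avg-error} reduce to a single algebraic identity, namely $\<k_*, \cC k_*\>_\cH - \|\Delta\|_\cH^2 - \lambda\|a\|_2^2 = \<k_*, (\Delta\cC) k_*\>_\cH$ for some non-negative operator $\Delta\cC$ yet to be exhibited.

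The remaining step is a direct calculation. Expanding $\|\Delta\|_\cH^2 = a^\top K_{xx} a - 2 a^\top v + \|k_*\|_\cH^2$ and introducing $b := (\lambda I + K_{xx}L)^{-1} v$, so that $a = Lb$ and $v = \lambda b + K_{xx} a$, a short manipulation gives
$$
\<k_*, \cC k_*\>_\cH - \|\Delta\|_\cH^2 - \lambda\|a\|_2^2 \;=\; a^\top(v - K_{xx} a) - \lambda\|a\|_2^2 \;=\; \lambda\, a^\top(b - a) \;=\; \lambda\, b^\top L(I - L)\, b.
$$
Defining $B: \cH \to \RR^n$ by $B k_* := (\lambda I + K_{xx} L)^{-1} S_x k_*$, this identifies the required operator as $\Delta\cC := \lambda\, B^* L(I - L) B$. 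Since $L = K_{zz}(K_{zz} + \nu I)^{-1}$ and $I - L = \nu(K_{zz} + \nu I)^{-1}$ commute and are both positive semi-definite, $L(I - L) = \nu K_{zz}(K_{zz} + \nu I)^{-2} \succeq 0$, so $\Delta\cC$ is a non-negative operator on $\cH$, as claimed.

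The main obstacle is bookkeeping: one has to carefully track which objects live in $\cH$ (namely $k_*, \Delta$) versus $\RR^n$ (namely $v, a, b$), and verify that the asymmetric-looking matrix $L(\lambda I + K_{xx} L)^{-1}$ nonetheless induces a self-adjoint $\cC$ through the push-through identity. Conceptually, however, the identity is transparent: the fictitious noise covariance $\lambda L^{-1}$ dominates the true noise covariance $\lambda I$ whenever $L \prec I$, and the excess posterior spread $\Delta\cC$ is precisely the inflation $\lambda\, b^\top L(I - L)\, b$ induced by that mismatch.
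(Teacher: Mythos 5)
Your proof is correct and takes essentially the same route as the paper's: expand $\hat f_n = A(f_0(X) + E)$, compute the mean-square error of the linear functional evaluation as a bias-plus-variance sum, note that the worst-case over the $\cH$-unit ball and the average over the GP prior coincide (both equal $\|\Delta\|_\cH^2 + \lambda\|a\|_2^2$), and identify the gap to $\<k_*, \cC k_*\>_\cH$ as a non-negative quadratic form. The only difference is parametrization: the paper carries out the algebra with the empirical covariance operators $\empCxz, \regEmpCz, \hat\cV$ and arrives at $\Delta\cC = (\hat{\cV} + \efflam I)^{-1} \empCxz\invEmpCz(\efflam\effnu I)\invEmpCz \empCzx (\hat{\cV} + \efflam I)^{-1}$, whereas you work with $n \times n$ Gram matrices and arrive at $\Delta\cC = \lambda B^* L(I-L)B$. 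Unwinding the push-through identities ($\regEmpCz^{-1} = n(S_z^* S_z + \nu I)^{-1}$, $S_z \regEmpCz^{-2} S_z^* = n^2 K_{zz}(K_{zz}+\nu I)^{-2}$, and $(\hat\cV + \efflam I)^{-1} k_* = \efflam^{-1}(k_* - S_x^* a)$ so that $\empCzx (\hat\cV + \efflam I)^{-1} k_* = S_z^* b$) shows your $\Delta\cC$ agrees exactly with the paper's; your form has the modest advantage that non-negativity is immediate from $L(I-L) = \nu K_{zz}(K_{zz}+\nu I)^{-2} \succeq 0$, and the interpretive remark about the fictitious noise covariance $\lambda L^{-1}$ dominating the true $\lambda I$ is a nice addition that the paper does not state.
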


The above proposition %
is inspired by similar results for GPR (\citealp{kanagawa_2018_gaussian}, Proposition 3.8; \citealp{srinivas_gaussian_2012}). 
The first result \eqref{eq:cov-as-worst-case-error} connects the marginal variance to the worst-case prediction error,\footnote{
It applies to the worst-case $f_0$. 
We can remove the average over $P(Y\mid X,Z)$ with the Markov inequality; this is less problematic %
as $Y$ is finite dimensional.
} but only applies to $f_0\in\cH$. This assumption is different from the optimal choice when recovering the full parameter, where we match the regularity of $f_0$ with the GP prior; see the discussion around Assumption~\ref{ass:approx-relaxed} and the reference therein. However, 
this trade-off is well-known in %
literature \citep{knapik_bayesian_2011,srinivas_gaussian_2012}, %
and is not unique to (quasi-)Bayesian methods. Instead, it 
stems from the infinite dimensionality of the prior. While it can be avoided for %
sufficiently smooth functionals, evaluation functionals usually fall out of this category \citep{knapik_bayesian_2011}. 

Nonetheless, %
Eq.~\eqref{eq:cov-as-avg-error} shows that
the marginal variance can be useful 
in the ``full parameter regime'', %
by relating it to the average prediction error. This type of result also appears in the bandit literature \citep{srinivas_gaussian_2012}. 
While there are caveats around its interpretation, 
e.g., sets of misspecified functions can have zero prior measure \citep{ghosal2017fundamentals}, it may explain the good coverage of credible intervals observed in practice. 

The main limitation of Proposition~\ref{prop:approx-inf} is that it only applies to the unconfounded case. 
In the general case, it does not constitute a complete and rigorous analysis, and only hopefully provides some intuition, by accounting for the approximation error within the model, as well as part of the estimation error due to the sampling of %
$X$ and $Z$. 
We restrict to this setting so that the prediction error is available in closed form. Still, the 
additional term $\<k_*, \Delta\cC k_*\>_\cH$ is non-negative and might compensate for some additional sources of error,  %
see Appendix~\ref{app:DeltaC} for an informal discussion. 
Also, the main appeal of the discussion in this subsection is that it does not place any assumption on the choice of the models (or hyperparameters); for example, it does not require $\cI$ to contain the image of $E$ restricted on $\bar\cH$. 
Therefore, it suggests that 
the quasi-Bayesian uncertainty may still reflect a nontrivial part of prediction error, even if the conditional expectation estimator %
is badly misspecified.

\section{Approximate Inference via a Randomized Prior Trick}\label{sec:rf}

We now turn to approximate inference with parametric models, such as random feature models. While other approaches are also possible (e.g., using Nystr\"om approximation), this approach has the advantage of being able to extend to wide NN models. 
We construct an algorithm by 
extending the ``randomized prior'' trick for GPR \citep{osband2018randomized}, to work with (quasi-)likelihoods with an optimization formulation as in \eqref{eq:dualiv-obj}. 

\subsection{Approximate Inference with Random Feature Approximation}\label{sec:rf-rf}

We first consider approximate inference with random feature models. 
Introduce the random feature approximation $k_z(z,z')\approx \tilde{k}_{z,m}(z,z'):=\frac{1}{m}\phi_{z,m}(z)^\top \phi_{z,m}(z')$, where $\phi_{z,m}$ takes value in $\RR^m$. Then the map 
$\varphi\mapsto \frac{1}{\sqrt{m}}\varphi^\top\phi_{z,m}(\cdot)=:g(\cdot;\varphi)$ parameterizes an approximate RKHS $\tilde{\cI}$; and for all $c>0$, 
the random function $g(\cdot;\varphi)$, where $\varphi\sim\mc{N}(0,c I)$, is distributed as  
$\mc{GP}(0, c \tilde k_{z, m})$. 
The notations related to $k_x$ are similar and thus omitted. 
Now we can state the objective: 
\begin{proposition}\label{prop:randomized-prior}
Let $\phi_0\sim\cN(0,\lambda\nu^{-1} I), \theta_0\sim\cN(0, I)$, 
$\tilde{y_i}\sim\cN(y_i,\lambda)$. 
Then the optima $\theta^*$ of
\begin{align}\label{eq:rf-obj}
\min_{\theta\in\RR^m}\max_{\phi\in\RR^m}& \sum_{i=1}^n \left((f(x_i;\theta)-\tilde{y}_i)g(z_i;\phi) - \frac{g(z_i;\phi)^2}{2}\right) - \frac{\nu}{2}\|\phi-\phi_0\|_2^2 + \frac{\lambda}{2}\|\theta-\theta_0\|_2^2
\end{align}
parameterizes a random function which 
follows the quasi-posterior distribution \eqref{eq:qbkdiv}, where the kernels are replaced by the random feature approximations. 
\end{proposition}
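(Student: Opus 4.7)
The strategy is the classical randomized-prior idea: the saddle point $\theta^{*}$ of \eqref{eq:rf-obj} will be an affine function of the three independent Gaussian perturbations $(\theta_0,\phi_0,\tilde Y)$, hence itself a Gaussian vector, so the proposition reduces to checking that its conditional (on $\data$) mean and covariance match those of the finite-dimensional quasi-posterior obtained by substituting the random-feature kernels into \eqref{eq:qbkdiv}--\eqref{eq:post-L}. The marginalization to any test input then follows from $f(x_{*};\theta^{*}) = m^{-1/2}\phi_{x,m}(x_{*})^{\top}\theta^{*}$, so matching the two moments in $\theta$-space is sufficient.

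\textbf{Main steps.} Collect the features into $B,A\in\RR^{n\times m}$ so that $f(X;\theta)=B\theta$ and $g(Z;\phi)=A\phi$, and set $\tilde K_{zz}:=AA^{\top}$, $\tilde L:=\tilde K_{zz}(\tilde K_{zz}+\nu I)^{-1}$. The inner problem is a strongly concave quadratic in $\phi$ with KKT solution $\phi^{*}=(A^{\top}A+\nu I)^{-1}[A^{\top}(B\theta-\tilde Y)+\nu\phi_0]$; substituting into the stationarity condition for $\theta$ and applying the two Woodbury-type identities $A(A^{\top}A+\nu I)^{-1}A^{\top}=\tilde L$ and $A(A^{\top}A+\nu I)^{-1}=\nu^{-1}(I-\tilde L)A$ produces the closed form
\begin{equation*}
\theta^{*}=\tilde\Sigma\Bigl[\theta_0+\tfrac{1}{\lambda}B^{\top}\tilde L\,\tilde Y-\tfrac{1}{\lambda}B^{\top}(I-\tilde L)\,A\phi_0\Bigr],\qquad \tilde\Sigma:=\bigl(I+\tfrac{1}{\lambda}B^{\top}\tilde L B\bigr)^{-1}.
\end{equation*}
Taking expectations using $\EE\theta_0=\EE\phi_0=0$ and $\EE\tilde Y=Y$ recovers the quasi-posterior mean after one push-through manipulation $\tilde\Sigma\cdot\lambda^{-1}B^{\top}=B^{\top}(\lambda I+\tilde L BB^{\top})^{-1}$, matching the random-feature version of \eqref{eq:post-mean-calc}.

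\textbf{The crux.} The only nontrivial step is the covariance. Independence of the three perturbations, together with their prescribed variances $\mathrm{Cov}(\theta_0)=I$, $\mathrm{Cov}(\phi_0)=\lambda\nu^{-1}I$, $\mathrm{Cov}(\tilde Y)=\lambda I$, reduces the claim $\mathrm{Cov}(\theta^{*}\mid\data)=\tilde\Sigma$ to a single kernel-space identity
\begin{equation*}
\tilde L^{2}+\tfrac{1}{\nu}(I-\tilde L)\,\tilde K_{zz}\,(I-\tilde L)=\tilde L,
\end{equation*}
which, via $I-\tilde L=\nu(\tilde K_{zz}+\nu I)^{-1}$, is a one-line manipulation $(KG)^{2}+\nu GKG=(K+\nu I)G\cdot KG=KG$ with $K:=\tilde K_{zz}$, $G:=(\tilde K_{zz}+\nu I)^{-1}$. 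I expect this to be the essential (in fact only) obstacle: the scaling $\mathrm{Cov}(\phi_0)=\lambda\nu^{-1}I$ in the statement is chosen precisely so that the $\phi_0$-contribution supplies the ``missing'' $\tilde L-\tilde L^{2}$ of variance that the $\tilde Y$-perturbation alone cannot produce through the non-identity matrix $\tilde L$; had the scale been different, the identity above would fail and the saddle point would not be posterior-distributed. Once the identity holds, $\theta^{*}\mid\data\sim\cN(\tilde\mu,\tilde\Sigma)$ matches the finite-dimensional quasi-posterior, and composing with the feature map reproduces \eqref{eq:qbkdiv} verbatim with all kernels replaced by their random-feature approximations.
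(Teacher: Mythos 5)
Your proof is correct and amounts to essentially the same calculation as the paper's, carried out directly in parameter-space coordinates rather than via the paper's intermediate function-space reformulation (Proposition~\ref{prop:rf-func-space}); this has the minor advantage of sidestepping the injectivity discussion the paper uses to justify that reformulation. The saddle-point formula obtained by eliminating $\phi$, the push-through Woodbury identity for the mean, and the key covariance identity $\tilde L^{2}+\nu^{-1}(I-\tilde L)\tilde K_{zz}(I-\tilde L)=\tilde L$ are coordinate-level versions of the operator manipulations yielding $\mu^\prime$ and $\cC^\prime$ in the paper's proof.
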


The proof is in Appendix~\ref{app:dual-algo-deriv}. 
Given the above proposition, we can sample from the random feature-approximated quasi-posterior by solving \eqref{eq:rf-obj} with stochastic GDA; the approximation errors will be analyzed in the following.  

The objective \eqref{eq:rf-obj} is closely related to \eqref{eq:dualiv-obj}: 
Appendix~\ref{app:rf-props-equiv} shows it is equivalent to 
\begin{equation}\label{eq:rf-obj-fs-maintext}
\min_{f\in\tilde{\cH}}\max_{g\in\tilde{\cI}} 
    \sum_{i=1}^n 
    \left((f(x_i)-\tilde{y}_i)g(z_i) - \frac{g(z_i)^2}{2}\right) - \frac{\nu}{2}\|g-g_0\|_{\tilde{\cI}}^2 + \frac{\lambda}{2}\|f-f_0\|_{\tilde{\cH}}^2,\tag{\ref{eq:rf-obj}'}
\end{equation}
which differs from \eqref{eq:dualiv-obj} only in the regularizers: instead of regularizing the norm of $f$ and $g$, it encourages the functions to stay close to randomly sampled ``anchors'' \citep{pearce20a}. 
A similar relation is also observed in \citet{osband2018randomized}, which transforms GPR to the optimization problem
$
\min_{f\in\tilde{\cH}} \sum_{i=1}^n (f(x_i)-\tilde{y}_i)^2 + \lambda\|f-f_0\|_2^2. 
$
In both cases, 
the resultant approximate inference algorithm has the same time complexity as ensemble training for point estimation. 

\subsection{Application to Neural Network Models} 
While the algorithm can be directly applied to NN models as in \cite{osband2018randomized}, we follow \cite{he2020bayesian} and modify the objective, to account for 
the difference between the neural tangent kernel \citep[NTK;][]{jacot_neural_2018} of a wide NN architecture and the GP kernel of the corresponding infinite-width Bayesian neural network \citep{neal2012bayesian}. %
Concretely, we modify \eqref{eq:rf-obj} as 
\begin{align}
\min_\theta \max_\phi \sum_{i=1}^n\left(
(\tilde{f}_\theta(x_i)-\tilde{y}_i)\tilde{g}_\phi(z_i)-\frac{\tilde{g}_\phi(z_i)^2}{2}
\right) - \frac{\nu}{2}\|\phi-\phi_0\|^2_2 + \frac{\lambda}{2}\|\theta-\theta_0\|^2_2,\label{eq:ntkrf-obj}\\
\text{where} ~~
\tilde{g}_\phi(z):=g(z;\phi)-g(z;\phi_0)+\tilde{g}_0(z), ~~ \tilde{g}_0(z) := 
\sqrt{\frac{\lambda}{\nu}} 
\big\<\bar{\phi}_0,\:\frac{\partial g}{\partial \phi}\big|_{\phi=\phi_0}(z)\big\>,\nonumber
\end{align}
and $\phi_0$ denotes the initial value of $\phi$, 
$\bar{\phi}_0\sim\cN(0, I)$ is a set of randomly initialized NN parameters independent of $\phi_0$; and $\tilde{f_\theta}$ is defined similarly.

We only give a formal justification for the modification, {under the assumption}\footnote{
See %
\cite{liao_provably_2020} for an analysis of the linearization error in a similar setting to ours.
} that the NNs remain in the kernel regime throughout training, so that 
$g(z;\phi)-g(z;\phi_0) = \<
\phi-\phi_0, \frac{\partial g(z)}{\partial \phi}|_{\phi_0}
\>_2$ \citep{lee2019wide}. 
Thus, for the purpose of analyzing $g(\cdot;\phi)-g(\cdot;\phi_0)$, we can view $g$ as a random feature model with the parameterization $\phi\mapsto \<\phi, \frac{\partial g(z)}{\partial \phi}|_{\phi_0}\>_2$. By the argument in Appendix~\ref{app:rf-props-equiv}, we can show that the weight regularizer $\|\phi-\phi_0\|_2$ is equivalent to $\|g(\cdot;\phi)-g(\cdot;\phi_0)\|_{\tilde{\cI}}=\|\tilde{g}_\phi-\tilde{g}_0\|_{\tilde{\cI}}$, where $\tilde{\cI}$ is determined by the NTK 
$k_{g,ntk}(z,z') := 
\<\frac{\partial g(z)}{\partial \phi}|_{\phi_0}, \frac{\partial g(z')}{\partial \phi}|_{\phi_0}\>_2$. 
Similar arguments can be made for $\tilde{f}_\theta$ and $\tilde{f}_0$. 
Consequently, \eqref{eq:ntkrf-obj} is equivalent to an instance of \eqref{eq:rf-obj-fs-maintext} 
with $\tilde{\cH},\tilde{\cI}$ defined by the NTKs.

Implementation details, including hyperparameter selection, are discussed in Appendix~\ref{app:impl-exp-details}.

\subsection{Convergence Analysis}
We provide a quick analysis of the inference algorithm, by showing that for any fixed set of test points $x_*$, SGDA can approximate the marginal distribution $\Pi(f(x_*)\givendata)$ arbitrarily well given a sufficient computational budget. This implies that the approximate posterior can be good enough for prediction purposes.

We consider standard random feature models as in Section~\ref{sec:rf-rf}. 
We place several mild assumptions listed in Appendix~\ref{app:rf-assumptions}; they are satisfied by common approximations such as the random Fourier features \citep{rahimi2007random}. The SGDA algorithm %
is described in detail in Appendix~\ref{app:ana-gda}. Now we have the following result, which will proved in Appendix~\ref{app:prop-approxinf-proof}:

\begin{proposition}%
\label{prop:approx-inf}
Fix the training data $\data$ and hyperparameters 
$\lambda,\nu>0$. 
Then there exist a sequence of choices of $m$ and SGDA step-size schemes, such that for any $l\in\mb{N}$, we have 
$$\textstyle
\sup_{x^*\in\cX^l} \max(\|\hat{\mu}_m - \mu_m\|_2,
\|\hat{S}_m - S_m\|_F) \overset{p}{\rightarrow} 0.
$$
In the above, 
$\hat{\mu}_m,\hat{S}_m$ denote the mean and covariance of the approximate marginal posterior for $f(x_*)$, $\mu,S$ correspond to the true posterior, $\|\cdot\|_F$ denotes the Frobenius norm, and 
the convergence in probability is defined with respect to the sampling of random feature basis. 
\end{proposition}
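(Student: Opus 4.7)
The plan is to decompose the total error into a \emph{statistical} part, coming from replacing the kernels by random features, and an \emph{algorithmic} part, coming from running SGDA on \eqref{eq:rf-obj} with finitely many steps and finitely many Monte Carlo draws of the initializations. For each fixed $m$, I first derive a closed-form expression for the mean $\mu_m$ and covariance $S_m$ of the random-feature-approximated quasi-posterior at $x_*$ by repeating the derivation in Section~\ref{sec:qp} inside the approximate RKHSs $\tilde\cH,\tilde\cI$. This gives formulas identical to \eqref{eq:post-mean-calc}--\eqref{eq:post-L} but with every Gram matrix replaced by its random-feature counterpart, e.g.\ $\tilde K_{xx}=\tfrac{1}{m}\phi_{x,m}(X)^\top\phi_{x,m}(X)$.

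Next I control the statistical part. Under the assumptions in Appendix~\ref{app:rf-assumptions}, a uniform kernel approximation bound of the type established for random Fourier features gives $\sup_{x,x'\in\cX}|\tilde k_{x,m}(x,x')-k_x(x,x')|\overset{p}{\to}0$ and likewise for $k_z$. Combined with $x_*$ ranging over compact sets (which can be reduced to a finite $\epsilon$-net since $l$ is fixed), this yields $\sup_{x_*\in\cX^l}\|\tilde K_{*x}-K_{*x}\|_F,\|\tilde K_{**}-K_{**}\|_F,\|\tilde K_{xx}-K_{xx}\|_F,\|\tilde K_{zz}-K_{zz}\|_F\overset{p}{\to}0$. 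Because $\lambda,\nu>0$ the matrices $\lambda I+LK_{xx}$ and $K_{zz}+\nu I$ are uniformly well-conditioned, so continuity of inversion and matrix multiplication transfers this to $\sup_{x_*}\max(\|\mu_m-\mu\|_2,\|S_m-S\|_F)\overset{p}{\to}0$.

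For the algorithmic part, I exploit that after sampling $\phi_0,\theta_0,\tilde Y$, the inner objective \eqref{eq:rf-obj} is a quadratic, $\lambda$-strongly convex in $\theta$ and $\nu$-strongly concave in $\phi$, with a unique saddle point $\theta^*(\phi_0,\theta_0,\tilde Y)$. Standard convergence results for SGDA on strongly convex--concave quadratics imply that with an appropriate step-size schedule and iteration count $T$, the returned iterate approximates $\theta^*$ in mean-square up to arbitrary accuracy. Since $f(x;\theta)$ is linear in $\theta$, this controls $\EE\|\hat f(x_*;\theta_T)-f(x_*;\theta^*)\|_2^2$ uniformly in $x_*$ over compact sets. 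By Proposition~\ref{prop:randomized-prior}, the exact saddle-point samples $f(x_*;\theta^*)$ are distributed as $\cN(\mu_m,S_m)$; running $N$ independent copies and taking empirical mean and covariance gives Monte-Carlo errors of order $N^{-1/2}$ by the usual LLN/CLT for Gaussian empirical moments.

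Finally I combine the pieces via a diagonal argument: choose $m_k\to\infty$ so the statistical error from part one is below $1/(3k)$ with probability at least $1-1/k$, then choose $T_k$ and $N_k$ large enough for each $m_k$ to make the optimization and Monte-Carlo contributions each below $1/(3k)$ with high probability. The triangle inequality yields $\sup_{x_*\in\cX^l}\max(\|\hat\mu_m-\mu\|_2,\|\hat S_m-S\|_F)\overset{p}{\to}0$. I anticipate the main obstacle being the uniform-in-$x_*$ control in the statistical step: it requires a uniform kernel approximation bound on the feature map, together with a quantitative perturbation analysis of the matrix identities \eqref{eq:post-mean-calc}--\eqref{eq:post-L} in which conditioning is controlled solely through $\lambda,\nu>0$; once this is in place, the remaining SGDA and Monte-Carlo arguments are routine.
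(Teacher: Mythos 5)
Your decomposition (random-feature error $\to$ matrix perturbation, plus SGDA error $\to$ strong convex--concave convergence) is exactly the paper's strategy: the first part is Lemma~\ref{lem:rf-approx-appendix}, the second is Lemma~\ref{lem:opt-final-result}, and the combination via a diagonal argument is Appendix~\ref{app:prop-approxinf-proof}. However, two points are glossed over that the paper handles explicitly and that would trip you up if you tried to write the SGDA step out in full.

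First, you invoke ``standard convergence results for SGDA on strongly convex--concave quadratics'' as if the constants were uniform, but they are not: the gradient magnitudes in \eqref{eq:rf-obj} depend on the \emph{unbounded} Gaussian draws $\theta_0,\varphi_0,\tilde Y$, and the saddle point $\theta^*$ itself has unbounded support. Vanilla SGDA on a min-max objective may also diverge if iterates are not confined. The paper fixes both problems by running \emph{projected} SGDA (Algorithm~\ref{alg:ana}) onto balls $B_f,B_g$ large enough to contain $\theta^*,\varphi^*$ on a high-probability truncation event $E_n$ that bounds the initial draws, then sends $\delta_1=\PP(E_n^c)\to 0$. Without the projection and the truncation the mean-square SGDA bound is simply not finite, and you cannot conclude $\EE\|\theta_T-\theta^*\|_2\to 0$ or, a fortiori, convergence of the covariance of $f(x_*;\theta_T)$ — the latter needs bounded second moments, which the projection supplies.

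Second, your Monte-Carlo step with $N$ independent runs and a CLT argument is superfluous to the statement as the paper reads it. In the paper, $\hat\mu_m,\hat S_m$ are the \emph{population} mean and covariance of $f(x_*;\theta_L)$ under the joint law of the random initializations and gradient noise (see the final display of Appendix~\ref{app:prop-approxinf-proof}, which bounds $\|\EE f(x^*;\hat\theta_m)-\EE f(x^*;\theta_m)\|_2$ and $\|\mrm{Cov}(\cdot)-\mrm{Cov}(\cdot)\|_F$). So the ``sequence of choices'' only involves $m$ and the SGDA schedule, not a Monte-Carlo budget $N$. Including it does not make your argument wrong — it proves a strictly stronger statement — but it adds a source of error the proposition did not ask you to control, and one should be clear that the paper's version eliminates it by comparing distributions rather than empirical samples.
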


\section{Related Work}\label{sec:related-work}

\subsection{Quasi-Bayesian Analysis and IV} 

Quasi-Bayesian analysis for general GMM problems was first developed in 
\citep{zellner_bayesian_1995,kim_limited_2002,chernozhukov_mcmc_2003}, which studied parametric models. 
The use of the quasi-posterior is motivated from the maximum entropy principle, based on which similar ideas have been developed in the machine learning literature \citep[e.g.,][]{jaakkola1999maximum,dudik2007maximum,zhu2009maximum}. 

For nonparametric IV, \cite{liao_posterior_2011} established consistency results. 
\citet{kato_quasi-bayesian_2013} established minimax contraction rates, as well as a Bernstein von-Mises type theorem for smooth functionals, in a setting where the GP prior and $\hat E_n$ are modeled with truncated series models. 
Our theoretical results allow for more flexibility, as we can either  
adapt the results to work with truncated series models, or 
use the orthonormal series to construct our kernels. 
Our result also avoids their requirement that both series models must have the same number of basis. %
\citet{florens_nonparametric_2012} derived a quasi-posterior using general GP priors and Nadaraya-Watson smoothing for $\hat{E}_n$. 
They proved an $L_2$ rate under the different assumption of \emph{source condition}, which requires $f_0$ to be in the range of $(E^*E)^{\beta}$. 
It is unclear if their rate is optimal, but it can only utilize limited smoothness of $f_0$ ($\beta\le 2$), and 
their rate cannot be better than $O(n^{-1/4})$. This is different from our result, where $\epsilon_n$ approaches $O(n^{-1/2})$ as $b\rightarrow\infty$. 
On the empirical side, 
no numerical study was presented in \citet{liao_provably_2020,kato_quasi-bayesian_2013}, whereas \cite{florens_nonparametric_2012} only provided numerical studies for the mean estimator. 

\subsection{Kernelized IV} 

Our quasi-Bayesian procedure builds upon the kernelized IV\footnote{
Note that \citet{zhang_maximum_2020} also studied RKHS methods for IV, with a different use of the kernel. 
} methods \citep{singh_kernel_2020,muandet_dual_2020} and the dual formulation of IV regression \citep{bennett2019deep,muandet_dual_2020,liao_provably_2020,dikkala_minimax_2020}. 
The estimation objective \eqref{eq:dualiv-obj} is from \citep{dikkala_minimax_2020}; the formulations in \citep{muandet_dual_2020,singh_kernel_2020} are slightly different. 
All of these works have only provided nonparametric error rates in the norm $\|E(\cdot)\|_2$. 
Among them,  
\cite{dikkala_minimax_2020} provide the best rate of $\max\{n^{-\frac{b}{2(b+1)}}, \delta_n\}$. As discussed in Section~\ref{sec:assumptions}, the optimal choice of $\cI$ should yield $\delta_n\asymp n^{-\frac{b+2p}{2(b+2p+1)}}$, so the rate in \citep{dikkala_minimax_2020}
can only be optimal %
if $p=0$. Unfortunately, the $p=0$ case is uninteresting as it excludes the challenge of ill-posedness. 
We complete the theoretical picture, by 
improving the $\|E(\cdot)\|_2$ rate to $\delta_n$, and providing the first minimax rates for $L_2$ and Sobolev norms.

\subsection{Classical Nonparametric Methods} 

In the setting of \cite{chen2011rate}, 
minimax rates have been established for classical methods. 
It is thus interesting to compare the kernelized IV method with them. 
Smoothing-based methods also attain optimal $L_2$ rates \citep{horowitz_specification_2012}, 
but it is unclear if optimal recovery of $f_0$ and its derivatives can be achieved simultaneously, i.e., with the same hyperparameter \citep{chen_optimal_2015}.
This is different from our estimator, and the sieve estimators \citep{chen_optimal_2015}, where optimal $L_2$ and Sobolev rates are achieved by the same choice of hyperparameter. 

Sieve estimators \citep{newey2003instrumental,blundell_semi-nonparametric_2007,chen_estimation_2012} model $f_0$, and possibly also $\hat E_n$, with orthogonal series. %
The kernelized IV estimator is thus similar to a sieve estimator using the Mercer bases $\{\bar\varphi_j\}$. 
One difference is that we include the term $\effnu\|g\|_\cI^2 = \effnu\|\hat E_n(f-f_0)\|_\cI^2$ in the objective for $f_0$. 
More importantly, 
the classical sieve estimators require the basis functions to be determined \emph{a priori}, %
as their construction involves explicit truncation of the series model. 
This prevents adaptation to regularity properties of the data distribution. 
In our case, however, the series $\{\bar\varphi_j\}$ is implicitly determined by the data distribution, and thus allows for some adaptation to the regularity properties. E.g., if $\bx$ or $\bz$ is supported on a low-dimensional subspace of $\RR^d$, the kernelized estimators automatically ``construct'' the bases according to that subspace. 
On the other hand, the classical estimators have been studied extensively, with a wider range of theoretical results established in literature (e.g., recovery of nonlinear functional evaluations, validity of specification tests). 

\subsection{Other Related Work}\label{sec:other-related-work} Other recent work applying ML methods to nonlinear IV include \citep{hartford2017deep,zhang_maximum_2020,xu_learning_2020}. 
They focused on estimation methodology as opposed to uncertainty quantification, and did not 
provide results for nonparametric convergence.\footnote{
\citet{bennett_variational_2020} also analyzed parametric models. 
While \cite{liao_provably_2020} provided an $L_2$ error bound, it is in the \emph{infinite sample} setting, and describes the dependency %
on computational budgets. 
}  
Among them, \citet{zhang_maximum_2020} also discussed a connection between a GP (quasi-)posterior and a leave-one-out validation statistic, but the GP is used for hyperparameter selection and its frequentist properties (e.g., contraction rates, validity of credible sets) remain unclear. 
For the closely related problem of causal effect estimation with proxy variables \citep{miao2018identifying}, which also has a CMR structure, \cite{mastouri2021proximal} establish an RKHS-norm rate for the structural function, under source conditions similar to \cite{florens_nonparametric_2012}. Their rate is also lower bounded by $O(n^{-1/4})$, although note the stronger norm.  %

Alternative approaches to uncertainty quantification include Bayesian inference and bootstrap. We have discussed the limitations of Bayesian IV in Section~\ref{sec:intro}. Here we note that while the closely related \emph{control function approach} is not less general than CMR \citep{horowitz_applied_2011}, standard Bayesian approaches additionally assume independent errors, 
making their assumption strictly stronger than CMR. Nonetheless, the quasi-Bayesian approach is connected to standard Bayesian modeling; 
\citet{florens_gaussian_2021} shows that for unconditional moment restriction (MR) problems, the quasi-posterior can be recovered as a non-informative limit of a sequence of standard Bayesian posteriors, where a prior on the joint data distribution becomes diffuse. However, computation of the Bayesian posteriors therein is expensive, and generalization to nonparametric CMR problems appears nontrivial. \SkipNOTE{
They have strong assumption on the covariance of the nuisance parameter (full data distribution), which has conceptual difficulties (Remark 2.1) and is difficult to implement even for a finite number of MRs (Sec 2.1.2). 
} 

For the classical sieve estimator, bootstrap enjoys a variety of theoretical guarantees, but non-asymptotic justification is relatively lacking. 
The quasi-Bayesian approach could be more appealing in this aspect, due to results such as \eqref{eq:gibbs-justification} and \eqref{eq:cov-as-worst-case-error}, although there is a similar lack of complete non-asymptotic analysis. %
The need for bootstrap alternatives is also justified through analyses in the parametric regime, where 
nonparametric bootstrap inference for the classical two stage least squares (2SLS) method %
is known to be unreliable when instrument strength is weak \citep{moreira2004bootstrap,
flores-lagunes_finite_2007,davidson_wild_2010}.

\section{Synthetic Experiments}\label{sec:exp}

In this section we evaluate the proposed method. We first study the asymptotic and pre-asymptotic behavior of the quasi-posterior in a controlled setting; then, we compare its predictive performance with various baselines.\footnote{
Code for the experiments can be found at \url{https://github.com/meta-inf/qbdiv}.}

\subsection{Experiment in a Controlled Setting}\label{sec:exp-asymp-val}

We first experiment in a setting where all assumptions hold with known parameters, and the models are correctly specified. 
This setting allows us to validate the asymptotic theory, but also provides a first glimpse at the pre-asymptotic behaviors. 

The setup follows Example~\ref{rmk:assumptions}: we set $\cH,\cI$ as the Sobolev spaces on $\mathbb T^1$ with the appropriate orders. Points on $\mathbb T^1$ are represented with a coordinate in $[0,1]$. 
We sample $f_0$ from the GP prior, and define the
joint density $f_{XZ}$ as 
$$
f_{XZ}(x,z):=1 + \sum_{j=2}^\infty 0.2\cdot (j-1)^{-p} \bar\varphi_j(x)\bar\varphi_j(z),
$$
where $\bar\varphi_j(x) := \cos(2\pi [j/2](x) + (j-[j/2])\pi)$ 
constitutes the Mercer basis of $k_x$ and $k_z$. It can be verified that $f_{XZ}$ is a valid density, and the marginal distributions of $x$ and $z$ are both the uniform distribution. 
We introduce confounding by generating the observations as 
$$
(z_i,u_i)\overset{i.i.d.}{\sim} \cU[0,1], ~~ x_i := F^{-1}_{x\mid z=z_i}(u_i), ~~ y_i = f_0(x_i) + \tilde\sigma \cos(2\pi u_i),
$$
where $F^{-1}_{x\mid z=z_0}(\cdot)$ denotes the inverse CDF transform of the conditional distribution, and 
$\tilde\sigma$ is chosen so that $\mrm{Var}[u] = \sigma^2$. We approximate $f_{XZ}$ and the kernels by truncating the series sum at $J=400$, and further approximate $f_{XZ}$ with linear interpolation on a $500\times 500$ grid. We vary $(b,p)\in \{(2,0.5), (3,0.5), (1,2)\}$, and $n\in\{1,3,6,12,24,48\}\times 10^3$. The hyperparameters are set as $\lambda := \sigma^2 = 1$, $\effnu := 20 (\frac{n}{1000})^{-\frac{b+2p}{b+2p+1}}$.

When $n$ is large, we find the closed-form expressions \eqref{eq:post-mean-calc}-\eqref{eq:post-L} to be numerically unstable, likely due to the need to invert the $n\times n$ matrix $\lambda I+L K_{xx}$, whose eigenvalues may have a decay rate of $\lambda_i\asymp i^{-b+2p+1}$. 
We find Nystr\"om approximation for $L$ to be an effective workaround; see Appendix~\ref{app:nystrom-expression} for a description of the algorithm. As shown Appendix~\ref{app:proof-nystrom}, the approximation will not affect the asymptotic rates when the number of inducing points reaches $O(n^{\frac{1}{b+2p+1}})$. 
We use $50$ inducing points.

Figure~\ref{fig:gt-sim} plots the expected $L_2$ error averaged over the posterior measure, $\int \|f-f_0\|_2^2 \Pi(df\givendata)$, which upper bounds the squared contraction rate. As we can see, the observations validate the asymptotic theory. 

\begin{figure}[t]
     \centering
\includegraphics[width=0.7\linewidth]{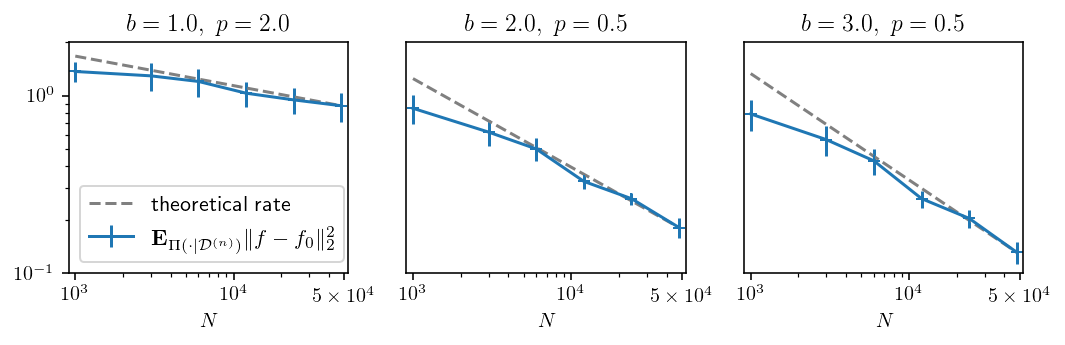}
\caption{Average $L_2$ error $\EE_{\Pi(df\givendata)}\|f-f_0\|_2^2$ vs the theoretical rate $O(\epsilon_n^2)$, for varying choices of $(b,p)$. Error bars indicate standard deviations estimated from 10 independent trials. 
The theoretical rate is scaled for clarity.
}\label{fig:gt-sim}
\end{figure}

\begin{figure}[b]
    \centering
\includegraphics[clip,trim={8cm 0 4cm 0},width=.795\linewidth]{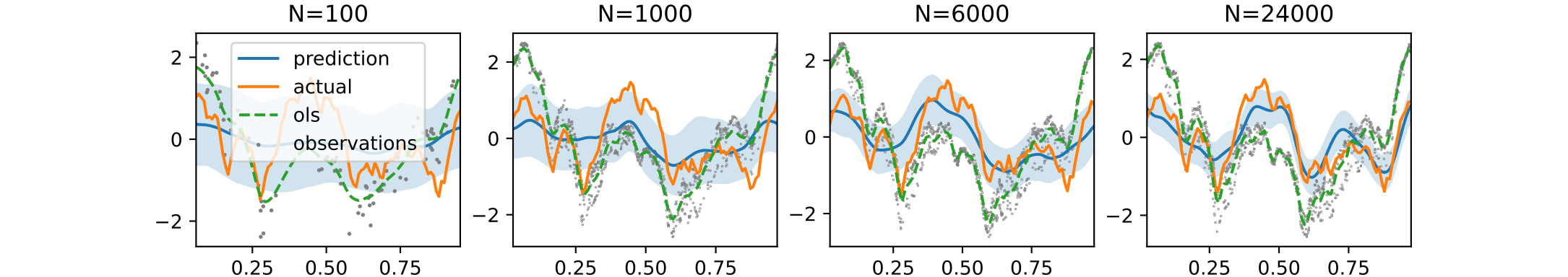}
\hfill 
\includegraphics[clip,trim={0.2cm 0.11cm 0 0.17cm},width=.195\linewidth]{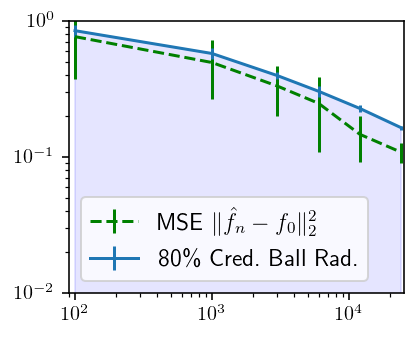}
\caption{Visualizations of %
the quasi-posterior, %
with $b=2,p=0.5$. 
Left: sample runs; 
shade indicates pointwise $80\%$ credible interval, dots indicate training data, and ``ols'' denotes a (biased) KRR estimator for reference.  
Right: comparison of the radius of the $80\%$ $L_2$ credible ball with the error of the posterior mean estimator. 
}\label{fig:gt-sim-viz}
\end{figure}

Figure~\ref{fig:gt-sim-viz} visualizes the uncertainty estimates from the quasi-posterior. 
We can see that the quasi-posterior has reliable coverage in the pre-asymptotic regime. Note that we plot credible intervals for illustrative purposes only; recall our theory does not provide guarantee for its coverage, even though the discussion in Section~\ref{sec:nonasymptotic} suggests that good coverage may be achieved, for ``benign'' $f_0$ as in this setting. 

Additional results are provided in Appendix~\ref{app:details-gt}. In brief, we find that
\begin{enumerate}
\item Across all settings, the posterior spread $\int \|f-\hat f_n\|_2^2 \Pi(df\givendata)$ is approximately as large as the $L_2$ error of the point estimator, $\|\hat f_n-f_0\|_2^2$; this further suggests the $L_2$ credible balls are consistently informative and reliable, as its radius is an upper percentile of the random variable $\|f-\hat f_n\|_2$. 
\item The $L_2$ contraction rate for the derivative $f'$ (i.e., the $W^{1,2}$ rate for $f$) also appears consistent with the theory, although convergence to the asymptotic regime is slower. 
\end{enumerate}

\subsection{Predictive Performance: 1D Simulation}\label{sec:exp-1d}

We now evaluate the predictive performance of the proposed method on a variety of 1D synthetic datasets adapted from \cite{lewis2018adversarial}. We modify their setup to incorporate a nonlinear first stage, in a way similar to 
\cite{singh_kernel_2020,chen_optimal_2015}:
\begin{align*}%
    z := \mrm{sigmoid}(w), ~~
    x := \mrm{sigmoid}\!\left(\frac{\alpha w+(1-\alpha)u'}{\sqrt{\alpha^2+(1-\alpha)^2}}\right), ~~
    y_i \sim \cN(f_0(2x-1) + 2u, 0.1),
\end{align*}
where $(u,u')$ are normal random variables with unit variance and a correlation of $0.5$, $w\sim\cN(0,1)$ is independent of $(u,u')$, 
$\alpha$ is a parameter controlling the instrument strength, and $f_0$ is constructed from the \texttt{sine}, \texttt{step}, \texttt{abs} or a \texttt{linear} function. We choose $N\in\{200,1000\}$ and $\alpha\in\{0.05, 0.5\}$. 

Our 
baselines include BayesIV \citep{wiesenfarth2014bayesian}, a flexible Bayesian model based on B-splines and Dirichlet process mixture; we also include bootstrap on 2SLS with ridge regularization, either applied directly to the input features (Linear), on their polynomial expansion (Poly), or on the same kernelized models (KIV) as ours.%
\footnote{
    We do not compare with \cite{florens_nonparametric_2012} since their source code is unavailable. %
} 
Hyperparameter for the kernelized IV methods are selected by cross validation based on the observable first-stage and second-stage losses as in previous work \citep{singh_kernel_2020,muandet_dual_2020}; see Appendix~\ref{app:hps-sel}. 
For kernels, we choose the 
RBF and Mat\'ern kernels, but defer the results for Mat\'ern kernels to appendix for brevity. 
See Appendix~\ref{app:details-1d} for the detailed setup. 

\begin{figure}[bt]
\centering
\includegraphics[width=\linewidth]{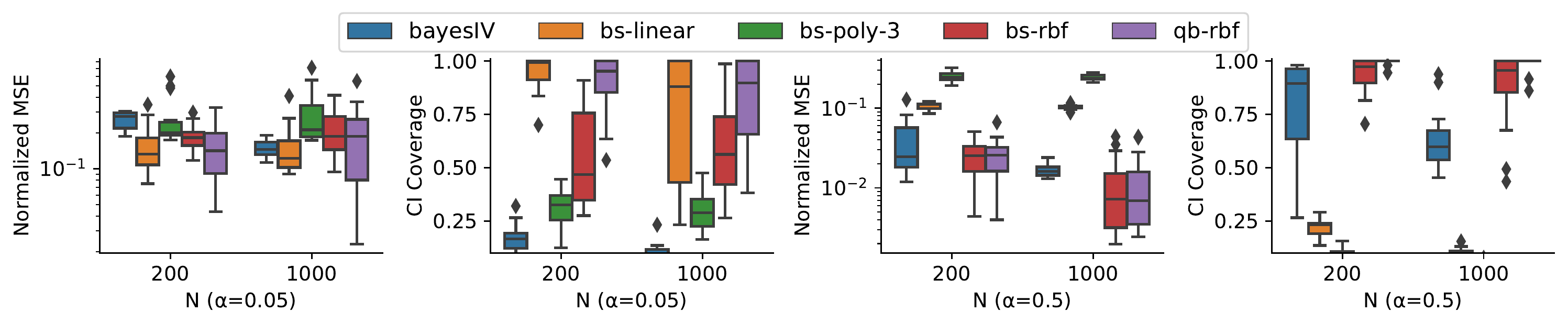}
\caption{Test MSE and CI coverage on the \texttt{sine} dataset. The left two plots correspond to $\alpha=0.05$, while the right two correspond to $\alpha=0.5$. \texttt{bs} denotes bootstrap, \texttt{qb} denotes quasi-Bayesian. 
We report the average CI coverage over the data distribution.
}\label{fig:cubic-sin}
\end{figure}

We report MSE, coverage rate of $95\%$ credible intervals (CIs) and CI width in this experiment. 
While there is no theoretical guarantee for CI coverage, our discussion in Section~\ref{sec:nonasymptotic} and the last experiment  
suggest reliable coverage can be possible in practice. Besides, 
it can be much easier to provide pointwise CIs with good coverage \emph{when averaged} over the input space, than to provide reliable CIs for the worst-case input; the former is still useful in many applications. 

Normalized MSE and CI coverage on the \texttt{sine} datasets are plotted in Figure~\ref{fig:cubic-sin}. We report results on 20 independently generated datasets. As we can see, quasi-Bayesian inference provides the most reliable uncertainty estimates, especially in the $\alpha=0.05$ setting. 
The CIs are typically conservative (the larger variation when $\alpha=0.05$ is partly due to the instability of the hyperparameter selection procedure). 
However, as shown in Appendix~\ref{app:addi-1d}, it is still informative, and properly reflects the sample size and instrument strength. 

Full results on all datasets, visualizations and additional experiments are deferred to Appendix~\ref{app:addi-1d}. As a summary, (i) %
on the \texttt{abs} and \texttt{linear} datasets where the kernel choices are more justified, the results are qualitatively similar to the sin dataset. Moreover, the over-smoothed RBF kernel appears to have similar coverage comparing with the optimal kernel, and follow a similar contraction rate. 
(ii) 
All methods have deteriorated performance on the \texttt{step} dataset, although the quasi-posterior still provides more coverage. 
(iii) Uncertainty estimates produced by the approximate inference algorithm are similar to that from the exact quasi-posterior. 
The first two findings above may be attributed to the fact that all three datasets, especially $\texttt{abs}$ and $\texttt{linear}$, are ``regular'' in most regions of the input space.

\subsection{Predictive Performance: Airline Demand}\label{sec:exp-demand}

Finally, we consider the more challenging demand simulation proposed in \cite{hartford2017deep}. The dataset simulates a scenario where we need to predict the demand of airline tickets $y$, as a function of the price $x$, and two observed confounders: customer type $s$, and time of year $t$. 
The data generating process is 
$$
x:=(z+3)\psi(t)+25+u', ~~ y:=f_0(x,t,s)+u, ~~
f_0(x,t,s):=100+(10+x)\cdot s\cdot\psi(t)-2x
$$
where $(u,u')$ are standard normal variables with correlation $\rho$, $z\sim\cN(0,1)$ is independent of $(u,u')$, and $\psi$ is a nonlinear function whose shape is given in Figure~\ref{fig:demand-main}. The variable $s$ either varies across $\{0,\ldots,6\}$ (the lower-dimensional setting), or is observed as an MNIST image representing the corresponding digit; the latter case emulates the real-world scenario where only high-dimensional surrogates of the true confounder is observed. 
We only report results for $\rho=0.5$, noting that results using other choices of $\rho$ have been similar. 
We use $n\in\{10^3,10^4\}$ for the lower-dimensional setting, and use $n=5\times 10^4$ for the image setting. 

\begin{figure}[btp]
    \centering
    \begin{subfigure}[b]{0.24\linewidth}
        \includegraphics[width=\linewidth]{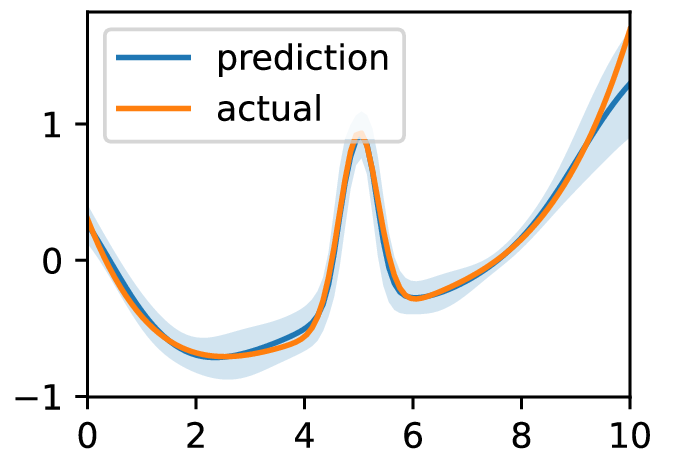}
        \caption{lower-dim, $N=10^3$}
    \end{subfigure}~~~~
    \begin{subfigure}[b]{0.24\linewidth}
        \includegraphics[width=\linewidth]{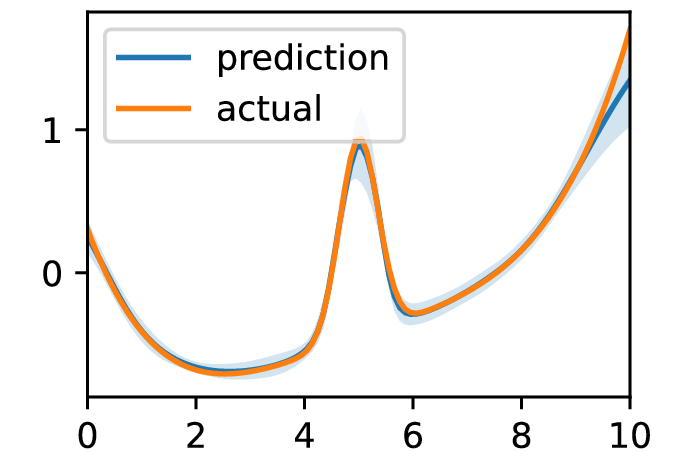}
        \caption{lower-dim, $N=10^4$}
    \end{subfigure}~~~~
    \begin{subfigure}[b]{0.24\linewidth}
        \includegraphics[width=\linewidth]{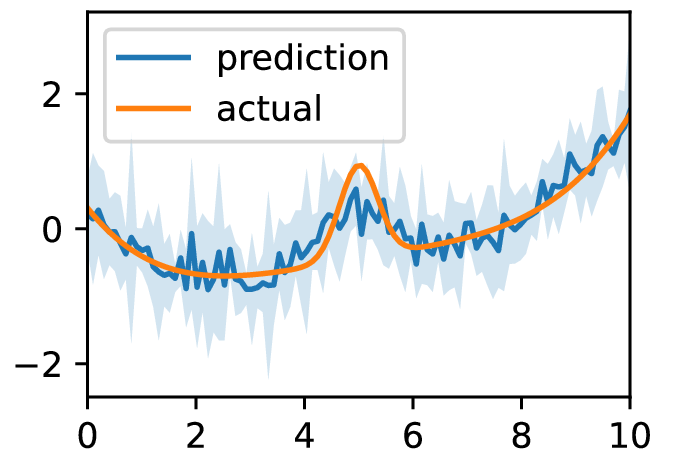}
        \caption{image setup}
    \end{subfigure}
    \caption{Approximate quasi-posteriors in the demand simulation. 
    We plot a cross-section by fixing $s,x$ to their mean values and varying $t$. 
    Shade indicates pointwise $95\%$ credible interval. 
    }\label{fig:demand-main}%
\end{figure}
We compare our method with bootstrap on the same model, BayesIV, and bootstrap on linear or polynomial models. Performance of other point estimators on this dataset has been reported in \cite{singh_kernel_2020,muandet_dual_2020,xu_learning_2020}, compared with which our method is generally competitive. 
We implement the dual IV model using both an RBF kernel and DNN models.  
See Appendix~\ref{app:demand-setup} for details.

We report the test MSE and CI coverage for $N=1000$ 
in Table~\ref{tbl:hllt-main}, 
and visualize all approximate quasi-posteriors for the NN models in Figure~\ref{fig:demand-main}. 
As we can see, when implemented with DNNs, our method produces uncertainty estimates with excellent coverage, which also correctly reflects the information available in the dataset: the CI is wider when $N$ is smaller, or in the high-dimensional experiment where estimation is harder. 
Bootstrap has a noticeably worse performance when $N=1000$. Still, it 
performs well in the (arguably less interesting) large-sample setting, with a CI coverage similar to our method; see Appendix~\ref{app:demand-full-results}. This is because on this dataset, the total instrument strength is stronger due to the presence of observed confounders, and the NN model is a good fit. Consequently, the asymptotic behavior of bootstrap can be observed when $N$ is large. %

Both methods have poorer performances when we switch to the RBF kernel, although the quasi-posterior is still more reliable. 
Given the discussion in Section~\ref{sec:nonasymptotic}, we hypothesize that both $\cI$ and $\cH$ are misspecified in this case, which seems reasonable as the magnitude of $f_0$ varies greatly as $s,x$ varies (not shown in Figure~\ref{fig:demand-main}). In this case the quasi-posterior could only reflect the uncertainty within the model. 
These results show that NN models can be advantageous, which our inference algorithm supports. 

The other baselines perform poorly due to their inflexibility; in particular, note that BayesIV uses additive models for both stages (e.g., $f(x,t,s)=f_1(x)+f_2(t)+f_3(s)$) which do not approximate this data generating process well. 
Full results and visualizations are deferred to Appendix~\ref{app:demand-full-results}. 
\begin{table}[bth]\centering
\small
\begin{tabular}{cccccccc}
\toprule
Method & BS-Linear & BS-Poly & BayesIV & BS-RBF & QB-RBF & BS-NN & QB-NN \\ \midrule
NMSE & 
$.37\pm.01$ &
$.31\pm.06$ &
$.28\pm.04$ &
$.17\pm.01$ &
$.17\pm.01$ &
$.06\pm.03$ &
{${.04\pm.00}$} \\
CI Cvg. & 
$.09\pm.01$ &
$.15\pm.03$ &
$.27\pm.06$ &
$.45\pm.02$ &
$.77\pm.02$ &
$.86\pm.02$ &
{${.94\pm.01}$} \\
CI Wid. 
& $.09\pm .01$
& $.16\pm .04$
& $.08\pm .06$
& $.18\pm .02$
& $.37\pm .01$
& $.14\pm .01$ 
& $.26\pm .04$ \\
\bottomrule
\end{tabular}
\caption{Test normalized MSE, average CI coverage and CI width on the demand dataset, with $N=1000$, averaged over $20$ trials.
}\label{tbl:hllt-main}
\end{table}

\section{Conclusion}\label{sec:conclusions}%

In this work we present a quasi-Bayesian procedure for kernelized and NN-based IV models. We analyze the frequentist behavior of the proposed quasi-posterior, 
 and derive an approximate inference algorithm that can be applied to neural network models. 
Empirical evaluations show that the proposed method scales to large and high-dimensional datasets, and can be advantageous in the finite-sample setting, or when the instrument strength is weak.

\appendix

\newcommand{\frp}{\bar{f}_{rp}}
\newcommand{\grp}{\bar{g}_{rp}}
\newcommand{\Cx}{{\cC}_x}
\newcommand{\empCx}{{\hat{\cC}_x}}
\newcommand{\regCx}{{\cC}_{x,\efflam}}
\newcommand{\regEmpCx}{\hat{\cC}_{x,\efflam}}
\newcommand{\fApprox}{{\bar f^\dagger_n}}
\newcommand{\fGT}{\bar f_0}
\renewcommand{\empLnorm}[1]{\|{#1}\|_n}
\renewcommand{\Lnorm}[1]{\|{#1}\|_2}

\newcommand{\Proj}[2]{\mrm{Proj}_{{#1}}{{#2}}}
\newcommand{\DeltaFApprox}{\widetilde{\Delta f}}

\makeatletter
\newcommand{\pushright}[1]{\ifmeasuring@#1\else\omit\hfill$\displaystyle#1$\fi\ignorespaces}
\newcommand{\pushleft}[1]{\ifmeasuring@#1\else\omit$\displaystyle#1$\hfill\fi\ignorespaces}
\makeatother

\renewcommand{\mcomment}[1]{\pushright{\mgray{\text{(#1)}}}}
\renewcommand{\fApprox}{{f^\dagger_n}}
\newcommand{\fApproxM}{{f^\dagger_m}}

\section{Proof of the Asymptotic Results}\label{app:proof-asymptotic}

This section contains all proofs for the asymptotic results in Section~\ref{sec:theory}. 

\subsection{Additional Notations and Conventions}\label{app:proof-conventions}

We recall the following notations. 
$\bar\cH,\bar\varphi_i,\lambda_i$ are defined in Section~\ref{sec:assumptions}. 
$\data=(\bX,\bY,\bZ)$ denotes the observed data. 
$\Pi$ denotes the GP prior, and $\Pi_n(df\mid\data)$ denotes the quasi-posterior. 

Denote by $\|\cdot\|_n$ the $L_2$ norm w.r.t.~the empirical data distribution. 
For any $x\in\RR$, define $[x]$ as the maximum integer not exceeding $x$. 
Define 
$$
\ell_n(f) := \empLnorm{\hat{E}(f(\bx)-\by)}^2+\effnu \|\hat{E}(f(\bx)-\by)\|_\cI^2, 
$$
so that 
$
\frac{d\Pi_n(\cdot\mid\data)}{d\Pi}(f) \propto \exp\!\left(-\frac{n}{2\lambda} \ell_n(f)\right).
$
We overload notations and define the constant $B$ as 
$$
B := \max\{\|f_0\|, \|E f_0\|_\cI, \max|\by-f_0(\bx)|\}
$$
where the last term denotes the bound of the random variable (which is bounded by Assumption~\ref{ass:std}.)

Throughout this section, the denotation of the constants introduced in the proof ($C_1,\ldots$, $c_1,\ldots$) may change by each line. 
The constants hidden in $\lesssim,\gtrsim,\asymp$ will be uniformly bounded, across all possible data distributions and model choices satisfying the assumptions in Section~\ref{sec:assumptions}. Therefore, the asymptotic results will hold uniformly across these distributions.

\subsection{Background and Auxiliary Results}\label{app:aux}

\subsubsection{Background on Power Spaces}\label{app:power-space}

Recall the definition of power space in Definition~\ref{defn:power-space}. 
\begin{lemma}\label{lem:power-space}
Let $\cH$ satisfy Assumption~\ref{ass:k-std} and \ref{ass:H1}. Then, for any $\alpha\in\left [\frac{b_{emb}}{b+1}, \infty\right )\supset \left[
    \frac{b}{b+1}, \infty 
\right)$, we have 
\begin{inplaceEnumerate}
\inplaceItem 
the function space $\cH^\alpha$ given by 
\begin{equation}\label{eq:powerrkhs-defn}
\cH^\alpha = \Bigg\{\sum_{j=1}^\infty a_j\bar\varphi_j: ~
\sum_{j=1}^\infty \lambda_j^{-\alpha} a_j^2 <\infty
 \Bigg\}
\end{equation}
(and the norm defined accordingly) 
is an RKHS, with the reproducing kernel $$
k_\alpha(x,x') := \sum_{j=1}^\infty \lambda_j^{\alpha} \bar\varphi_j(x)\bar\varphi_j(x'). 
$$ 
\inplaceItem \label{item:power-space-bounded-kernel}
$k_\alpha$ is bounded. 
\inplaceItem \label{item:power-space-emb}
The RKHS $\cH^\alpha$ satisfies Assumption~\ref{ass:H1}~(ii), with the exponent $\frac{b_{emb}}{b+1}$ replaced by $\frac{b_{emb}}{\alpha(b+1)}$. 
\end{inplaceEnumerate}
\end{lemma}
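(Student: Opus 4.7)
The plan is to reduce all three claims to standard facts about Mercer representations and to the single embedding hypothesis in Assumption~\ref{ass:H1}~(ii). The key identity, which will be used throughout, is that power-taking composes: for $\alpha,\beta>0$, the set $[\cH^\alpha]^\beta$ defined via Definition~\ref{defn:power-space} (applied to the kernel $k_\alpha$ and its Mercer eigensystem) equals $\cH^{\alpha\beta}$ as a normed space. I will first set this up carefully, then read off (i)--(iii).

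First I would argue (ii), since the boundedness of $k_\alpha$ is needed even to make sense of the sum in its definition. By Assumption~\ref{ass:H1}~(ii) combined with \citet[Theorem 5.3]{steinwart_mercers_2012}, the embedding $[\cH]^{b_{emb}/(b+1)}\hookrightarrow L_\infty(P(dx))$ is equivalent to
\[
M := \sup_{x\in\cX}\sum_{j=1}^\infty \lambda_j^{b_{emb}/(b+1)}\,\bar\varphi_j(x)^2 < \infty.
\]
Since $\lambda_j\asymp j^{-(b+1)}\to 0$ by Assumption~\ref{ass:H1}~(i), for all sufficiently large $j$ and any $\alpha\ge b_{emb}/(b+1)$ we have $\lambda_j^\alpha\le \lambda_j^{b_{emb}/(b+1)}$. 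Combining this with the pointwise bound $\bar\varphi_j(x)^2\le \lambda_j^{-b_{emb}/(b+1)}M$ (which controls the finite tail where $\lambda_j$ may exceed $1$), one gets $\sup_x\sum_j \lambda_j^\alpha \bar\varphi_j(x)^2<\infty$, which is the required bound on $k_\alpha(x,x)$.

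Next I would prove (i). Having (ii), the series defining $k_\alpha(x,x')$ converges absolutely, and $k_\alpha$ is a bounded, symmetric, positive-definite kernel on $\cX$. It admits the Mercer representation $k_\alpha(x,x')=\sum_j \mu_j e_j(x)e_j(x')$ with eigenvalues $\mu_j=\lambda_j^\alpha$ and $L_2(P(dx))$-orthonormal eigenfunctions $e_j=\bar\varphi_j$ (the orthonormality being inherited from the Mercer decomposition of $k_x$ granted by Assumptions~\ref{ass:k-std}). By the standard characterization of the RKHS of a Mercer kernel \citep[e.g.,][Theorem 3.1]{steinwart_mercers_2012}, its RKHS consists exactly of those $\sum_j a_j\bar\varphi_j$ with $\sum_j a_j^2/\mu_j = \sum_j \lambda_j^{-\alpha}a_j^2<\infty$, and the squared norm equals this sum. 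This is precisely the space and norm in \eqref{eq:powerrkhs-defn}, so $\cH^\alpha$ is an RKHS with reproducing kernel $k_\alpha$.

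Finally, for (iii), I would apply Definition~\ref{defn:power-space} to the kernel $k_\alpha$ with exponent $\beta:=b_{emb}/(\alpha(b+1))$. The Mercer eigenvalues of $k_\alpha$ are $\lambda_j^\alpha$ (as established in the proof of (i)), with the same $L_2$-ONB $\{\bar\varphi_j\}$. Hence by definition
\[
\|f\|_{[\cH^\alpha]^\beta}^2 = \sum_{j=1}^\infty (\lambda_j^\alpha)^{-\beta}\langle f,\bar\varphi_j\rangle_2^2 = \sum_{j=1}^\infty \lambda_j^{-b_{emb}/(b+1)}\langle f,\bar\varphi_j\rangle_2^2 = \|f\|_{[\cH]^{b_{emb}/(b+1)}}^2,
\]
so $[\cH^\alpha]^\beta=[\cH]^{b_{emb}/(b+1)}$ isometrically, and the embedding into $L_\infty(P(dx))$ is inherited directly from Assumption~\ref{ass:H1}~(ii). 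I expect no real obstacle here: the main subtlety is simply to verify that the Mercer system of $k_\alpha$ is indeed $(\lambda_j^\alpha,\bar\varphi_j)$ with the same $L_2$-orthonormality, which follows once (ii) guarantees convergence of the relevant sums and legitimates term-by-term manipulation.
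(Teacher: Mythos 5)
Your proof follows essentially the same route as the paper: establish boundedness of the power kernel $k_\alpha$, deduce the RKHS structure of $\cH^\alpha$ from a Mercer-type characterization, and obtain (iii) from the power-composition identity $[\cH^\alpha]^{\alpha'} = [\cH]^{\alpha\alpha'}$. The paper proves it in the same order (i)--(ii) jointly, then (iii), with the same ingredients, so the logical skeleton is identical.

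The one place you diverge, and the one place to be careful, is the source of the bounded-diagonal statement. You invoke \citet[Theorem 5.3]{steinwart_mercers_2012} for the \emph{equivalence} of the embedding $[\cH]^{b_{emb}/(b+1)}\hookrightarrow L_\infty$ and $\sup_x\sum_j\lambda_j^{b_{emb}/(b+1)}\bar\varphi_j(x)^2<\infty$, then propagate boundedness to all $\alpha\ge b_{emb}/(b+1)$ by a clean monotonicity argument (split the sum at the finitely many $j$ with $\lambda_j\ge 1$). The monotonicity step is sound and in fact a bit tidier than the paper's, which applies the theorem at each $\alpha\in[b_{emb}/(b+1),1]$ separately and only uses monotonicity for $\alpha>1$. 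However, the paper uses Theorem 5.3 of \citet{steinwart_mercers_2012} only for the direction ``bounded power-kernel diagonal $\Rightarrow$ $L_\infty$ embedding'' (this is how it is quoted in the discussion preceding Assumption~\ref{ass:H1}); for the direction you actually need --- embedding $\Rightarrow$ bounded diagonal, which is where the circularity (you would like to argue via the reproducing property of a kernel you haven't yet shown exists) must be broken --- the paper instead invokes \citet[Theorem 5.5]{steinwart_convergence_2019}. So you should either cite the latter for that direction, or verify explicitly that Theorem 5.3 of Steinwart--Scovel indeed gives the reverse implication under the Assumptions K/CK framework; as written, your appeal to an ``equivalence'' in Theorem 5.3 is the only step whose provenance is unclear. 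Everything else (the Mercer characterization of $\cH^\alpha$ with eigensystem $(\lambda_j^\alpha,\bar\varphi_j)$, and the isometry $[\cH^\alpha]^\beta=[\cH]^{\alpha\beta}$ underlying (iii)) matches the paper.
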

\begin{proof}
(i)-(ii):  
As the kernel $k_x$ is bounded, 
by \citet[Lemma 2.3, \SkipNOTE{bounded kernel => compact embedding to L2}
Definition 4.1, \SkipNOTE{requires compact embedding and the following display}
Proposition 4.2\SkipNOTE{requires no further conditions}]{steinwart_mercers_2012}, (i) will be true if for all $x\in\cX$, we have 
$$
\sum_{j=1}^\infty \lambda_j^{\alpha} \bar\varphi_j^2(x)<\infty.
$$
This will also imply (ii). 
The above display will hold if we can apply Theorem 5.5 in \citet{steinwart_convergence_2019},\SkipNOTE{``Eq.(33) will hold for $S=T$''}\footnote{
While their theorem only state the result for $\alpha\le 1$, the $\alpha>1$ case follows since $\sum_j \lambda_j^\alpha \bar\varphi_j^2(x) \le  
\sum_j \lambda_j \bar\varphi_j^2(x)$. The definitions in \citet{steinwart_mercers_2012} allow for any $\alpha>0$. 
}
which requires that their Assumptions K and CK hold, and that $[\cH]^\alpha$ can be continuously embedded into $L_\infty(P(dx))$. Under our Assumption~\ref{ass:k-std}, 
most requirements in Assumptions K and CK can be verified immediately from the discussions therein, with the only possible exception of 
separability of $L_2(P(dx))$, which follows because $\{\bar\varphi_i\}$ form an ONB; see the discussion around Assumption~\ref{ass:k-std}. 
The embedding property follows from our Assumption~\ref{ass:H1} (ii), and the fact that  
the property is stronger when $\alpha$ is smaller \citep[see e.g.,][Eq.~(7)]{fischer2020sobolev}. 

Now, (iii) follows from the assumption, and the observation that $[\cH^\alpha]^{\alpha'} = [\cH]^{\alpha\alpha'}$. 
\end{proof}

Observe that the $\cH^\alpha$-norm defined in \eqref{eq:powerrkhs-defn} coincides with the definition of the $[\cH]^\alpha$ norm. 
Technically the two spaces are different, since $[\cH]^\alpha$ is a normed subspace of $L_2(P(dx))$ and consists of equivalence classes of functions. But the above lemma shows that when $\alpha$ is sufficiently large, $\cH^\alpha$ can be continuously embedded into $[\cH]^\alpha$. Conversely, each equivalence class in $[\cH]^\alpha$ consists of functions with the same generalized Fourier coefficients $\{\<f,\bar\varphi_i\>_2\}$ (and thus the same $[\cH]^\alpha$-norm), and corresponds to a unique function in $\cH^\alpha$ with an equal $\cH^\alpha$-norm. Therefore, we do not distinguish between them in the text, and refer to $\bar\cH = [\cH]^{\frac{b}{b+1}}$ as having an RKHS structure. 

\begin{lemma}\label{lem:interpolation}
Let $\cH$ satisfy Assumptions~\ref{ass:k-std}, \ref{ass:H1}. Then %
for any $f \in\cH$, we have 
\begin{align}
\|f\|_{\bar\cH}^2 &\lesssim (\|f\|_\cH^2)^{\frac{b}{b+1}} (\|f\|_2^2)^{\frac{1}{b+1}}, \nonumber \\ 
    \|f\|^2 &\lesssim (\|f\|_\cH^2)^{\frac{b}{b+1}} (\|f\|_2^2)^{\frac{1}{b+1}}. \label{eq:interp-ineq} %
\end{align}
\end{lemma}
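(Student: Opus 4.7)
My plan is to reduce both inequalities to elementary estimates on the Mercer expansion supplied by Assumptions~\ref{ass:k-std} and~\ref{ass:H1}(i). I would start by writing $f = \sum_j a_j \bar\varphi_j$ with $a_j := \<f,\bar\varphi_j\>_2$, using that $\{\bar\varphi_j\}$ is an ONB of $L_2(P(dx))$ as recalled just before Assumption~\ref{ass:k-std}. Under Definition~\ref{defn:power-space}, the three norms appearing in the statement then become the weighted $\ell_2$ sums
$$
\|f\|_2^2 = \sum_j a_j^2, \quad \|f\|_\cH^2 = \sum_j \lambda_j^{-1} a_j^2, \quad \|f\|_{\bar\cH}^2 = \sum_j \lambda_j^{-b/(b+1)} a_j^2.
$$

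The first inequality is then a one-line application of H\"older's inequality. Writing
$$
\lambda_j^{-b/(b+1)} a_j^2 = \bigl(\lambda_j^{-1} a_j^2\bigr)^{b/(b+1)} \bigl(a_j^2\bigr)^{1/(b+1)}
$$
and using the conjugate exponents $p = (b+1)/b$ and $q = b+1$, summing over $j$ yields exactly the desired bound, and in fact with constant $1$ (so the $\lesssim$ is generous).

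For the second inequality, it suffices by the first to show $\|f\|^2 \lesssim \|f\|_{\bar\cH}^2$, i.e., the continuous embedding $\bar\cH \hookrightarrow L_\infty$. I would invoke Assumption~\ref{ass:H1}(ii), which provides the continuous embedding $[\cH]^{b_{emb}/(b+1)} \hookrightarrow L_\infty(P(dx))$, together with monotonicity of the power-space norms in the exponent: since $b_{emb} < b$ and $\lambda_j \to 0$ is bounded, $(\lambda_j^{-1})^{b_{emb}/(b+1)} \lesssim (\lambda_j^{-1})^{b/(b+1)}$ uniformly in $j$, so $\|f\|_{[\cH]^{b_{emb}/(b+1)}} \lesssim \|f\|_{\bar\cH}$. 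Composing gives $\|f\|_{L_\infty(P(dx))} \lesssim \|f\|_{\bar\cH}$, and combining with the first inequality finishes the argument.

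Neither step is a genuine obstacle; the only minor subtlety is identifying the essential supremum with respect to $P$ (which is what Assumption~\ref{ass:H1}(ii) controls) with the sup norm $\|\cdot\|$ used in the statement. This follows because any $f\in\cH$ is continuous under Assumption~\ref{ass:k-std}(ii) and $P$ has full support by Assumption~\ref{ass:k-std}(iii), so the two norms coincide for such $f$. This can be dispatched in a single line.
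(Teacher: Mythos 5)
Your proposal is correct and follows essentially the same route as the paper: the first inequality via H\"older in the sequence space of generalized Fourier coefficients, and the second by composing it with a continuous embedding of $\bar\cH$ into $L_\infty$. The only superficial difference is in how you justify that embedding — the paper cites Lemma~\ref{lem:power-space}~(ii) (boundedness of the kernel of $\bar\cH$), whereas you re-derive it from Assumption~\ref{ass:H1}~(ii) together with monotonicity of power-space norms in the exponent; these are equivalent, since the boundedness of $k_{b/(b+1)}$ in that lemma is itself obtained from the same embedding assumption, and your closing remark about identifying the essential sup with the sup norm for continuous $f\in\cH$ under full-support $P$ is a correct and welcome clarification.
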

\begin{proof}
For the $\bar\cH$ norm, observe that for the Mercer basis $\{\bar\varphi_i\}$, 
 $
(\|\bar\varphi_i\|_\cH^2)^{\frac{b}{b+1}} \asymp (i^{b+1})^{\frac{b}{b+1}} 
\asymp 
\|\bar\varphi_i\|_{\bar\cH}^2
$, and thus 
$
\|\bar\varphi_i\|_{\bar\cH}^2 \lesssim (\|\bar\varphi_i\|_\cH^2)^{\frac{b}{b+1}} (\|\bar\varphi_i\|_2^2)^{\frac{1}{b+1}}, 
$
and for general $f$ the claim follows from H\"older's inequality $\<\cdot,\cdot\>\le \|\cdot\|_{\frac{b+1}{b}} \|\cdot\|_{b+1}$. \SkipNOTE{in the sequence space}

\eqref{eq:interp-ineq} now follows from the $\bar\cH$ norm bound, and Lemma~\ref{lem:power-space}~\ref{item:power-space-bounded-kernel}. 
\end{proof}

\subsubsection{Preliminary Results for the GP Prior}\label{app:gp-sieve}

In the posterior contraction framework, we need to construct a sequence of parameter subsets $\{\Theta_m\}$, \emph{sieves} \citep{ghosal2000convergence}, such that %
their prior mass approaches 1 
 sufficiently fast, yet the complexity of $\Theta_m$ grows slowly. 
See \citet{van_der_vaart_rates_2008,van_der_vaart_information_2011} for a standard construction for GP priors. 
In this subsection, we construct $\Theta_m$ by adapting the result in 
\citet{steinwart_convergence_2019}. %
\eqref{eq:valid-sieve} is the main result.

\begin{lemma}\label{lem:preliminary-sieve}
Let $W\sim\Pi$ be a Gaussian process with covariance kernel $k_x$, where $k_x$ satisfies our Assumptions~\ref{ass:k-std},~\ref{ass:H1}~(i). 
Then for all $b'\in [0,b), m>0$, we have, for some constant $C_1, C_2, C_3>0$ and $\tau_m'^2 = m^{-\frac{b-b'}{b+1}}$, 
\begin{align}
&\Pi(\{\|W\|_{[\cH]^{\frac{b'}{b+1}}}^2\le \tau_m'^2\}) \ge e^{-C_1 m^{\frac{1}{b+1}}}, \label{eq:sobolev-small-ball}\\ 
&\Pi(\Theta_m') := \Pi(\{W = f_h + f_e: \|f_h\|_\cH\le C_2 m^{\frac{1}{2(b+1)}}, \|f_e\|_{[\cH]^{\frac{b'}{b+1}}}\le \tau_m'\}) \ge 
1 - e^{-C_3 m^{\frac{1}{b+1}}}. \label{eq:sobolev-sieve}
\end{align}
\end{lemma}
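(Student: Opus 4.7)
The plan is to work in the Karhunen--Lo\`eve representation of $W$. Under Assumptions~\ref{ass:k-std} and \ref{ass:H1}~(i), Mercer's theorem gives
\[
W \stackrel{d}{=} \sum_{j=1}^\infty \sqrt{\lambda_j}\,\xi_j \bar\varphi_j, \qquad \xi_j \stackrel{iid}{\sim} \mathcal{N}(0,1).
\]
By the definition of the power-space norm in Definition~\ref{defn:power-space}, $\|W\|_{[\cH]^{b'/(b+1)}}^2 = \sum_j \lambda_j^{1 - b'/(b+1)}\xi_j^2 = \sum_j c_j \xi_j^2$ with $c_j \asymp j^{-(b-b'+1)}$. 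Similarly $\|W\|_\cH^2 = \sum_j \xi_j^2$ along the coefficients $\sqrt{\lambda_j}\xi_j/\sqrt{\lambda_j} = \xi_j$, so truncations of the KL series admit clean norm expressions in both scales.

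For \eqref{eq:sobolev-small-ball}, I would invoke the standard small-ball asymptotics for weighted sums of chi-squares with polynomially decaying weights (e.g., Li--Linde; or equivalently the small-ball estimates used in \citet{van_der_vaart_rates_2008}): for $c_j \asymp j^{-\gamma}$ with $\gamma > 1$,
\[
-\log \PP\!\left(\textstyle\sum_j c_j \xi_j^2 \le \varepsilon^2\right) \asymp \varepsilon^{-2/(\gamma-1)}.
\]
Applying this with $\gamma = b - b' + 1$ gives small-ball exponent $\tau_m'^{-2/(b-b')}$, and the choice $\tau_m'^2 = m^{-(b-b')/(b+1)}$ yields $m^{1/(b+1)}$, as claimed. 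This is clean; the only care needed is to verify that the embedding statement of Lemma~\ref{lem:power-space} guarantees $[\cH]^{b'/(b+1)}$ is a well-defined separable Hilbert space with $\{\bar\varphi_j\}$ as an orthogonal system, so the term-by-term norm identity above is legitimate.

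For the sieve \eqref{eq:sobolev-sieve}, I would split $W = W_{\le N} + W_{>N}$ with $N \asymp m^{1/(b+1)}$, take $f_h := W_{\le N}$ and $f_e := W_{>N}$, and bound the two pieces separately.
\begin{itemize}
\item \emph{Low-frequency piece.} $\|W_{\le N}\|_\cH^2 = \sum_{j \le N}\xi_j^2$, so by the Laurent--Massart chi-square tail bound,
\[
\PP\!\left(\|W_{\le N}\|_\cH^2 > N + 2\sqrt{Nt} + 2t\right) \le e^{-t},
\]
and setting $t \asymp N$ yields $\|f_h\|_\cH \le C_2 N^{1/2} = C_2 m^{1/(2(b+1))}$ with failure probability $\le e^{-cN}$.
\item \emph{High-frequency piece.} $\EE\|W_{>N}\|_{[\cH]^{b'/(b+1)}}^2 = \sum_{j>N} c_j \asymp N^{-(b-b')} \asymp \tau_m'^2$. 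Again by Laurent--Massart applied to the weighted chi-square $\sum_{j>N} c_j \xi_j^2$ with $\|c\|_\infty \le c_{N+1} \asymp N^{-(b-b'+1)}$ and $\|c\|_2^2 \lesssim N^{-(2(b-b')+1)}$, one obtains $\|f_e\|_{[\cH]^{b'/(b+1)}} \le \tau_m'$ on an event with failure probability $\le \exp(-c'N) = \exp(-c'm^{1/(b+1)})$.
\end{itemize}
A union bound gives \eqref{eq:sobolev-sieve} with $C_3$ the smaller of $c$ and $c'$.

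\textbf{Main obstacle.} The routine calculations are the chi-square tail bounds and eigenvalue accounting. The only nontrivial input is the small-ball estimate in \eqref{eq:sobolev-small-ball}: one must use the sharp Li--Linde asymptotics (or the equivalent estimate via the metric entropy of the RKHS unit ball, as in \citet{steinwart_convergence_2019}) rather than the naive bound obtained by fixing the first $N$ coordinates. I would state this as an auxiliary lemma citing the reference and apply it directly; the rest of the proof is then a routine truncation-plus-concentration argument.
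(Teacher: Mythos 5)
Your approach to \eqref{eq:sobolev-small-ball} is essentially the paper's: the paper cites Corollary~4.9 in \citet{steinwart_convergence_2019} to obtain $-\log\Pi(\|W\|_{[\cH]^{b'/(b+1)}} \le \epsilon) \asymp \epsilon^{-2/(b-b')}$, which is the same weighted-chi-square small-ball asymptotic you derive from the Karhunen--Lo\`eve expansion and the Li--Linde estimate. Your eigenvalue accounting ($c_j \asymp j^{-(b-b'+1)}$, so exponent $2/(b-b')$, yielding $m^{1/(b+1)}$ at $\epsilon = \tau_m'$) is correct.

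For \eqref{eq:sobolev-sieve}, however, you take a genuinely different route. The paper obtains the sieve bound in one line via Borell's inequality (\citealp{ghosal2017fundamentals}, Proposition~11.17): once the small-ball probability is lower-bounded by $e^{-\phi(\tau')}$, Borell immediately gives $\Pi(M\cH_1 + B_{\tau'}) \ge \Phi(M + \Phi^{-1}(e^{-\phi(\tau')}))$, so choosing $M \asymp \sqrt{\phi(\tau_m')} \asymp m^{1/(2(b+1))}$ yields the complement bound $e^{-C_3 m^{1/(b+1)}}$. You instead construct the decomposition explicitly by truncating the KL series at frequency $N \asymp m^{1/(b+1)}$, applying Laurent--Massart to the chi-square $\|W_{\le N}\|_\cH^2 = \sum_{j\le N}\xi_j^2$ for the $\cH$-bounded piece, and to the weighted chi-square $\|W_{>N}\|_{[\cH]^{b'/(b+1)}}^2$ for the tail piece. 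This works, and is more elementary and explicit (it exhibits concrete $f_h,f_e$, and the two events are even independent, so no union bound is strictly needed). The Borell route is shorter and more abstract, and yields $\|f_e\| \le \tau_m'$ with no loose constant.

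One loose end in your high-frequency calculation: you write $\EE\|W_{>N}\|^2 \asymp \tau_m'^2$ and then claim $\|f_e\| \le \tau_m'$ on a high-probability event, which doesn't follow if the mean is already comparable to the target. The fix is to take $N = A m^{1/(b+1)}$ with $A$ a sufficiently large constant, so that $\EE\|W_{>N}\|^2 \asymp A^{-(b-b')}\tau_m'^2 \ll \tau_m'^2$, and then Laurent--Massart with $t \asymp m^{1/(b+1)}$ controls the fluctuation; the constant $A$ is absorbed into $C_2$ for the low-frequency piece. This is a bookkeeping issue rather than a conceptual gap, but it should be made explicit.
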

\begin{proof}
By \citet[Corollary 4.9]{steinwart_convergence_2019}, 
\SkipNOTE{in their notation: $\alpha=b+1, (1-\beta)\alpha=b'$. For GP they only require $\alpha\beta>1$.}
we have $
-\log \Pi(\{\|W\|_{[\cH]^{\frac{b'}{b+1}}}^2\le \epsilon^2) \asymp \epsilon^{-\frac{2}{b-b'}} 
$ which implies \eqref{eq:sobolev-small-ball}. \eqref{eq:sobolev-sieve} then follows from 
Borell's inequality \citep[see, e.g.][Proposition 11.17]{ghosal2017fundamentals}. 
\end{proof}

\begin{corollary}\label{corr:valid-sieve}
Let $\Pi,W,k_x$ be defined as in the above lemma. Then (i)
for any $b'\in [0, b)$, 
there exists constant $C_{b'},C_{b'}'>0$ such that for all $m\in\mb{N}$ and $\tau_m := m^{-\frac{b/2}{b+1}}$, 
\begin{align*}
\Pi(\Theta_{m,b'}) &:= \Pi(\{f = f_h+f_e : 
    \|f_h\|_\cH^2\le C_{b'} m\tau_m^2, 
    \|f_e\|_2^2\le C_{b'} \tau_m^2, 
    \|f_e\|_{[\cH]^{\frac{b'}{b+1}}}^2 \le C_{b'} m^{-\frac{b-b'}{b+1}}\}) \\ 
    &\ge 1 - e^{-C_{b'}'m\tau_m^2}. 
\end{align*}
(ii) If additionally Assumption~\ref{ass:H1}~(ii) holds,  there will exist constants $C,C'>0$ such that for all $m\in\mb{N}$, 
\begin{equation}
\Pi(
\Theta_{m}) := \Pi( \{f = f_h+f_e :~ 
    \|f_h\|_\cH^2\le C m\tau_m^2, ~
    \|f_e\|_2^2\le C \tau_m^2, ~
    \|f_e\|^2 \le C \}
) \ge 1 - e^{-C'm\tau_m^2}. \label{eq:valid-sieve}
\end{equation}
\end{corollary}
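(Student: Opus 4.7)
The key difficulty is that the desired sieve requires a \emph{single} decomposition $W = f_h + f_e$ satisfying three distinct norm bounds at once, whereas Lemma~\ref{lem:preliminary-sieve} only produces a decomposition tied to one chosen norm for $f_e$. Since $[\cH]^{b'/(b+1)}$ only mildly dominates $L_2$---the implied $L_2$ bound from $\|f_e\|_{[\cH]^{b'/(b+1)}} \le \tau_m'$ is $O(\tau_m')$, which exceeds the desired $O(\tau_m)$ by a factor $m^{b'/(2(b+1))}$---one cannot simply read one bound off from the other. The plan is to apply Borell's inequality once, but to a \emph{composite} Banach norm encoding both residual bounds simultaneously.

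For Part~(i), I would first invoke the small-ball estimate \eqref{eq:sobolev-small-ball} of Lemma~\ref{lem:preliminary-sieve} with $b' = 0$ and with the given $b'$, yielding $\Pi(\|W\|_2 \le \tau_m) \ge e^{-C_1 m^{1/(b+1)}}$ and $\Pi(\|W\|_{[\cH]^{b'/(b+1)}} \le \tau_m') \ge e^{-C_1 m^{1/(b+1)}}$, where $\tau_m'^2 = m^{-(b-b')/(b+1)}$. Assumption~\ref{ass:H1}~(i) ensures $\sum_i \lambda_i < \infty$ and $\sum_i \lambda_i^{1-b'/(b+1)} < \infty$, so $W$ takes values almost surely in the Banach space $B := L_2(P(dx)) \cap [\cH]^{b'/(b+1)}$ endowed with the composite norm $\|f\|_B := \max(\|f\|_2/\tau_m,\; \|f\|_{[\cH]^{b'/(b+1)}}/\tau_m')$. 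The two small-ball events are symmetric and convex, so Royen's Gaussian correlation inequality gives $\Pi(\|W\|_B \le 1) \ge e^{-2C_1 m^{1/(b+1)}}$. Applying Borell's inequality in the norm $\|\cdot\|_B$ exactly as in the proof of Lemma~\ref{lem:preliminary-sieve} (now with the small-ball exponent doubled) and choosing $M = C\, m^{1/(2(b+1))}$ for a sufficiently large constant $C$, I obtain $\Pi$-probability at least $1 - e^{-C'' m^{1/(b+1)}}$ that $W = f_h + f_e$ for some $f_h$ with $\|f_h\|_\cH \le M$ and $f_e$ with $\|f_e\|_B \le 1$. Unpacking the composite norm delivers $\|f_h\|_\cH^2 \le C^2 m\tau_m^2$, $\|f_e\|_2^2 \le \tau_m^2$, and $\|f_e\|_{[\cH]^{b'/(b+1)}}^2 \le \tau_m'^2 = m^{-(b-b')/(b+1)}$, establishing Part~(i).

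Part~(ii) follows by applying Part~(i) with $b' = b_{emb}$ (admissible since $b_{emb} \in [0,b)$ by Assumption~\ref{ass:H1}~(ii)): the continuous embedding $[\cH]^{b_{emb}/(b+1)} \hookrightarrow L_\infty(P(dx))$, combined with $\|f_e\|_{[\cH]^{b_{emb}/(b+1)}}^2 \lesssim m^{-(b-b_{emb})/(b+1)} \le 1$ for all $m \ge 1$, gives $\|f_e\|^2 \lesssim 1$ on the same high-probability event. The principal obstacle is the joint small-ball estimate, for which Royen's Gaussian correlation inequality is the cleanest tool; a more constructive alternative would be to use the Karhunen--Lo\`eve expansion $W = \sum_i \sqrt{\lambda_i} Z_i \bar\varphi_i$, take $f_h$ to be the partial sum up to index $N \asymp m\tau_m^2$, and bound both norms of the tail $f_e$ via Hanson--Wright-type concentration for weighted $\chi^2$ sums, since $\EE\|f_e\|_2^2 \asymp N^{-b} = \tau_m^2$ and $\EE\|f_e\|_{[\cH]^{b'/(b+1)}}^2 \asymp N^{-(b-b')} = \tau_m'^2$ match the target orders.
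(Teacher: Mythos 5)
Your proof is correct, but it reaches for a heavier tool than the paper needs, and your framing of the difficulty---that Lemma~\ref{lem:preliminary-sieve} ties the decomposition to a single norm for $f_e$---misses a simple deterministic shortcut. The paper shows directly that $\Theta_m'$ from \eqref{eq:sobolev-sieve} is already contained in $\Theta_{m,b'}$ (up to constants): with $j := [m^{1/(b+1)}]$, refine the decomposition to $\tilde f_h := f_h + \mrm{Proj}_j f_e$ and $\tilde f_e := \mrm{Proj}_{>j} f_e$. The single power-space bound on $f_e$ then controls both new pieces: $\|\mrm{Proj}_j f_e\|_\cH^2 \lesssim j^{b+1-b'}\|f_e\|_{[\cH]^{b'/(b+1)}}^2 \lesssim m^{1/(b+1)} = m\tau_m^2$, so the low-frequency slice can be absorbed into $f_h$, while $\|\tilde f_e\|_2^2 \lesssim j^{-b'}\|f_e\|_{[\cH]^{b'/(b+1)}}^2 \lesssim m^{-b/(b+1)} = \tau_m^2$. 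No further probabilistic step is needed; the prior mass of $\Theta_{m,b'}$ is at least that of $\Theta_m'$. Your route via the Gaussian correlation inequality and Borell's inequality in the composite norm is valid and yields the same rate, but Royen's theorem is overkill here: even within your approach, the product structure in the Karhunen--Lo\`eve coordinates, with both small-ball events coordinatewise decreasing, means the classical Harris/FKG inequality already delivers $\Pi(A_1\cap A_2)\ge\Pi(A_1)\Pi(A_2)$. What the probabilistic route buys is genericity---it does not use the shared diagonalizing basis of $L_2$ and $[\cH]^{b'/(b+1)}$, which is precisely the structural fact the paper exploits for the deterministic inclusion. Your Karhunen--Lo\`eve alternative, truncating at $N\asymp m^{1/(b+1)}$, is closest in spirit to the paper's argument (same cutoff), but it would redo the concentration via weighted $\chi^2$ tails rather than inherit it from the single-norm Borell bound. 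Part~(ii) is fine; the paper's choice $b'\in(b_{emb},b)$ and your $b'=b_{emb}$ both yield a bounded kernel by Lemma~\ref{lem:power-space}.
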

\begin{proof}
(i): We claim that $\Theta_m'$ defined in the above lemma satisfies the definition of $\Theta_{m,b'}$. To see this, for any $f\in\Theta_{m}'$ let $f_h,f_e$ be defined as in \eqref{eq:sobolev-sieve}. Let $$
j := [m^{\frac{1}{b+1}}], ~
\tilde f_h := f_h + \Proj{j}{f_e}, ~
\tilde f_e := \Proj{>j}{f_e}. 
$$
Then by Lemma~\ref{lem:sobolev} below applied to $[\tilde\cH] := [\cH]^{\frac{b'}{b+1}}$, we have 
$$
\max\{\|\Proj{j}{f_e}\|_{[\tilde\cH]},\|\tilde f_e\|_{[\tilde\cH]}\} \le \|f_e\|_{[\tilde\cH]}, ~~
\|\tilde f_e\|_2^2 \le \|f_e\|_{[\tilde\cH]}^2 j^{-b'} 
\lesssim m^{-\frac{b-b'}{b+1}} m^{-\frac{b'}{b+1}}  = m^{-\frac{b}{b+1}}.
$$
And 
\begin{align*}
\|\Proj{j}{f_e}\|_\cH^2 \asymp 
    \sum_{i=1}^j \<f_e, \bar\varphi_i\>_2^2 i^{b-b'+b'+1} \le 
    j^{b-b'+1} \sum_{i=1}^j \<f_e, \bar\varphi_i\>_2^2 i^{b'} 
    &=
    j^{b-b'+1} \|\Proj{j}f_e\|_{[\tilde\cH]}  \\
    &\le
    m^{\frac{b-b'+1}{b+1}} \|f_e\|_{[\tilde\cH]}^2  = m^{\frac{1}{b+1}},
\end{align*}
Thus, $\|\tilde f_h\|^2_\cH \le 2(\|f_h\|^2_\cH + \|\Proj{j}{f_e}\|_\cH^2)
\lesssim m^{\frac{1}{b+1}}$. 
(ii): Pick any $b'\in (b_{emb}, b)$. Then $[\tilde\cH]$ is equivalent to an RKHS with a bounded reproducing kernel (Lemma~\ref{lem:power-space}), so $\|f_e\|\lesssim\|f_e\|_{[\tilde\cH]}\lesssim 1$, and 
$\Theta_{m,b'}$ defined above satisfies \eqref{eq:valid-sieve}. 
\end{proof} 

Finally, the $b'=0$ case of Lemma~\ref{lem:preliminary-sieve} will be used separately, so we list it below:

\begin{corollary}\label{corr:L2-sieve}
Let $\tilde k$ satisfy Assumptions~\ref{ass:k-std}, and the eigenvalues of its integral operator decay at $\tilde \lambda_i\asymp i^{-\tilde b}$. Let $\tilde W$ be a GP with covariance kernel $\tilde k$. Then there exists $C_1,C_2>0$ such that for all $m>0$, we have 
$$
\Pi\Bigl(
    \Bigl\{\|\tilde W\|_2^2\le C_1 m^{-\frac{\tilde b}{\tilde b+1}}\Bigr\}\Bigr) \ge e^{-C_2 m^{\frac{1}{\tilde b+1}}}. 
$$
\end{corollary}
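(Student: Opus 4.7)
The plan is to observe that Corollary~\ref{corr:L2-sieve} is essentially a restatement of the $b'=0$ case of the small-ball bound \eqref{eq:sobolev-small-ball} from Lemma~\ref{lem:preliminary-sieve}, applied to a generic kernel $\tilde k$ with polynomial integral-operator eigendecay rather than to $k_x$ under Assumption~\ref{ass:H1}~(i). Since $[\tilde\cH]^0$ coincides (as a normed space) with $L_2(P(dx))$, the norm appearing in the conclusion of the corollary is precisely the power-space norm of the lemma at exponent $0$, so the whole argument reduces to repeating the proof of \eqref{eq:sobolev-small-ball} after substituting the new kernel and reinterpreting the decay index.

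First, I would verify that $\tilde k$ satisfies the hypotheses of the small-ball theorem invoked in the proof of Lemma~\ref{lem:preliminary-sieve}, namely Corollary~4.9 of \citet{steinwart_convergence_2019}. The topological and measurability conditions on $\tilde k$ follow from Assumption~\ref{ass:k-std}, and the polynomial eigendecay of $\tilde\lambda_i$ plays the role of Assumption~\ref{ass:H1}~(i). Crucially, Assumption~\ref{ass:H1}~(ii) (the embedding into $L_\infty$) is not needed for \eqref{eq:sobolev-small-ball} itself; it only enters the proof of \eqref{eq:sobolev-sieve}, which uses Borell's inequality to pass from the small-ball to the sieve bound. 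So the corollary relies on strictly fewer hypotheses than the full lemma.

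Second, I would invoke Corollary~4.9 of \citet{steinwart_convergence_2019} to obtain an $L_2$-small-ball estimate of the form $-\log\Pi(\{\|\tilde W\|_2 \le \epsilon\}) \lesssim \epsilon^{-\gamma}$, with the exponent $\gamma$ determined by the decay index of $\{\tilde\lambda_i\}$ exactly as in the proof of Lemma~\ref{lem:preliminary-sieve}. Substituting the radius $\epsilon^2 = C_1 m^{-\tilde b/(\tilde b+1)}$, choosing $C_1$ large enough to absorb the implicit constant of the small-ball inequality, and simplifying the resulting exponent yields the claimed lower bound $e^{-C_2 m^{1/(\tilde b+1)}}$ for an appropriate constant $C_2 > 0$.

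The only substantive step deserving care is the bookkeeping of exponents in the translation from the eigendecay index to the small-ball rate; this is a direct calculation with no new ingredient beyond what already appears in the proof of Lemma~\ref{lem:preliminary-sieve}, so there is no conceptual obstacle. The whole proof should occupy at most a few lines, amounting to ``specialize the cited small-ball bound to the $L_2$ norm and evaluate at the stated radius.''
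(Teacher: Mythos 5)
Your route is the right one and matches what the paper implicitly intends (the corollary has no separate proof; it is meant to be the $b'=0$ specialization of \eqref{eq:sobolev-small-ball}, which as you correctly note needs only Assumption~\ref{ass:H1}~(i) and not the embedding hypothesis). However, the ``bookkeeping of exponents'' that you defer is exactly where the proof is not automatic, and if you carry it out literally with the corollary's stated eigendecay $\tilde\lambda_i\asymp i^{-\tilde b}$ you will find a mismatch. Writing the eigendecay generically as $\tilde\lambda_i\asymp i^{-\gamma}$, the Steinwart--Scovel small-ball estimate at $b'=0$ gives $-\log\Pi(\{\|\tilde W\|_2\le\epsilon\})\asymp\epsilon^{-2/(\gamma-1)}$. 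Plugging in $\epsilon^2 = C_1 m^{-\tilde b/(\tilde b+1)}$ yields $-\log\Pi\asymp m^{\tilde b/((\tilde b+1)(\gamma-1))}$, and matching this to the claimed $m^{1/(\tilde b+1)}$ forces $\gamma=\tilde b+1$, not $\gamma=\tilde b$. With $\gamma=\tilde b$ one gets the strictly larger exponent $\tilde b/((\tilde b+1)(\tilde b-1)) > 1/(\tilde b+1)$, i.e.\ a weaker lower bound than claimed.

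So the derivation only closes if the eigendecay is read as $\tilde\lambda_i\asymp i^{-(\tilde b+1)}$, which is the same convention used in Assumption~\ref{ass:H1}~(i) (where $\lambda_j\asymp j^{-(b+1)}$) and is also what the applications require: in the proof of Lemma~\ref{lem:E-smallball} the corollary is invoked with $\tilde b=b+2p$ on the kernel $\tilde k=\sum_j j^{-2p}\lambda_j\bar\varphi_j\otimes\bar\varphi_j$, whose eigenvalues decay like $j^{-(b+2p+1)}=j^{-(\tilde b+1)}$, and with $\tilde b=b$ on $k_x$ itself, whose eigenvalues decay like $j^{-(b+1)}$. In short: your approach and your invocation of \citet[Corollary 4.9]{steinwart_convergence_2019} are correct, but you should explicitly carry out the exponent match, discover the discrepancy, and resolve it by identifying the intended (and internally consistent) eigendecay exponent $\tilde b+1$ rather than $\tilde b$; as written, your proposal asserts the calculation ``yields the claimed bound'' when, under the literal reading of the statement, it does not.
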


\subsubsection{Technical Lemmas}

It is helpful to keep the following quantities in mind: the squared critical radius 
$
    \delta_n^2 \asymp n^{-\frac{b+2p}{b+2p+1}}, 
$ 
for our GP prior 
$
\tau_n^2 = n^{-\frac{b}{b+1}},
$
and for $m = [n^{\frac{b+1}{b+2p+1}}]$, we have $
\tau_m^2 \asymp n^{-\frac{b}{b+2p+1}}\asymp \epsilon_n^2,  m\tau_m^2 = n^{\frac{1}{b+2p+1}} \asymp n\delta_n^2.
$

In the following lemma, recall the definitions 
$$
\Proj{j}{f} := \sum_{i=1}^j \<f,\bar\varphi_i\>_2 \: \bar\varphi_i, %
\Proj{>j}{f} := f - \Proj{j}{f}.
$$
And by Lemma~\ref{lem:power-space}, $\{\bar\varphi_i\}$ also corresponds to the Mercer representation of $k_\alpha$ when $\alpha\ge \frac{b_{emb}}{b+1}$.
\begin{lemma}\label{lem:sobolev}
Let $\cH$ satisfy Assumptions~\ref{ass:k-std},~\ref{ass:H1}, $[\tilde\cH] = [\cH]^{\alpha}$ for some $\alpha\in[0,1]$, $\tilde b := (b+1)\alpha$, and  
$\tilde \lambda_i := \lambda_i^{\alpha} \asymp i^{\tilde b}$. 
Then for any $f\in[\tilde\cH]$, we have 
\begin{inplaceEnumerate}
\inplaceItem
$
\Proj{j}{f} = \sum_{i=1}^j \<f, \sqrt{\tilde\lambda_i}\bar\varphi_i\>_{[\tilde\cH]} \: \sqrt{\tilde\lambda_i}\bar\varphi_i, 
$
\inplaceItem\label{item:sobolev-ortho-proj}
$\Proj{j}{\cdot}$ equals the orthogonal projection onto the subspace spanned by 
$\{\tilde\lambda_i^{1/2}\bar\varphi_i: i\le j\}$, 
the top $j$ scaled orthonormal basis functions of $[\tilde\cH]$,  %
\inplaceItem\label{item:sobolev-norm}
$$
\|\Proj{>j}{f}\|_2 \lesssim \|f\|_{[\tilde\cH]} \, j^{-\frac{\tilde b}{2}}.
$$
\end{inplaceEnumerate}
\end{lemma}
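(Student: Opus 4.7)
The lemma is a direct spectral-theoretic calculation, essentially a Parseval identity in the power space $[\tilde\cH]$ together with a monotone-tail estimate. My plan is to first set up the canonical orthonormal basis of $[\tilde\cH]$, then derive (i) by reading off the $[\tilde\cH]$-Fourier coefficients, note (ii) as an immediate consequence, and finally prove (iii) by a tail bound on the generalised Fourier series. None of the steps should present a real obstacle.

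\textbf{Step 1: Identify an ONB of $[\tilde\cH]$.} From Definition~\ref{defn:power-space}, the $[\tilde\cH]$-inner product is
\[
\<f,g\>_{[\tilde\cH]} \;=\; \sum_{i=1}^\infty \tilde\lambda_i^{-1}\<f,\bar\varphi_i\>_2\<g,\bar\varphi_i\>_2.
\]
Since $\{\bar\varphi_i\}$ is an $L_2$-ONB (Assumption~\ref{ass:k-std} ensures this via Mercer), a direct computation shows $\<\sqrt{\tilde\lambda_i}\bar\varphi_i,\sqrt{\tilde\lambda_k}\bar\varphi_k\>_{[\tilde\cH]}=\delta_{ik}$. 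Completeness in $[\tilde\cH]$ follows because any $f\in[\tilde\cH]$ has $L_2$-Fourier expansion $f=\sum_i\<f,\bar\varphi_i\>_2\bar\varphi_i$, and the definition of $\|\cdot\|_{[\tilde\cH]}$ encodes exactly the convergence of this series in the $[\tilde\cH]$-norm.

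\textbf{Step 2: Prove (i) and deduce (ii).} Using the series form of the $[\tilde\cH]$-inner product,
\[
\<f,\sqrt{\tilde\lambda_i}\bar\varphi_i\>_{[\tilde\cH]} \;=\; \tilde\lambda_i^{-1}\<f,\bar\varphi_i\>_2\cdot\sqrt{\tilde\lambda_i} \;=\; \tilde\lambda_i^{-1/2}\<f,\bar\varphi_i\>_2.
\]
Multiplying by $\sqrt{\tilde\lambda_i}\bar\varphi_i$ recovers $\<f,\bar\varphi_i\>_2\,\bar\varphi_i$, so summing over $i\le j$ yields exactly $\Proj{j}{f}$, proving (i). For (ii), Step 1 gave us that $\{\sqrt{\tilde\lambda_i}\bar\varphi_i:i\le j\}$ is an orthonormal set in $[\tilde\cH]$, so the standard formula for orthogonal projection in an ONB matches the expression in (i); hence $\Proj{j}$ is the $[\tilde\cH]$-orthogonal projection onto that span.

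\textbf{Step 3: Prove (iii).} Writing $\Proj{>j}{f} = \sum_{i>j}\<f,\bar\varphi_i\>_2\bar\varphi_i$ and using the $L_2$-orthonormality of $\{\bar\varphi_i\}$,
\[
\|\Proj{>j}{f}\|_2^2 \;=\; \sum_{i>j}\<f,\bar\varphi_i\>_2^2 \;=\; \sum_{i>j} \tilde\lambda_i\cdot\tilde\lambda_i^{-1}\<f,\bar\varphi_i\>_2^2 \;\le\; \tilde\lambda_{j+1}\,\|f\|_{[\tilde\cH]}^2,
\]
where the inequality uses that $\tilde\lambda_i$ is non-increasing (since $\lambda_i\asymp i^{-(b+1)}$ and $\alpha\in[0,1]$, modulo the hidden constants absorbed into $\lesssim$). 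Under Assumption~\ref{ass:H1}~(i), $\tilde\lambda_{j+1}\asymp j^{-\tilde b}$, so taking square roots gives $\|\Proj{>j}{f}\|_2\lesssim\|f\|_{[\tilde\cH]}j^{-\tilde b/2}$, which is (iii). The only technical subtlety is that the sequence $\tilde\lambda_i$ is only comparable to, not exactly equal to, a monotone sequence, but the resulting $O(1)$ factor is absorbed into $\lesssim$; no genuine obstacle arises.
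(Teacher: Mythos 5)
Your proof is correct and follows essentially the same route as the paper: identify $\{\sqrt{\tilde\lambda_i}\bar\varphi_i\}$ as an orthonormal basis of $[\tilde\cH]$, read off (i) and (ii) by direct computation of $[\tilde\cH]$-inner products, and obtain (iii) by extracting the smallest eigenvalue factor $\tilde\lambda_{j+1}\asymp j^{-\tilde b}$ from the tail sum. The only cosmetic difference is that the paper expresses the tail sum in $[\tilde\cH]$-coefficients while you use $L_2$-coefficients with a weight rewrite; these are the same calculation.
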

\begin{proof}
\newcommand{\dbtilde}[1]{\accentset{\approx}{#1}}
(i) follows by the definition of the $[\cH]^\alpha$ norm, and the fact that $\{\bar\varphi_i\}$ is an ONB in $L_2(P(dx))$. 
(ii) follows from (i) and the fact that 
$\{\sqrt{\tilde\lambda_i}\bar\varphi_i\}$ forms an ONB in $[\tilde\cH]$. 
Now (iii) follows because %
\begin{align*}
    \|f - \Proj{j}{f}\|_2^2 &= 
    \sum_{i\ge j} \<f,\sqrt{\tilde\lambda_i}\tilde{\varphi}_i\>_{[\tilde\cH]}^2 \|\sqrt{\tilde\lambda_i}\tilde{\varphi}_i\|_2^2  %
    \le \|\Proj{>j}{f}\|_{[\tilde\cH]}^2 \cdot \tilde\lambda_j \|\tilde\varphi_j\|_2^2 \lesssim 
    \|f\|_{[\tilde\cH]}^2 j^{-\tilde b}.
\end{align*}
\end{proof}

The following lemma applies to, e.g., $f\in\Theta_m$, and $f-f_0$ by Assumption~\ref{ass:approx-relaxed}. 
\begin{lemma}\label{lem:refined-sieve}
For $m = [n^{\frac{b+1}{b+2p+1}}]$ and any $f = \tilde f_h + \tilde f_e$ where 
\begin{equation}\label{eq:refined-decomp-in}
\|\tilde f_h\|_\cH^2\le m\tau_m^2, ~~
\|\tilde f_e\|_2^2\le \tau_m^2, ~~
\|\tilde f_e\| \le 1, %
\end{equation}
we have the decomposition $f=f_h+f_e$ where 
\begin{equation}\label{eq:refined-decomp-out}
\|f_h\|_\cH^2 \lesssim m\tau_m^2, ~~
\|f_e\|_2^2\le \tau_m^2, ~~
\|E f_e\|_2^2\lesssim \delta_n^2, ~~
\|f_e\|\lesssim 1, %
\end{equation}
where the constants in $\lesssim$ only depend on $k_x$ and the constant in Assumption~\ref{ass:link}.
\end{lemma}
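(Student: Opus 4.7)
}
The plan is to refine the decomposition $f = \tilde f_h + \tilde f_e$ by transferring the ``low-frequency'' components of $\tilde f_e$ into the RKHS piece. Concretely, I would set $j^{\ast} := [m^{1/(b+1)}]$ (which is of order $n^{1/(b+2p+1)}$ under our calibration $m\asymp n^{(b+1)/(b+2p+1)}$), and define
$$
f_h := \tilde f_h + \Proj{j^{\ast}}{\tilde f_e}, \qquad f_e := \Proj{>j^{\ast}}{\tilde f_e}.
$$
Then $f_h + f_e = f$ by construction, and the four required bounds need to be checked individually.

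The $L_2$ bound on $f_e$ is immediate from Lemma~\ref{lem:sobolev}~\ref{item:sobolev-ortho-proj}, since $\Proj{>j^{\ast}}{\cdot}$ is an orthogonal projection in $L_2$. For the $\cH$-norm bound on $f_h$, I would use the inequality $\|f_h\|_\cH^2 \le 2\|\tilde f_h\|_\cH^2 + 2\|\Proj{j^{\ast}}{\tilde f_e}\|_\cH^2$, together with the standard Mercer-basis estimate
$$
\|\Proj{j^{\ast}}{\tilde f_e}\|_\cH^2 \asymp \sum_{i\le j^{\ast}} i^{b+1}\<\tilde f_e,\bar\varphi_i\>_2^2 \le (j^{\ast})^{b+1}\|\tilde f_e\|_2^2 \lesssim m \tau_m^2,
$$
using $(j^{\ast})^{b+1}\asymp m$ and $\|\tilde f_e\|_2^2 \le \tau_m^2$.

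The key step, and the only one exploiting ill-posedness, is the bound on $\|Ef_e\|_2^2$. Here I would invoke the link condition \eqref{eq:link-cond} from Assumption~\ref{ass:link} applied to $f_e=\Proj{>j^{\ast}}{\tilde f_e}$, whose Fourier coefficients vanish for $i\le j^{\ast}$:
$$
\|Ef_e\|_2^2 \lesssim \sum_{i>j^{\ast}} i^{-2p}\<\tilde f_e,\bar\varphi_i\>_2^2 \le (j^{\ast})^{-2p}\|\tilde f_e\|_2^2 \lesssim (j^{\ast})^{-2p}\tau_m^2.
$$
Plugging in $j^{\ast}\asymp n^{1/(b+2p+1)}$ and $\tau_m^2 \asymp n^{-b/(b+2p+1)}$ gives exactly $n^{-(b+2p)/(b+2p+1)}\asymp \delta_n^2$, which is why the exponent $1/(b+1)$ in the definition of $j^{\ast}$ is the sweet spot: smaller $j^\ast$ inflates $\|Ef_e\|_2^2$, larger $j^\ast$ inflates $\|f_h\|_\cH^2$. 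This balancing act is the one place where the proof is really doing work.

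Finally, for the sup-norm bound, I would write $\|f_e\|\le \|\tilde f_e\|+\|\Proj{j^{\ast}}{\tilde f_e}\|\le 1+\|\Proj{j^{\ast}}{\tilde f_e}\|$ and control the second term by passing through the embedding $[\cH]^{b_{emb}/(b+1)}\hookrightarrow L_\infty$ from Assumption~\ref{ass:H1}~(ii): the Mercer expansion gives
$$
\|\Proj{j^{\ast}}{\tilde f_e}\|_{[\cH]^{b_{emb}/(b+1)}}^2 \asymp \sum_{i\le j^{\ast}} i^{b_{emb}}\<\tilde f_e,\bar\varphi_i\>_2^2 \le (j^{\ast})^{b_{emb}}\tau_m^2 \asymp m^{-(b-b_{emb})/(b+1)},
$$
which tends to $0$ as $n\to\infty$ since $b_{emb}<b$, so $\|f_e\|\lesssim 1$ uniformly. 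The constants in each $\lesssim$ depend only on $k_x$ and the constant in Assumption~\ref{ass:link}, as required. The main obstacle is purely bookkeeping: choosing $j^{\ast}$ so that the four competing bounds all hold simultaneously, which the calibration above accomplishes.
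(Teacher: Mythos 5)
Your proposal is correct and follows essentially the same route as the paper's proof: same truncation index $j^*=[m^{1/(b+1)}]$, same shuffle of $\Proj{j^*}{\tilde f_e}$ into the RKHS piece, and identical arguments for the $\cH$-norm, $L_2$-norm, and (via the link condition) the $\|E\cdot\|_2$ bounds. The only divergence is in the sup-norm step: the paper invokes the interpolation inequality \eqref{eq:interp-ineq} to get $\|\Proj{j^*}{\tilde f_e}\|\lesssim\|\Proj{j^*}{\tilde f_e}\|_\cH^{b/(b+1)}\|\Proj{j^*}{\tilde f_e}\|_2^{1/(b+1)}=O(1)$, while you go directly through the $[\cH]^{b_{emb}/(b+1)}\hookrightarrow L_\infty$ embedding and observe that $\|\Proj{j^*}{\tilde f_e}\|_{[\cH]^{b_{emb}/(b+1)}}^2\lesssim m^{-(b-b_{emb})/(b+1)}=o(1)$; since the interpolation inequality is itself a consequence of the embedding property (via Lemma~\ref{lem:power-space}), these are two packagings of the same ingredient, with yours giving a marginally sharper $o(1)$ bound where $O(1)$ suffices.
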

\begin{proof}
\SkipNOTE{Recall $\{(\lambda_i, \bar\varphi_i)\}$ denote the Mercer basis of $k_x$, and $\lambda_i\asymp i^{-(b+1)}, \|\bar\varphi_i\|_2=1$.} 
As $\tilde f_e \in L_2(P(dx))$, for any finite $j$, $\Proj{j}{\tilde f_e}$ is well-defined and in $\cH$, and for $m' = [m^{\frac{1}{b+1}}] \asymp n^{\frac{1}{b+2p+1}}$, we have 
$$
\|\Proj{m'}{\tilde f_e}\|_\cH^2 = \sum_{i=1}^{m'} \lambda_i^{-1} |\<\tilde f_e, \bar\varphi_i\>_2|^2
\le \lambda_{m'}^{-1} \|\tilde f_e\|_2^2 \lesssim (m')^{b+1}\tau_m^2 
\asymp m\tau_m^2,
$$
where the last inequality follows from $b > 1$. Thus, for 
$$
f_h := \tilde f_h + \Proj{m'}{\tilde f_e}, ~~
f_e := \tilde f_e - \Proj{m'}{\tilde f_e}
,
$$
we have $\|f_h\|_\cH^2\lesssim m\tau_m^2$ as advertised. 
By definition of $\mrm{Proj}_j$ 
we have 
\begin{align*}
\|f_e\|_2 \le \|\tilde f_e\|_2 \le \tau_m, ~~
\|E f_e\|_2^2 \overset{(a)}{\lesssim} (m')^{-2p} \|f_e\|_2^2 
\lesssim n^{-\frac{2p}{b+2p+1}} n^{-\frac{b}{b+2p+1}} = n^{-\frac{b+2p}{b+2p+1}}.
\end{align*}
where (a) follows from Assumption~\ref{ass:link}. 
Finally, by \eqref{eq:interp-ineq}, %
\begin{align*}
\|f_e\| &\le \|\tilde f_e\| + \|\Proj{m'}{\tilde f_e}\| 
\lesssim \cancel{1+}
    \|\Proj{m'}{\tilde f_e}\|_\cH^{\frac{b}{b+1}} \|\Proj{m'}{\tilde f_e}\|_2^{\frac{1}{b+1}}
\lesssim 
m^{\frac{1/2}{b+1}\cdot \frac{b}{b+1}} \cdot m^{\frac{-b/2}{b+1}\cdot \frac{1}{b+1}} 
= 1.
\end{align*}
\end{proof}

We provide the following form of Bernstein's inequality \citep[e.g.,][Theorem 6.12]{steinwart2008support} for reference. \SkipNOTE{Set $\tau=\eta n\sigma^2/B^2$.}
\begin{lemma}\label{lem:bernstein}
Let the random variable $\bl$ be bounded by $B_l$ and have variance $\mrm{Var}(\bl)\le \sigma_l^2$. Let $l_1,\ldots,l_n$ be i.i.d.~copies of $\bl$. Then for all $\eta>0$, 
$$
\PP\left(\left|\frac{1}{n}\sum_{i=1}^n l_i - \EE\bl\right|\ge \left(
    \sqrt{2\eta}+\frac{2\eta}{3}
\right)\frac{\sigma_l^2}{B_l}\right) \le \exp\left(-\frac{n\sigma_l^2}{B_l^2}\cdot\eta\right). 
$$
\end{lemma}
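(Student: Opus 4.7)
The plan is the classical Cram\'er--Chernoff argument, followed by algebraic inversion. Let $X_i := l_i - \EE\bl$, so that $\EE X_i = 0$, $|X_i|\le B_l$ (under the paper's convention for $B_l$), and $\mrm{Var}(X_i)\le \sigma_l^2$; write $S_n := \sum_{i=1}^n X_i$. For any $t\in(0,3/B_l)$, Markov's inequality applied to $e^{tS_n}$ yields
\[ \PP\!\left(\tfrac{1}{n}S_n \ge u\right) \;\le\; e^{-tnu}\,(\EE e^{tX_1})^n. \]

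The main technical step I would carry out is the standard MGF bound for centred, $B_l$-bounded, variance-$\sigma_l^2$ random variables:
\[ \EE e^{tX_1} \;\le\; \exp\!\left(\frac{\sigma_l^2 t^2}{2(1 - B_l t/3)}\right),\qquad 0<t<3/B_l. \]
This is obtained by Taylor-expanding $e^{tX_1}=1+tX_1+\sum_{k\ge 2}(tX_1)^k/k!$, using $\EE X_1=0$ and the pointwise bound $|X_1^k|\le B_l^{k-2}X_1^2$ for $k\ge 2$ to dominate the $k$-th moment by $\sigma_l^2 B_l^{k-2}$, summing the resulting geometric series, and invoking $1+v\le e^v$. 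Plugging into the Chernoff bound and optimising $t$ (optimiser $t^\star = u/(\sigma_l^2 + B_l u/3)$) gives the Bennett--Bernstein one-sided tail
\[ \PP\!\left(\tfrac{1}{n}S_n \ge u\right) \;\le\; \exp\!\left(-\frac{nu^2/2}{\sigma_l^2 + B_l u/3}\right). \]
The lower tail follows by the same argument applied to $-X_i$, and a union bound then covers the absolute value in the lemma at the price of at most a factor of $2$ in the prefactor (absorbed into the stated constants, or alternatively treated by keeping a $2e^{-\tau}$ bound and halving $\tau$).

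To recover the precise form in the lemma, I would invert the tail bound rather than use it as stated. Solving $\tfrac{nu^2/2}{\sigma_l^2 + B_l u/3} = \tau$ for $u$ is a quadratic, whose positive root satisfies, via $\sqrt{a^2+c}\le a+\sqrt{c}$,
\[ u \;\le\; \sqrt{\tfrac{2\sigma_l^2\tau}{n}} + \tfrac{2B_l\tau}{3n}. \]
Setting $\tau := n\sigma_l^2\eta/B_l^2$ transforms the right-hand side into $(\sqrt{2\eta}+\tfrac{2\eta}{3})\sigma_l^2/B_l$, exactly the threshold in the lemma, while the exponent becomes $\exp(-n\sigma_l^2\eta/B_l^2)$.

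The only real obstacle is bookkeeping: reconciling the precise constants in the stated bound with whichever MGF inequality one adopts, since different references absorb different constants into $B_l$ versus $2B_l$, and the two-sided factor of $2$ may be kept explicit or swept in. No new probabilistic idea is required beyond the standard MGF tail argument.
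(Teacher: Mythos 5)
Your proof is correct and matches the paper's approach: the paper simply cites Bernstein's inequality in the one-sided form $\PP\bigl(\tfrac{1}{n}\sum_i\xi_i \ge \sqrt{2\sigma^2\tau/n} + 2B\tau/(3n)\bigr) \le e^{-\tau}$ (Steinwart and Christmann 2008, Theorem 6.12) and applies the substitution $\tau = n\sigma^2\eta/B^2$, which is exactly the inversion you carry out in your final step, while your earlier steps just re-derive that cited theorem by the standard Chernoff/MGF argument. The wrinkle you flag is genuine---the cited bound is one-sided, whereas the lemma is stated with an absolute value and no factor of two---but this slight looseness in the lemma's statement is immaterial for the paper's applications, which only use up-to-constant estimates and, in several places, only the one-sided direction.
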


\subsection{Marginal Quasi-Likelihood Bound}\label{app:qloglh}

We now bound the marginal likelihood, which will appear as a denominator in the decomposition of the posterior mass. We proceed in three steps: 
\begin{enumerate}
    \item 
Lemma~\ref{lem:E-smallball} creates a ``small-ellipsoid'' subset of the parameter space, on which $\|E(f-f_0)\|_2$ can be controlled. $\|E(f-f_0)\|_2$ connects to the log quasi-likelihood $\ell_n$, so this is similar to the (shifted) small-ball subset $\{\|W-f_0\| \le \tau_n\}$ in the analysis of GPR. 
\item 
Lemma~\ref{lem:den-subset} refines the subset so that other quantities needed in Step 3 are bounded. 
\item 
Proposition~\ref{lem:den} further refines the subset in a data-dependent way. On the final subset we can control the estimation error of $\ell_n$, and thus the marginal likelihood can be lower bounded. 
\end{enumerate}

\begin{lemma}\label{lem:E-smallball}
Let $W\sim \Pi$ be a draw from the prior Gaussian measure, $f_h\in\cH$ be an arbitrary function. Then %
\begin{equation}
\log \Pi(\{
    \|E (W-f_h)\|_2^2 \le n^{-\frac{b+2p}{b+2p+1}} 
\}) \lesssim 
 -n^{\frac{1}{b+2p+1}}, \label{eq:small-ball-condexp-ub} \\ 
\end{equation}
Moreover, if we have $\|f_h\|_\cH^2 \lesssim n^{\frac{1}{b+2p+1}}$, 
then
\begin{align}
\log \Pi(\{
    \|E (W-f_h)\|_2^2 \le n^{-\frac{b+2p}{b+2p+1}}, ~
    \|W-f_h\|_2^2 \le n^{-\frac{b}{b+2p+1}}
\}) &\gtrsim 
 -n^{\frac{1}{b+2p+1}}. \label{eq:small-ball-condexp}
\end{align}
\end{lemma}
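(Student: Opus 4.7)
The plan is to use the Karhunen--Lo\`eve expansion $W = \sum_i \sqrt{\lambda_i}\xi_i\bar\varphi_i$ with $\xi_i$ i.i.d.~standard normal, and reduce both bounds to classical small-ball estimates for centered Gaussian measures, to which one can apply the small-ball asymptotics already used in Lemma~\ref{lem:preliminary-sieve} (Corollary~4.9 of \cite{steinwart_convergence_2019}).

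For the upper bound \eqref{eq:small-ball-condexp-ub}, I would fix $j = \lceil n^{1/(b+2p+1)}\rceil$ and use the reverse link condition \eqref{eq:reverse-link-cond} to contain the event $\{\|E(W-f_h)\|_2^2\le n^{-(b+2p)/(b+2p+1)}\}$ inside $\{\|\Proj{j}(W-f_h)\|_2^2\lesssim j^{2p}\cdot n^{-(b+2p)/(b+2p+1)} \asymp n^{-b/(b+2p+1)}\}$. Anderson's inequality then removes the shift $f_h$, leaving a centered $j$-dimensional Gaussian small-ball $\PP(\sum_{i\le j}\lambda_i\xi_i^2\le r^2)$ with $r^2\asymp n^{-b/(b+2p+1)}$. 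I would bound this via the Laplace transform $\PP \le \inf_{s>0} e^{sr^2}\prod_{i\le j}(1+2s\lambda_i)^{-1/2}$, and choose $s\asymp j/r^2\asymp j^{b+1}$ so that $s\lambda_i\asymp (j/i)^{b+1}\gtrsim 1$ for every $i\le j$. Then $\log(1+2s\lambda_i)\ge\log(2s\lambda_i)$, and Stirling's formula gives $\sum_{i\le j}\log(2s\lambda_i)\asymp (b+1)j$, so $\log\PP\lesssim sr^2 - \tfrac{1}{2}(b+1)j \asymp -\tfrac{b}{2}j \asymp -n^{1/(b+2p+1)}$.

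For the lower bound \eqref{eq:small-ball-condexp}, I would first invoke the Cameron--Martin shift formula: for any symmetric convex Borel set $B$ and $f_h \in \cH$, $\Pi(W-f_h\in B)\ge \exp(-\tfrac{1}{2}\|f_h\|_\cH^2)\Pi(B)$. Under the assumption $\|f_h\|_\cH^2\lesssim n^{1/(b+2p+1)}$, the shift cost is absorbed into the target rate, so it suffices to lower bound $\Pi(\{\|W\|_2^2\le \tau^2\}\cap\{\|EW\|_2^2\le \epsilon^2\})$ with $\tau^2 = n^{-b/(b+2p+1)}$ and $\epsilon^2 = n^{-(b+2p)/(b+2p+1)}$. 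Both sets are symmetric convex, so the Gaussian correlation inequality (Royen, 2014) factors the joint probability as the product of the two marginals. The first marginal is the standard Gaussian small-ball with eigendecay $\lambda_i\asymp i^{-(b+1)}$, giving rate $\tau^{-2/b}\asymp n^{1/(b+2p+1)}$. For the second, the link condition \eqref{eq:link-cond} gives $\|EW\|_2^2\lesssim \sum_i i^{-(b+2p+1)}\xi_i^2$, so the event contains the small-ball of an auxiliary Gaussian with eigendecay $i^{-(b+2p+1)}$, whose rate is $\epsilon^{-2/(b+2p)}\asymp n^{1/(b+2p+1)}$. Combining gives the required $\log\Pi\gtrsim -n^{1/(b+2p+1)}$.

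The main obstacle is making the Laplace transform calculation in Part~1 sharp enough to yield the rate $j\asymp n^{1/(b+2p+1)}$ and not $j\log j$: this hinges on picking $s\asymp j^{b+1}$ to land precisely at the boundary $s\lambda_j\asymp 1$, so that Stirling's $\sum_{i\le j}\log(j/i)\asymp j$ exactly matches $sr^2\asymp j$ up to an $O(1)$ constant with the correct sign. A secondary concern is applying the Gaussian correlation inequality in infinite dimensions; this is routine once both sets are recognized as balls in Hilbertian seminorms (one directly on $L_2(P(dx))$, the other pulled back through $E$), via finite-dimensional approximation of $W$.
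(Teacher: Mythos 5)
Your proof is correct, and it takes a genuinely different route from the paper's. Both arguments pass through the Karhunen--Lo\`eve expansion and Assumption~\ref{ass:link}, but each bound is finished with different tools. For the upper bound you truncate via the reverse link condition to a $j$-dimensional projection, remove the shift $f_h$ by Anderson's inequality, and close with a direct Laplace-transform estimate on the resulting finite-dimensional Gaussian small ball; the paper instead constructs an auxiliary Gaussian process $\tilde W$ with eigenvalues $j^{-2p}\lambda_j$, transfers the shift through the two-sided shifted-ball inequality (Lemma~I.28 of Ghosal and van~der~Vaart), and invokes the Steinwart--Fischer small-ball asymptotic for $\tilde W$. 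Your route is more elementary and self-contained; just be careful that with the specific choice $s\asymp j/r^2$ the exponent is negative only when the radius constant in $r^2\asymp j^{-b}$ (inherited from the reverse-link and eigendecay constants) is not too large --- either carry out the minimization over $s$ honestly, which gives a uniformly negative bound of order $-j$ for every fixed radius constant, or shrink the constant in the definition of $j$. For the lower bound, invoking the Gaussian correlation inequality to factor the probability of the intersection of two symmetric convex sets into the product of the marginal small balls is a real simplification: the paper instead splits both constraints into a low-frequency block $\{i\le m'\}$ and a tail, proves cross-implications between the $L_2$ and $\|E(\cdot)\|_2$ constraints on the two blocks, and exploits independence of the corresponding coordinate subspaces under the product Gaussian. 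The GCI route trades that delicate hands-on spectral decomposition for one deep but well-established theorem, and is both shorter and arguably more robust.
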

\begin{proof} 
Let $W = \sum_{i=1}^\infty \lambda_i^{1/2}\epsilon_i \varphi_i$ be the Karhunen-Lo\`eve expansion, where $
(\lambda_i, \varphi_i)
$
are the Mercer decomposition of $k_x$. 
We restrict ourselves to the probability-1 event that the series converge in $L_2(P(dx))$ \citep[Theorem 3.1]{steinwart_convergence_2019}. 
For any $l\in\mb{N}$, by Assumption~\ref{ass:link}~\eqref{eq:link-cond} we have
\begin{align*}
\|  E\Proj{l}{(W-f_h)}\|_2^2 &\le C_2 \sum_{j=1}^l j^{-2p}\<W-f_h, \bar\varphi_j\>_2^2
=  C_2 \sum_{j=1}^l j^{-2p} (\epsilon_j \lambda_j^{1/2} - \<f_h, \bar\varphi_j\>_2)^2, \\ 
\|E\Proj{>l}{(W-f_h)}\|_2^2 &\le C_2 (l+1)^{-2p}\|\Proj{>l}{(W-f_h)}\|_2^2, \\ 
\|E(W-f_h)\|_2 &\le 
\|E\Proj{l}{(W-f_h)}\|_2 + \|E\Proj{>l}{(W-f_h)}\|_2. 
\end{align*}
In light of the second display above, the term 
$\|E\Proj{>l}{(W-f_h)}\|_2$ vanishes as $l\rightarrow\infty$, for any fixed $W$ in the $\Pi$-probability 1 subset, and
\begin{align*}
\|E (W-f_h)\|_2^2 &\le \lim_{l\rightarrow\infty} \|E \Proj{l}{(W-f_h)}\|_2^2 
\le C_2 \sum_{j=1}^\infty j^{-2p} (\epsilon_j \lambda_j^{1/2} - \<f_h, \bar\varphi_j\>_2)^2 \\ 
&= C_2 \sum_{j=1}^\infty (\epsilon_j j^{-p}\lambda_j^{1/2} - \<\tilde f_h, \bar\varphi_j\>_2)^2,
\end{align*}
where 
$
\tilde f_h := \sum_{j=1}^\infty j^{-p}\<f_h, \bar\varphi_j\>_2\bar\varphi_j.
$
Consider the kernel $\tilde k(x,x') = \sum_{j=1}^\infty j^{-2p}\lambda_j \bar\varphi_j(x) \bar\varphi_j(x')$, and the induced RKHS $\tilde\cH$. 
Then the process 
$\tilde W := \sum_{j=1}^\infty j^{-p}\lambda_j^{1/2} \epsilon_j \bar\varphi_j$ is a standard Gaussian process defined with $\tilde k_x$, where the series in the RHS converges in $L_2$ a.s.~\citep{steinwart_convergence_2019}; 
and the RHS in the last display equals $C_2 \|\tilde W - \tilde f_h\|_2^2$. 
Thus we have, for any $\epsilon>0$, $
\{\|E(W - f_h)\|_2 \le \epsilon\} \supset 
\{C_2 \|\tilde W - \tilde f_h\|_2 \le \epsilon\}.
$ 
Similarly, by Assumption~\ref{ass:link}~\eqref{eq:reverse-link-cond}, we can reverse the inequalities above with a different $C_2'$. Combining the two results we have, for any $\epsilon>0$ and \emph{all $\|f_h\|_\cH^2\lesssim n\delta_n^2$}, 
\begin{equation}\label{eq:small-ball-1}
\{C_2 \|\tilde W - \tilde f_h\|_2 \le \epsilon\}\subset
\{\|E(W - f_h)\|_2 \le \epsilon\} \subset 
\{C_2' \|\tilde W - \tilde f_h\|_2 \le \epsilon\}.
\end{equation}

For $\delta_n^2\asymp n^{-\frac{b+2p}{b+2p+1}}$, we have 
\begin{align}
\Pi(\{\|\tilde W - \tilde f_h\|_2\le \delta_n\}) 
\overset{(a)}{\ge} 
e^{-\frac{1}{2}\|\tilde f_h\|_{\tilde\cH}^2} \Pi(\{
    \|\tilde W\|_2 \le \delta_n/C
    \})
\overset{(b)}{\ge} e^{-C' n\delta_n^2}.\label{eq:shifted-small-ball-prob}
\end{align}
where (a) can be found in, e.g., \citet[Lemma I.28]{ghosal2017fundamentals}, and (b) holds because 
$
\|\tilde f_h\|_{\tilde \cH}^2 %
= \|f_h\|_{\cH}^2 \asymp n^{\frac{1}{b+2p+1}},
$ 
and Corollary~\ref{corr:L2-sieve} applied to $(\tilde\cH,b+2p)$.\footnote{
The application is valid since $\tilde k_x$ is obviously $L_2$ universal, although technically this is not necessary for the small-ball probability bound to hold. 
} 
By the same argument we have, \emph{for all $f_h\in\cH$,}
\begin{equation}\label{eq:shifted-small-ball-prob-ub}
 \Pi(\{\|\tilde W - \tilde f_h\|_2\le \delta_n\}) 
\le 
e^{-\frac{1}{2}\|\tilde f_h\|_{\tilde\cH}^2} \Pi(\{
    \|\tilde W\|_2 \le \delta_n/C
    \})
\le  \Pi(\{
    \|\tilde W\|_2 \le \delta_n/C
    \})
\le e^{-C'' n\delta_n^2}.
\end{equation}
Now \eqref{eq:small-ball-condexp-ub} is proved by combining the above display and the second relation in \eqref{eq:small-ball-1}. %

It now remains to prove \eqref{eq:small-ball-condexp}. 
Fix $
m = [n^{\frac{b+1}{b+2p+1}}],
m' = [n^{\frac{1}{b+2p+1}}].
$ Then by \eqref{eq:shifted-small-ball-prob}, 
\begin{align*}
\log\Pi(\Theta_{dz}) &:= \log\Pi(\Theta_{dzl} \cap \Theta_{dzh}) \\ 
&:= 
\log\Pi(\{
    \|\Proj{m'}{(\tilde W- \tilde f_h)}\|_2 \le \delta_n/(2C_2), ~
    \|\Proj{>m'}{(\tilde W- \tilde f_h)}\|_2 \le \delta_n/(2C_2)
\}) \\ &\ge 
\log\Pi(\{
    \|\tilde W- \tilde f_h\|_2 \le \delta_n/(2C_2)
\})
 \gtrsim  -n\delta_n^2.
\end{align*}
And by \eqref{eq:small-ball-1}, on $\Theta_{dz}$ we have $$
\max\{\|E(\Proj{m'}{(W-f_h)})\|_2, \|E(W-f_h)\|_2\} \le \delta_n.
$$
By Assumption~\ref{ass:link}~\eqref{eq:reverse-link-cond}, this implies that 
$$
\|\Proj{m'}(W-f_h)\|_2^2 \le (m')^{2p} \delta_n^2 = n^{-\frac{b}{b+2p+1}} \asymp \tau_{m}^2.
$$
Moreover, 
\begin{align*}
\log \Pi(\Theta_{dh}) &:= 
\log \Pi(\|\Proj{>m'}(W - f_h)\|_2 \le \tau_{m}) \ge 
\log \Pi(\|W - f_h\|_2 \le \tau_{m}) %
\overset{(a)}{\gtrsim} -n^{\frac{1}{b+2p+1}}, 
\end{align*}
where (a) follows the same reasoning as \eqref{eq:shifted-small-ball-prob}. 
Combining the last three displays, we can see that the event in \eqref{eq:small-ball-condexp} is implied by 
$\Theta_{dh}\cap \Theta_{dz}$, after an inconsequential scaling. 
So it remains to lower bound $\Pi(\Theta_{dh}\cap\Theta_{dz})$. %

On $\Theta_{dh}$ we have $$
\begin{aligned}
\|\Proj{>m'}(\tilde W - \tilde f_h)\|_2^2 &= \sum_{j>m'} j^{-2p}(\epsilon_j \lambda_j^{1/2} - \<f_h, \varphi_j\>_2)^2 \le (m')^{-2p}\sum_{j>m'} (\epsilon_j \lambda_j^{1/2} - \<f_h, \varphi_j\>_2)^2  \\
&= (m')^{-2p} \|\Proj{>m'}{(W-f_h)}\|_2^2 \le m'^{-2p} \tau_{m}^2 
 \lesssim n^{-\frac{2p}{b+2p+1}-\frac{b}{b+2p+1}} \asymp \delta_n^2.
\end{aligned}
$$
Thus, after a trivial rescaling we have $\Theta_{dh}\subset \Theta_{dzh}$ (recall $W$ and $\tilde W$ are coupled). 
Moreover, both $\Theta_{dzh}$ and $\Theta_{dh}$ only depend on $\{\epsilon_{j}: j>m'\}$ and are independent of $\Theta_{dzl}$. Therefore, 
$$
\log\PP(\Theta_{dh}\cap\Theta_{dz}) = \log\PP(\Theta_{dzl}) + \log\PP(\Theta_{dh}) \gtrsim -n\delta_n^2
$$ which concludes the proof.
\end{proof}

\begin{lemma}\label{lem:den-subset}
Let $\delta_n^2 := n^{-\frac{b+2p}{b+2p+1}}, \epsilon_n^2 := n^{-\frac{b}{b+2p+1}}$. 
There exist constants $C_1, C_2, C_3, C_4>0$ s.t. %
$$
\Pi(\Theta_{d0}) := 
\Pi(\{
    \|E(f-f_0)\|_2 \le C_1 \delta_n, ~
    \|f-f_0\|_2 \le C_2 \epsilon_n, ~
    \|f-f_0\| \le C_3
\}) \ge \exp\!\left(-C_4 n \delta_n^2\right).
$$
\end{lemma}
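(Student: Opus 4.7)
\textbf{Proof proposal for Lemma~\ref{lem:den-subset}.}
The plan is to construct $\Theta_{d0}$ as the intersection of a shifted small-ball event delivered by Lemma~\ref{lem:E-smallball} with the high-probability sieve from Corollary~\ref{corr:valid-sieve}(ii), and then convert the $\cH$/$L_2$ control into sup-norm control via the interpolation inequality (Lemma~\ref{lem:interpolation}).

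First, I will use Assumption~\ref{ass:approx-relaxed} to pick an approximator $f^\dagger_j \in \cH$ at level $j \asymp n^{(b+1)/(b+2p+1)}$, so that $\|f^\dagger_j\|_\cH^2 \lesssim j^{1/(b+1)} \asymp n\delta_n^2$, $\|f_0-f^\dagger_j\|_2^2 \lesssim j^{-b/(b+1)} \asymp \epsilon_n^2$, $\|f_0-f^\dagger_j\|\lesssim 1$, and -- the crucial extra bound -- $\|E(f_0-f^\dagger_j)\|_2 \lesssim \delta_n$. Under (a) this last bound is obtained by choosing $f^\dagger_j = \Proj{j}{f_0}$, the Mercer projection; applying the link condition \eqref{eq:link-cond} to the tail $\Proj{>j}{f_0} = \sum_{i>j} c_i \bar\varphi_i$ gives $\|E(f_0-f^\dagger_j)\|_2^2 \lesssim \sum_{i>j} i^{-2p} c_i^2 \le j^{-(b+2p)} \|f_0\|_{\bar\cH}^2$, which is much smaller than $\delta_n^2$ for this $j$. (Under (b) alone, the same bound has to come from the approximation-validation lemma.) Next, applying Lemma~\ref{lem:E-smallball} with $f_h := f^\dagger_j$ produces the event $\Theta_1 := \{\|E(f-f^\dagger_j)\|_2\le C\delta_n,\ \|f-f^\dagger_j\|_2 \le C\epsilon_n\}$ of prior mass $\ge \exp(-C_1 n\delta_n^2)$. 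By Corollary~\ref{corr:valid-sieve}(ii), with $m \asymp n^{(b+1)/(b+2p+1)}$ and by inflating the sieve constants (Borell), I can arrange $\Pi(\Theta_m^c) \le \tfrac12 \exp(-C_1 n\delta_n^2)$, so that $\Pi(\Theta_1 \cap \Theta_m) \ge \tfrac12 \exp(-C_1 n\delta_n^2)$.

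Finally, on $\Theta_1 \cap \Theta_m$, I take the sieve decomposition $f = f_h + f_e$ and write $f - f^\dagger_j = (f_h - f^\dagger_j) + f_e$. The difference $f_h - f^\dagger_j$ lies in $\cH$ with $\|f_h-f^\dagger_j\|_\cH^2 \lesssim n\delta_n^2$ (triangle in $\cH$) and $\|f_h-f^\dagger_j\|_2 \le \|f-f^\dagger_j\|_2 + \|f_e\|_2 \lesssim \epsilon_n$, so Lemma~\ref{lem:interpolation} yields $\|f_h-f^\dagger_j\|^2 \lesssim (n\delta_n^2)^{b/(b+1)} (\epsilon_n^2)^{1/(b+1)} \asymp 1$, the exponents of $n$ cancelling exactly thanks to the choice of $j$. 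Combined with $\|f_e\| \lesssim 1$ and $\|f^\dagger_j - f_0\| \lesssim 1$, the triangle inequality delivers $\|f - f_0\| \lesssim 1$; the $L_2$ and $\|E(\cdot)\|_2$ bounds on $f - f_0$ are then immediate by triangle inequality using Step 1. I expect the two nontrivial points to be (i) verifying $\|E(f_0 - f^\dagger_j)\|_2 \lesssim \delta_n$ uniformly across both cases of Assumption~\ref{ass:approx-relaxed}, and (ii) careful bookkeeping of the constants so that the sieve exceedance is truly dominated by the small-ball lower bound.
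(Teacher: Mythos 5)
Your overall strategy matches the paper's: decompose $f_0$ into an $\cH$-bounded part plus a small residual, feed the $\cH$ part into Lemma~\ref{lem:E-smallball}, intersect with the sieve $\Theta_m$, and close the sup-norm bound via Lemma~\ref{lem:interpolation}. The paper does exactly this, with the decomposition furnished by Lemma~\ref{lem:refined-sieve}. However, there is a genuine gap in how you obtain the crucial extra bound $\|E(f_0-f^\dagger)\|_2 \lesssim \delta_n$.

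Under case (a), you write $f^\dagger_j = \Proj{j}{f_0}$ with $j \asymp n^{(b+1)/(b+2p+1)}$ and conclude $\|E(f_0-f^\dagger_j)\|_2^2 \lesssim j^{-(b+2p)}\|f_0\|_{\bar\cH}^2$, ``much smaller than $\delta_n^2$.'' But projecting at level $j$ gives $\|\Proj{j}{f_0}\|_\cH^2 \lesssim j \asymp n^{(b+1)/(b+2p+1)}$, which is a factor $n^{b/(b+2p+1)}$ too large to invoke \eqref{eq:small-ball-condexp} in Lemma~\ref{lem:E-smallball} (which requires $\|f_h\|_\cH^2 \lesssim n^{1/(b+2p+1)} = n\delta_n^2$). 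The correct projection level is $m' := [j^{1/(b+1)}] \asymp n^{1/(b+2p+1)}$ (this is exactly the $m'$ of Lemma~\ref{lem:refined-sieve}). At that level the $E$-norm bound becomes $\|E(f_0-\Proj{m'}{f_0})\|_2^2 \lesssim (m')^{-(b+2p)} \asymp \delta_n^2$, i.e.~\emph{tight}, not ``much smaller.'' Your exponent bookkeeping for the interpolation step does use the correct $\cH$-norm bound $\lesssim n\delta_n^2$, so the two halves of your write-up are internally inconsistent: one presupposes level $m'$, the other level $j$.

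More substantively, under case (b) you defer to ``the approximation-validation lemma,'' but Lemma~\ref{lem:approx-validation} only proves (a)$\Rightarrow$(b) and gives no $E$-norm bound; starting from (b) alone, the approximators $f_j^\dagger$ are abstract and need not be Mercer projections. The paper resolves this in Lemma~\ref{lem:refined-sieve} by \emph{re-projecting the residual}: starting from any decomposition $f_0 = \tilde f_h + \tilde f_e$ satisfying \eqref{eq:refined-decomp-in}, move $\Proj{m'}{\tilde f_e}$ into the $\cH$-part (its $\cH$-norm is $\lesssim \lambda_{m'}^{-1}\|\tilde f_e\|_2^2 \asymp m\tau_m^2$, so this is harmless), and set the new residual $f_e := \Proj{>m'}{\tilde f_e}$, whose $E$-norm is controlled purely by the link condition \eqref{eq:link-cond} since it is concentrated in frequencies above $m'$: $\|Ef_e\|_2^2 \lesssim (m')^{-2p}\|\tilde f_e\|_2^2 \lesssim \delta_n^2$. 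This re-projection mechanism, which works identically under (a) and (b), is the key technical content you are missing; without it, the $E$-norm bound on the residual does not follow from Assumption~\ref{ass:approx-relaxed} alone.
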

\begin{proof}
For $m' = [n^{\frac{1}{b+2p+1}}]$, 
by Assumption~\ref{ass:approx-relaxed} we can decompose $f_0 = \tilde f_h + \tilde f_e$ where 
$\tilde f_h, \tilde f_e$ satisfies \eqref{eq:refined-decomp-in}. 
Invoking Lemma~\ref{lem:refined-sieve} we obtain $f_h+f_e = f_0$ where $\|f_h\|_\cH\lesssim \|\tilde f_h\|_\cH, \|E f_e\|_2 \lesssim \delta_n, \|f_e\|_2\lesssim \epsilon_n$. 
Thus, by the triangle inequality, the parameter subset created by applying Lemma~\ref{lem:E-smallball} to $f_h$ satisfies 
$$
\|E(f-f_0)\|_2 \le C_1 \delta_n, ~
\|f-f_0\|_2 \le C_2 \epsilon_n
$$
and has the advertised prior probability.  It remains to deal with $\|f-f_0\|$. 

Let $m=[n^\frac{b+1}{b+2p+1}]$ and 
$\Theta_{d0}$ be the intersection of the aforementioned subset with 
$
C \Theta_{m} := \{f: C^{-1}f\in\Theta_m\},
$
where $C, C'\ge 1$ are large constants, and $\mb{B}_1$ denotes the $L_2$ unit norm ball. 
Since $1-\Pi(C\Theta_{m}) \le e^{-C'' n\delta_n^2}$, 
\SkipNOTE{
    I can't provide the reference that scaling by $C$ scales the probability by this, but we can consider $m'/\log n$ in any case.
}
we can choose $C$ so that $\Theta_{d0}$ still has the advertised prior probability. 
Now, by Corollary~\ref{corr:valid-sieve}, %
for any $f\in\Theta_{d0}\subset C\Theta_m$, %
we have $$
f=\bar f_h+\bar f_e, ~~~~\text{where}~~~~ 
\|\bar f_h\|_\cH^2\lesssim n^{\frac{1}{b+2p+1}},~~ 
\|\bar f_e\|_2^2 \lesssim n^{-\frac{b}{b+2p+1}},~~
\|\bar f_e\|^2\lesssim 1.
$$%
As by assumption $f_0$ admits a similar decomposition \eqref{eq:approx-relaxed-orig}, we have 
$f-f_0 = f_h' + f_e'$, where 
$\|f_h'\|_\cH^2\lesssim n^{\frac{1}{b+2p+1}}, 
\|f_e'\|_2^2\lesssim n^{-\frac{b}{b+2p+1}}\asymp\epsilon_n^2, 
\|f_e'\|\lesssim 1$. 
We also have  
$\|f_h'\|_2 \le \|f-f_0\|_2 + \|f_e'\|_2 \lesssim \epsilon_n$. Thus, by Lemma~\ref{lem:interpolation}, 
$$
\|f-f_0\|^2 \lesssim \|f_h'\|^2 + \|f_e'\|^2 \lesssim 
\|f_h'\|_\cH^{\frac{2b}{b+1}} \|f_h'\|_2^{\frac{2}{b+1}} + \|f_e'\|^2 \lesssim 
n^{\frac{1}{b+2p+1}\cdot \frac{b}{b+1}} n^{\frac{-b}{b+2p+1}\cdot \frac{1}{b+1}} + 1 \le 2.
$$
\end{proof}

\begin{proposition}[marginal likelihood bound]\label{lem:den}
Let $\lambda=1, \effnu\ge 2C\delta_n^2$ where $C>0$ is a sufficiently large constant. Then 
there exists constants $C_{1,\ldots,5}>0$, and a $\data$-measurable event $E_{den}$ with probability $\rightarrow 1$, on which 
\begin{equation}\label{eq:den-claim-1}
\Pi(\{f\in\Theta_{d0}: \ell_n(f) \le C_1 \|E(f-f_0)\|_2^2 + C_2 \delta_n^2\}) \ge \frac{1}{2}\Pi(\Theta_{d0}).
\end{equation}
Consequently, on $E_{den}$ we have $
\Pi(\{f: \ell_n(f) \le C_3 \delta_n^2\}) \ge e^{-C_4 n\delta_n^2}
$, and
\begin{equation}\label{eq:marg-lh}
\int \exp\left(-\frac{n}{\lambda} \ell_n(f)\right) \Pi(df) \ge \exp(-C_5 n\delta_n^2).
\end{equation}
In the above, $C_2,C_3,C_5\lesssim B^2$. 
\end{proposition}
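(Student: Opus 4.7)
The plan is to rewrite $\ell_n(f)$ via its Fenchel dual, compare the resulting maximum with its population analogue using the critical-radius assumption, and finally upgrade a per-$f$ bound to a $\Pi$-measure statement via Fubini and Markov. Concretely, the duality computation underlying \eqref{eq:dualiv-obj} gives
\[
\tfrac{1}{2}\ell_n(f)=\max_{g\in\cI}\Psi_n(f,g),\qquad \Psi_n(f,g):=\tfrac{1}{n}\sum_{j=1}^n\bigl((f(x_j)-y_j)g(z_j)-\tfrac{1}{2}g(z_j)^2\bigr)-\tfrac{\effnu}{2}\|g\|_\cI^2,
\]
and its population counterpart
$\Psi(f,g)=\<E(f-f_0),g\>_2-\tfrac{1}{2}\|g\|_2^2-\tfrac{\effnu}{2}\|g\|_\cI^2$ (using \eqref{eq:iv-gmm-conds}), which is majorised by $\tfrac{1}{2}\|E(f-f_0)\|_2^2$ by completing the square.

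The core step is to bound $\max_g\Psi_n(f,g)-\max_g\Psi(f,g)$ for each fixed $f\in\Theta_{d0}$. A first-order condition for the empirical maximiser $\hat g$ yields $\tfrac{\effnu}{2}\|\hat g\|_\cI^2+\tfrac{1}{2}\|\hat g\|_n^2\le \tfrac{1}{n}\sum_j(f(x_j)-y_j)\hat g(z_j)$, which, together with the sup-norm control $\|f-f_0\|\lesssim 1$ inherent in $\Theta_{d0}$ and $|y_j-f_0(x_j)|\le B$ from Assumption~\ref{ass:std}, bounds $\|\hat g\|_\cI$ in terms of $B$ and $\delta_n^{-1}$. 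On this fixed ball I apply the local Rademacher critical-radius bound \citep[Corollary~14.3]{wainwright2019high} to the star-shaped classes $\{g\in\cI:\|g\|_\cI\le R\}$ and $\{g^2:\cdot\}$, which is legitimate under Assumption~\ref{ass:I-approx}(i). The resulting uniform deviation is $|\Psi_n(f,g)-\Psi(f,g)|\lesssim \delta_n\bigl(\|g\|_2+B\bigr)+\delta_n^2$ with probability at least $1-e^{-cn\delta_n^2}$. Absorbing the cross term $\delta_n\|g\|_2$ into the quadratic $-\tfrac{1}{2}\|g\|_2^2$ via Young's inequality and bounding $\max_g\Psi\le\tfrac{1}{2}\|E(f-f_0)\|_2^2$ give, on a data event $A_f$ of probability $\to 1$,
\[
\ell_n(f)\le C_1\|E(f-f_0)\|_2^2+C_2\delta_n^2,\qquad C_2\lesssim B^2.
\]
(Assumption~\ref{ass:I-approx}(ii) enters only to guarantee that the ``bias'' of restricting $g$ to $\cI$, rather than all of $L_2$, is at most $O(\delta_n^2)$ for $f-f_0\in\bar\cH$, which is how the decomposition from Lemma~\ref{lem:refined-sieve} is exploited.)

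To convert this to a $\Pi$-measure statement, Fubini gives
\[
\EE_{\data}\Pi\bigl(\{f\in\Theta_{d0}:f\notin A_{\data}\}\bigr)\le \sup_{f\in\Theta_{d0}}\PP_{\data}(A_f^c)\cdot\Pi(\Theta_{d0})=o(\Pi(\Theta_{d0})),
\]
so Markov's inequality furnishes an event $E_{den}$ with $\PP_{\data}(E_{den})\to 1$ on which \eqref{eq:den-claim-1} holds. On $\Theta_{d0}$, Lemma~\ref{lem:den-subset} yields $\|E(f-f_0)\|_2^2\lesssim\delta_n^2$, so the set in \eqref{eq:den-claim-1} is contained in $\{\ell_n(f)\le C_3\delta_n^2\}$ with $C_3\lesssim B^2$; its prior mass is at least $\tfrac{1}{2}\Pi(\Theta_{d0})\ge \tfrac{1}{2}e^{-Cn\delta_n^2}$. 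Integrating $e^{-n\ell_n(f)/\lambda}$ over this subset produces \eqref{eq:marg-lh} with $C_5\lesssim B^2$.

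The main obstacle is the empirical process step: controlling $\|\hat g\|_\cI$ through the FOC while simultaneously invoking the critical-radius bound uniformly over the resulting (potentially large) $\cI$-ball, without losing the $\delta_n^2$ rate or inflating the $B^2$-scaling. The choice $\effnu\ge 2C\delta_n^2$ is precisely what makes the ridge term dominate the Rademacher fluctuations on that ball, so that Young's inequality closes the loop and leaves the clean bound above.
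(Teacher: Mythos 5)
You have the right architecture: a pointwise (per-$f$) bound on $\ell_n(f)$ followed by Fubini and Markov to upgrade it to a statement about $\Pi$-measure. Your Fubini/Markov step is a cleaner equivalent of the paper's contradiction argument, and the final integration to get \eqref{eq:marg-lh} is the same. The gap is in the empirical-process step.

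You propose to first bound $\|\hat g\|_\cI$ via the first-order condition, obtaining $\|\hat g\|_\cI\lesssim B\delta_n^{-1}$, and then apply the critical-radius bound ``on this fixed ball'' $\{g:\|g\|_\cI\le R\}$, claiming a uniform deviation of order $\delta_n(\|g\|_2+B)+\delta_n^2$. This does not follow. The critical radius is not invariant under dilation: the local Rademacher complexity of the ball $\{\|g\|_\cI\le R\}$ satisfies $\cR_n(\delta;R\cI_1)=R\cR_n(\delta/R;\cI_1)$, so the critical radius of the $R$-ball is of order $R^{\beta/(1+\beta)}\delta_n$ (not $\delta_n$), which for $R\asymp\delta_n^{-1}$ is polynomially larger than $\delta_n$. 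Hence Wainwright's Theorems 14.1/14.20 applied directly to the $R$-ball yield deviations that do not close the argument at the rate $\delta_n^2$. What the paper actually does is apply the critical-radius bounds to the \emph{unit} ball (via $g/\|g\|_\cI$) and then rescale, which produces error terms that scale with $\|g\|_\cI$ and $\|g\|_\cI^2$:
\begin{align*}
\|g\|_{n}^2 \ge \tfrac12\|g\|_2^2 - C\delta_n^2(1+\|g\|_\cI^2),\qquad
\Psi_n(f,g)\le \Psi(f,g)+10LC\delta_n\bigl(\|g\|_2+\delta_n(1+\|g\|_\cI)\bigr).
\end{align*}
These $\|g\|_\cI$-dependent terms are then absorbed by the ridge penalty $\effnu\|g\|_\cI^2$, which is precisely why $\effnu\ge 2C\delta_n^2$ appears in the hypothesis. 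Once you use this rescaling, the FOC step is superfluous: you never need an a priori bound on $\|\hat g\|_\cI$ because any size of $\|g\|_\cI$ is controlled by the regularizer. One small further inaccuracy: Assumption~\ref{ass:I-approx}(ii) is not actually used in this proposition — the paper simply enlarges the supremum from $g\in\cI$ to $g\in L_2$, which only increases the bound; the ``bias'' issue you mention is relevant for the numerator bound in Theorem~\ref{thm:L2}, not for the marginal-likelihood lower bound here.
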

\begin{proof} We proceed in two steps.

\underline{(Step 1)} In Step 2 below we will show that there exists $C_1,C_2>0$ and a sequence $\eta_n\rightarrow 0$ s.t.~for any fixed $f\in\Theta_{d0}$, 
\begin{equation}\label{eq:den-fixed-f}
\PP(I(f, \data)) := 
\PP(\{
\ell_n(f) \le C_1 \|E(f-f_0)\|_2^2 + C_2 \delta_n^2 
    \}) \ge 1 - \eta_n.
\end{equation}
We claim that if \eqref{eq:den-fixed-f} is true, 
\eqref{eq:den-claim-1} will hold with $\data$-probability greater than $1 - 4\eta_n$, and 
\eqref{eq:marg-lh} by the definition of $\Theta_{d0}$ in Lemma~\ref{lem:den-subset}; 
hence the proof will complete. 
To prove this claim, suppose by contrary that \eqref{eq:den-fixed-f} holds, yet 
\begin{equation}\label{eq:den-c-h}
\PP(\{\text{\eqref{eq:den-claim-1} holds}\}) \le 1-4\eta_n.
\end{equation}
Define %
$\Pi_{dr}(df) := \Pi(df\cap \Theta_{d0}) / \Pi(\Theta_{d0})$. 
By Fubini's theorem and \eqref{eq:den-fixed-f}, 
$$
\PP(\Pi_{dr}(I(f,\data))) = \Pi_{dr}(\PP(I(f, \data))) \ge 1-\eta_n.
$$
Yet \eqref{eq:den-c-h} would imply 
\begin{align*}
 \PP(\Pi_{dr}(I(f, \data))) &= 
\PP(\Pi_{dr}(I(f, \data)) \cdot \mbf{1}_{\{\text{\eqref{eq:den-claim-1} holds}\}}) + 
\PP(\Pi_{dr}(I(f, \data)) \cdot \mbf{1}_{\{\text{\eqref{eq:den-claim-1} does not hold}\}})    \\ 
&\le 1\cdot (1-4\eta_n) + \frac{1}{2}\cdot 4\eta_n = 1-2\eta_n,
\end{align*} 
a contradiction. 

\underline{(Step 2)} It remains to prove \eqref{eq:den-fixed-f}. We follow the argument in \cite{dikkala_minimax_2020}. 
Denote by $\hat{\delta}_n$ the critical radius of the empirical local Radamacher complexity of $3\cI_1$.\SkipNOTE{\citep{wainwright2019high}, with $R=\sigma=B$ in their notation} 
Then for some universal $c>0$, $\PP_{\data}(\hat{\delta}_n\le c\delta_n) \rightarrow 1$ \citep[p.~455]{wainwright2019high}. 
And by \citet[Theorem 14.1 and 14.20]{wainwright2019high} applied to $g/\|g\|_\cI$, 
there exists $c'>0$ such that with probability greater than 
$1-c' e^{-c' n\delta_n^2} =: 1-\eta_n \rightarrow 1$, 
we have, for fixed $f\in\Theta_{d0}$ and all $g\in\cI$, 
\begin{align*}
\|g\|_{2,n}^2 &\ge  \frac{1}{2}\|g\|_2^2 - C \delta_n^2 (1+\|g\|_\cI^2), \\ 
\Psi_n(f,g) &\le 
\Psi(f,g) + 10LC\delta_n (\|g\|_2 + \delta_n(1+\|g\|_\cI)).
\end{align*}
where $L =\|f-f_0\|+\max_{i=1}^n |y_i-f_0(x_i)|\lesssim 1+B,$ \SkipNOTE{the sup norm bound follows by the last lemma} 
$\Psi_n(f,g) := \sum_{i=1}^n g(z_i) (f(x_i) - y_i)$, 
$\Psi(f,g) := \EE(g(\bz)(f(\bx)-\by))$, and $C=\sup_{\|g\|_\cI=1}\|g\|$. Define $C_n := (20L + 2)C$. 
Define $E_{den}$ as the event when the above displays hold. 
On this event we have, for $\effnu>2C\delta_n^2$,  
\begin{align*}
\ell_n(f) &= 
\sup_{g\in\cI} 2\Psi_n(f, g) - \|g\|_{n,2} - \effnu \|g\|_\cI^2 \\ 
&\le 
\sup_g 2\Psi(f, g) + 20L(\delta_n\|g\|_2 + \delta_n^2(1+\|g\|_\cI)) - \frac{1}{2}\|g\|_2^2 + C\delta_n^2(1+\|g\|_\cI^2) 
- \effnu\|g\|_\cI^2  
\\ 
&=
\sup_g 2\Psi(f, g) - \frac{1}{4}\|g\|_2^2
    +  C_n\delta^2  %
 + \left(C_n \delta^2\|g\|_\cI -(\effnu-C\delta_n^2)\|g\|_\cI^2\right)
    + \left(C_n\delta\|g\|_2 - \frac{1}{4}\|g\|_2^2\right) \\ 
&\overset{(a)}{\le} \sup_{g\in\cI} \Psi(f, g) - \frac{1}{4}\|g\|_2^2 
    +  C_n'^2\delta^2 
\le \sup_{g\in L_2}\Psi(f, g) - \frac{1}{4}\|g\|_2^2 + C_n'^2\delta^2  \\
&\overset{(b)}{=}
2\|E(f-f_0)\|_2^2 + C_n'^2\delta^2.
\end{align*}
In the above (a) follows from the inequality $a\|x\|_s - b\|x\|_s^2\le \frac{a^2}{4b}$, where $\|\cdot\|_s$ is an arbitrary norm; and (b) by definition of $\Psi$ and properties of the conditional expectation. 
Finally, we note that $C_n'^2 = C_n + C_n^2(1+1/4C) \lesssim B^2$. 
\end{proof}

\subsection{Proof of Theorem~\ref{thm:L2}}\label{app:proof-thm-L2}

At a high level, we will follow the posterior contraction framework and bound the expectation of the posterior mass 
$$
\PP_{\data}\Pi(\err\givendata) = \PP_{\data} \frac{\int_{\err} \exp(-\lambda^{-1}n\ell_n(f))\Pi(df)}{\int \exp(-\lambda^{-1}n\ell_n(f))\Pi(df)}, 
$$
where $\err$ denotes the parameter set in the theorem, that is, either $\{\|f-f_0\|_2\ge M\epsilon_n\}$ or $\{\|E(f-f_0)\|_2\ge M\delta_n\}$. 
The denominator has been bounded in Appendix~\ref{app:qloglh}, where we sketch the proof at the beginning of the subsection; the numerator will be addressed in this subsection. For this purpose, 
\begin{enumerate}
    \item We first decompose the posterior mass further, in \eqref{eq:contraction-decomposition-1}-\eqref{eq:contraction-reduced-goal-intermediate} below, where we pick out some events on which estimation of the quasi-likelihood fails, and a small parameter set $\Theta_m^c$ consisting of ``bad'' parameters. As we show below, these events have vanishing contribution to the above display, and will not affect the final result. 
    \item Lemma~\ref{lem:num-step-2} below shows that on $\Theta_m$, a large error in the $L_2(P(dx))$ norm is equivalent to a large error in the $\|E(\cdot)\|_{L_2(P(dz))}$ norm. This shows the equivalence of \eqref{eq:L2-rate-maintext} and \eqref{eq:L2-rate-CE-maintext}, and enables us to lower bound the quasi-likelihood which (for the most part) estimates $\|E(f-f_0)\|_2$. 
    \item The proof proceeds by bounding various estimation errors. 
\end{enumerate} 
Throughout the proof we rely on the \emph{sieve set} $\Theta_m$ consisting of ``well-behaved'' functions in the prior support; its properties have been established in Appendix~\ref{app:gp-sieve}. 
We also refer readers to the additional notations and conventions in Appendix~\ref{app:proof-conventions}.

\begin{lemma}\label{lem:num-step-2}
There exists $M_0,C>0$ such that for all $n\in\mb{N}$,  
$m=[n^{\frac{b+1}{b+2p+1}}]$, and $f\in\Theta_m$, 
\begin{align}
\|f-f_0\|_2\ge M_0\epsilon_n ~~\text{only if}~~
    \|E(f-f_0)\|_2 \ge \frac{\|f-f_0\|_2}{C\epsilon_n} \delta_n. \label{eq:err-implication} \\ 
\|E(f-f_0)\|_2 \ge M_0 \delta_n ~~\text{only if}~~ 
    \|f-f_0\|_2 \ge \frac{\|E(f-f_0)\|_2}{C\delta_n} \epsilon_n. \label{eq:err-implication-2}
\end{align}
The constant $C$ is determined by Assumption~\ref{ass:link}. 
\end{lemma}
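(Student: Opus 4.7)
The plan is to work with $g := f - f_0$, extract a smooth-plus-rough decomposition from the sieve structure of $\Theta_m$, and combine the reverse link condition~\eqref{eq:reverse-link-cond} with a Mercer-basis truncation tuned to $\|g\|_2$ in order to transfer control between the $L_2(P(dx))$ norm of $g$ and $\|Eg\|_{L_2(P(dz))}$.

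\parspace
First, combining \eqref{eq:valid-sieve} with Assumption~\ref{ass:approx-relaxed} (via Lemma~\ref{lem:refined-sieve}), I decompose $g = g_h + g_e$ with $\|g_h\|_\cH^2 \lesssim m\tau_m^2 =: M \asymp n^{1/(b+2p+1)}$ and $\|g_e\|_2 \lesssim \tau_m = \epsilon_n$. For any integer $j \geq 1$, I split $g$ via the $L_2$-orthogonal projection $\mrm{Proj}_j$ onto $\mrm{span}\{\bar\varphi_i: i \le j\}$; the standard RKHS bound $\|\mrm{Proj}_{>j} g_h\|_2^2 \leq \lambda_{j+1}\|g_h\|_\cH^2$ together with $\lambda_j \asymp j^{-(b+1)}$ yields the tail estimate $\|\mrm{Proj}_{>j} g\|_2^2 \lesssim j^{-(b+1)} M + \epsilon_n^2$. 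The reverse link condition \eqref{eq:reverse-link-cond} then delivers
$$
\|Eg\|_2^2 \;\gtrsim\; j^{-2p}\|\mrm{Proj}_j g\|_2^2 \;\geq\; j^{-2p}\bigl(\tfrac{1}{2}\|g\|_2^2 - C j^{-(b+1)} M - C\epsilon_n^2\bigr).
$$

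\parspace
For \eqref{eq:err-implication}, assume $\|g\|_2 \geq M_0 \epsilon_n$ with $M_0$ large, and choose $j$ as the smallest integer satisfying $j^{-(b+1)} M \leq \|g\|_2^2/8$, i.e.\ $j \asymp (M/\|g\|_2^2)^{1/(b+1)}$. Both tail terms are then absorbed into $\|g\|_2^2/4$, and therefore
$$
\|Eg\|_2^2 \;\gtrsim\; j^{-2p} \|g\|_2^2 \;\asymp\; M^{-2p/(b+1)} \|g\|_2^{2(b+2p+1)/(b+1)}.
$$
Unwinding the exponents with $M \asymp n^{1/(b+2p+1)}$ and $\delta_n/\epsilon_n = n^{-p/(b+2p+1)}$, and using $\|g\|_2 \geq \epsilon_n$ to make the powers line up, gives $\|Eg\|_2 \gtrsim \|g\|_2 \delta_n/\epsilon_n$, which is \eqref{eq:err-implication}. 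The boundary regime $\|g\|_2^2 \gtrsim M$, where the chosen $j$ would drop below $1$, is handled separately by taking $j = 1$, which yields an even stronger bound.

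\parspace
For \eqref{eq:err-implication-2}, the same display applies: given $\|Eg\|_2 \geq M_0 \delta_n$, the conclusion follows either directly when $\|g\|_2 \geq M_0 \epsilon_n$ (via the rearranged form of the bound above), or by ruling out the complementary regime $\|g\|_2 < M_0 \epsilon_n$ for large $M_0$, by combining the upper link \eqref{eq:link-cond} with the tail estimate above to produce an upper bound on $\|Eg\|_2$ strictly less than $M_0 \delta_n$, a contradiction. The main technical obstacle is the precise tuning of $j$: the tail $j^{-(b+1)} M + \epsilon_n^2$ must be small \emph{relative to $\|g\|_2^2$}, not merely small in absolute terms, which forces $j$ to scale with $\|g\|_2$ and produces the exact factor $\delta_n/\epsilon_n$ in the conclusion. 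Ensuring that $M_0$ and $C$ are universal constants requires careful propagation of constants through the decomposition and the link conditions; Assumptions~\ref{ass:H1}(i) and~\ref{ass:ill-posed} enter symmetrically in matching the exponents.
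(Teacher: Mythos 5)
Your handling of \eqref{eq:err-implication} is correct but takes a different (and somewhat over-engineered) route. You apply the reverse link condition directly to $g=f-f_0$ with a truncation level $j$ tuned to $\|g\|_2$, while the paper fixes $j = [n^{1/(b+2p+1)}]$ and works with the approximant $\DeltaFApprox$ from Lemma~\ref{lem:refined-sieve}. With the fixed $j$ one has $j^{-2p}\asymp \delta_n^2/\epsilon_n^2$ directly, and the tail terms $j^{-(b+1)}M$ and $\epsilon_n^2$ are both $O(\epsilon_n^2)$, so the conclusion drops out with no exponent-unwinding; your adaptive $j$ is legitimate but unnecessary. (Your tail bound $\|\mathrm{Proj}_{>j}g\|_2^2 \lesssim j^{-(b+1)}M+\epsilon_n^2$, obtained from the sieve decomposition and the standard $\|\mathrm{Proj}_{>j}g_h\|_2^2\le \lambda_{j+1}\|g_h\|_\cH^2$, is sound, as is the final unwinding using $\|g\|_2\gtrsim\epsilon_n$.) One minor advantage the paper gains from routing through $\DeltaFApprox$ is that the same quantity $g_j=E(\mathrm{Proj}_j\DeltaFApprox)\in\cI$ is reused later in Step~2 of the theorem proof where its $\cI$-norm must be controlled; your direct route does not produce such a test function, but that is not needed for the lemma itself.

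Your argument for \eqref{eq:err-implication-2} has a genuine gap. You claim the conclusion follows ``via the rearranged form'' of \eqref{eq:err-implication} when $\|g\|_2\ge M_0\epsilon_n$, but rearranging $\|Eg\|_2\gtrsim\|g\|_2\delta_n/\epsilon_n$ gives $\|g\|_2 \lesssim \epsilon_n\|Eg\|_2/\delta_n$, which is an \emph{upper} bound on $\|g\|_2$, the opposite of what \eqref{eq:err-implication-2} asserts. The contradiction argument for the complementary regime also fails: combining the upper link \eqref{eq:link-cond} with your tail bound at $j=[n^{1/(b+2p+1)}]$ yields only
\begin{equation*}
\|Eg\|_2^2 \;\lesssim\; \|\mathrm{Proj}_{j}g\|_2^2 + j^{-2p}\bigl(j^{-(b+1)}M+\epsilon_n^2\bigr) \;\lesssim\; \|g\|_2^2 + \delta_n^2,
\end{equation*}
so $\|g\|_2< M_0\epsilon_n$ gives $\|Eg\|_2\lesssim M_0\epsilon_n+\delta_n$, and since $\epsilon_n\gg\delta_n$ this is consistent with $\|Eg\|_2\ge M_0\delta_n$, hence no contradiction. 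The paper's proof of \eqref{eq:err-implication-2} instead proceeds through the estimate $\|g-g_j\|_2^2\lesssim\delta_n^2$ and a step of the form $\|\DeltaFApprox\|_2\gtrsim j^p\|g_j\|_2$, i.e., it tries to invert $E$ on the span of the first $j$ Mercer modes; your argument does not engage with this mechanism at all. As written, \eqref{eq:err-implication-2} is not established by your proposal.
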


\begin{proof}
Since $f\in\Theta_m$ and $f_0$ satisfies \eqref{eq:approx-relaxed-orig}, we can invoke
Lemma~\ref{lem:refined-sieve} on $\Delta f := f-f_0$, showing that there exists $\DeltaFApprox\in\cH$ such that
\begin{equation}\label{eq:delta-f-approx}
\|\DeltaFApprox\|_{\cH}^2 \lesssim n\delta_n^2, ~~ \|E(\Delta f-\DeltaFApprox)\|_2^2\lesssim \delta_n^2, ~~ 
\|\Delta f-\DeltaFApprox\|_2^2 \le %
C \epsilon_n^2, ~~ 
\|\Delta f-\DeltaFApprox\|^2 \le C, 
\end{equation} 
 where $C>0$ is a constant. Introduce the notations 
\begin{equation}\label{eq:thm-L2-notations-2}
j := \bigl[n^{\frac{1}{b+2p+1}}\bigr], ~~
g := E\Delta f, ~~
\tilde g:=E(\DeltaFApprox), ~~
g_j := E(\Proj{j}{\DeltaFApprox}).
\end{equation}

From \eqref{eq:delta-f-approx} we can see that for sufficiently large $M_0$ and $\|f-f_0\|_2\ge {M_0}\epsilon_n$, 
\begin{align*}
\|\DeltaFApprox\|_2 &\ge \|\Delta f\|_2 - \|\Delta f-\DeltaFApprox\|_2 
\ge \|\Delta f\|_2 - \sqrt{C}\epsilon_n\ge 
\frac{1}{2}\|\Delta f\|_2, \\  
\|\Proj{j}{\DeltaFApprox}\|_2^2 &= \|\DeltaFApprox\|_2^2 - \|\Proj{>j}{\DeltaFApprox}\|_2^2 \overset{(a)}{\ge} 
\frac{1}{4}\|\Delta f\|_2^2 -  \|\DeltaFApprox\|_\cH^2 j^{-(b+1)}
\SkipNOTE{\ge \frac{M}{2}\epsilon_n^2 - C n^{\frac{1}{b+2p+1} - \frac{b+1}{b+2p+1}} }
\ge \frac{1}{5} \|\Delta f\|_2^2,
\end{align*}
where (a) follows from Lemma~\ref{lem:sobolev} with $\tilde b = b+1$. And
\begin{align*}
\|E (\Delta f)\|_2^2 &\ge \frac{1}{2}\|E(\DeltaFApprox)\|_2^2 -  \|E(\Delta f-\DeltaFApprox)\|_2^2  \\ 
&\overset{(a)}{\ge} C_1' j^{-2p} \|\Proj{j}{\DeltaFApprox}\|_2^2 -  \|E(\Delta f-\DeltaFApprox)\|_2^2  %
\\ 
&\ge  \frac{1}{5}C_1' j^{-2p} \|\Delta f\|^2_2
-  \|E(\Delta f-\DeltaFApprox)\|_2^2   \\ 
&= C_1'' \epsilon_n^{-2}\|\Delta f\|_2^2\delta_n^2
- C_2 n^{-\frac{b+2p}{b+2p+1}}  
\gtrsim \epsilon_n^{-2}\|\Delta f\|_2^2\delta_n^2. 
\end{align*}
In the above, (a) follows from Assumption~\ref{ass:link}, 
the last inequality holds for $M\ge 2C_2/C_1''$, and we recall $C_2$ is from \eqref{eq:delta-f-approx}. This concludes the proof of \eqref{eq:err-implication}. 

It remains to prove \eqref{eq:err-implication-2}. First, for any $f\in\Theta_m$, we have
\begin{align}
\|g - g_j\|_2^2 &\le 2\|g-\tilde g\|_2^2 + 2\|\tilde g - g_j\|_2^2
    \overset{(a)}{\lesssim} \cancel{\delta_n^2 +} \|\tilde g-g_j\|_2^2
    = \|E(\Proj{>j}{\DeltaFApprox})\|_2^2 \nonumber \\ 
    &\overset{(b)}{\le}  j^{-2p}\|\Proj{>j}{\DeltaFApprox}\|_2^2  
    \overset{(c)}{\le}  j^{-2p} \|\DeltaFApprox\|_\cH^2 j^{-(b+1)} \lesssim 
    n^{\frac{-2p + 1 - (b+1)}{b+2p+1}} = \delta_n^2. \label{eq:g-gj-diff}
\end{align}
where (a) follows by \eqref{eq:delta-f-approx}, (b) by Assumption~\ref{ass:link}, and (c) by Lemma~\ref{lem:sobolev} with $\tilde b=b+1$. 
Thus, for $M_0$ sufficiently large and $f\in\Theta_m$ s.t.~$\|g\|_2=\|E \Delta f\|_2 \ge M_0\delta_n$, we have, by Assumption~\ref{ass:link}, 
\begin{align*}
\|\DeltaFApprox\|_2 &\gtrsim j^{p} \|g_j\|_2 \ge j^{p} (\|g\|_2-\|g-g_j\|_2) 
\ge \frac{\|g\|_2}{2\delta_n} \epsilon_n, \\ 
\SkipNOTE{$\|g-g_j\|_2\le \delta_n$ is Eq.~\eqref{eq:g-gj-diff}.}
\|\Delta f\|_2 &\ge \|\DeltaFApprox\|_2 - \|\Delta f-\DeltaFApprox\|_2 \overset{(a)}{\gtrsim} 
\left(\frac{\|g\|_2}{2\delta_n}-\sqrt{C}\right) \epsilon_n \ge 
\frac{\|g\|_2}{3\delta_n} \epsilon_n,
\end{align*}
where (a) also follows from \eqref{eq:delta-f-approx}. This completes the proof.
\end{proof}

We now complete the proof of the theorem, which we restate below with minor changes in notations. 

\begingroup
\def\thetheorem{\ref{thm:L2}}
\begin{theorem}
Fix $\lambda=1, \effnu = C \delta_n^2 \asymp n^{-\frac{b+2p}{b+2p+1}}$ where $C>0$ is a large constant. 
Then there exists $M>0$ such that for $\epsilon_n^2 = n^{-\frac{b}{b+2p+1}}$, we have 
\begin{align}
\PP_{\data} \Pi\!\left(
    \{f: \|f-f_0\|_2^2\ge M \epsilon_n^2\}
    \givendata
\right) &\rightarrow 0. \label{eq:L2-rate} \\ %
\PP_{\data} \Pi\!\left(
    \{f: \|E(f-f_0)\|_2^2\ge M \delta_n^2\}
    \givendata
\right) &\rightarrow 0. \label{eq:L2-rate-CE}
\end{align}
Furthermore, if Assumption~\ref{ass:std} is relaxed so that only $\by-f_0(\bx)$ is subgaussian will hold, the above two equations will hold with $M$ replaced by $M_n$, where $M_n$ is any slowly growing sequence such that 
$
\lim_{n\to\infty} \frac{M_n}{\log n} = \infty. 
$
\end{theorem}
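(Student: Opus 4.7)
\textbf{Proof Proposal for Theorem~\ref{thm:L2}.} The plan is to apply the standard posterior contraction decomposition
\begin{equation*}
\Pi(\err\givendata) = \frac{\int_{\err}\exp(-\lambda^{-1}n\,\ell_n(f))\,\Pi(df)}{\int \exp(-\lambda^{-1}n\,\ell_n(f))\,\Pi(df)} =: \frac{N}{D},
\end{equation*}
where $\err$ is either $\{\|f-f_0\|_2\ge M\epsilon_n\}$ or $\{\|E(f-f_0)\|_2\ge M\delta_n\}$. Proposition~\ref{lem:den} already supplies the denominator bound $D\gtrsim \exp(-C_{\star} n\delta_n^2)$ on a $\PP_{\data}$-probability $\to 1$ event $E_{den}$. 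The remaining work is to control the numerator $N$, and for this I would split $N = N_{\mathrm{out}}+N_{\mathrm{in}}$ using the sieve $\Theta_m$ from Corollary~\ref{corr:valid-sieve} with $m = [n^{(b+1)/(b+2p+1)}]$, so that $m\tau_m^2 \asymp n\delta_n^2$.

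The outer piece $N_{\mathrm{out}} := \int_{\err\cap\Theta_m^c}\exp(-\lambda^{-1}n\ell_n)\Pi(df) \le \Pi(\Theta_m^c) \le e^{-C'm\tau_m^2}$. Since the constant $C$ in the sieve definition can be taken arbitrarily large (inflating $C'$), we can arrange $C' > C_{\star}$ so that $N_{\mathrm{out}}/D \to 0$ on $E_{den}$. The remaining task is to bound $N_{\mathrm{in}}$ on $\err\cap\Theta_m$. First, by Lemma~\ref{lem:num-step-2}, on $\Theta_m$ the events $\{\|f-f_0\|_2\ge M\epsilon_n\}$ and $\{\|E(f-f_0)\|_2\ge c M\delta_n\}$ are equivalent up to an absolute constant. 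So it suffices to handle the second form, which also proves both contraction rates simultaneously (mirroring the comment in the paper that \eqref{eq:L2-rate-maintext} and \eqref{eq:L2-rate-CE-maintext} must be proved together).

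The core step is then a uniform lower bound of the form
\begin{equation*}
\ell_n(f) \gtrsim \|E(f-f_0)\|_2^2 - C'' \delta_n^2 \quad\text{uniformly over } f\in\Theta_m,
\end{equation*}
holding on a $\PP_{\data}$-probability $\to 1$ event. To obtain this I would reverse the argument in Step~2 of the proof of Proposition~\ref{lem:den}: instead of upper bounding $\sup_g$ by plugging in arbitrary $g$, I lower bound it by choosing a specific near-maximizer. A natural candidate is $g_j = E(\Proj{j}\DeltaFApprox) \in \cI$ (cf.\ Lemma~\ref{lem:num-step-2}), whose $\cI$-norm is controlled by Assumption~\ref{ass:I-approx}~(ii) applied to the approximant $\DeltaFApprox \in \bar\cH$ from Lemma~\ref{lem:refined-sieve}, and whose $L_2$-distance to $E(f-f_0)$ is $O(\delta_n)$ by \eqref{eq:g-gj-diff}. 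Combining this pointwise lower bound with the same localized-Rademacher concentration that underlies Proposition~\ref{lem:den} (applied uniformly in $f\in\Theta_m$, using that $\Theta_m$ is contained in an RKHS ball of radius $\lesssim \sqrt{m\tau_m^2}$) yields the uniform lower bound. Plugging in $\|E(f-f_0)\|_2^2 \ge c^2 M^2\delta_n^2$ then gives $N_{\mathrm{in}} \le \exp(-(c^2 M^2/(2\lambda))n\delta_n^2)$, so $N_{\mathrm{in}}/D$ vanishes once $c^2 M^2/(2\lambda) > C_{\star}$.

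The main obstacle I expect is the uniform lower bound on $\ell_n(f)$ over $\Theta_m$: one needs simultaneous control of the inner maximum over $g\in\cI$ and of the outer range of $f$. The symmetrization/Rademacher arguments from \cite{dikkala_minimax_2020} used for the upper bound in Proposition~\ref{lem:den} carry over with the usual peeling/chaining over $f$, but care is required to ensure that the approximant $g_j$ remains admissible (bounded $\cI$-norm) uniformly in $f$, which is where Assumption~\ref{ass:I-approx}~(ii) combined with the refined decomposition in Lemma~\ref{lem:refined-sieve} is essential. Finally, the subgaussian extension follows by the standard truncation trick: replace $B$ by $B\log n$, redo the Bernstein steps in Proposition~\ref{lem:den} and the concentration above, and control the truncation bias by a union bound over $n$ samples; this inflates $\epsilon_n$ and $\delta_n$ by a logarithmic factor as stated.
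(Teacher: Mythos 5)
Your high-level plan matches the paper's: decompose into numerator and denominator, use Proposition~\ref{lem:den} for the denominator, discard $\Theta_m^c$ by prior mass, invoke Lemma~\ref{lem:num-step-2} to collapse the two error events, and lower bound $\ell_n$ by plugging the near-maximizer $g_j = E(\Proj{j}\DeltaFApprox)$ whose $\cI$-norm is controlled via Assumption~\ref{ass:I-approx}~(ii) and Lemma~\ref{lem:refined-sieve}. However, there is a concrete gap in how you propose to make the lower bound on $\ell_n$ rigorous. You claim $\Theta_m$ ``is contained in an RKHS ball of radius $\lesssim \sqrt{m\tau_m^2}$'' and then invoke peeling/chaining to get a \emph{uniform} bound. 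That containment is false: the sieve from Corollary~\ref{corr:valid-sieve} has the form $\Theta_m = \{f_h+f_e : \|f_h\|_\cH^2 \le Cm\tau_m^2,\ \|f_e\|_2^2 \le C\tau_m^2,\ \|f_e\| \le C\}$, and the $f_e$ component has no $\cH$-norm control (it lives in the larger power space). So a chaining argument over an $\cH$-ball does not apply, and a uniform lower bound on $\ell_n$ over $\Theta_m$ is substantially harder than you suggest.

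The paper avoids uniformity entirely. It uses the standard Fubini trick already visible in Step~1 of Proposition~\ref{lem:den}: for each \emph{fixed} $f\in\Theta_m\cap\err$, a one-sided Bernstein bound (Lemma~\ref{lem:bernstein}) for the single function $g_j(f)$ gives a high-probability event $A(f,\data)$ on which $\ell_n(f) \gtrsim U^2\delta_n^2$ where $U := \|f-f_0\|_2/\epsilon_n$. Then the numerator is bounded by
$
\Pi(\Theta_m^c) + \int_{\Theta_m\cap\err}\Pi(df)\bigl[\PP_\data A^c(f,\data) + \sup_{A(f,\data)} e^{-n\ell_n(f)}\bigr],
$
where the expectation over $\data$ has been pushed inside the prior integral. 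The per-$f$ tail probability $\PP_\data A^c(f,\data) \le \exp(-cU^2 n\delta_n^2/(U^{1/(b+1)}+B)^2)$ and the deterministic bound $\ell_n(f)\gtrsim U^2\delta_n^2$ both scale favorably in $U$ (because $\|g_j\|_\cI$ and $\|f-f_0\|$ only grow like $U^{1/(b+1)}$, cf.\ \eqref{eq:f-sup-norm-bound} and \eqref{eq:gj-I-norm-bound}), which is what lets the integral over $\err$ beat the denominator. This per-$f$-plus-Fubini structure is the ingredient your outline is missing; without it you would need to develop a genuine supremum bound over a set that is not an RKHS ball, which is a different and more delicate argument than what you cite from \cite{dikkala_minimax_2020}.

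Your treatment of the sub-Gaussian extension (replace $B$ by a slowly growing threshold, add the truncation event to $E_{den}$) is essentially what the paper does.
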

\addtocounter{theorem}{-1}
\endgroup

\subsubsection{Proof of Theorem~\ref{thm:L2}, Bounded Noise Case}\label{app:proof-L2-main}

Before we start, 
note that throughout the proof the constants hidden in $\lesssim,\asymp,\gtrsim$ will not depend on $M$ or $B$. We use $\err$ to denote the function set in either \eqref{eq:L2-rate} or \eqref{eq:L2-rate-CE}. 

\parspace
\underline{{(Step 1)}} 
Let $E_n(\data) := E_{den}$ which is defined in Proposition~\ref{lem:den}. 
Consider the decomposition 
\begin{align}
\PP_{\data} \Pi\!\left(
    \err\givendata
\right) 
&\le 
\PP_{\data}(E_n^c) + \PP_{\data}\!\left[
\mbf{1}_{E_n}\cdot \frac{\tilde\Pi(\err\givendata)}{\tilde\Pi(\Theta\givendata)}
\right],\label{eq:contraction-decomposition-1}
\end{align}
where $\tilde\Pi(A\givendata) := \int_A \exp(-\lambda^{-1}n\ell_n(f))\Pi(df)$ is the unnormalized posterior measure. 
By Proposition~\ref{lem:den}, on $E_n$ the denominator above is greater than $\exp(-C_{den} n\delta_n^2)$, for some $0<C_{den}\lesssim B^2$. So 
it remains to show that, there exists some $C_{num}>C_{den}$ such that 
when $M$ is sufficiently large, 
\begin{equation}\label{eq:contraction-reduced-form-goal}
\PP_{\data}\left[\mbf{1}_{E_n}\tilde\Pi(\err\givendata)\right] \le \exp\left(-C_{num} n \delta_n^2\right).
\end{equation}
Fix $m := [n^{\frac{b+1}{b+2p+1}}]$. 
We decompose the LHS above as 
\begin{align*} 
&{\phantom{{}={}}}\PP_{\data}\!\left[\mbf{1}_{E_n}\int_{\err} \Pi(df) \exp\!\left(-\cancel{\lambda^{-1}}n\ell_n(f)\right)\right] \\ 
&\le
\Pi(\Theta_m^c) + \int_{\Theta_m\cap\err} \Pi(df) \left(\PP_{\data} A(f,\data)^c + \PP_{\data}\big(\mbf{1}_{A(f,\data)} e^{-n\ell_n(f)}\big)\right),\numberthis\label{eq:contraction-reduced-goal-intermediate}
\end{align*}
where the event $A$ will be defined in Step 2. 
Since $\Pi(\Theta_m^c)\le \exp(-C_{gp'} n\delta_n^2)$, where the constant $C_{gp'}$ can be chosen to be smaller than $C_{num}$,\footnote{
We can replace $\Theta_m$ with $\Theta_{[m/C]}$ at the cost of inflating $\tau_m$ by a constant, which can be absorbed into $M$.
}
it suffices to deal with the integral term. 
For its former half involving $A(f,\data)^c$, we will specify $A$ so that it has the required probability. 
For the latter half, observe that by Lemma~\ref{lem:num-step-2}, 
for sufficiently large $M$, 
the function sets $\{\|f-f_0\|_2\ge M\epsilon_n\}$ and $\{\|E(f-f_0)\|_2\ge M\delta_n\}$ are equivalent (up to an inconsequential scaling) after intersecting with $\Theta_m$. Therefore, it suffices to prove one of \eqref{eq:L2-rate} or \eqref{eq:L2-rate-CE}, and the other will follow; and in the following, we can consider $f\in\err\cap\Theta_m$ as satisfying
\begin{equation}\label{eq:err-implication-in-thm}
\|f-f_0\|_2^2 \ge M\epsilon_n^2, ~~ \|E(f-f_0)\|_2^2 \ge C\delta_n^2 \frac{\|f-f_0\|_2^2}{\epsilon_n^2},
\end{equation}
which is valid for sufficiently large $M$; and we recall $C$ is determined by Assumption~\ref{ass:link}.  
In the remainder of the proof, we lower bound $\ell_n(f)$ using the above display. %

\parspace
\underline{(Step 2)} 
We use an argument inspired by \cite{dikkala_minimax_2020}. 
Introduce the notations 
$$
\Psi_{n}(f,g) := \frac{1}{n}\sum_{i=1}^n (f(x_i)-y_i)g(z_i), ~~ \Psi(f,g) =\EE_{\data} \Psi_n(f,g),
$$
so that $\ell_n(f) = \sup_{g\in\cI} 2\Psi_n(f,g) - \|g\|_n^2 - \effnu\|g\|_\cI^2.$ 
Recall the notations from Lemma~\ref{lem:num-step-2}: 
$g := E(f-f_0)$, $\tilde g := E(\DeltaFApprox)$, $g_j := E(\Proj{j}{\DeltaFApprox})\in\cI$, $j := [n^{\frac{1}{b+2p+1}}]$; 
and $\DeltaFApprox$ defined in \eqref{eq:delta-f-approx}.

Let $\bl := (f(\bx) - \by)g_j(\bz) - g_j(\bz)^2$, and $l_1,\ldots,l_n$ be its realizations using $x_i,y_i,z_i$. Then 
$$
\begin{aligned}
|\bl| \le (\|f-f_0\|+B+\|g_j\|)^2, ~~& 
\mrm{Var} \bl \le \EE_{\data} \bl^2 \le (\|f-f_0\|+B+\|g_j\|)^2 \|g_j\|_2^2,\\ 
\Psi_n(f,g_j)-\|g_j\|_n^2 &= \frac{1}{n} \sum_{i=1}^n l_i, ~~ 
\Psi(f,g_j)-\|g_j\|_2^2 = \EE_{\data} \bl. 
\end{aligned}
$$
(In the first inequality above, observe $\bl = (f(\bx)-f_0(\bx)+f_0(\bx)-\by-g_j(\bz))g_j(\bz)$, and recall $|y_i-f_0(x_i)|\le B$ by assumption.)  
By %
Lemma~\ref{lem:bernstein}\SkipNOTE{ with $\eta=1/16$}, we have, for any fixed $f\in\Theta_m$, %
\begin{equation}\label{eq:Afd-defn}
\PP\left(\Psi_n(f,g_j)-\|g_j\|_n^2 \ge \Psi(f,g_j)-\frac{3}{2}%
\|g_j\|_2^2 \right)\ge 1- \exp\left(-\frac{%
n\|g_j\|_2^2}{16(\|f-f_0\|+B+\|g_j\|)^2}\right).
\end{equation}
We define $A(f,\data)$ as the event above; its required probability bound from Step 1 will be verified shortly. On this event we have,
\begin{align*}
\ell_n(f) &= \sup_{g\in\cI} 2\Psi_n(f,g) - \|g\|_n^2 - \effnu\|g\|_\cI^2  \\ 
&\overset{(a)}{\ge} 2\Psi_n(f,g_j) - \|g_j\|_n^2 - \effnu\|g_j\|_\cI^2  
\overset{(b)}{\ge} 2\Psi(f,g_j) - \frac{3}{2}\|g_j\|_2^2 - \effnu\|g_j\|_\cI^2  \\ 
&= 2\EE((f(\bx)-\by)(g(\bz)-g(\bz)+g_j(\bz)) - \frac{3}{2}\|g_j-g+g\|_2^2 - \effnu\|g_j\|_\cI^2  \\ 
&\overset{(c)}{=} {2\|g\|_2^2 - 2\|g\|_2\|g-g_j\|_2} - 
{\frac{3}{2}(\|g\|_2^2 + \|g_j-g\|_2^2 + 2\|g\|\|g_j-g\|)} 
- \effnu\|g_j\|_\cI^2  \\ 
&\overset{(d)}{\ge} c_1\|g\|_2^2 - c_2\|g-g_j\|_2^2 - \effnu\|g_j\|_\cI^2 
\overset{(e)}{\ge} c_1\|g\|_2^2 - C_2 \delta_n^2 - \effnu\|g_j\|_\cI^2 \numberthis\label{eq:ell-n-rhs}. 
\end{align*}
In the above, (a) follows because $g_j\in\cI$, (b) by definition of $A(f,\data)$, (c) by the tower property, (d) follows 
from the inequality $2\|g\|_2\|g-g_j\|_2 \le \epsilon^2 \|g\|_2^2 + \epsilon^{-2}\|g-g_j\|_2^2$ for any $\epsilon>0$, $c_1,c_2$ are universal constants, and (e) follows from \eqref{eq:g-gj-diff}. 

It remains to bound $\|g_j\|_\cI$ and $\PP(A^c(f,\data))$. 
Let $U := {\|f-f_0\|_2}/{\epsilon_n}$, so that $U\ge \sqrt{M}$ on $\err$. 
\SkipNOTE{
    Was $\gtrsim \sqrt{M}$ before, I think it's a typo.
}
When $M$ is sufficiently large, we have $U>1$, and 
\begin{align}
    U\delta_n &\overset{(a)}{\lesssim} 
\|g\|_2  \le \|f-f_0\|_2 = U\epsilon_n \le U, \label{eq:g-L2} \\ 
\|\DeltaFApprox\|_2 &\le \|f-f_0-\DeltaFApprox\|_2 + \|f-f_0\|_2 \overset{(b)}{\le} 2U\epsilon_n, \label{eq:deltafapprox-L2}
\end{align}
where (a) is by \eqref{eq:err-implication-in-thm}, and (b) is by \eqref{eq:delta-f-approx}. 
\SkipNOTE{\eqref{eq:err-implication} holds for sufficiently large $M$, so it will hold for $U^2>M$}
Now, %
\begin{align*}
\|f-f_0\|^2 &\le 2\|\DeltaFApprox\|^2 + 2\|f-f_0-\DeltaFApprox\|^2 
\overset{(a)}{\lesssim} \|\DeltaFApprox\|^2 + 1 \\
&\overset{(b)}{\lesssim} (\|\DeltaFApprox\|_\cH^2)^{\frac{b}{b+1}} (\|\DeltaFApprox\|_2^2)^{\frac{1}{b+1}} + 1
\overset{(c)}{\lesssim} 
n^{\frac{1}{b+2p+1}\cdot \frac{b}{b+1}} (U^2n^{\frac{-b}{b+2p+1}})^{\frac{1}{b+1}} + 1 \le 
2U^{\frac{2}{b+1}}, \numberthis\label{eq:f-sup-norm-bound} \\ 
\|g_j\|_\cI^2 &= \|E(\Proj{j}{\DeltaFApprox})\|_\cI^2 
\overset{(d)}{\lesssim} \|\Proj{j}{\DeltaFApprox}\|_{\bar\cH}^2 
\\ 
&\overset{(b)}{\lesssim}
\|\Proj{j}{\DeltaFApprox}\|_\cH^{\frac{2b}{b+1}} \|\Proj{j}{\DeltaFApprox}\|_2^{\frac{2}{b+1}}
\overset{(c')}{\lesssim} 
n^{\frac{1}{b+2p+1}\cdot \frac{b}{b+1}} 
(U^2n^{\frac{-b}{b+2p+1}})^{\frac{1}{b+1}}
= U^{\frac{2}{b+1}}. \numberthis\label{eq:gj-I-norm-bound}
\end{align*}
In the above, (a) follows from \eqref{eq:delta-f-approx}, (b) from Lemma~\ref{lem:interpolation}, (c) from \eqref{eq:delta-f-approx} and \eqref{eq:deltafapprox-L2}, (c') additionally from the fact that $\mrm{Proj}_j$ is an orthogonal projection in both $L_2$ and $\cH$, and (d) by Assumption~\ref{ass:I-approx} (ii). 

It now follows that $
\|g_j\| \lesssim \|g_j\|_\cI \lesssim U^{\frac{2}{b+1}}. 
$
Plugging this inequality, \eqref{eq:g-L2} and \eqref{eq:f-sup-norm-bound} into \eqref{eq:Afd-defn}, we have 
$$
-\log \PP_{\data}(A^c(f,\data)) \ge \frac{n\|g_j\|_2^2}{16(\|f-f_0\|+B+\|g_j\|)^2} 
\gtrsim \frac{U^2}{(U^{\frac{1}{b+1}}+B)^2} n\delta_n^2.
$$
Therefore, when $U\gtrsim B$ is sufficiently large, we have 
$$
\sup_{f\in\Theta_m\cap\err}\PP_{\data}(A^c(f,\data)) \le e^{-2C_{num}n\delta_n^2}, 
$$
which fulfills the requirement in Step 1. 

Plugging \eqref{eq:g-L2}, \eqref{eq:gj-I-norm-bound} and $\effnu\asymp \delta_n^2$ into \eqref{eq:ell-n-rhs}, we have 
$$
\ell_n(f) \ge C_1 U^2 \delta_n^2 - C_2 \delta_n^2 - C_3 U^{\frac{2}{b+1}} \delta_n^2. 
$$
As $b>1$, there exists $C,C'>0$ such that when $M=U^2$ is sufficiently large, we have, for all $f\in\Theta_m\cap\err$, 
$$
\ell_n(f) \ge C U^2 \delta_n^2 = C' M\delta_n^2.
$$
This completes the proof for \eqref{eq:contraction-reduced-form-goal}, and by Step 1, the proof for \eqref{eq:L2-rate} and \eqref{eq:L2-rate-CE}. 

\SkipNOTE{
removed the last remark. Turns out the union bound over $g$ wasn't from Dikkala; it was my stupidity.
}

\subsubsection{General Subgaussian Case}

In this case we replace $B$ with $B_n:=\gamma_n\sqrt{\log n}$, where $\gamma_n$ is any slowly growing sequence $\to\infty$, so that $\PP_{\data}(\max_i |\by_i-f_0(\bx_i)|\le B_n)\to 1$. %
We add this event to the definition of $E_{den}$. Then the proof follows, with all occurrences of $B$ replaced by $B_n$. As mentioned above, %
it suffices to have $M_n\gtrsim B_n^2$. 

Now the proof for Theorem~\ref{thm:L2} is complete. \hfill \BlackBox \\

\subsection{Proof of Corollary~\ref{thm:contraction-alt-norm}}\label{app:proof-alt-norm}

The proof is also based on reduction to Theorem~\ref{thm:L2}, but requires a specific sieve. In particular, we choose 
$
\Theta_m=\Theta_{m,b''}
$ defined in Corollary~\ref{corr:valid-sieve}, where $\max\{b_{emb},b'\}< b''<b$. Recall it also satisfies \eqref{eq:valid-sieve}. We also need to derive some additional properties for $\Theta_m$. 

Below we denote the norm $\| \cdot \|_{\cH^{\frac{b^\prime}{b+1}}}$ as $\| \cdot \|_{b^\prime}$  for $b^\prime \in [0, b)$.

\begin{lemma}\label{lem:sobolev-reduction}
Let $m=[n^{\frac{b+1}{b+2p+1}}], \Theta_m$ be defined as above. 
Let $f\in\Theta_m$. Then 
(i) For the decomposition $f=\tilde f_h+\tilde f_e$ as defined in \eqref{eq:valid-sieve}, it additionally holds that 
$
\|\tilde f_e\|_{b'}^2 \lesssim m^{-\frac{b-b'}{b+1}}. 
$
(ii) Suppose $f$ additionally satisfies 
$\|f - f_0\|_{b'}^2 > M n^{-\frac{b-b'}{b+2p+1}},$ where $M>0$ is sufficiently large. Then 
$$
\|f-f_0\|_2^2 \gtrsim M n^{-\frac{b}{b+2p+1}}. 
$$
\end{lemma}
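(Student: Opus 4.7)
The plan is to lean on a single interpolation inequality on the power spaces,
$$
\|g\|_{[\cH]^{\theta\beta}}^2 \le \|g\|_2^{2(1-\theta)}\,\|g\|_{[\cH]^\beta}^{2\theta}, \qquad \theta\in[0,1],
$$
which follows from H\"older's inequality applied to the Fourier coefficients $\{\<g,\bar\varphi_j\>_2\}$ weighted by appropriate powers of $\lambda_j$. Recall $\tau_m^2=m^{-b/(b+1)}$ and $m\asymp n^{(b+1)/(b+2p+1)}$, so $m\tau_m^2\asymp m^{1/(b+1)}$ and $m^{-(b-b')/(b+1)}\asymp n^{-(b-b')/(b+2p+1)}$; the abbreviation $\|\cdot\|_{b'}$ already adopted in the lemma denotes the $[\cH]^{b'/(b+1)}$-norm.

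For Part~(i), the sieve $\Theta_m=\Theta_{m,b''}$ directly provides $\|\tilde f_e\|_2^2\lesssim\tau_m^2$ and $\|\tilde f_e\|_{b''}^2\lesssim m^{-(b-b'')/(b+1)}$. I would then apply the interpolation above with $\theta=b'/b''$ and $\beta=b''/(b+1)$; the exponent of $m$ on the right-hand side collapses to $-(b-b')/(b+1)$ after direct arithmetic, which is the claimed bound.

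For Part~(ii), I first build a decomposition $\Delta f := f-f_0 = g_h+g_e$ whose three norm bounds mimic those of $\Theta_m$. On the $f$ side I use $\tilde f_h+\tilde f_e$. For $f_0$ I would project onto the span of the first $m':=[m^{1/(b+1)}]$ Mercer basis functions; using $f_0\in\bar\cH=[\cH]^{b/(b+1)}$ and $\lambda_{m'}\asymp m^{-1}$, a term-by-term calculation produces $f_0=f_0^h+f_0^e$ with $\|f_0^h\|_\cH^2\lesssim m\tau_m^2$, $\|f_0^e\|_2^2\lesssim\tau_m^2$, and $\|f_0^e\|_{b''}^2\lesssim m^{-(b-b'')/(b+1)}$. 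The differences $g_h:=\tilde f_h-f_0^h$, $g_e:=\tilde f_e-f_0^e$ then inherit all three bounds. The mildly delicate step here is the choice of truncation level $m'$: the exponent $1/(b+1)$ is exactly what converts the $\bar\cH$-regularity of $f_0$ into an $\cH$-bound of order $m\tau_m^2$, and any other choice would mismatch the sieve orders.

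With the decomposition in hand, the triangle inequality gives $\|\Delta f\|_{b'}\le \|g_h\|_{b'}+\|g_e\|_{b'}$. Part~(i) applied to $g_e$ (its proof only uses the listed norm bounds) yields $\|g_e\|_{b'}^2\lesssim n^{-(b-b')/(b+2p+1)}$. For $g_h\in\cH$, interpolating between $L_2$ and $\cH=[\cH]^1$ with $\theta=b'/(b+1)$ gives
$$
\|g_h\|_{b'}^2 \lesssim (m\tau_m^2)^{b'/(b+1)}\,\|g_h\|_2^{2(1-b'/(b+1))} = m^{b'/(b+1)^2}\,\|g_h\|_2^{2(1-b'/(b+1))},
$$
and $\|g_h\|_2\le \|\Delta f\|_2+\|g_e\|_2\lesssim \|\Delta f\|_2+\tau_m$. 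I then split into cases. If $\|\Delta f\|_2\lesssim\tau_m$, substituting $\|g_h\|_2\lesssim\tau_m$ collapses the right side to $m^{-(b-b')/(b+1)}\asymp n^{-(b-b')/(b+2p+1)}$; together with the $g_e$-bound, this contradicts the hypothesis once $M$ is sufficiently large, so this case is ruled out. In the complementary regime $\|\Delta f\|_2\gtrsim\tau_m$ we have $\|g_h\|_2\lesssim \|\Delta f\|_2$; solving the resulting inequality for $\|\Delta f\|_2$ and raising to the $(b+1)/(b+1-b')$-th power yields $\|\Delta f\|_2^2\gtrsim M^{(b+1)/(b+1-b')}n^{-b/(b+2p+1)}\gtrsim Mn^{-b/(b+2p+1)}$ for $M\ge 1$, which is the desired bound.
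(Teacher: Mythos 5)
Your proposal is correct, and its engine is genuinely different from the paper's. The paper handles both parts by exploiting the \emph{spectral-truncation structure} of the pieces: for (i) it uses that $\tilde f_e$ is literally the tail projection $\mrm{Proj}_{>j}f_e$ with $j\asymp m^{1/(b+1)}$ and gains the factor $j^{-(b''-b')}$ directly from Lemma~\ref{lem:sobolev}~\ref{item:sobolev-norm}; for (ii) it further truncates the $\cH$-part of $f-f_0$ at level $m'$ so that $f_h'$ is supported on the first $m'$ Mercer modes, whence the \emph{reverse} inequality $\|f_h'\|_2^2 \ge \lambda_{m'}^{b'/(b+1)}\|f_h'\|_{b'}^2$ converts the $[\cH]^{b'/(b+1)}$-lower bound into an $L_2$-lower bound in one step, with no case split. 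You instead run everything through the single forward interpolation inequality $\|g\|_{[\cH]^{\theta\beta}}^2 \le \|g\|_2^{2(1-\theta)}\|g\|_{[\cH]^\beta}^{2\theta}$, never requiring the pieces to be truncations, which is why you can apply the (i)-argument directly to $g_e$ and handle $g_h$ with $\beta=1$; the cost is the two-case analysis on the size of $\|\Delta f\|_2$ to close the loop. The exponents you report check out in both cases — in particular the final exponent $(b+1)/(b+1-b')\ge 1$ makes the $M$-dependence at least linear, which is all that is needed — and your projection bounds for $f_0^h$ and $f_0^e$ using $f_0\in\bar\cH$ are exactly the computations hidden inside the paper's appeal to Assumption~\ref{ass:approx-relaxed} and Lemma~\ref{lem:approx-validation}. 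Net assessment: your route is a bit more elementary and portable (interpolation only), the paper's is a bit shorter because the truncation structure gives a direct lower bound.
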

\begin{proof}
(i): 
Recall the construction in Corollary~\ref{corr:valid-sieve}: $\tilde f_e = \Proj{>j}{f_e}$ for $j=[m]^{\frac{1}{b+1}}$. By a similar argument as in Corollary~\ref{corr:valid-sieve} and Lemma~\ref{lem:sobolev}~\ref{item:sobolev-norm}, we find 
$
\|\tilde f_e\|_{b'}^2 \lesssim m^{-\frac{b-b'}{b+1}}. 
$

(ii):
Using a similar argument to Lemma~\ref{lem:approx-validation}, the condition $f_0\in\bar\cH$ implies that $\fApproxM$ defined in Assumption~\ref{ass:approx-relaxed} also satisfy 
\begin{equation}\label{eq:approx-sobolev}
\|f_0 - \fApproxM\|_{b'}^2 \le m^{-\frac{b-b'}{b+1}}.
\end{equation}
By Assumption~\ref{ass:approx-relaxed}, \eqref{eq:approx-sobolev} and the definition of $\Theta_m'$, we have $f-f_0 = \tilde f_h' + \tilde f_e'$, where 
$$
\|\tilde f_h'\|_\cH^2 \lesssim n^{\frac{1}{b+2p+1}}, ~~ 
\|\tilde f_e'\|_2^2 \lesssim n^{-\frac{b}{b+2p+1}}, ~~
\|\tilde f_e'\|_{b'}^2 \lesssim n^{-\frac{b-b'}{b+2p+1}}. 
$$
Let $m'=[n^{\frac{1}{b+2p+1}}], f_h' = \Proj{m'}{\tilde f_h'}, f_e' = f-f_0-f_h'$. Then by Lemma~\ref{lem:sobolev} and a similar argument for the $\|\cdot\|_{b'}$ norm, we can show that $f_e'$ satisfies the same norm inequalities as above. And since $f_h'$ is a truncation, $$
\|f_h'\|_2^2 \ge \lambda_{m'}^{-\frac{b'}{b+1}} \|f_h'\|_{b'}^2 
\asymp n^{-\frac{b'}{b+2p+1}} \|f_h'\|_{b'}^2 
\gtrsim n^{-\frac{b'}{b+2p+1}} (\|f-f_0\|_{b'}^2  - \|f_e'\|_{b'}^2)
\gtrsim M n^{-\frac{b}{b+2p+1}}.
$$
And $\|f-f_0\|_2^2 \ge (\|f_h'\|_2 - \|f_e'\|_2)^2 \gtrsim M n^{-\frac{b}{b+2p+1}}$ as claimed.
\end{proof}

\begin{proof}[Proof of Corollary~\ref{thm:contraction-alt-norm}]
Lemma~\ref{lem:sobolev-reduction} shows the intersection of $\Theta_m$ and the parameter region defined in Theorem~\ref{thm:contraction-alt-norm} is a subset of $\err$ in Theorem~\ref{thm:L2}. 
Thus, we can follow the proof in Appendix~\ref{app:proof-L2-main} and obtain Corollary~\ref{thm:contraction-alt-norm}. %
\end{proof}

\subsection{Proof for Theorem~\ref{thm:credible-ball}}\label{app:proof-cred-ball}

Consider the inequality 
\begin{align*}
\PP(\Pi(\{\|f-\hat f_n\|_{b'} \le \rho \epsilon_{n,b'}\}\givendata)) &\le \PP(E_{den}^c) + \PP\!\left(\mbf{1}_{E_{den}}
\frac{\Pi(\{\|f-\hat f_n\|_{b'} \le \rho \epsilon_{n,b'}\})}{\int \exp(-\lambda^{-1}n\ell_n(f)) \Pi(df)}
\right), \\ 
&\overset{(a)}{\le} o(1) + 
e^{C_1 n^{\frac{1}{b+2p+1}}} \Pi(\{\|f-\hat f_n\|_{b'} \le \rho \epsilon_{n,b'}\}) \\ 
&\overset{(b)}{\le} o(1) + 
e^{C_1 n^{\frac{1}{b+2p+1}}} \Pi(\{\|f\|_{b'} \le \rho \epsilon_{n,b'}\}), 
\end{align*}
where (a) holds by Proposition~\ref{lem:den}, and (b) by \citet[Lemma I.28]{ghosal2017fundamentals}. 

Recall that by Lemma~\ref{lem:preliminary-sieve} we have, for sufficiently small $\epsilon$, 
\begin{equation}\label{eq:smb-prob-L2}
\log \Pi(\{\|f\|_{b'} \le \epsilon\}) \le -C_{b'} \epsilon^{-\frac{2}{b}} 
\end{equation}
where $C_{b'}$ is also determined by $k_x$. Thus, for 
$
\rho \le (2C_1/C_{b'})^{-\frac{b}{2}}, 
$ we have 
$$
\log \Pi(\{\|f\|_{b'} \le \rho\epsilon_{n,b'}\}) \le -2 C_1 \epsilon^{-\frac{2}{b}}_n = -2C_1 n^{\frac{1}{b+2p+1}}, 
$$
leading to 
$$
e^{C_1 n^{\frac{1}{b+2p+1}}} \Pi(\{\|f\|_{b'} \le \rho \epsilon_{n,b'}\}) \le 
    e^{-C_1 n^{\frac{1}{b+2p+1}}} \rightarrow 0, 
$$
which completes the proof. 
\hfill\BlackBox\\[2mm]

Note that a similar result hold for the $\|E(\cdot)\|_2$ norm as well, if we replace Lemma~\ref{lem:preliminary-sieve} with \eqref{eq:small-ball-condexp-ub}. But $\|E(\cdot)\|_2$-norm balls cannot be constructed with finite samples. 

\subsection{Proof of Theorem~\ref{thm:GPR-L2}}\label{app:proof-GPR}

Recall that in the GPR setting, we have $p=0$, $m=[n^{\frac{b+1}{b+2p+1}}]=n, \Theta_m=\Theta_n$, and $\delta_n=\epsilon_n$. The posterior is 
$
\frac{d\Pi(\cdot\givendata)}{d\Pi}(f) \propto e^{-n\ell_n(f)}, 
$ where 
$$
\ell_n(f) = \frac{1}{n}\sum_{i=1}^n (f(x_i)-y_i)^2 \equiv 
    \frac{1}{n}\sum_{i=1}^n (f(x_i)-f_0(x_i))^2 - 2(y_i-f_0(x_i))(f(x_i)-f_0(x_i)).
$$
In the second equality above we dropped the term $\frac{1}{n}\sum_i e_i^2$, which is independent of $f$; and we recall $\be:=\by-f_0(\bx)$ now satisfies $\EE(\be \mid \bx)=0$, $|\be|\le B$. 

Let $\bl := (f(\bx)-f_0(\bx))^2 - 2(\by-f_0(\bx))(f(\bx)-f_0(\bx))$, and $l_1,\ldots,l_n$ be its $n$ i.i.d.~realizations using $x_i,y_i$. Then 
$
\frac{1}{n}\sum_{i=1}^n l_i = \ell_n(f), 
\EE \bl = \|f-f_0\|_2^2, 
\mrm{Var}(\bl) \le \EE \bl^2 \le \EE(f(\bx)-f_0(\bx))^2 (\|f-f_0\|+2B)^2 =: \sigma^2_l,
$ and is bounded by $(\|f-f_0\|+2B)^2 =: B_l$. By Bernstein's inequality as in Lemma~\ref{lem:bernstein}, we have, for any fixed $f$,
\begin{align}
\PP_{\data}\left(\ell_n(f)\ge 2\|f-f_0\|_2^2\right) &\le \exp\left(-\frac{n\|f-f_0\|_2^2}{5(\|f-f_0\|+2B)^2}\right), \label{eq:gpr-lh-ub} \\ 
\PP_{\data}\left(\ell_n(f)\le \frac{1}{3}\|f-f_0\|_2^2\right) &\le \exp\left(-\frac{n\|f-f_0\|_2^2}{10(\|f-f_0\|+2B)^2}\right). \label{eq:gpr-lh-lb}
\end{align}

Now we can prove the theorem. 
It suffices to prove the $L_2$ case, as the $b'>0$ case follows by reduction as in Corollary~\ref{thm:contraction-alt-norm}. We follow the strategy in Theorem~\ref{thm:L2}, in particular the decomposition in Step 1. All we need are 
\begin{enumerate}
    \item a new bound for the denominator, 
    \item a new event $A(f,\data)$ with probability $\PP_{\data}(A^c(f,\data))\le \exp\!\left(-C_{num} n^{\frac{1}{b+1}}\right)$,  
    and a lower bound for $\ell_n(f)$ on the event $A(f,\data)\cap E_{den}$, 
    for any $f\in \Theta_m\cap \err$. 
\end{enumerate}
We address them in turn: 
\begin{enumerate}
    \item 
As all assumptions in Section~\ref{sec:assumptions} hold, Lemma~\ref{lem:den} will still hold, leading to 
$$
\Pi(\Theta_{d0}) := 
\Pi(\{
    \|f-f_0\|_2 \le C_1 \epsilon_n, ~
    \|f-f_0\| \le C_2
\}) \ge \exp\!\left(-C_3 n \epsilon_n^2\right).
$$
By \eqref{eq:gpr-lh-ub} and the definition of $\Theta_{d0}$ above, we have 
\begin{equation}\label{eq:den-bound-gpr-fixed-f}
\inf_{f\in\Theta_{d0}}\PP_{\data}(\{
    \ell_n(f) \le 2 \|f-f_0\|_2^2 
\}) \ge 1- \exp\!\left(-\frac{nC_1^2\epsilon_n^2}{5(C_2+2B)^2}\right)\rightarrow 1.
\end{equation}
Therefore, by the argument of Step 1 in Proposition~\ref{lem:den}, we have 
\begin{equation}\label{eq:den-bound-gpr}
\int e^{-n\ell_n(f)}\Pi(df) \ge e^{-C_3'n\epsilon_n^2}. 
\end{equation}

\item
Let $A(f,\data) := \{\ell_n(f)\ge \frac{1}{3}\|f-f_0\|_2^2\}$ be the complement of the event in \eqref{eq:gpr-lh-lb}. 
As all assumptions in Theorem~\ref{thm:L2} hold, \eqref{eq:f-sup-norm-bound} holds, which says that for all $f\in\Theta_m\cap\err$, %
$$
\|f-f_0\|^2 \lesssim U^{\frac{2}{b+1}} = \left(\frac{\|f-f_0\|_2^2}{\epsilon_n^2}\right)^{\frac{1}{b+1}}. %
$$
Plugging back to \eqref{eq:gpr-lh-lb}, we have, 
for some fixed constant $C>0$, %
$$
\PP_{\data} A^c(f,\data) \le 
\exp\left(
    -C n (\|f-f_0\|_2^2)^{\frac{b}{b+1}} (\epsilon_n^2)^{\frac{1}{b+1}}
\right) 
\le 
\exp\left(
    -C M^{\frac{b}{b+1}} 
    n \epsilon_n^2 
\right),
$$
where the last inequality holds because $\|f-f_0\|_2^2\ge M\epsilon_n^2$. On the event $E_{den}\cap A(f,\data)$ we have, for all $f\in\err\cap \Theta_m$, 
$$
\ell_n(f)\ge \frac{1}{4}\|f-f_0\|_2^2 - 3Bn^{-1} \gtrsim M\epsilon_n^2.
$$
Plugging 
the two displays above to \eqref{eq:contraction-reduced-goal-intermediate}, we can establish \eqref{eq:contraction-reduced-form-goal} with $C_{num}\gtrsim M^{\frac{b}{b+1}}$. 
\end{enumerate}
Now the proof for Theorem~\ref{thm:GPR-L2} is complete. \hfill\BlackBox\\[2mm]

\subsection{Additional Discussion on the Assumptions}

We prove the following claim referenced in the main text. 

\begin{lemma}\label{lem:approx-validation}
In Assumption~\ref{ass:approx-relaxed}, (a) implies (b). 
\end{lemma}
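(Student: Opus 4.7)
The plan is to take $f_j^\dagger$ to be the truncation of $f_0$ onto the top Mercer basis functions. Concretely, set $J := [j^{1/(b+1)}]$ and define
\[
f_j^\dagger \,:=\, \mathrm{Proj}_J f_0 \,=\, \sum_{i=1}^{J} \langle f_0, \bar\varphi_i\rangle_2 \,\bar\varphi_i.
\]
Note that $f_j^\dagger \in \cH$ because the sum is finite and $\bar\varphi_i \in \cH$. The three bounds in \eqref{eq:approx-relaxed-orig} should then follow from the eigendecay $\lambda_i \asymp i^{-(b+1)}$ together with the assumption $f_0 \in \bar\cH$, which by Definition~\ref{defn:power-space} is equivalent to $\sum_i i^b \langle f_0, \bar\varphi_i\rangle_2^2 \lesssim \|f_0\|_{\bar\cH}^2 < \infty$.

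For the $\cH$-norm bound, I would expand
\[
\|f_j^\dagger\|_\cH^2 = \sum_{i=1}^J \lambda_i^{-1} \langle f_0,\bar\varphi_i\rangle_2^2 \asymp \sum_{i=1}^J i^{b+1}\langle f_0,\bar\varphi_i\rangle_2^2 \le J \sum_{i=1}^J i^b \langle f_0,\bar\varphi_i\rangle_2^2 \le J\,\|f_0\|_{\bar\cH}^2 \asymp j^{\frac{1}{b+1}}.
\]
For the $L_2$-bound, a direct tail bound gives
\[
\|f_0 - f_j^\dagger\|_2^2 = \sum_{i>J} \langle f_0,\bar\varphi_i\rangle_2^2 \le J^{-b}\sum_{i>J} i^b \langle f_0,\bar\varphi_i\rangle_2^2 \lesssim J^{-b} \asymp j^{-\frac{b}{b+1}}.
\]

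For the sup-norm bound, the key observation is that $\bar\cH$ continuously embeds into $L_\infty(P(dx))$: by Assumption~\ref{ass:H1}(ii), $[\cH]^{b_{emb}/(b+1)}$ embeds into $L_\infty$, and since $b_{emb} < b$, the space $\bar\cH = [\cH]^{b/(b+1)}$ embeds continuously into $[\cH]^{b_{emb}/(b+1)}$ (smaller power spaces are stronger), hence into $L_\infty$. Since $f_j^\dagger$ is an orthogonal projection of $f_0$ in $\bar\cH$ (the scaled basis $\{\lambda_i^{b/(2(b+1))}\bar\varphi_i\}$ is orthonormal in $\bar\cH$ by Lemma~\ref{lem:sobolev}\ref{item:sobolev-ortho-proj} applied with $\alpha=b/(b+1)$), we have $\|f_0 - f_j^\dagger\|_{\bar\cH} \le \|f_0\|_{\bar\cH}$, and therefore
\[
\|f_0 - f_j^\dagger\| \,\lesssim\, \|f_0 - f_j^\dagger\|_{\bar\cH} \,\le\, \|f_0\|_{\bar\cH} \,\lesssim\, 1.
\]

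The only mildly delicate point is the sup-norm bound: one has to notice that $\bar\cH$ itself (and not just $\cH$) embeds into $L_\infty$, which uses the monotonicity of the embedding property in the power-space exponent. The other two estimates are straightforward manipulations with the Mercer basis, analogous to those already carried out in Corollary~\ref{corr:valid-sieve} and Lemma~\ref{lem:refined-sieve}.
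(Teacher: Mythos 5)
Your proof is correct and takes essentially the same route as the paper's: truncate $f_0$ to the first $J\asymp j^{1/(b+1)}$ Mercer modes, control the $\cH$-norm and $L_2$-tail via the eigendecay (the paper cites Lemma~\ref{lem:sobolev} instead of writing the tail bound out), and handle the sup norm via the $L_\infty$-embedding of $\bar\cH$ — which the paper phrases as ``$\bar\cH$ has a bounded kernel (Lemma~\ref{lem:power-space})'', itself a consequence of Assumption~\ref{ass:H1}(ii), so your direct invocation of the embedding property is the same mechanism.
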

\begin{proof}
Let $j_n=[n^\frac{1}{b+1}]$ and $\fApprox := \Proj{j_n}{f_0}$. By Lemma~\ref{lem:sobolev} with $\tilde b=b$, $\fApprox$ satisfies the $L_2$ approximation condition, and $
\|\fApprox\|_{\bar\cH}\le \|f_0\|_{\bar\cH}
$ which implies $\|f_0-\fApprox\|\lesssim 1$ (recall $\bar\cH$ has a bounded kernel, Lemma~\ref{lem:power-space}). 
Finally, 
$
\|\fApprox\|_\cH^2 \lesssim \sum_{j=1}^{j_n} j^{b+1} \<\fApprox, \bar\varphi_j\>_2  \le 
j_n\cdot  \sum_{j=1}^{j_n} j^b\<\fApprox, \bar\varphi_j\>_2 = j_n \|\fApprox\|_{\bar\cH}^2 \le 
j_n \|f_0\|_{\bar\cH}^2
\lesssim n^{\frac{1}{b+1}}. 
$
\end{proof}

\subsubsection{Relaxation of Assumption~\ref{ass:I-approx}}\label{app:proof-nystrom}

Consider the following relaxation of Assumption~\ref{ass:I-approx}~(ii), in which we keep the choice of norm for $g$ unchanged, but relax the inner optimization problem \eqref{eq:quasi-loglh-defn} to allow an approximation error of $O(\delta_n)$: 
\begingroup
\def\theassumption{\ref{ass:I-approx} (ii')}
\begin{assumption}\label{ass:I-approx-2-relaxed}
For any $n\in\mb{N}$, let $\tilde\cI_n\subset \cI$ be a possibly random subset, and $\{M_n\}$ be a sequence of positive numbers. Then 
with $\PP_{\data}$ probability $\to 1$, it holds that 
$
\text{for any }g\in\cI_1,\text{ there exists }g_n\in\tilde\cI_n\text{ s.t.~}$
\begin{equation}\label{eq:relaxed-I-approx-cond}
\|g_n-g\|_2 \le M_n \delta_n, ~~ \|g_n\|_\cI \le M_n.
\end{equation}
The definition of the (scaled log) quasi-likelihood \eqref{eq:quasi-loglh-defn} is changed into 
\begin{equation*}
d_n^2(\hat E_n f - \hat b) := \max_{g\in\cI_n}\frac{1}{n}\sum_{j=1}^n\!\left(
    (f(x_j)-y_j)g(z_j) - \frac{g(z_j)^2}{2}
    \right) - \frac{\effnu}{2}\|g\|_\cI^2.
\end{equation*}
\end{assumption}
\addtocounter{assumption}{-1}
\endgroup

Now Theorem~\ref{thm:L2} will hold, if we multiply the rates  
$\epsilon_n,\delta_n$ by $M_n$, as its proof 
follows with minimal modifications: we add the event defined by \eqref{eq:relaxed-I-approx-cond} to the test event $E_{den}$ (and thus $E_n$) in the proof. 
Now the denominator bound (Proposition~\ref{lem:den}) will hold, because the $\sup$ term to be upper bounded is now defined with a smaller scope. 
For the denominator bound, we replace $g_j$ with its approximation defined in \eqref{eq:relaxed-I-approx-cond}, and account for their difference in \eqref{eq:ell-n-rhs}. 
The subtracted terms will be inflated by a factor of $M_n^2$, which can be accounted by multiplying the large constant $M$ in the theorem by $M_n^2$. Then the proof follows, and the theorem will hold with $M$ modified accordingly, i.e., with the rate inflated by $M_n$. 

Now we verify the relaxed assumption allows for Nystr\"om approximation for $\hat E_n$, 
where 
$\tilde\cI_n := \mrm{span}\{k(z_{i_j},\cdot): j\in [m]\}$, and  
$z_{i_1},\ldots,z_{i_m}$ are sampled from the training data. 
We verify \eqref{eq:relaxed-I-approx-cond} using
$
g_n := \min_{g_n\in\tilde\cI_n} \|g_n-g\|_n^2 + \tilde\nu_n\|g_n\|_\cI,
$ %
by adapting the analysis of \citet{rudi_less_2016}.\footnote{
This is noiseless KRR, so their analysis applies; recall that our definition of $g_n$ does not have to match $\hat E_n$ in any sense. 
We also note that we use the extended version of \cite{rudi_less_2016} on arXiv.
} Theorem 1 therein shows that 
for their choice of $m$, we have, for any $\|g\|_\cI\le 1$,  
\begin{equation}\label{eq:rr-l2-event}
\PP_{\data}(\|g_n-g\|_2 \lesssim \log \eta^{-1} \delta_n) \ge 1- \eta.
\end{equation}
From their proof (in particular, the last display on p.~24) we can see that, in our noiseless setting, the event defined above is independent of $g$, so the first half of \eqref{eq:relaxed-I-approx-cond} holds. 
To show $\|g_n\|_\cI\lesssim 1$, we modify their argument in the proof of Theorem 2 on p.~24. 

\emph{We now switch to the notations in \cite{rudi_less_2016}}, under which we need to show $\|\hat f_{\lambda,m}\|_\cI \lesssim 1$, or equivalently $\|\hat f_{\lambda,m} - f_\cH\|_\cI \lesssim 1$ 
for all $\|f_\cH\|_\cH\le 1$.\footnote{Our notations $g_n,g,\effnu,\cI$ corresponds to their $\hat f_{\lambda,m},f_\cH,\lambda,\cH$, respectively.}  It has a similar decomposition as the $L_2$ error they analyzed:
$
\|\hat f_{\lambda,m} - f_\cH\|_\cH \le \|(I-g_{\lambda,m}(C_n)C_n)f\|_\cH.
$
(Note we have dropped their term $\msf{A}$, because we are in the noiseless regime.) Following the last display on their p.~24, with the leading $C^{1/2}$ removed on the first two lines, we find 
$
\|\hat f_{\lambda,m} - f_\cH\|_\cH \le 
R\lambda^{-1/2}(1+\theta)\cdot \msf{B.1} + R\lambda^{-1/2}\cdot \msf{B.2}. 
$
Following the remainder of their proof, and subsequently the proof for their Proposition~2, we find that on the event defined by \eqref{eq:rr-l2-event}, the RHS above is $O(\lambda^{-1/2} \delta_n \log \eta^{-1})$, where $\delta_n$ denotes the $L_2$ error rate, and $\lambda\asymp \delta_n^2$ is their ridge regularization hyperparameter. Therefore, \emph{switching back to our notations}, we have, on the event \eqref{eq:rr-l2-event},
$$
\sup_{\|g\|_\cI\le 1}\|g_n-g\|_\cI\lesssim \log \eta^{-1}.
$$
Thus, we can set $M_n$ as any slowly increasing sequence, so that $\eta=e^{-M_n}\rightarrow 0$, and the assumption will hold. It may be possible to replace the 
increasing $\{M_n\}$ with a fixed $M$, using a more sophisticated choice of $g_n$ and/or a slightly larger $m$, but such a discussion is beyond the scope for this work.

\section{Deferred Derivations in Section~\ref{sec:nonasymptotic}}

\subsection{Proof of Proposition~\ref{prop:gp-covariance-unconfounded-dgp}}\label{sec:gp-cov-unconfounded-proof}
Define $$
\hat{\cV} := \empCxz \invEmpCz \empCzx, ~
A := (\hat{\cV} + \efflam I)^{-1} \empCxz \invEmpCz \frac{S_z^*}{n}, ~
P := A S_x = (\hat{\cV} + \efflam I)^{-1}\hat{\cV}, ~
E := Y - f(X),
$$
so that $A Y$ maps the observed data $Y\in\mb{R}^n$ to the IV mean prediction, i.e., $\hat f_n = A(f(X)+E)$. 
By \eqref{eq:equiv-cov-form} below we can see that 
$$\cC = I - P.$$ 

For \eqref{eq:cov-as-worst-case-error}, 
we have 
\begin{align*}
    \sup_{\|f\|_\cH=1} \EE\<\hat f_n-f_0, k_*\>_\cH^2
&=\sup_{\|f\|_{\cH}=1} \EE(\<A(f(X)+E), k_*\>_{\cH} - \<f,k_*\>_\cH)^2  \\ 
&= 
    \sup_{\|f\|_{\cH}=1} \<A f(X), k_*\>_{\cH} - f(x_*))^2 + 
    \lambda k_*^\top A  A^\top k_*    \\ 
    &= 
    \sup_{\|f\|_{\cH}=1} (\<f, \tilde h\>_{\cH})^2 +
    \lambda k_*^\top A  A^\top k_* = 
    \|\tilde h\|_{\cH}^2 + 
    \lambda k_*^\top A  A^\top k_*,
\end{align*}
where 
$
\tilde h := k_*^\top A k(X, \cdot) - k(x_*, \cdot).
$
Plugging back, we have 
\begin{align*}
\sup_{\|f\|_\cH=1} \EE\<\hat f_n-f_0, k_*\>_\cH^2
 &= k_{**} - 2 k_*^\top A k_{X*} + k_*^\top A k(X, X) A^\top k_* + \lambda k_*^\top A A^\top k_* 
=: k_*^\top \tilde\cC k_*,
\end{align*}
where 
\begin{align*}
\tilde\cC &= I - 2 A S_x + A (S_x S_x^* + \lambda I) A^\top \\ 
&= I - 2 (\hat{\cV} + \efflam I)^{-1}\hat{\cV}  + 
(\hat{\cV} + \efflam I)^{-1} \empCxz \invEmpCz \frac{S_z^*}{n}\left({S_x S_x^*} + {n\efflam I}\right) \frac{S_z}{n} \invEmpCz \empCzx (\hat{\cV} + \efflam I)^{-1}  \\ 
&\overset{(a)}{=} I - 2(\hat{\cV} + \efflam I)^{-1}\hat{\cV} + 
(\hat{\cV} + \efflam I)^{-1} ({\hat{\cV}\hat{\cV}} + 
{\efflam \hat{\cV} - \empCxz\invEmpCz \cdot\efflam\effnu I\cdot \invEmpCz \empCzx)}
(\hat{\cV} + \efflam I)^{-1}  \\  
&= I - (\hat{\cV} + \efflam I)^{-1}\hat{\cV} - 
(\hat{\cV} + \efflam I)^{-1} \empCxz\invEmpCz \cdot\efflam\effnu I\cdot \invEmpCz \empCzx 
(\hat{\cV} + \efflam I)^{-1}  \\ 
&\overset{(b)}{=:} \cC - \Delta\cC.
\end{align*}
In the above, (a) holds because we can replace the $\mblue{\hat{\cV}}$ term in RHS with 
$\empCxz \regEmpCz^{-1}\regEmpCz\regEmpCz^{-1}\empCzx$, and (b) since 
$\cC = I-(\hat\cV+\efflam I)^{-1}\hat\cV$. 
The last equality shows that \eqref{eq:cov-as-worst-case-error} holds, and 
\begin{equation}
    \Delta\cC = (\hat{\cV} + \efflam I)^{-1} \empCxz\invEmpCz \cdot\efflam\effnu I\cdot \invEmpCz \empCzx 
(\hat{\cV} + \efflam I)^{-1} 
\end{equation}
is non-negative. 

For \eqref{eq:cov-as-avg-error}, we have 
\begin{align*}
\EE_f \EE_{E}(\<A(f(X)+E), k_*\>_{\cH} - \<f, k_*\>_{\cH})^2 &= 
k_*^\top A (k(X,X)+\lambda I) A^\top k_* + 
k_{**} - 2 k_*^\top A K_{X*},
\end{align*}
so the proof follows \eqref{eq:cov-as-worst-case-error}. \hfill\BlackBox

\subsection{Additional Discussion} \label{app:DeltaC}

Consider the \emph{stochastic} estimator 
$$
\hat f_{sn} := \bar A Y := 
     \min_{f\in\cH}\max_{g\in\cI} \frac{1}{n}\sum_{i=1}^n\!\left((f(x_i)-y_i)g(z_i) - \frac{g(z_i)^2}{2}\right) - \frac{\effnu}{2}\|g-g_{rp}\|_\cI^2 + \frac{\efflam}{2}\|f\|_\cH^2,
$$
where $g_{rp}\sim \mc{GP}(0, \lambda\nu^{-1}k_z)$. 
With direct calculations similar to the above, we can show that, in the unconfounded setting as Proposition~\ref{prop:gp-covariance-unconfounded-dgp}, 
\begin{align}
\<k_*, \cC k_*\>_\cH &= \sup_{\|f_0\|_\cH=1} \EE_{Y\mid X,Z} \<k_*, \hat f_{sn} - f_0\>_{\cH}^2 
=  \EE_{f_0\sim\mc{GP}(0,k_x)} \EE_{Y\mid X,Z} \<k_*, \hat f_{sn} - f_0\>_{\cH}^2, \label{eq:marginal-covariance-alt-repr}
\end{align}
i.e., the marginal variance actually represents the estimation error using the stochastic estimator, due to the additional term $\Delta\cC$. 

We now discuss the behavior of $\hat f_{sn}$. 
Clearly we have $\EE_{g_{rp}} \hat f_{sn} = \hat f_n$ (in the sense that evaluations of any bounded linear functional equal). 
Conditioned on $\data$, $\hat f_{sn}$ has a non-zero variance, but the variance is smaller than the posterior draw \eqref{eq:rf-obj-fs}. 
We can verify that it replaced the conditional expectation estimate $\hat E_n (f-f_0)$, which is the inner loop optima in \eqref{eq:dualiv-obj}, with a draw from the GPR posterior constructed from the problem of conditional expectation estimation. 
It thus accounts for the uncertainty in the first-stage estimation, and one may hope that it has a more stable behavior in the presence of confounding, in which case the estimation error may be higher. %

Readers may find it counterintuitive that the marginal variance of the quasi-posterior represents estimation error of a different estimator $\hat f_{sn}$, as opposed to the posterior mean $\hat f_n$, which is different from the GPR setting. %
Fundamentally, the reason is that the quasi-likelihood involves the regularization term $\effnu\|g\|_\cI^2$. Also, recall that contrary to the GPR setting, the unconfounded data generating process considered in Proposition~\ref{prop:gp-covariance-unconfounded-dgp} is not fully representative of the typical scenario; hence, the difference may be reasonable.

\section{Analysis of the Approximate Inference Algorithm}\label{app:approx-inf-analysis}

\subsection{Proof of the Double Randomized Prior Trick}\label{app:dual-algo-deriv}

\subsubsection{A Function-Space Equivalent to Proposition~\ref{prop:randomized-prior}}\label{app:rf-props-equiv}

\newcommand{\approxH}{{\tilde{\cH}}}
\newcommand{\approxI}{{\tilde{\cI}}}

We first claim that Proposition~\ref{prop:randomized-prior} is equivalent to the following function-space version, the proof of which is deferred to Section~\ref{app:rf-func-space-proof}: 
\begin{proposition}\label{prop:rf-func-space}
Let ${\approxH}, {\approxI}$ be finite-dimensional RKHSes with kernels ${k}_x, {k}_z$, respectively,
 $$
g_0\sim\mc{GP}(0,\lambda\nu^{-1} \tilde{k}_z),~ f_0\sim\mc{GP}(0, \tilde{k}_x),~ \tilde{y_i}\sim\cN(y_i,\lambda).
$$Then the optima $f^*$ of 
\begin{align}
\min_{f\in\approxH}\max_{g\in\approxI} \cL(f,g) &:= \sum_{i=1}^n \left((f(x_i)-\tilde{y}_i)g(z_i) - \frac{g(z_i)^2}{2}\right) - \frac{\nu}{2}\|g-g_0\|_{{\approxI}}^2 + \frac{\lambda}{2}\|f-f_0\|_{{\approxH}}^2 \label{eq:rf-obj-fs}
\end{align}
follows the posterior distribution \eqref{eq:qbkdiv}, with the kernels replaced by $\tilde{k}_x,\tilde{k}_z$. 
\end{proposition}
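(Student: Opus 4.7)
The plan is to compute the optimizer $f^*$ in closed form, observe that it is a linear function of three independent Gaussian objects ($\tilde Y, f_0, g_0$), and then verify directly that its mean and covariance kernel coincide with the $(m,S)$ in \eqref{eq:qbkdiv}. Since both $f^*$ and the quasi-posterior are Gaussian, matching the first two moments settles the claim.

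First I would solve the inner maximization. Let $S_x: \approxH\to\mb{R}^n$, $S_z: \approxI\to\mb{R}^n$ denote the evaluation operators and write $h := S_z g_0\in\mb{R}^n$, which is Gaussian with covariance $\lambda\nu^{-1}K_{zz}$. With the substitution $u := g - g_0$ the inner problem is a strongly concave quadratic in $u$ with solution $u^* = (S_z^*S_z + \nu I)^{-1}(S_z^*(S_x f - \tilde Y) - S_z^*S_z g_0)$. Plugging back and simplifying, the inner max value (as a function of $f$) takes the form
\begin{equation*}
(S_x f - \tilde Y)^\top h - \tfrac12\|h\|_2^2 + \tfrac12(S_x f - \tilde Y - h)^\top L (S_x f - \tilde Y - h),
\end{equation*}
where $L = S_z(S_z^*S_z + \nu I)^{-1}S_z^*$ matches the definition from Section~\ref{sec:bg-kernelized-iv} (and equals $K_{zz}(K_{zz}+\nu I)^{-1}$).

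Next I would minimize in $f$. Setting the gradient to zero (using $S_x^* \alpha = \sum_i \alpha_i k_x(x_i,\cdot)$) yields, with $M := (S_x^* L S_x + \lambda I)^{-1}$,
\begin{equation*}
f^* = M\bigl[S_x^* L\,\tilde Y + S_x^*(L - I) h + \lambda f_0\bigr].
\end{equation*}
Taking expectations (recalling $\EE \tilde Y = Y$, $\EE h = 0$, $\EE f_0 = 0$), using the push-through identity $MS_x^* = S_x^*(L K_{xx} + \lambda I)^{-1}$, one obtains $\EE[f^*(x_*)] = K_{*x}(L K_{xx} + \lambda I)^{-1} L Y$, which is exactly $m$.

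The main obstacle is the covariance calculation, where three independent Gaussians contribute and the cross-terms must collapse. The covariance operator of $f^*$ on $\approxH$ is
\begin{equation*}
\lambda\,M S_x^* L^2 S_x M + \lambda\nu^{-1} M S_x^*(L-I)K_{zz}(L-I)S_x M + \lambda^2 M^2.
\end{equation*}
Here the key identity $(I-L)K_{zz} = \nu L$ (from the definition of $L$) gives $(L-I)K_{zz}(L-I) = \nu(L - L^2)$, so the sum of the first two terms becomes $\lambda M S_x^* L S_x M$, and adding the third gives $\lambda M(S_x^* L S_x + \lambda I) M = \lambda M$. Finally, evaluating the kernel $\langle k(x_*,\cdot),\lambda M\, k(x_{*'},\cdot)\rangle_{\approxH}$ via $\lambda M = I - MS_x^* L S_x$ and the push-through identity yields $k(x_*,x_{*'}) - K_{*x}(L K_{xx} + \lambda I)^{-1} L K_{x*'}$, matching $S$. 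Gaussianity of $f^*$ follows since it is an affine image of a joint Gaussian, completing the proof.
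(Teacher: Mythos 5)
Your proposal is correct and follows essentially the same route as the paper's proof in Appendix~\ref{app:rf-func-space-proof}: solve the inner maximization in closed form, substitute back, solve the outer minimization to obtain $f^*$ as an affine function of the independent Gaussians $(\tilde Y, f_0, g_0)$, and match first and second moments with \eqref{eq:post-mean-calc}--\eqref{eq:post-cov-calc}. The only cosmetic differences are your substitution $u := g - g_0$ and your preference for evaluation operators and Gram matrices over the empirical covariance operators; your identity $(I-L)K_{zz} = \nu L$ plays exactly the role of the paper's cancellation $n\lambda\empCz + \lambda\nu I = n\lambda\regEmpCz$ in collapsing the sum of the three variance contributions to $\lambda(S_x^* L S_x + \lambda I)^{-1} = \cC$.
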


\begin{proof}[Proof of the equivalence]
Observe that 
\eqref{eq:rf-obj-fs} is exactly the same as \eqref{eq:rf-obj} when the random feature parameterization $\phi\mapsto g(z;\phi)$ is injective,\footnote{Most random feature models, such as the random Fourier feature model, satisfies this property almost surely.} in which case we have $\|\phi\|_2=\|g(\cdot;\phi)\|_\approxI$. %
Otherwise, observe that on the subspace 
$$
\Phi_s := \mrm{span}\{\phi_{z,m}(z^\prime):z^\prime\in\cZ\},
$$
$\|\phi\|_2=\|g(\cdot;\phi)\|_\approxI$ always holds: this 
follows by definition of $\tilde{k}_z$
when $\phi$ is a finite linear combination of the $\phi$'s, 
and the general case follows by continuity (note that %
$\approxI$ is already defined by $\tilde{k}_z$). Clearly any $g-g_0\in\tilde{\cI}$ can be parameterized with some $\phi$ in this subspace, so the optima of \eqref{eq:rf-obj-fs} is a valid candidate solution for \eqref{eq:rf-obj}. On the other hand, 
for any $\phi-\phi_0$ outside the aforementioned subspace, we have $\|\phi-\phi_0\|_2>\|g(\cdot;\phi)-g(\cdot;\phi_0)\|_\approxI$. Therefore, the optimal $\phi$ of \eqref{eq:rf-obj} must satisfy $\|\phi-\phi_0\|_2=\|g(\cdot;\phi)-g(\cdot;\phi_0)\|_\approxI$, and thus solves \eqref{eq:rf-obj-fs}. As a similar result also holds for $f$, we conclude that the two objectives are equivalent. 
\end{proof}
\begin{remark}
The non-injective setting above justifies the formal analysis of \eqref{eq:ntkrf-obj} in the main text. We also remark that any parameter $\theta,\phi$ visited by the SGDA algorithm on \eqref{eq:rf-obj} or \eqref{eq:ntkrf-obj} (starting from $\theta_0,\phi_0$) satisfies 
$$
\theta-\theta_0\in\Theta_s, ~~
\phi-\phi_0\in \Phi_s.
$$
Thus $\|\phi-\phi_0\|_2=\|g(\cdot;\phi)-g(\cdot;\phi_0\|_\approxI$ (and similarly for $\theta$), and from the perspective of the SGDA algorithm, the objectives \eqref{eq:rf-obj-fs} and \eqref{eq:rf-obj} are \emph{always} the same. 
This can be proved by induction. Take $\phi$ for example; clearly $\phi=\phi_0$ satisfies the above. For $\phi_\ell$ obtained at the $\ell$-th step of SGDA, we have 
$$
\phi_\ell-\phi_0 = (1-\nu)(\phi_{\ell-1}-\phi_0) + V_\ell^\top \phi_{z,m}(Z), 
$$
where $V_\ell\in\RR^n$ is independent of $\phi_\ell$. Thus $\phi_\ell-\phi_0\in\Phi_s$ by definition of $\Phi_s$ and the inductive hypothesis. 
\end{remark}

\subsubsection{Matrix Identities}

We list two identities here that will be used in the derivations.

\begin{lemma}
  Let $U, C, V, S$ be operators between appropriate Banach spaces, $\lambda \in \RR \setminus \{ 0 \}$, then
\begin{align*}
  (\lambda I+UCV)^{-1} 
  &= \lambda^{-1}(I-U(\lambda C^{-1}+VU)^{-1}V),  \numberthis{\label{eq:woodbury-scaleI}}\\ 
  S(S^* S+\lambda I)^{-1} 
  &= (SS^*+\lambda I)^{-1}S \numberthis \label{eq:woodbury-corr}.
\end{align*}
\end{lemma}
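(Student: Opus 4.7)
The plan is to verify both identities by direct algebraic manipulation, since each is a formal rearrangement that holds whenever the relevant inverses exist, and both are well known (the first is the Sherman--Morrison--Woodbury identity in a general form; the second is the standard ``push-through'' identity).

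For \eqref{eq:woodbury-scaleI}, I would verify the claimed equality by multiplying the right-hand side by $(\lambda I + UCV)$ on the left and showing the product equals $I$. Expanding gives
\[
\lambda^{-1}(\lambda I + UCV)\bigl(I - U(\lambda C^{-1} + VU)^{-1}V\bigr)
= I + \lambda^{-1}UCV - U\bigl[I + \lambda^{-1}CVU\bigr](\lambda C^{-1}+VU)^{-1}V,
\]
and factoring $\lambda^{-1}C$ out of the bracketed expression yields $\lambda^{-1}UC(\lambda C^{-1}+VU)(\lambda C^{-1}+VU)^{-1}V = \lambda^{-1}UCV$, which cancels the middle term and leaves $I$. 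The only subtlety is bookkeeping about which inverses exist: I would simply state that the identity is formal and holds whenever $(\lambda I + UCV)^{-1}$ and $(\lambda C^{-1}+VU)^{-1}$ are both well-defined, which is what the statement tacitly assumes.

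For \eqref{eq:woodbury-corr}, I would begin from the obvious identity
\[
(SS^{*}+\lambda I)\,S \;=\; SS^{*}S + \lambda S \;=\; S\,(S^{*}S+\lambda I),
\]
and then left-multiply by $(SS^{*}+\lambda I)^{-1}$ and right-multiply by $(S^{*}S+\lambda I)^{-1}$ to obtain the claim. Again the identity is formal, valid whenever both resolvents exist.

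Since both arguments are one-line verifications, I do not anticipate any real obstacle; the only thing to be careful about is that $U,V,C,S$ act between possibly different spaces, so one should check that all compositions in the manipulations above are well-defined (e.g., $C^{-1}$ is only used symbolically to pair with $\lambda$ inside the bracket and does not need to exist if one interprets $(\lambda C^{-1}+VU)^{-1}$ as a shorthand for the operator obtained by multiplying $C$ through; alternatively, one can write a version with $(\lambda I + CVU)^{-1}C$ in place of $C(\lambda C^{-1}+VU)^{-1}$ and avoid invertibility of $C$ entirely). I would add a short remark to this effect after the proof, since the lemma is invoked in the kernelized IV derivations where $C$ corresponds to a regularized covariance operator and is indeed invertible.
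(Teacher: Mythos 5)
Your proof is correct, and it takes a somewhat different route from the paper's. The paper cites the generic Woodbury identity $(A+UCV)^{-1}=A^{-1}-A^{-1}U(C^{-1}+VA^{-1}U)^{-1}VA^{-1}$ and specializes $A=\lambda I$ to get \eqref{eq:woodbury-scaleI}, then derives \eqref{eq:woodbury-corr} by applying Woodbury again to $(S^*S+\lambda I)^{-1}$ and simplifying. You instead verify \eqref{eq:woodbury-scaleI} by direct multiplication (which amounts to re-proving the Woodbury identity for this special case, so the mathematical content is the same, just presented self-containedly), and you prove \eqref{eq:woodbury-corr} from the push-through identity $(SS^*+\lambda I)S=S(S^*S+\lambda I)$, which is cleaner and more elementary than the paper's Woodbury-based derivation. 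Your one-sided check $(\lambda I+UCV)\cdot\text{RHS}=I$ does suffice under the stated hypothesis that $(\lambda I+UCV)^{-1}$ exists, since then the right inverse must equal the two-sided inverse; and your closing remark about $C^{-1}$ being only a notational device — one can always replace $U(\lambda C^{-1}+VU)^{-1}V$ by $UC(\lambda I+CVU)^{-1}V$ — is a genuine improvement in rigor over the paper's presentation, which tacitly assumes $C$ is invertible.
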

\begin{proof}
Recall the Woodbury identity:
$$
(A+UCV)^{-1} = A^{-1} - A^{-1}U(C^{-1}+V A^{-1} U)^{-1}VA^{-1}.
$$

Then, we have
\begin{align*}
  (\lambda I+UCV)^{-1} &= 
    \lambda^{-1} I - \lambda^{-2}U(C^{-1}+\lambda^{-1}VU)^{-1}V
  = \lambda^{-1}(I-U(\lambda C^{-1}+VU)^{-1}V).   \\
  S(S^* S+\lambda I)^{-1} &= S(\lambda^{-1}I - \lambda^{-2}S^*(\lambda^{-1}SS^*+I)^{-1}S) 
  = \lambda^{-1}(S - SS^*(SS^*+\lambda I)^{-1}S) \\ 
  &= (SS^*+\lambda I)^{-1}S.
\end{align*}
\end{proof}

\subsubsection{Proof of Proposition~\ref{prop:rf-func-space}} \label{app:rf-func-space-proof}
  Define $Y = (y_1, \dots, y_n)$, $\tilde Y = (\tilde y_1, \dots, \tilde y_n)$.
We rewrite the objective as %
\begin{align*}
\cL(f,g) 
&= \left(\<n\empCzx f - \empSz^* \tilde{Y}, g\>_\approxI - \frac{1}{2}\<n\empCz g,g\>_\approxI - \frac{\nu}{2}\|g-g_0\|_\approxI^2\right) + \frac{\lambda}{2}\|f-f_0\|_\approxH^2 \\ 
&= n\!\left(
\<\empCzx f - n^{-1}\empSz^* \tilde{Y}, g\>_\approxI - \frac{1}{2}\<\regEmpCz\,g, g\>_\approxI + \effnu\<g, g_0\>_\approxI - {\frac{\effnu}{2}\|g_0\|^2_\approxI}
\right) + \frac{\lambda}{2} \|f-f_0\|_\approxH^2,
\end{align*}
where $S_z,\empCzx,\empCz$ are now defined w.r.t.~the approximate kernels. 
The optimal $g^*$ for fixed $f$ is 
\begin{equation}\label{eq:rf-obj-g-opt}
g^*(f) = \invEmpCz (\empCzx f-n^{-1}\empSz^* \tilde{Y} + \effnu g_0).
\end{equation}
Plugging $g^*$ back to the objective, we have 
\begin{align*}
&\cL(f,g^*(f)) = \frac{n}{2}\<g^*, \regEmpCz g^*\>_\approxI + \frac{\lambda}{2}\|f-f_0\|_\approxH^2 - \frac{n\effnu}{2} \|g_0\|_\approxI^2, \\ 
&\partial_f \cL = n \empCxz {\invEmpCz\regEmpCz} g^* + {\lambda}(f-f_0)
=  n\empCxz\invEmpCz(\empCzx f-n^{-1}\empSz^*\tilde{Y}+\effnu g_0) + \lambda(f-f_0). 
\end{align*}
Setting $\partial_f \cL$ to zero, we obtain
\begin{equation}\label{eq:rf-obj-optima}
f^* = 
  (n\empCxz\invEmpCz\empCzx+\lambda I)^{-1}(n\empCxz\invEmpCz(n^{-1}\empSz^*\tilde{Y}-\effnu g_0)+\lambda f_0) .
\end{equation}
Since 
\begin{align*}
(n\empCxz\invEmpCz\empCzx+\lambda I)^{-1} &= 
    (n^{-1} S_x^* S_z \regEmpCz^{-1} S_z^* S_x + \lambda I)^{-1}
= (S_x^* L S_x + \lambda I)^{-1}  \\
\overset{\eqref{eq:woodbury-scaleI}}&{=} \lambda^{-1}(
  \underbrace{I - S_x^*(\lambda L^{-1} + S_x S_x^*)^{-1}S_x}_{\text{defined as }\cC}
  ) %
,\numberthis \label{eq:equiv-cov-form}
\end{align*}
we can rewrite $f^*$ as 
$$
f^* = {\lambda^{-1}\cC}({\empCxz\invEmpCz}({\empSz^*\tilde{Y}}-\nu g_0) + \lambda f_0).
$$
Clearly, $f^*$ is a Gaussian process. Suppose $f^*(x_*) \sim \cN(S_* \mu^\prime, S_* \cC^\prime S_*^*)$, then
\begin{align*}
\mu^\prime &= \lambda^{-1} \cC \:
n\empCxz\invEmpCz(n^{-1}\empSz^*Y)
= 
\lambda^{-1} (I - S_x^*(\lambda L^{-1} + S_x S_x^*)^{-1}S_x) S_x^* L Y \\ 
&= 
\lambda^{-1} S_x^*(I - (\lambda L^{-1} + S_x S_x^*)^{-1}S_x S_x^*) L Y 
=
 S_x^* (\lambda L^{-1} + S_x S_x^*)^{-1} Y.
\end{align*}
The RHS above 
matches the posterior mean \eqref{eq:post-mean-calc} (with $k_x,k_z$ replaced by their random feature approximations) since $S_xS^*_x = K_{xx}$ and
\[ 
 S_*\mu^\prime 
 = S_* S_x^* (\lambda L^{-1} + S_x S_x^*)^{-1} Y
 = K_{*x} (\lambda L^{-1} + K_{xx} )^{-1} Y
 = K_{*x} (\lambda  + LK_{xx} )^{-1}L Y.
 \]
As $\tilde Y-Y, g_0$ and $f_0$ are independent, the covariance operator of $f^*$ is 
\begin{align*}
\cC^\prime &= {\lambda^{-1}\cC}(
    \empCxz\invEmpCz({n\lambda\empCz} + \lambda \nu I)
    \invEmpCz\empCzx + \lambda^2 I
  )\lambda^{-1}\cC  \\ 
&= \lambda^{-1}\cC( 
  \lambda n\empCxz\invEmpCz\empCzx + \lambda^2 I
 )\lambda^{-1}\cC  
\overset{\eqref{eq:equiv-cov-form}}{=} 
 \cC. %
\end{align*}
In view of \eqref{eq:equiv-cov-form}, we know
\[ \begin{aligned}
S_*\cC^\prime S^*_* 
&=  S_*S_*^* - S_*S_x^*(\lambda L^{-1} + S_x S_x^*)^{-1}S_xS_*^* 
=  K_{**} - K_{*x}(\lambda L^{-1} + K_{xx})^{-1}K_{x*},
\end{aligned} \]
which matches the posterior covariance matrix~\eqref{eq:post-cov-calc} with replaced kernels. 

\begin{remark} For the discussion in Section~\ref{sec:nonasymptotic}, 
observe that \eqref{eq:rf-obj-optima} implies the posterior mean estimator $\hat f_n$ satisfies 
\begin{equation}\label{eq:posterior-mean-functional}
\hat f_n = 
  (n\empCxz\invEmpCz\empCzx+\lambda I)^{-1} n\empCxz\invEmpCz 
  \Biggl(\empCzx f_0 + \frac{\empSz^*(Y-f_0(X))}{n}\Biggr),
\end{equation}
The derivations do not rely on the RKHSes being finite-dimensional.
\end{remark}

\subsection{Assumptions in Proposition~\ref{prop:approx-inf}}\label{app:rf-assumptions}

The subsequent analysis will rely 
on the following assumptions on the random feature expansion. We only state them for $x$ for conciseness; the requirements for $z$ are similar.

The following assumption holds for, e.g., random Fourier features \cite{rahimi2007random}. 
\begin{assumption}\label{ass:rf-uniform-approx}
  As $m\to\infty$, it holds that 
$
\sup_{x,x'\in\cX} \left | k_x(x,x') - \tilde{k}_{x,m}(x,x') \right |  \overset{p}{\rightarrow} 0.
$
\end{assumption}

\begin{assumption}\label{ass:rf-bounded}
There exists a constant $\tilde{\kappa}>0$ such that 
$\max_{m\in\mb{N}} \sup_{x\in\cX} \tilde{k}_{x,m}(x,x)\le \tilde{\kappa}$.
\end{assumption}

\subsection{Analysis of Random Feature Approximation}\label{app:rf-analysis}

We recall the following facts: for $A,B\in\mb{R}^{n\times n}$, 
$$\|A\| \le \|A\|_F \le \sqrt{n}\|A\|, ~~ A^{-1}-B^{-1} = A^{-1}(B-A)B^{-1}.$$ 

\begin{lemma}\label{lem:rf-approx-appendix}
For all $m\in\mb{N}$, let $k_{x,m}$ be a random feature approximation to $k_x$ such that Assumption~\ref{ass:rf-uniform-approx} holds, and 
let $\tilde{k}_{z,m}$ be an approximation to $k_z$ satisfying a similar requirement as above. Then the random feature-approximated posterior 
$
\Pi_m(f(x_*) \givendata) = \cN(\tilde{\mu}, \tilde{S})
$
satisfies 
$$
\lim_{m\rightarrow\infty} \sup_{x^*\in\cX^l} \|\mu - \tilde{\mu}\|_2 = 0, ~~
\lim_{m\rightarrow\infty} \sup_{x^*\in\cX^l} \|\tilde{S} - S\|_F= 0,
$$
for any fixed training data $(X,Y,Z)$, $l\in\mb{N}$, and $\lambda,\nu>0$. In the above, $\tilde \mu$ and $\tilde S$ are defined as in \eqref{eq:post-mean-calc}-\eqref{eq:post-cov-calc}, but with 
the Gram matrices redefined %
using $\tilde{k}_{x,m}$ and $\tilde{k}_{z,m}$.
\end{lemma}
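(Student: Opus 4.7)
The plan is to propagate the uniform kernel convergence in Assumption~\ref{ass:rf-uniform-approx} through the closed-form posterior expressions \eqref{eq:post-mean-calc}--\eqref{eq:post-L} via elementary matrix perturbation bounds. Since $n$, $l$, $\lambda$, $\nu$ and the training data $(X,Y,Z)$ are all fixed, and the ridge parameters $\lambda,\nu>0$ make every inverse in the formulas a continuous function of a well-conditioned argument, the whole pipeline is continuous in the underlying Gram matrices. Uniformity over $x_*\in\cX^l$ will be preserved at each step because the test point only enters through a finite collection of kernel evaluations.

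First I would record the immediate consequences of Assumption~\ref{ass:rf-uniform-approx} (and its analogue for $k_z$): in probability,
\begin{equation*}
\|\tilde K_{xx}-K_{xx}\|_F \to 0, \quad \|\tilde K_{zz}-K_{zz}\|_F \to 0, \quad \sup_{x_*\in\cX^l}\|\tilde K_{*x}-K_{*x}\|_F \to 0, \quad \sup_{x_*\in\cX^l}\|\tilde K_{**}-K_{**}\|_F \to 0.
\end{equation*}
The first two statements are immediate since the matrices have fixed size $n\times n$; the last two follow because each entry is a single kernel evaluation and the uniform bound over $\cX\times\cX$ controls the $l^2+nl$ relevant entries simultaneously.

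Second, I would propagate convergence through $L=K_{zz}(K_{zz}+\nu I)^{-1}$ and then through $(\lambda I+LK_{xx})^{-1}$. Both $(K_{zz}+\nu I)^{-1}$ and $(\tilde K_{zz}+\nu I)^{-1}$ have operator norm at most $\nu^{-1}$, so the identity $A^{-1}-B^{-1}=A^{-1}(B-A)B^{-1}$ together with submultiplicativity yields $\|\tilde L-L\|_F\overset{p}{\rightarrow}0$. Because $L$ is PSD, $LK_{xx}$ has nonnegative real spectrum (it shares eigenvalues with the PSD matrix $K_{xx}^{1/2}LK_{xx}^{1/2}$), so $\lambda I+LK_{xx}$ is invertible, and for the fixed training data $\|(\lambda I+LK_{xx})^{-1}\|$ is a finite deterministic constant. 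Combined with $\|\tilde L\tilde K_{xx}-LK_{xx}\|_F\overset{p}{\rightarrow}0$, another application of the same identity (plus a standard continuity argument to bound $\|(\lambda I+\tilde L\tilde K_{xx})^{-1}\|$ uniformly with probability tending to one) yields $\|(\lambda I+\tilde L\tilde K_{xx})^{-1}-(\lambda I+LK_{xx})^{-1}\|_F\overset{p}{\rightarrow}0$.

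Finally, substituting into \eqref{eq:post-mean-calc}--\eqref{eq:post-cov-calc} and repeatedly applying the product rule $\|AB-\tilde A\tilde B\|_F\le\|A\|_F\|B-\tilde B\|_F+\|A-\tilde A\|_F\|\tilde B\|_F$ will give the claimed convergence of $\tilde\mu$ and $\tilde S$. Uniformity in $x_*$ survives because only the factors $\tilde K_{*x},\tilde K_{**}$ depend on $x_*$, and their perturbation bounds from the first step are already uniform; every other matrix is handled once. The argument is a finite chain of continuous mappings applied to a jointly convergent collection of random matrices, so I do not anticipate a serious obstacle. The only minor subtlety is handling the inverse of the non-symmetric matrix $\lambda I+LK_{xx}$, which is why I appeal to the PSD structure of $L$ and a continuity argument rather than attempting a clean $\lambda^{-1}$-type operator bound.
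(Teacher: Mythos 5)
Your proposal is correct and follows essentially the same route as the paper's proof: both propagate the uniform kernel approximation error through the closed-form posterior formulas via elementary matrix perturbation bounds, using the fixed ridge parameters and fixed training data to keep every inverse well-conditioned, and both observe that uniformity in $x_*$ holds because $x_*$ only enters through the factors $K_{*x}$ and $K_{**}$, whose perturbations are already controlled uniformly by the sup-norm kernel bound. The one place you are slightly more careful than the paper is the non-symmetric inverse $(\lambda I + LK_{xx})^{-1}$: the paper asserts an operator-norm bound of $\lambda^{-1}$, which (as you implicitly note) does not follow directly from the nonnegative spectrum of $LK_{xx}$ since $LK_{xx}$ is not normal, whereas you avoid the issue by only claiming a finite data-dependent bound plus a continuity argument for the perturbed inverse — a safer and equally effective route.
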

\begin{proof}%
Define $$
\epsilon_m := \max\Big(
  \sup_{x,x'\in\cX} \left | k(x,x') - \tilde{k}_{x,m}(x,x') \right | , 
  \sup_{z,z'\in\cZ} \left | k(z,z') - \tilde{k}_{z,m}(z,z') \right | \Big).
  $$
By assumption $\epsilon_m\overset{p}{\rightarrow} 0$. 
For $\tilde{S}$ we consider the decomposition
\begin{align*}
  \|\tilde{S} - S\| %
  &\le \|\tilde{K}_{**} - K_{**}\|   \\
&\phantom{{}={}}+\|\tilde{K}_{*x} - K_{*x}\|  \| 
  \tilde{L}\|\|(\lambda I + \tilde{K}_{xx}\tilde{L})^{-1}\tilde{K}_{x*} \| \\
&\phantom{{}={}} +\|K_{*x}\| \| \tilde{L} - L\| \|(\lambda I+\tilde{K}_{xx}\tilde{L})^{-1}\tilde{K}_{x*}\|  \\ 
&\phantom{{}={}} +
\|K_{*x} L\|\|(\lambda I+\tilde{K}_{xx}\tilde{L})^{-1} 
- (\lambda I+ K_{xx}L)^{-1}\| \|\tilde{K}_{x*}\| \\
&\phantom{{}={}} +
\|K_{*x}L(\lambda I+K_{xx}L)^{-1}\|\|\tilde{K}_{x*}-K_{x*}\| \\ 
&=: (\text{I}) + (\text{II}) + (\text{III}) + (\text{IV}) + (\text{V}).
\end{align*}
In the following, we use $O(\cdot)$ and $O_p(\cdot)$ to represent the asymptotic behaviour when $m \to \infty$.
Since $n$ and $l$ are fixed, the operator norms of the matrices $K_{*x},L,K_{xx}$ are $O(1)$. 
Observe that $\|K_{zz}-\tilde{K}_{zz}\|\le \sqrt{n}\epsilon_m$. 
By the triangle inequality, the inequality $\|\cdot\|\le \|\cdot\|_F$ and the boundedness of $\tilde{k}_{x,m}$ and $\tilde{k}_{z,m}$, we have $\|\tilde{K}_{*x}\| = O(1)$. Both $O(\cdot)$ terms above are independent of $x^*$. Finally, recall that $\|L\| = \|K_{zz}(K_{zz}+\nu I)^{-1}\|\le 1$ and similarly $\|\tilde{L}\|\le 1$. Using these facts, we have 
\begin{align*}
(\text{I}) &\le \|\tilde{K}_{**}-K_{**}\|_F \le l \epsilon_m  \rightarrow 0. \\ 
(\text{II}) &\le \sqrt{ln}\epsilon_m \cdot 1 \cdot \lambda^{-1}\cdot O(1) \rightarrow 0. \\ 
\|\tilde{L}-L\| &= \|K_{zz}(K_{zz}+\nu I)^{-1} - \tilde{K}_{zz}(\tilde{K}_{zz}+\nu I)^{-1}\| \rightarrow 0. \\
&\le 
\|K_{zz}-\tilde{K}_{zz}\|\cdot \nu^{-1} + \|\tilde{K}_{zz}(\tilde{K}_{zz}+\nu I)^{-1}\|\|(K_{zz}-\tilde{K}_{zz})(K_{zz}+\nu I)^{-1}\| \\
&\le 2\sqrt{n}\epsilon_m \cdot \nu^{-1} \rightarrow 0.
\\
(\text{III}) &\le O(1)\cdot \|\tilde{L}-L\|\cdot \lambda^{-1}O(1)
\rightarrow 0 \\  
(\text{IV}) &= O(1)\cdot \|(\lambda I+\tilde{K}_{xx}\tilde{L})^{-1}\|\|\tilde{K}_{xx}\tilde{L}-K_{xx}L\|\|(\lambda I+ K_{xx}L)^{-1}\| \\
&\le  O(1)\cdot\lambda^{-2}\cdot (\|\tilde{K}_{xx}-K_{xx}\|\|\tilde{L}\| + \|K_{xx}\|\|\tilde{L}-L\|)
\rightarrow 0. \\ 
(\text{V}) &= O(1) \cdot \sqrt{ln}\epsilon_m \rightarrow 0.
\end{align*}
Moreover, the converges above are all independent of the choice of $x^*$. 
Thus we have $$
\sup_{x^*\in\cX^l} \|\tilde{S}-S\|_F\le l \sup_{x^*\in\cX^l} \|\tilde{S}-S\|\rightarrow 0.
$$
Using a similar argument we have 
$
\sup_{x^*\in\cX^l} \|\tilde{\mu}-\mu\|_2 \rightarrow 0.
$
\end{proof}

\subsection{Analysis of the Optimization Algorithm}\label{app:ana-gda}

\begin{algorithm}[htb]
  \SetAlgoLined
  \KwIn{Hyperparameters $\nu,\lambda\in\RR$. Random feature models  
  $\theta\mapsto f(\cdot;\theta)$, $\varphi\mapsto g(\cdot;\varphi)$.
  }
  \KwResult{A single sample from the approximate posterior}
  Initialize: draw $\theta_0\sim \cN(0,I), \varphi_0\sim \cN(0,\lambda\nu^{-1}I),\tilde{Y}\sim\cN(Y,\lambda I)$\;
  \For(){$\ell\gets 1,\ldots,L-1$}{
    $\hat{\theta}_\ell \gets \theta_{\ell-1} - \eta_\ell \hat \nabla_{\theta} \cL_{\mrm{rf}}(\theta_{\ell-1},\varphi_{\ell-1},\theta_0,\varphi_0)$\;
    $\hat{\varphi}_\ell \gets \varphi_{\ell-1} + \eta_\ell \hat \nabla_{\varphi} \cL_{\mrm{rf}}(\theta_{\ell-1},\varphi_{\ell-1},\theta_0,\varphi_0)$\;
    $\theta_{\ell+1} \gets \mrm{Proj}_{B_f}(\hat{\theta}_{\ell})$\;
    $\varphi_{\ell+1} \gets \mrm{Proj}_{B_g}(\hat{\varphi}_{\ell})$\;
  }
  \Return{$f(\cdot;\theta_L)$}
  \caption{Modified randomized prior algorithm for approximate inference. }\label{alg:ana}
\end{algorithm}

For the purpose of the analysis we consider the standard SGDA algorithm as outlined in Algorithm~\ref{alg:ana}. 
In the algorithm $\cL_{\mrm{rf}}$ denotes the objective in \eqref{eq:rf-obj}, 
and $\mrm{Proj}_B$ denotes the projection into the $\ell_2$-norm ball with radius $B$,
and $\hat \nabla \cL_{\mrm{rf}}$ represents a stochastic (unbiased) approximation of the gradient $\nabla \cL_{\mrm{rf}}$.
In the following, we will suppress the dependency of $\cL_{\mrm{rf}}$ on $\theta_0, \varphi_0$ for simplicity.

Concretely, 
we introduce the notations %
$$
\Phi_f := \frac{1}{\sqrt{m}}\begin{bmatrix}
\phi_{x,m}(x_1)^\top \\ 
\vdots \\
\phi_{x,m}(x_n)^\top
\end{bmatrix}\in\mb{R}^{n\times m}, 
\quad
\Phi_g := \frac{1}{\sqrt{m}}\begin{bmatrix}
\phi_{z,m}(z_1)^\top \\ 
\vdots \\
\phi_{z,m}(z_n)^\top
\end{bmatrix}\in\mb{R}^{n\times m}, 
$$
where we recall $X := (x_1,\ldots,x_n)$ and $Z := (z_1,\ldots,z_n)$ are the training data.

Observe that $\Phi_f \theta = f(X;\theta),\Phi_g \varphi = g(Z;\varphi)$, we can rewrite the objective \eqref{eq:rf-obj} as 
\begin{align}
    \cL_{\mrm{rf}}(\theta, \varphi) &= 
    \theta^\top \Phi_f^\top \Phi_g\varphi -\tilde{Y}^\top \Phi_g \varphi - \frac{1}{2} \varphi^\top \Phi_g^\top \Phi_g \varphi - \frac{\nu}{2}\|\varphi - \varphi_0\|_2^2 + \frac{\lambda}{2}\|\theta - \theta_0\|_2^2.
\end{align}
We additionally define
\begin{align*}
    \cL_i(\theta, \varphi) = n\left(\theta^\top \Phi_f^\top E_i \Phi_g\varphi -\tilde{Y}^\top E_i \Phi_g \varphi - \frac{1}{2} \varphi^\top \Phi_g^\top E_i \Phi_g \varphi \right)- \frac{\nu}{2}\|\varphi - \varphi_0\|_2^2 + \frac{\lambda}{2}\|\theta - \theta_0\|_2^2,
\end{align*}
where $E_i := e_ie_i^\top$ and $\{e_i\}_{i\in[n]}$ is the standard orthogonal basis of $\mathbb{R}^n$. 
We can see that
$
    \cL_{\mrm{rf}}(\theta, \varphi) = \frac{1}{n} \sum_{i\in[n]} \cL_{i}(\theta, \varphi).
$
Therefore, the stochastic gradient in Algorithm~\ref{alg:ana} can be defined as 
\begin{equation}
  \label{eqn:stochastic-gradient-rf}
    \hat \nabla \cL_{\mrm{rf}}(\theta, \varphi) := 
    \nabla \cL_\cI (\theta, \varphi)
    =\sum_{i\in[n]} \nabla \cL_i(\theta, \varphi) \mathbf{1}_{i = \cI}, 
\end{equation}
where $\cI$ is a random variable sampled from the uniform distribution of the set $[n]$.

In practice we run the algorithm concurrently on $J$ sets of parameters, starting from independent draws of initial conditions $\{\theta^{(j)}_0,\phi^{(j)}_0\}$; moreover, the projection is not implemented, and there are various other modifications to further improve stability, as described in Appendix~\ref{app:impl-details}. 

The following lemma is a convergence theorem of Algorithm~\ref{alg:ana} under the choice of stochastic gradient defined in~\eqref{eqn:stochastic-gradient-rf}.

\begin{lemma}\label{lem:opt-final-result}
Fix an $m\in\mb{N}$. 
Denote by $\theta^*$ the optima of \eqref{eq:rf-obj} and take $\eta_\ell := \frac{1}{\mu(\ell + 1)}$ with $\mu = \min\{ \lambda, \nu \}$. 
Then for any $\epsilon,B_1,B_2,B_3>0$, there exist $B_f, B_g >0$
such that when $L = \Omega( \delta^{-1} \epsilon^{-2})$, 
the approximate optima $\theta_{L}$ returned by Algorithm~\ref{alg:ana} satisfies 
$$
\PP\left (\{\|\theta_L - \theta^*\|_2 > \epsilon\}
\cap E_n\right ) \leq \delta,
$$
where
$$
E_n := 
\left \{\|\theta_0\|_2 +  \|\varphi_0\|_2\le B_1, \|\tilde{Y}\|_2\le B_2, \sup_{z\in \cZ} \tilde{k}_{z,m}(z,z) +   \sup_{x\in\cX} \tilde{k}_{x,m}(x,x)\le B_3
\right \},
$$
and $\tilde{k}_{\cdot,m}$ denotes the random feature-approximated kernel. The randomness in the statement above is from the sampling of the initial values $\theta_0,\varphi_0$, the gradient noise. %
\end{lemma}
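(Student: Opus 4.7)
The plan is to recognize $\cL_{\mrm{rf}}$ as a strongly-convex--strongly-concave quadratic saddle problem on $E_n$, and invoke the standard convergence theory for stochastic gradient descent--ascent on strongly monotone operators, followed by Markov's inequality.

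First I would record the structural properties on $E_n$. The operator norms satisfy $\|\Phi_f\|, \|\Phi_g\| \le \sqrt{n B_3}$, so $\cL_{\mrm{rf}}$ is a well-defined quadratic; it is $\lambda$-strongly convex in $\theta$ and $\nu$-strongly concave in $\varphi$. Hence the primal--dual vector field $F(\theta,\varphi) := (\nabla_\theta \cL_{\mrm{rf}}, -\nabla_\varphi \cL_{\mrm{rf}})$ is $\mu$-strongly monotone with $\mu = \min\{\lambda,\nu\}$ and Lipschitz with a constant depending only on $B_3, n, \lambda, \nu$. Using the closed-form analogue of \eqref{eq:rf-obj-optima} in parameter space, restricted to the effective subspaces $\{\Phi_f^\top v : v \in \RR^n\}$ and $\{\Phi_g^\top v : v \in \RR^n\}$ in which the SGDA iterates remain (cf.~the remark in Appendix~\ref{app:rf-props-equiv}), one obtains an a priori deterministic bound $\|\theta^*\|_2 + \|\varphi^*\|_2 \le C_0(B_1, B_2, B_3, n, \lambda, \nu)$.

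Next I would choose $B_f, B_g \ge C_0 + \epsilon + 1$, so the saddle point $(\theta^*,\varphi^*)$ lies strictly inside the projection balls. On these balls, each sample gradient $\nabla \cL_i$ is a polynomial in $(\theta, \varphi, \tilde Y, \theta_0, \varphi_0)$, so there exists $\sigma^2 = \sigma^2(B_f, B_g, B_1, B_2, B_3, n, \lambda, \nu)$ with $\EE\|\nabla \cL_\cI(\theta,\varphi) - \nabla \cL_{\mrm{rf}}(\theta,\varphi)\|_2^2 \le \sigma^2$ uniformly on the projected region. Since the projection onto a convex set is nonexpansive and the saddle point is interior, projection can only decrease the distance to $(\theta^*, \varphi^*)$; the SGDA analysis is unaffected.

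Combining $\mu$-strong monotonicity, Lipschitz continuity, bounded variance, and the schedule $\eta_\ell = 1/(\mu(\ell+1))$, classical stochastic-approximation results for strongly monotone operators give
$$\EE\bigl[\,\|\theta_L - \theta^*\|_2^2 + \|\varphi_L - \varphi^*\|_2^2 \,\big|\, E_n\bigr] \le \frac{C_1}{L},$$
with $C_1$ depending on $\sigma^2/\mu^2$ and $\|(\theta_0,\varphi_0) - (\theta^*,\varphi^*)\|_2^2$. Markov's inequality then yields $\PP(\{\|\theta_L - \theta^*\|_2 > \epsilon\} \cap E_n) \le C_1 /(L\epsilon^2) \le \delta$ once $L \ge C_1/(\delta \epsilon^2)$, which is the claimed $L = \Omega(\delta^{-1}\epsilon^{-2})$. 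The main obstacle is the a priori bound on $(\theta^*,\varphi^*)$: because the random-feature parameterization need not be injective, $\theta^*$ is not unique as an element of $\RR^m$, and one must restrict to the subspace spanned by the features (as in the remark in Appendix~\ref{app:rf-props-equiv}) to get a well-defined bounded saddle point. Once that restriction is made, all constants become explicit deterministic functions of the event-$E_n$ bounds, and the rest reduces to bookkeeping over standard SGDA arguments.
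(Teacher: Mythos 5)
Your approach is essentially the same as the paper's: obtain an a priori bound on $(\theta^*,\varphi^*)$ on $E_n$ from the closed-form expression \eqref{eq:rf-obj-optima}, choose the projection radii $B_f,B_g$ accordingly, bound the stochastic gradient variance on the projected region, run SGDA with step size $\eta_\ell = \frac{1}{\mu(\ell+1)}$ to get $\EE\|\theta_L-\theta^*\|_2^2 = O(1/L)$, and finish with Markov. One genuine difference is that you invoke generic stochastic-approximation results for $\mu$-strongly monotone operators (and correspondingly ask for a Lipschitz constant on the vector field), whereas the paper proves the one-step contraction $r_{\ell+1}\le(1-2\mu\eta_\ell)r_\ell + \eta_\ell^2\sigma^2$ directly by expanding the SGDA update and using the exact cancellation of the bilinear cross-terms $\langle\theta_\ell-\theta^*,\Phi_f^\top\Phi_g(\varphi_\ell-\varphi^*)\rangle$ and $\langle\varphi_\ell-\varphi^*,\Phi_g^\top\Phi_f(\theta_\ell-\theta^*)\rangle$; the direct route avoids needing the Lipschitz constant at all and keeps the constants explicit, while yours buys brevity at the cost of a slightly stronger (but easily satisfied) hypothesis.

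A small correction: your ``main obstacle'' is not an obstacle. You worry that non-injectivity of the random-feature map makes $\theta^*$ non-unique as an element of $\RR^m$ and necessitates restricting to $\mrm{span}\{\Phi_f^\top v\}$. But the parametric objective $\cL_{\mrm{rf}}$ contains the ridge term $\frac{\lambda}{2}\|\theta-\theta_0\|_2^2$, which makes it globally $\lambda$-strongly convex in $\theta\in\RR^m$ (and similarly $\nu$-strongly concave in $\varphi$) regardless of whether $\phi_{x,m}$ is injective. The saddle point is therefore unique in the ambient $\RR^m\times\RR^m$, and the a priori bound $\|\theta^*\|_2\lesssim B$ on $E_n$ follows directly from reading \eqref{eq:rf-obj-optima} as a bounded linear map applied to $(\theta_0,\varphi_0,\tilde Y)$, with no subspace restriction. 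The subspace argument in Appendix~\ref{app:rf-props-equiv} is relevant to identifying the parameter-space and function-space objectives, not to the well-posedness of the optimization problem itself.
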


\begin{proof}
Recall from \eqref{eq:rf-obj-optima} that 
$\theta^*$ is a sum of bounded linear transforms of $\theta_0,\varphi_0$ and $\tilde{Y}_0$. Thus on the event $E_n$, %
$\|\theta^*\|_2$ is bounded. 
Similarly, $\|\varphi^*\|_2$ is also bounded on $E_n$ by \eqref{eq:rf-obj-g-opt}.
We choose $B_f$ and $B_g$ to be their maximum values on the event $E_n$.

As 
$\cL_{\mrm{rf}}$ is strongly-convex in $\theta$, and strongly-concave in $\varphi$, it has the unique stationary point $(\theta^*, \varphi^*)$.
We will then bound 
$
    \|\theta_\ell - \theta^{*}\|_2^2 + \|\varphi_\ell - \varphi^*\|_2^2
$.
Let $\sigma_f,\sigma_g$ be the minimal constants such that $\|\nabla_{\theta} \cL_{i}(\theta, \varphi)\|_2^2 \leq \sigma_f^2$, $\|\nabla_{\varphi} \cL_{i}(\theta, \varphi)\|_2^2 \leq \sigma_g^2$ for all $i \in [n], \| \theta \|_2 \leq B_f$ and $\| \varphi \|_2 \leq B_g$. Denote 
$B:=\max\{B_f,B_g\}$, so we have $\|\theta\|_2, \|\varphi\|_2 \leq B$. 
Define 
\begin{align*}
    r_\ell = \mathbb{E} \left[\|\theta_\ell - \theta^*\|_2^2 + \|\varphi_\ell - \varphi^*\|_2^2\right].
\end{align*}
We want to know how $r_\ell$ contracts. 
We first make a stochastic gradient step on $\theta_\ell$ with step size $\eta_\ell$, i.e., $\hat \theta_{\ell + 1} := \theta_\ell - \eta_\ell \hat \nabla_\theta \cL_{\mrm{rf}}(\theta_\ell, \varphi_\ell)$ with $\hat\nabla \cL_{\mrm{rf}}$ defined in \eqref{eqn:stochastic-gradient-rf}.
Then, 
\begin{align*}
     \mathbb{E}[\|\hat{\theta}_{\ell+1} - \theta^*\|_2^2 ~|~ \theta_{\ell}, \varphi_{\ell}] 
    \leq  \|\theta_{\ell} - \theta^*\|_2^2 - 2\eta_\ell\langle \theta_{\ell} - \theta^*, \nabla_{\theta} \cL(\theta_{\ell}, \varphi_{\ell})\rangle + \eta_\ell^2 \sigma_f^2,
\end{align*}
where the expectation is taken with respect to the randomness of the gradient.
For the above inner product term, we have that
\begin{align*}
     \langle \theta_{\ell} - \theta^*, \nabla_{\theta} \cL_{\mrm{rf}}(\theta_{\ell}, \varphi_{\ell})\rangle
    &= \langle \theta_{\ell} - \theta^*, \nabla_{\theta} \cL_{\mrm{rf}}(\theta_{\ell}, \varphi_{\ell}) - \nabla_{\theta} \cL_{\mrm{rf}}(\theta^*, \varphi^*)\rangle\\
    &= \lambda\|\theta_{\ell} - \theta^*\|_2^2 + \langle \theta_{\ell} - \theta^*, \Phi_f^\top \Phi_g (\varphi_{\ell} - \varphi^*) \rangle.
\end{align*}
Next, we consider the gradient step on $\varphi_\ell$ with step size $\eta_\ell$, i.e., 
$\hat \varphi_{\ell + 1} := \varphi_\ell + \eta_\ell \hat \nabla_\varphi \cL_{\mrm{rf}}(\theta_\ell, \varphi_\ell)$.
Then, we have that
\begin{align*}
     \mathbb{E}[\|\hat{\varphi}_{\ell + 1} - \varphi^*\|_2^2 ~|~ \theta_{\ell}, \varphi_{\ell}]
    \leq  \|\varphi_{\ell} - \varphi^*\|_2^2 + 2\eta_\ell \langle \varphi_{\ell} - \varphi^*, \nabla_{\varphi} \cL_{\mrm{rf}}(\theta_{\ell}, \varphi_{\ell}) \rangle + \eta_\ell^2\sigma_g^2.
\end{align*}
We similarly deal with the inner product term:
\begin{align*}
     \langle \varphi_{\ell} - \varphi^*, \nabla_{\varphi} \cL_{\mrm{rf}}({\theta}_{\ell}, \varphi_{\ell}) \rangle
    &= \langle \varphi_{\ell} - \varphi^*, \nabla_{\varphi} \cL_{\mrm{rf}}({\theta}_{\ell}, \varphi_{\ell}) - \nabla_{\varphi} \cL_{\mrm{rf}}(\theta^*, \varphi^*)\rangle\\
    &= -\langle \varphi_{\ell} - \varphi^*, (\Phi_g^\top \Phi_g + \nu I)(\varphi_{\ell} - \varphi^*)\rangle + \langle \varphi_{\ell} - \varphi^*, \Phi_g^\top \Phi_f({\theta}_{\ell} - \theta^*) \rangle \\
    &\leq -\nu\| \varphi_{\ell} - \varphi^* \|^2_2 + \langle \varphi_{\ell} - \varphi^*, \Phi_g^\top \Phi_f({\theta}_{\ell} - \theta^*) \rangle,
\end{align*}
Combining the above results, we have
\begin{align*}
r_{\ell + 1} \leq \EE [\|\hat{\theta}_{\ell+1}-\theta^*\|_2^2+\|\hat{\varphi}_{\ell+1}-\varphi^*\|_2^2\mid \theta_\ell,\varphi_\ell]  
\le (1 - 2 \mu \eta_\ell) r_\ell 
+ \eta_\ell^2(\sigma_f^2 + \sigma_g^2),
\end{align*}
where we have set $\mu := \min\{ \nu, \lambda \}$, and the first inequality follows from the fact that the projection onto a convex set is a contraction map, i.e., $\| \mrm{Proj}_B(x) - \mrm{Proj}_B(y) \| \leq \|x - y \|$.

Let $\sigma^2 = \sigma_f^2 + \sigma_g^2$ and $\eta_\ell = \frac{\xi}{\ell+1}$ for some $ \xi > \frac{1}{2\mu}$, by induction we have
\begin{align*}
    r_{\ell} \leq \frac{c_{\xi}}{\ell + 1}, \quad \text{where }
c_{\xi} = \max\left \{ r_0, \frac{2\xi^2 \sigma^2}{2\mu\xi - 1}\right \}.
\end{align*}
Specifically, taking $\xi = \mu^{-1}$, we have
\begin{equation}
  \label{eqn:r_ell-contraction}
   r_\ell \leq \frac{1}{\ell + 1} \max\left \{ r_0, \frac{2\sigma^2}{\mu^2} \right \}.
\end{equation}

We now track the constants we have used in \eqref{eqn:r_ell-contraction}. 
Note that on the event $E_n$, 
\[ r_0 
\leq 2 \left(\|\theta_0\|_2^2 + \|\theta^*\|_2^2 + \|\varphi_0\|_2^2 + \|\varphi^*\|_2^2\right) 
\leq 4(B_1^2 + B^2). \]  

Recall the definition of $\sigma^2$:
\begin{align*}
    \sigma^2= \max_{i\in [n], \|\theta\|_2, \|\varphi\|_2\leq B} \|\nabla_{\theta}\cL_i(\theta, \varphi)\|_2^2 + \max_{i\in [n], \|\theta\|_2, \|\varphi\|_2\leq B} \|\nabla_{\varphi} \cL_i(\theta, \varphi)\|_2^2 =: (\mrm I) + (\mrm{II}).
\end{align*}

For the first term, we have
\begin{align*}
      (\mrm I)
    & =  \max_{i\in [n], \|\theta\|_2, \|\varphi\|_2\leq B} \|\lambda (\theta - \theta_0) + n \Phi_f^\top E_i \Phi_g\varphi \|_2^2 \\
    & \leq \max_{i\in[n], \|\theta\|_2, \|\varphi\|_2\leq B} \left(2 \lambda^2 \|\theta - \theta_0\|_2^2 + 2n^2 \|\Phi_f^\top E_i \Phi_g\varphi\|_2^2\right)
     \leq  4\lambda^2 (B^2 + B_1^2) + 2n^2B_3^2B^2.
\end{align*}
Similarly, for the second term, we have
\begin{align*}
    (\mrm{II})  %
     & =  \max_{i\in [n], \|\theta\|_2, \|\varphi\|_2\leq B}\|n(\theta^\top \Phi_f^\top E_i \Phi_g - \tilde{Y}^\top E_i\Phi_g - \Phi_g^\top E_i \Phi_g\varphi) - \nu(\varphi - \varphi_0)\|_2^2 \\
     &\leq 4n^2 B_3^2B^2 + 2n^2 B_2^2 B^2 + 4\nu^2(B^2 + B_1^2).
\end{align*}
Thus, we know that
\begin{align*}
    \sigma^2 \leq 8(\lambda^2 + \nu^2) (B^2 + B_1^2) + 6 n^2 B_3^2B^2 + 2n^2 B_2^2B^2 =: \tilde C.
\end{align*}
Taking $L_\delta = \delta^{-1} \epsilon^{-2} \max\{4B_1^2 + 4B^2, \tilde C \mu^{-1} \}$ and $\eta_{ \ell} = \frac{1}{\mu(\ell + 1)}$, by \eqref{eqn:r_ell-contraction}, we know that
\begin{align*}
    \mathbb{P}(\|\theta_{L} - \theta^*\|_2 > \epsilon) 
    \leq \epsilon^{-2} \EE \| \theta_L - \theta^* \|_2^2 
    \leq \epsilon^{-2} r_\ell
    \leq \delta.
\end{align*}
\end{proof}

\subsection{Proof of Proposition~\ref{prop:approx-inf}}\label{app:prop-approxinf-proof}

By Lemma~\ref{lem:rf-approx-appendix}, for any $\epsilon_1>0$ we have 
\begin{equation}\label{eq:approxinf-0}
\lim_{m\rightarrow\infty}
\PP\left (
\left \{\sup_{x^*\in\cX^l}\|\tilde{\mu}-\mu\|_2 > \epsilon_1 \right \}\cup 
\left \{\sup_{x^*\in\cX^l}\|\tilde{S}-S\|_F > \epsilon_1     \right \}
\right ) = 0,
\end{equation}
where the randomness is from the sampling of random feature bases.

Fix an arbitrary set of $\epsilon_1>0,\delta_0>0$. Then we can find 
$m\in\mb{N}$ such that the event in \eqref{eq:approxinf-0} has probability smaller than $\delta_0$. 
Combining Assumption~\ref{ass:rf-bounded} with the fact that $\theta_0,\phi_0,\tilde{Y}_0$ are now Gaussian random variables with fixed dimensionality, %
for any $\delta_1>0$,  
we can choose $B_1,B_2,B_3$ such that %
the event $E_n$ defined in Lemma~\ref{lem:opt-final-result} has probability $1-\delta_1$. 
Thus for any $\epsilon_2>0$, when the number of iteration steps exceeds $\Omega( \delta_1^{-1} \epsilon^{-2}_2)$, we have
\begin{equation}\label{eq:approxinf-2}
\PP(\|\hat{\theta}_m-\theta^*_m\|_2 > \epsilon_2) 
\leq \PP( \{ \|\hat{\theta}_m-\theta^*_m\|_2 > \epsilon_2 \} \cap E_n) + \PP(E_n^c)
\leq 2\delta_1,
\end{equation}
where $\hat{\theta}_m$ denotes the approximate optima returned by Algorithm~\ref{alg:ana} after $\Omega( \delta_1^{-1} \epsilon^{-2}_2)$ iterations, $\theta^*_m$ denotes the exact optima of the minimax objective, and the randomness is from the gradient noise as well as the perturbations $f_0,g_0,\tilde{Y}$. 
Thus we have 
$$
\EE \|\hat{\theta}_m-\theta^*_m\|_2 
\leq \epsilon_2 + 2\delta_1 ( \EE \| \hat \theta_m \|_2 + \EE \| \theta^*_m \|_2) 
\leq \epsilon_2 + 4\delta_1 B.
$$
From the choice of $B$ in Lemma~\ref{lem:opt-final-result},
 we can see that $
\delta_1 B \le \EE(\|\theta^*_m\| \cdot (1-\mbf{1}_{E_n}))$, and thus converges to $0$ as $\delta_1\rightarrow 0$. 
Therefore, %
$\EE \|\hat{\theta}_m-\theta^*_m\|_2 $ converges to 0, and for any $x^*\in\cX^l$, 
\[ \begin{aligned}
\EE \sup_{x^*\in\cX^l}\|f(x^*;\hat{\theta}_m)-f(x^*;\theta^*_m)\|_2
&= \EE \sup_{x^*\in\cX^l}\|\phi_{x,m}(x^*)^\top (\hat{\theta}_m-\theta^*_m)\|_2 \\
&\le l \sqrt{\tilde{\kappa}} \cdot \EE \|\hat{\theta}_m-\theta^*_m\|_2 
\rightarrow 0,
\end{aligned} \]
where the expectation is taken with respect to the gradient noise, perturbations, and random feature draws. 
Hence, the mean and covariance of $f(x^*;\hat{\theta}_m)$ converges to that of $f(x^*;\theta^*_m)$ as intended, 
and we know that  the following holds with probability at least $1 - \delta_0$
$$
\sup_{x^*\in\cX^l} \max\left\{
    \|\EE(f(x^*;\hat{\theta}_m)) - \EE(f(x^*;\theta_m)) \|_2, 
    \|\mrm{Cov}(f(x^*;\hat{\theta}_m)) - \mrm{Cov}(f(x^*;\theta_m)) \|_F
\right\} \leq \epsilon_1
$$ 
Combining this with \eqref{eq:approxinf-0} completes the proof.

\section{Implementation Details, Experiment Setup and Additional Results}\label{app:impl-exp-details}

\subsection{Hyperparameter Selection}\label{app:hps-sel}

We follow the strategy in previous work \cite[e.g.,][]{singh_kernel_2020,muandet_dual_2020} and select hyperparameters by minimizing the \emph{observable} first or second stage loss, depending on which part they directly correspond to. 

For the first stage, the loss is 
$$
\cL_{v1} = \mrm{Tr}(K_{xx} - 2K_{x\tilde{x}} L + K_{\tilde x\tilde x}L^\top L) = \EE_{f\sim\mc{GP}(0,k)} \|f(X) - L f(\tilde{X})\|_2^2
$$
where $L := K_{z \tilde z}(K_{\tilde{z}\tilde{z}}+\nu I)^{-1}$, and tilde indicates the held-out data. 
From the above equality we can see that a Monte-Carlo estimator for $L_1$ can be constructed with the following procedure: 
\begin{enumerate}[label=(\roman*).]
    \item Draw $f\sim\mc{GP}(0,k_x)$.
    \item Perform kernel ridge regression on the dataset $\{(\tilde z_i, f(\tilde x_i))\}$.
    \item Return the mean squared error on the dataset $(X,Z)$.
\end{enumerate}
This procedure can also be implemented for the NN-based models. 

For the second stage, the loss $\sum_{i=1}^n \hat{d}_n(\hat{E}_n f, \hat{b})$ can be computed directly, for both the closed-form quasi-posterior and the random feature approximation. 
For the approximate inference algorithm, as we can see from \eqref{eq:rf-obj-fs} that the dual functions $\{g(\cdot;\varphi^{(k)})\}$ are samples from Gaussian process posteriors centered at the needed point estimates $\hat{E}_n f(\cdot;\theta^{(k)})$, instead of the point estimates themselves, 
we train separate validator models to approximate the latter. The validator models have the architecture to the dual functions used for training, and follow the same learning rate schedule. The validator models are trained before estimating the validation statistics, and we run SGD until convergence to ensure an accurate estimate. 

\subsection{Details in the Approximate Inference Algorithm}\label{app:impl-details}

To draw multiple samples from the quasi-posterior efficiently, our algorithm runs $J$ SGDA chains in parallel, with different perturbations $
\{(\tilde{Y}^{(j)}, f_0^{(j)}, g_0^{(j)}): j\in [J]\}
$. 
While the convergence analysis works with the extremely simple Algorithm~\ref{alg:ana}, in practice we extend it to improve stability and accelerate convergence: 
\begin{enumerate}[label=(\roman*).]
    \item we employ early stopping based on the validation statistics; 
    \item 
before the main optimization loop we initialize the dual parameters at the approximate optima $\arg\min_\varphi L_{\mrm{rf}}(f^{(j)}, g(\cdot;\varphi))$, by running SGD until convergence; 
\item 
in each SGDA iteration, we use $K_1>1$ GD steps on $g$ and one GA step for $f$; 
\item 
after every $K_2$ epochs, we fix $\theta^{(j)}$ and train the dual parameters $\varphi^{(j)}$ for one epoch. 
\end{enumerate}
All the above choices are %
shown to improve the observable validation statistics. We fix $K_1=3,K_2=2$ which are determined on the 1D datasets using the validation statistics. %

\subsection{Nystr\"om Approximation for the Closed-form Quasi-Posterior}\label{app:nystrom-expression}

Recall the definition of the Nystr\"om-approximated quasi-posterior: we change the definition of the quasi-likelihood to Assumption~\ref{ass:I-approx-2-relaxed}, where $\cI_n := \mrm{span}\{k(z_{i_j},\cdot): j\in [m]\}$, and
$z_{i_1},\ldots,z_{i_m}$ are inducing points sampled from the training data. The new closed-form expressions are then 
\begin{align}
\Pi(f(x_*)\givendata) &= \cN(
 K_{*x} \tilde\Lambda Y, 
 K_{**} - K_{*x} \tilde\Lambda K_{x*}),  \\ 
\text{where}~~ \tilde \Lambda &:=
(\lambda I + K_{xx}\tilde L)^{-1}\tilde L, \\ 
 \tilde L &:= K_{z\tilde z}(\nu K_{\tilde z\tilde z} + K_{\tilde z z}K_{z\tilde z})^{-1}K_{\tilde z z},
\end{align}
and $\tilde z$ denotes the inducing points. 
In the implementation, we compute a low-rank factorization $A^\top A := \tilde L$, defined through the Cholesky factorization of $\effnu K_{\tilde z\tilde z}+K_{\tilde zz}K_{z\tilde z}$, and use the following equivalent expression of $\tilde\Lambda$ obtained from the Woodbury identity:
$$
\tilde \Lambda = \lambda^{-1} A^\top (I - (\lambda I + A K_{xx}A^\top)^{-1} A K_{xx}A^\top) A = 
A^\top U \mrm{diag}\!\left(
    (\lambda + \gamma_1)^{-1}, \ldots, (\lambda+\gamma_n)^{-1}
    \right) U^\top A,
$$
where $U \mrm{diag}(\gamma_1,\ldots,\gamma_n) U^\top = AK_{xx}A^\top$ denotes the eigendecomposition. This ensures that matrix inversion is only performed on $m\times m$ matrices.

We use uniform sampling for inducing points, which is equivalent to the leverage score sampling scheme studied in \cite{rudi_less_2016} as we work on the symmetric $\cU[\mb{S}^1]$. 

\subsection{Experiment in Section~\ref{sec:exp-asymp-val}: Additional Results}\label{app:details-gt}

Additional results are plotted in Figure~\ref{fig:exp-asymp-val-full}. For the contraction rate of the derivative, observe the power space of $W^{s,2}(\mb{T})$ matches lower-order Sobolev spaces, so the contraction rate follows that of the power space, as in Section~\ref{sec:power-space-contraction}. 

\begin{figure}[htbp]
    \centering
    \includegraphics[width=\linewidth]{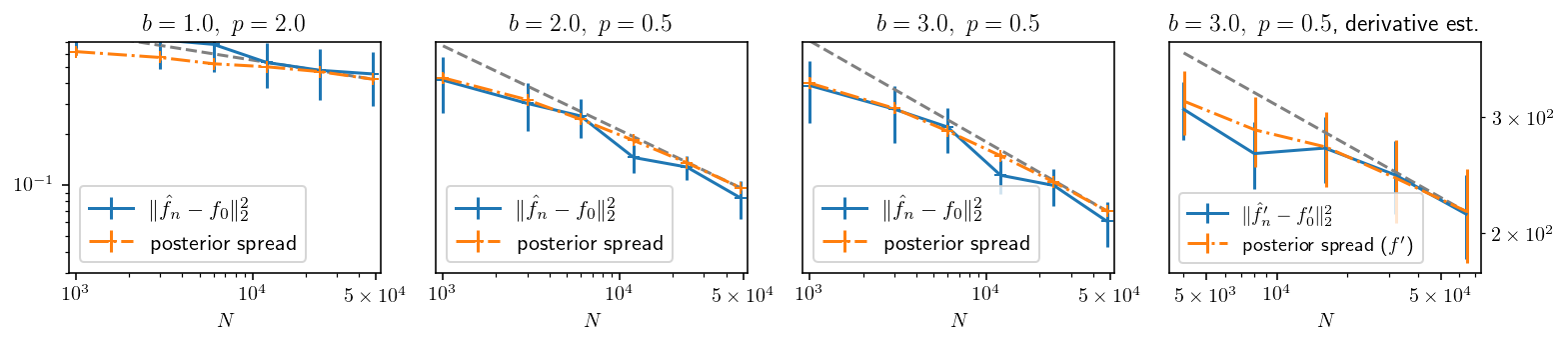}
        \caption{Additional results. 
    The left three plots correspond to the estimation of $f$; the rightmost plot corresponds to the estimation of $f'$. Dashed line indicates the theoretical rate. %
    }\label{fig:exp-asymp-val-full}
\end{figure}

\subsection{1D Simulation: Experiment Setup Details}\label{app:details-1d}

In constructing the datasets, let $\tilde f_0$ denote the sine, step, abs or linear ($\tilde f_0(x)=x$) function; we then set $f_0 = \tilde f_0(4\cdot(2x-1))$ if $\tilde f_0$ is sine, abs or linear, $\mbf{1}_{\{2x-1<0\}}+2.5\cdot \mbf{1}_{\{2x-1\ge 0\}}$ otherwise. 
These choices are made to maintain consistency with previous work \citep{lewis2018adversarial,bennett2019deep}, which used the same transformed step function and defined $\bx$ so that it has a range of approximately $[-4,4]$. 

For 2SLS and the kernelized IV methods, 
we determine $\lambda$ and $\nu$ following~\ref{app:hps-sel}. To improve stability, we repeat the procedures on $50$ random partitions of the combined training and validation set, and choose the hyperparameters that minimize the average loss. The hyperparameters are chosen from a log-uniform grid consisting of $10$ values in the range of $[0.1, 30]$. The occasional instability of hyperparameter selection is also reported in \cite{muandet_dual_2020}. %
For BayesIV, we run the MCMC sampler for 25000 iterations, discard the first 
5000 iterations for burn in, and take one sample out of every 80 consecutive iterations to construct the approximate posterior. 
For bootstrap we use 20 samples. In both cases, further increasing the computational budget will not improve performance. 

We normalize the dataset to have zero mean and unit variance. For all kernel methods we set the kernel bandwidth using the median trick. 

\subsection{1D Simulation: Full Results}\label{app:addi-1d}

Normalized MSE, average CI coverage and CI width on test set are provided in Table~\ref{tbl:cubic-sim-0}-\ref{tbl:cubic-sim-1}. 
We can see that the quasi-posterior consistently provides the most reliable coverage, and 
is most advantageous in the small-sample setting, or when instrument strength is weak. 
All methods suffer from lower coverage on the \texttt{step} design, due to its discontinuity, and in the weak IV setting, due to instabilities in the hyperparameter selection process. 

We provide the following visualizations: (i) Figure~\ref{fig:cubic-viz-oversmoothed} plots the quasi-posterior on the \texttt{abs} design, for the Mat\'ern and RBF kernels; we also include the approximate inference algorithm in the plot. 
(ii) Figure~\ref{fig:cubic-viz-weak} compares the quasi-posterior and bootstrap in the weak instrument setting. 
The hyperparameters $\lambda,\nu$ are set to the median of all values selected in the repeated experiments, and hyperparameters in the approximate inference algorithm are set as in the demand experiment below.
Additional visualizations are provided in the conference version of this work.

\begin{sidewaystable}
    \centering\scriptsize
    \begin{tabular}{cccccccccccc}
\toprule Method & bayesIV &  bs-lin &  qb-lin &  bs-poly &  qb-poly &  bs-ma3 &  qb-ma3 &  bs-ma5 &  qb-ma5 &  bs-rbf &  qb-rbf \\
\midrule\multicolumn{12}{l}{ $f_0=\;$sin, $N=200, \alpha=0.5$ } \\ \midrule
MSE & .024 (.038) & .111 (.011)  & .109 (.010)  & .243 (.037)  & .243 (.034)  & .023 (.010)  & .025 (.014)  & .022 (.011)  & .021 (.015)  & .025 (.013)  & .026 (.015) \\
CI Cvg. & .895 (.252) & .232 (.039)  & .110 (.019)  & .077 (.032)  & .045 (.017)  & .965 (.065)  & 1.00 (.000)  & .972 (.065)  & 1.00 (.000)  & .972 (.079)  & 1.00 (.013) \\
CI Wid. & .188 (.035) & .143 (.028)  & .072 (.004)  & .078 (.025)  & .041 (.006)  & .283 (.031)  & .661 (.066)  & .288 (.032)  & .569 (.067)  & .293 (.040)  & .408 (.063) \\
\midrule\multicolumn{12}{l}{ $f_0=\;$sin, $N=1000, \alpha=0.5$ } \\ \midrule
MSE & .016 (.003) & .103 (.006)  & .103 (.006)  & .237 (.020)  & .239 (.020)  & .009 (.011)  & .008 (.014)  & .008 (.012)  & .007 (.015)  & .007 (.012)  & .007 (.013) \\
CI Cvg. & .598 (.155) & .097 (.020)  & .049 (.006)  & .036 (.012)  & .017 (.004)  & .962 (.113)  & 1.00 (.000)  & .954 (.111)  & 1.00 (.000)  & .957 (.168)  & 1.00 (.036) \\
CI Wid. & .085 (.006) & .061 (.011)  & .032 (.001)  & .038 (.009)  & .019 (.002)  & .186 (.032)  & .602 (.037)  & .173 (.029)  & .509 (.037)  & .164 (.029)  & .326 (.041) \\
\midrule\multicolumn{12}{l}{ $f_0=\;$abs, $N=200, \alpha=0.5$ } \\ \midrule
MSE & .042 (.038) & .454 (.052)  & .456 (.053)  & .478 (.085)  & .477 (.089)  & .033 (.026)  & .035 (.025)  & .032 (.024)  & .031 (.025)  & .031 (.019)  & .031 (.021) \\
CI Cvg. & .863 (.184) & .190 (.039)  & .085 (.198)  & .055 (.027)  & .020 (.072)  & .945 (.110)  & 1.00 (.004)  & .942 (.118)  & 1.00 (.004)  & .920 (.125)  & 1.00 (.030) \\
CI Wid. & .207 (.027) & .214 (.035)  & .077 (.220)  & .110 (.038)  & .043 (.091)  & .316 (.037)  & .676 (.072)  & .317 (.034)  & .599 (.079)  & .277 (.037)  & .462 (.082) \\
\midrule\multicolumn{12}{l}{ $f_0=\;$abs, $N=1000, \alpha=0.5$ } \\ \midrule
MSE & .024 (.011) & .448 (.016)  & .449 (.016)  & .468 (.025)  & .469 (.026)  & .020 (.006)  & .019 (.005)  & .019 (.006)  & .018 (.006)  & .017 (.007)  & .016 (.006) \\
CI Cvg. & .507 (.196) & .083 (.018)  & .111 (.092)  & .028 (.007)  & .009 (.003)  & .857 (.075)  & 1.00 (.000)  & .823 (.079)  & 1.00 (.000)  & .829 (.102)  & 1.00 (.016) \\
CI Wid. & .092 (.005) & .098 (.019)  & .127 (.103)  & .061 (.016)  & .019 (.002)  & .181 (.024)  & .646 (.050)  & .174 (.026)  & .530 (.056)  & .168 (.024)  & .383 (.061) \\
\midrule\multicolumn{12}{l}{ $f_0=\;$step, $N=200, \alpha=0.5$ } \\ \midrule
MSE & .045 (.026) & .075 (.010)  & .075 (.010)  & .179 (.025)  & .180 (.023)  & .041 (.013)  & .047 (.017)  & .043 (.012)  & .046 (.015)  & .046 (.011)  & .048 (.013) \\
CI Cvg. & .845 (.176) & .347 (.069)  & .220 (.045)  & .110 (.048)  & .067 (.022)  & .797 (.101)  & 1.00 (.022)  & .787 (.085)  & .998 (.038)  & .710 (.085)  & .917 (.051) \\
CI Wid. & .194 (.018) & .139 (.023)  & .072 (.004)  & .068 (.022)  & .041 (.006)  & .300 (.047)  & .665 (.083)  & .285 (.041)  & .593 (.082)  & .252 (.035)  & .453 (.061) \\
\midrule\multicolumn{12}{l}{ $f_0=\;$step, $N=1000, \alpha=0.5$ } \\ \midrule
MSE & .023 (.009) & .070 (.004)  & .069 (.004)  & .178 (.012)  & .176 (.011)  & .035 (.012)  & .038 (.015)  & .038 (.011)  & .040 (.014)  & .039 (.012)  & .040 (.014) \\
CI Cvg. & .616 (.116) & .185 (.042)  & .098 (.014)  & .053 (.021)  & .030 (.005)  & .784 (.153)  & 1.00 (.016)  & .739 (.164)  & .976 (.028)  & .661 (.134)  & .839 (.077) \\
CI Wid. & .086 (.004) & .060 (.011)  & .032 (.001)  & .032 (.010)  & .019 (.002)  & .206 (.062)  & .565 (.053)  & .196 (.073)  & .483 (.065)  & .168 (.031)  & .312 (.054) \\
\midrule\multicolumn{12}{l}{ $f_0=\;$linear, $N=200, \alpha=0.5$ } \\ \midrule
MSE & .009 (.011) & .002 (.002)  & .001 (.002)  & .128 (.019)  & .128 (.017)  & .012 (.008)  & .017 (.013)  & .011 (.009)  & .014 (.014)  & .011 (.011)  & .013 (.016) \\
CI Cvg. & .948 (.091) & 1.00 (.153)  & 1.00 (.130)  & .087 (.026)  & .060 (.018)  & .995 (.044)  & 1.00 (.001)  & .995 (.050)  & 1.00 (.003)  & .990 (.060)  & 1.00 (.043) \\
CI Wid. & .129 (.025) & .088 (.012)  & .072 (.004)  & .055 (.017)  & .041 (.006)  & .269 (.031)  & .520 (.049)  & .261 (.031)  & .438 (.058)  & .239 (.034)  & .298 (.041) \\
\midrule\multicolumn{12}{l}{ $f_0=\;$linear, $N=1000, \alpha=0.5$ } \\ \midrule
MSE & .005 (.002) & .000 (.001)  & .000 (.001)  & .121 (.012)  & .121 (.012)  & .006 (.004)  & .007 (.005)  & .006 (.003)  & .007 (.005)  & .006 (.003)  & .005 (.004) \\
CI Cvg. & .626 (.130) & 1.00 (.174)  & 1.00 (.230)  & .034 (.008)  & .026 (.004)  & .992 (.094)  & 1.00 (.000)  & .990 (.095)  & 1.00 (.000)  & .975 (.103)  & 1.00 (.000) \\
CI Wid. & .051 (.002) & .039 (.006)  & .032 (.001)  & .026 (.006)  & .019 (.002)  & .171 (.031)  & .508 (.043)  & .156 (.032)  & .418 (.044)  & .135 (.024)  & .242 (.043) \\

        \bottomrule
        \end{tabular}
        \caption{Full results in the 1D simulation, for $\alpha=0.5$}\label{tbl:cubic-sim-0}
        \end{sidewaystable}

\begin{sidewaystable}
    \centering\scriptsize
    \begin{tabular}{cccccccccccc}
\toprule Method & bayesIV &  bs-lin &  qb-lin &  bs-poly &  qb-poly &  bs-ma3 &  qb-ma3 &  bs-ma5 &  qb-ma5 &  bs-rbf &  qb-rbf \\
\midrule\multicolumn{12}{l}{ $f_0=\;$sin, $N=200, \alpha=0.05$ } \\ \midrule
MSE & .275 (.045) & .133 (.070)  & .193 (.125)  & .202 (.121)  & .215 (.161)  & .231 (.037)  & .190 (.068)  & .209 (.037)  & .163 (.073)  & .183 (.047)  & .142 (.081) \\
CI Cvg. & .165 (.077) & .992 (.080)  & 1.00 (.134)  & .325 (.092)  & .425 (.080)  & .270 (.082)  & .960 (.084)  & .332 (.121)  & .955 (.086)  & .468 (.219)  & .952 (.134) \\
CI Wid. & .192 (.030) & .589 (.166)  & .971 (.481)  & .394 (.171)  & .771 (.265)  & .192 (.036)  & .712 (.045)  & .233 (.054)  & .694 (.065)  & .297 (.085)  & .638 (.105) \\
\midrule\multicolumn{12}{l}{ $f_0=\;$sin, $N=1000, \alpha=0.05$ } \\ \midrule
MSE & .146 (.025) & .123 (.077)  & .169 (.315)  & .213 (.145)  & .228 (.192)  & .246 (.071)  & .216 (.113)  & .238 (.085)  & .214 (.131)  & .188 (.096)  & .188 (.131) \\
CI Cvg. & .082 (.060) & .880 (.296)  & .888 (.346)  & .289 (.086)  & .344 (.135)  & .373 (.111)  & .819 (.131)  & .436 (.145)  & .840 (.185)  & .562 (.218)  & .897 (.212) \\
CI Wid. & .095 (.018) & .552 (.367)  & .852 (.568)  & .405 (.294)  & .588 (.400)  & .254 (.036)  & .605 (.049)  & .298 (.042)  & .568 (.050)  & .373 (.086)  & .536 (.055) \\
\midrule\multicolumn{12}{l}{ $f_0=\;$abs, $N=200, \alpha=0.05$ } \\ \midrule
MSE & .806 (.478) & .487 (.259)  & .500 (.505)  & .489 (.233)  & .472 (.453)  & .392 (.064)  & .349 (.156)  & .368 (.074)  & .350 (.180)  & .336 (.094)  & .393 (.221) \\
CI Cvg. & .122 (.197) & .435 (.167)  & .775 (.201)  & .247 (.119)  & .545 (.122)  & .217 (.102)  & .795 (.137)  & .287 (.130)  & .832 (.163)  & .352 (.243)  & .805 (.250) \\
CI Wid. & .239 (.049) & .526 (.311)  & 1.30 (.473)  & .303 (.152)  & .895 (.279)  & .294 (.047)  & .712 (.045)  & .351 (.065)  & .694 (.065)  & .424 (.103)  & .638 (.105) \\
\midrule\multicolumn{12}{l}{ $f_0=\;$abs, $N=1000, \alpha=0.05$ } \\ \midrule
MSE & 1.45 (.250) & .479 (.105)  & .500 (.083)  & .472 (.159)  & .472 (.111)  & .376 (.090)  & .390 (.144)  & .374 (.109)  & .367 (.181)  & .304 (.134)  & .265 (.214) \\
CI Cvg. & .019 (.014) & .464 (.179)  & .742 (.249)  & .332 (.165)  & .562 (.191)  & .306 (.109)  & .665 (.207)  & .374 (.165)  & .667 (.248)  & .505 (.269)  & .625 (.308) \\
CI Wid. & .139 (.017) & .460 (.328)  & 1.24 (.576)  & .340 (.238)  & .923 (.384)  & .339 (.044)  & .605 (.049)  & .367 (.056)  & .561 (.049)  & .504 (.114)  & .536 (.061) \\
\midrule\multicolumn{12}{l}{ $f_0=\;$step, $N=200, \alpha=0.05$ } \\ \midrule
MSE & .226 (.127) & .105 (.069)  & .148 (.199)  & .157 (.092)  & .192 (.148)  & .214 (.036)  & .183 (.070)  & .194 (.038)  & .176 (.077)  & .160 (.056)  & .147 (.090) \\
CI Cvg. & .193 (.090) & .777 (.158)  & .787 (.197)  & .382 (.079)  & .438 (.066)  & .262 (.103)  & .952 (.068)  & .300 (.158)  & .920 (.089)  & .432 (.282)  & .890 (.149) \\
CI Wid. & .184 (.032) & .621 (.297)  & .767 (.452)  & .456 (.179)  & .652 (.292)  & .262 (.051)  & .712 (.045)  & .310 (.070)  & .694 (.065)  & .365 (.103)  & .638 (.112) \\
\midrule\multicolumn{12}{l}{ $f_0=\;$step, $N=1000, \alpha=0.05$ } \\ \midrule
MSE & .079 (.444) & .083 (.044)  & .131 (.207)  & .152 (.105)  & .236 (.120)  & .231 (.059)  & .214 (.107)  & .224 (.070)  & .208 (.127)  & .192 (.082)  & .170 (.130) \\
CI Cvg. & .149 (.231) & .715 (.171)  & .696 (.201)  & .366 (.087)  & .390 (.120)  & .290 (.075)  & .841 (.165)  & .353 (.144)  & .853 (.212)  & .515 (.229)  & .800 (.189) \\
CI Wid. & .125 (.021) & .544 (.295)  & .815 (.564)  & .408 (.187)  & .539 (.423)  & .298 (.037)  & .605 (.049)  & .330 (.052)  & .561 (.049)  & .420 (.095)  & .543 (.065) \\
\midrule\multicolumn{12}{l}{ $f_0=\;$linear, $N=200, \alpha=0.05$ } \\ \midrule
MSE & .083 (.023) & .013 (.032)  & .041 (.202)  & .103 (.065)  & .151 (.149)  & .052 (.015)  & .046 (.024)  & .046 (.016)  & .045 (.023)  & .046 (.020)  & .053 (.023) \\
CI Cvg. & .238 (.133) & 1.00 (.284)  & 1.00 (.172)  & .372 (.099)  & .412 (.109)  & .490 (.194)  & 1.00 (.000)  & .750 (.219)  & 1.00 (.000)  & .895 (.188)  & 1.00 (.026) \\
CI Wid. & .121 (.030) & .559 (.213)  & .595 (.406)  & .495 (.181)  & .543 (.327)  & .220 (.028)  & .712 (.045)  & .270 (.041)  & .694 (.065)  & .323 (.074)  & .638 (.105) \\
\midrule\multicolumn{12}{l}{ $f_0=\;$linear, $N=1000, \alpha=0.05$ } \\ \midrule
MSE & .051 (.007) & .018 (.034)  & .022 (.165)  & .125 (.092)  & .188 (.411)  & .070 (.023)  & .069 (.041)  & .070 (.028)  & .070 (.047)  & .057 (.031)  & .066 (.043) \\
CI Cvg. & .094 (.041) & 1.00 (.215)  & 1.00 (.200)  & .338 (.113)  & .298 (.122)  & .455 (.107)  & 1.00 (.028)  & .528 (.192)  & 1.00 (.066)  & .684 (.237)  & 1.00 (.096) \\
CI Wid. & .058 (.007) & .535 (.289)  & .423 (.470)  & .435 (.309)  & .406 (.365)  & .205 (.020)  & .605 (.049)  & .230 (.033)  & .561 (.049)  & .297 (.089)  & .530 (.061) \\

        \bottomrule
        \end{tabular}
        \caption{Full results in the 1D simulation, for $\alpha=0.05$}\label{tbl:cubic-sim-1}
        \end{sidewaystable}

\begin{figure}[htb]
    \centering %
    \begin{subfigure}[c]{.22\linewidth}
    \includegraphics[width=\linewidth]{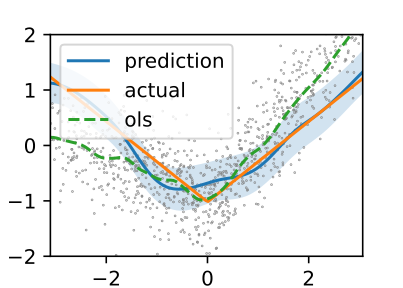}
    \caption{Mat\'ern-$3/2$ kernel}
    \end{subfigure}
    \begin{subfigure}[c]{.22\linewidth}
    \includegraphics[width=\linewidth]{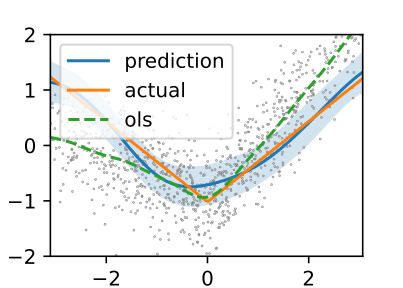}
    \caption{Mat\'ern-$5/2$ kernel}
    \end{subfigure}
    \begin{subfigure}[c]{.22\linewidth}
    \includegraphics[width=\linewidth]{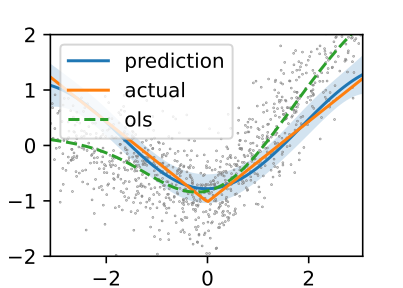}
    \caption{RBF, closed form}
    \end{subfigure}
    \begin{subfigure}[c]{.22\linewidth}
    \includegraphics[width=\linewidth]{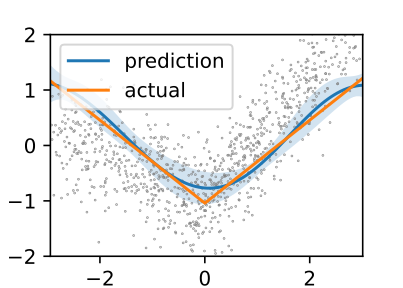}
    \caption{RBF, RF approx}
    \end{subfigure}
    \caption{1D datasets: visualization of the quasi-posterior on the \texttt{abs} design using various models. Dots indicate training data. We fix $N=1000,\alpha=0.5$.}
    \label{fig:cubic-viz-oversmoothed}
\end{figure}

\begin{figure}[htb]
    \centering
    \begin{subfigure}[c]{.47\linewidth}
        \includegraphics[width=\linewidth]{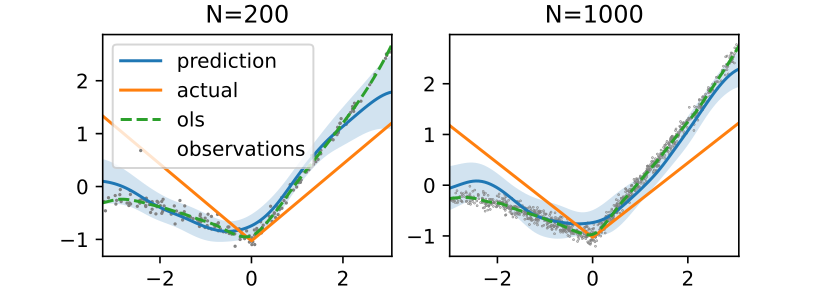}
        \caption{Bootstrap}
    \end{subfigure}
    \begin{subfigure}[c]{.47\linewidth}
        \includegraphics[width=\linewidth]{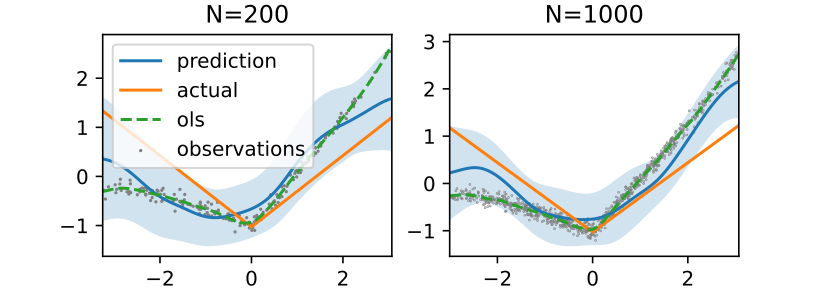}
        \caption{Quasi-Bayes}
    \end{subfigure}
    \caption{1D datasets: visualization of the weak instrument setting, on the \texttt{abs} design with the Mat\'ern$-3/2$ kernel. Results for other kernels are qualitatively similar.}\label{fig:cubic-viz-weak}
\end{figure}

\subsection{Demand Simulation: Experiment Setup Details}\label{app:demand-setup}

All variables in the dataset are normalized to have zero mean and unit variance. 
For BayesIV, we run the MCMC sampler for 50000 iterations, discard the first 10000 samples as burn in, and take every 80th sample for inference. 
For the kernelized methods, hyperparamater selection follows the 1D experiments. 
For the NN-based methods, implementation details are discussed in Appendix~\ref{app:impl-details}; 
for both our method and bootstrap, we draw $J=10$ samples from the predictive distribution. 

We select hyperparameters by applying the procedure in Appendix~\ref{app:hps-sel} to a fixed train / validation split, since on this dataset we observe little variation in its results. Hyperparameters include $\lambda,\nu$, and the learning rate schedule (initial learning rate $\eta_0$ and period of learning rate decay $\tau$). The learning rate is adjusted by multiplying it by a factor of $0.8$ every $\tau$ iterations. 
We fix the optimizer to Adam, and train until validation statistics no longer improves.

For the lower-dimensional setup, we select 
$\lambda$ and $\nu$ from a log-linearly scaled grid of 10 values, with the range of $[5\times 10^{-3}, 5]$ and $[0.05, 1]$, respectively. The ranges are chosen based on preliminary experiments using the range of $[0.1, 30]$. We determine $\eta$ from $\{5\times 10^{-4}, 10^{-3}, 5\times 10^{-3}, 1\times 10^{-2}, 5\times 10^{-2}\}$, and $\tau$ from $\{80,160,320,640\}$. We fix the batch size at $256$. The NN architecture consists of two fully-connected layers, with 50 hidden units and the tanh activation. We also experimented with NNs with 3 hidden layers or with ReLU activation, and made this choice based on the validation statistics.

For the image-based setup, the range of $\lambda$ and $\eta$ follows the above. For $\nu,\tau$ we consider $
\nu\in [1,100], \tau\in\{640,1280,2560,5120\}, 
$ based on preliminary experiments. We fix the batch size at $80$. 
The network architecture is adapted from \cite{hartford2017deep}, and consists of two $3\times 3$ convolutional layers with $64$ filters, followed by max pooling, dropout, and three fully-connected layers with $64,32$ and $1$ units. %

Following the setup in all previous work, we use a uniform grid on $[5,30]\times [0,10]\times \{0,\ldots,6\}$ as the test set.

\subsection{Demand Simulation: Full Results and Visualizations}\label{app:demand-full-results}

Results in the large-sample settings are presented in Table~\ref{tbl:hllt-full}, %
which are consistent with the discussion in the main text. %

We plot the predictive distributions for all methods in Figure~\ref{fig:demand-1k-all}, on the same cross-section as in the main text, for $N=1000$. 
(We omit the plot for $N=10^4$ and the image experiment, since in those settings bootstrap and the quasi-posterior have similar behaviors.) As we can see, 
all non-NN baselines except BayesIV produce overly smooth predictions, presumably due to the lack of flexibility in these models. 
Note that the visualizations only correspond to an intersection of the true function $f(x_0,t_0,s)$, with $x_0,t_0$ fixed; the complete function has the form of $x\cdot s\cdot \psi(t)$, ignoring the less significant terms, and thus may incur a large norm penalty in the less flexible RKHSes. 
The issue is further exacerbated by the discrepancy between the training and test distributions: the former is non-uniform due to confounding. As we can see from Figure~\ref{fig:demand-data-viz}, in the region where $t$ is close to $5$, the data is scarce for most values of $x$, which may explain the reason that the closed-form kernels fail to provide good coverage around $t=5$ (and $s=3,x=17.5$, as used in the visualizations), and the reason that both NN-based methods assign higher uncertainty around this location. 

BayesIV has a different failure mode: as it employs additive regression models for both stages $p(\bx\mid\bz), p(\by\mid\bx)$, it approximates this cross-section relatively well. However, as the true structural function does not have an additive decomposition, its prediction in other regions can be grossly inaccurate; 
we plot one such cross-section in Figure~\ref{fig:demand-1k-od}(a). 

When implemented with the NN model, bootstrap CIs are sharper in regions with more training data, although the difference is often insignificant. The difference in out-of-distribution regions is more significant, where bootstrap is often less robust. An example is shown in Figure~\ref{fig:demand-1k-od}.

\begin{table}[htb]\centering
\small
\begin{tabular}{ccccccc}
\toprule
Setting & \multicolumn{3}{c}{Low-dimensional, $N=10^4$} & \multicolumn{3}{c}{Image, $N=5\times 10^4$} \\
Method & BS-2SLS & BS-NN & QB-NN  & BS-2SLS & BS-NN & QB-NN 
\\ \midrule
NMSE & 
$.371\pm.003$ & 
$.014\pm.003$ &
$.020\pm.002$ &
$.559\pm.008$ & 
$.168\pm.027$ &
$.138\pm.037$ \\
CI Cvg. & 
$.024\pm.005$ & 
$.944\pm.009$ &
$.957\pm.008$ &
$.112\pm.005$ &
$.892\pm.022$ &
$.909\pm.017$  \\
CI Wid. &
$.014\pm .002$& 
$.136\pm .015$& 
$.203\pm .013$& 
$.132\pm .039$ & 
$.636\pm .027$ &
$.597\pm .024$  
\\ \bottomrule
\end{tabular}
\caption{Deferred results on the demand design. Results are averaged over 20 trials for the low-dimensional experiment, and 10 trials for the image experiment.}\label{tbl:hllt-full}
\end{table}

\begin{figure}[h!]
\centering 
\includegraphics[width=0.26\linewidth]{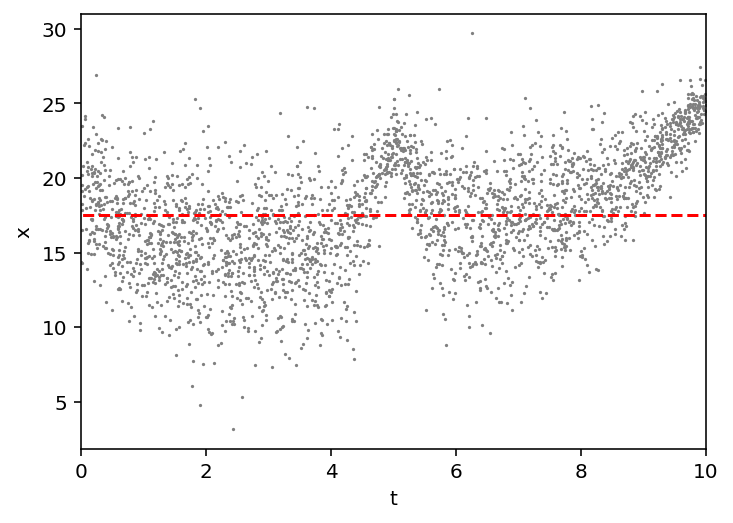}
\caption{Demand experiment: scatter plot of $10^4$ samples from the training data distribution $p(x,t\mid s=4)$. The dashed line indicates the cross-section used in Figure~\ref{fig:demand-main}. 
}\label{fig:demand-data-viz}
\end{figure}

\begin{figure}[h!]
    \centering\conditionalPlot{
\begin{subfigure}[b]{0.24\linewidth}
\includegraphics[width=\linewidth]{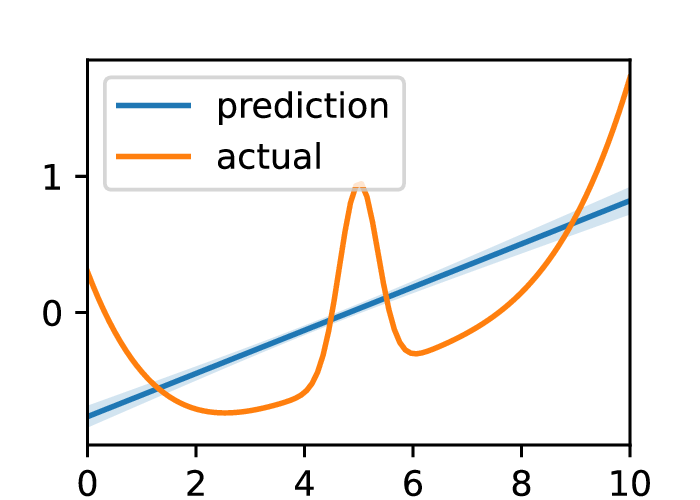}
\caption{BS-Linear}
\end{subfigure}
\begin{subfigure}[b]{0.24\linewidth}
\includegraphics[width=\linewidth]{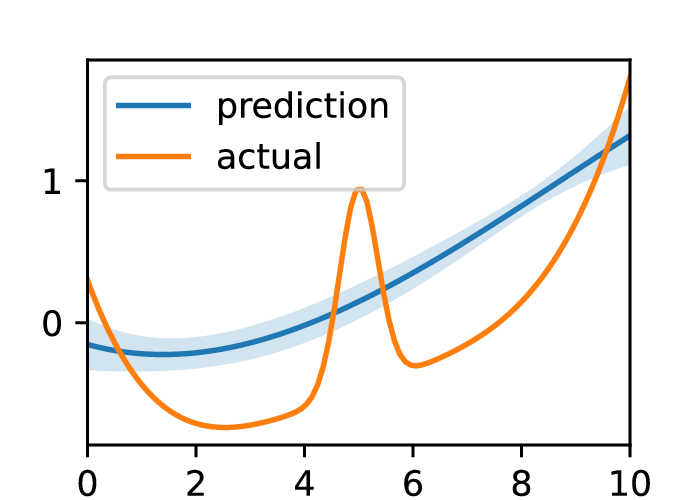}
\caption{BS-Poly}
\end{subfigure}
\begin{subfigure}[b]{0.24\linewidth}
\includegraphics[width=\linewidth]{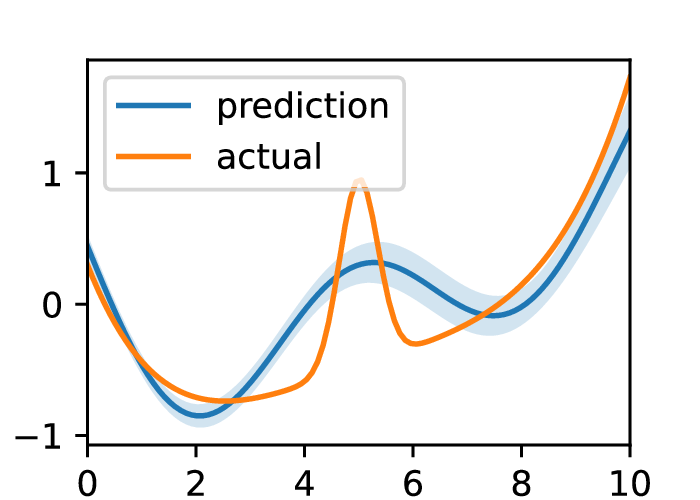}
\caption{BS-RBF}
\end{subfigure}
\begin{subfigure}[b]{0.24\linewidth}
\includegraphics[width=\linewidth]{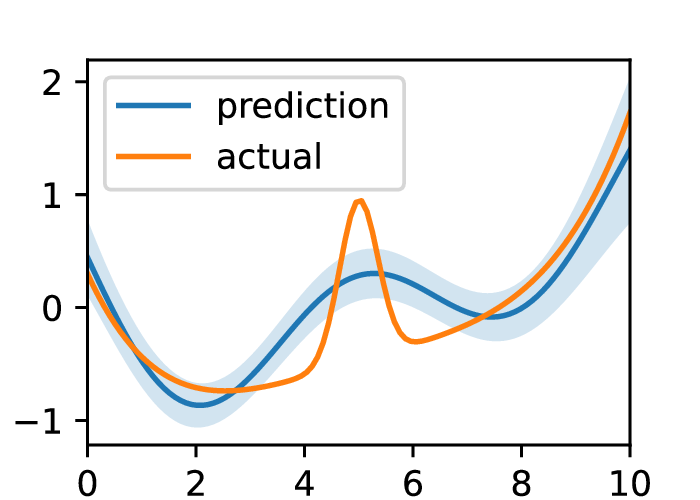}
\caption{QB-RBF}
\end{subfigure}

\begin{subfigure}[b]{0.24\linewidth}
\includegraphics[width=\linewidth]{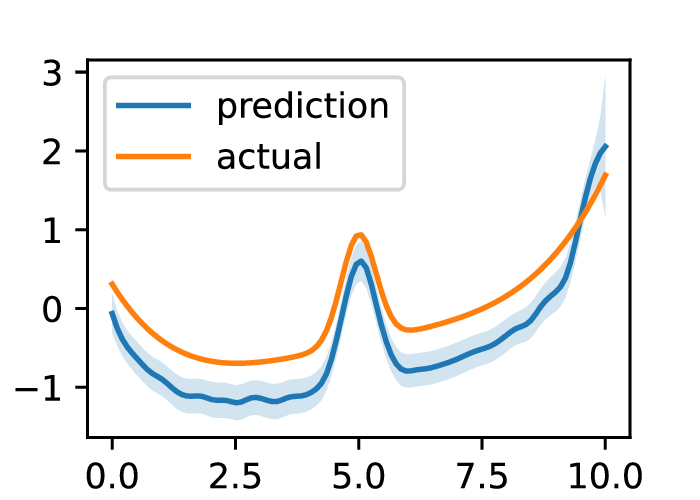}
\caption{BayesIV}
\end{subfigure}
\begin{subfigure}[b]{0.24\linewidth}
\includegraphics[width=\linewidth]{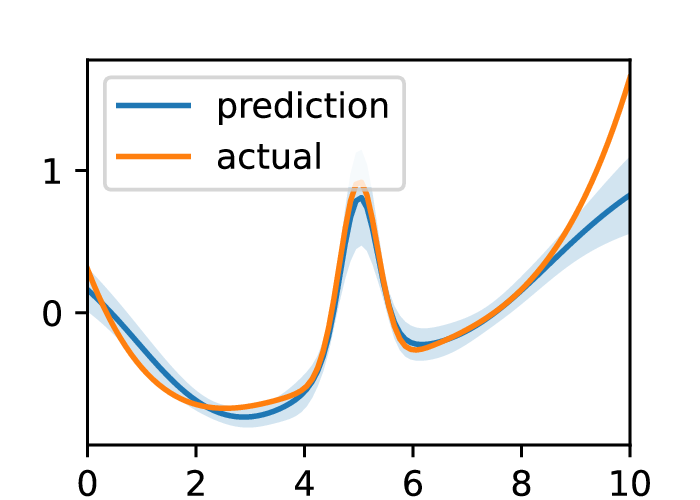}
\caption{BS-NN}
\end{subfigure}
\begin{subfigure}[b]{0.24\linewidth}
\includegraphics[width=\linewidth]{sim/hllt-1k-0.5-qb.png}
\caption{QB-NN}
\end{subfigure}}
\caption{Demand experiment: visualizations of the predictive distributions for $N=1000$, on the same cross-section as in Figure~\ref{fig:demand-main}.
}\label{fig:demand-1k-all}
\end{figure}

\begin{figure}[h!]
    \centering\conditionalPlot{
\begin{subfigure}[b]{0.24\linewidth}
\includegraphics[width=\linewidth]{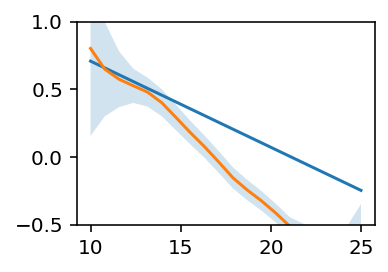}
\caption{BayesIV}
\end{subfigure}
\begin{subfigure}[b]{0.24\linewidth}
\includegraphics[width=\linewidth]{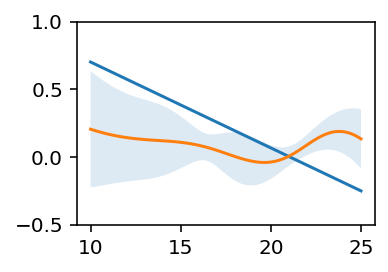}
\caption{BS-NN}
\end{subfigure}
\begin{subfigure}[b]{0.24\linewidth}
\includegraphics[width=\linewidth]{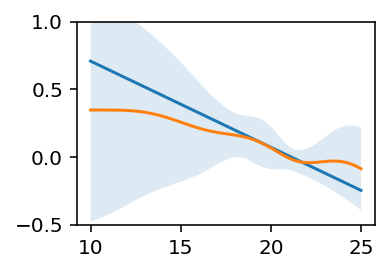}
\caption{QB-NN}
\end{subfigure}}
\caption{Demand experiment: visualizations of the predictive distributions for $N=1000$ on a out-of-distribution cross-section, obtained by fixing $t=9,s=6$ and varying $x$.}\label{fig:demand-1k-od}
\end{figure}

\subsubsection{Computational cost} We report the typical training time for a single set of hyperparameters, excluding JIT compilation time, on a GeForce GTX 1080Ti GPU. 
In the lower-dimensional experiments, training takes around 25 minutes for a single set of hyperparameters when $N=10^3$, or around 30 minutes when $N=10^4$; in both cases 6 experiments can be carried out in parallel on a single GPU.  
In the image experiment, training takes around 7.5 hours. 

The time cost above is for the optimal hyperparameter configuration; experiments using suboptimal hyperparameters usually take a shorter period of time due to early stopping. 
It can also be improved by switching to low-precision numerical operations, or with various heuristics in the hyperparameter search (e.g., using a smaller $J$ in an initial search).

\vskip 0.2in
\bibliography{main}

\end{document}